\pgfplotsset{compat=1.6}
\newcommand{\entryneedsurl}[1]{\addtocategory{needsurl}{#1}}
\let\emptyset\varnothing
\newcommand{\ee}{\mathrm{e}}
\newcommand{\dd}{\mathrm{d}}
\newcommand{\NN}{\mathbb{N}}                                     %
\newcommand{\RR}{\mathbb{R}}                                     %
\newcommand{\PP}{\mathbb{P}}
\newcommand{\abs}[1]{\left\lvert#1\right\rvert}                  %
\newcommand{\defeq}{\mathrel{\mathop:}=}                               %
\newcommand{\norel}{\mathrel{\phantom{=}}}                                           %
\newcommand{\iid}{\overset{\mathrm{i.i.d.}}{\sim}}
\renewcommand{\tilde}{\widetilde}
\newcommand{\B}{\overline{B}}
\newcommand{\paul}{}
\DeclareMathOperator{\id}{id}
\DeclareMathOperator{\diam}{diam}
\DeclareMathOperator{\IM}{Im}
\DeclareMathOperator{\lip}{Lip}
\DeclareMathOperator{\vc}{VC}
\DeclareMathOperator{\diag}{\Delta}
\DeclareMathOperator{\relu}{ReLU}
\DeclareMathOperator{\rang}{rank}
\let \eps \varepsilon
\let \epsilon \varepsilon
\theoremstyle{plain} 
\newtheorem{theorem}{Theorem}[section]
\newtheorem{assumption}[theorem]{Assumption}
\newtheorem{lemma}[theorem]{Lemma}
\newtheorem{proposition}[theorem]{Proposition}
\theoremstyle{definition} %
\newtheorem{definition}[theorem]{Definition}
\theoremstyle{remark} %
\newtheorem{remark}[theorem]{Remark}
\crefname{theorem}{theorem}{theorems}
\crefname{Prop}{Proposition}{Propositions}
\crefname{Lem}{Lemma}{Lemmas}
\crefname{Kor}{Corollary}{Corollaries}
\crefname{Bem}{Remark}{Remarks}
\crefname{Bsp}{Example}{Examples}
\crefname{Def}{Definition}{Definitions}
\crefname{Alg}{Algorithm}{Algorithms}
\numberwithin{equation}{section}
\renewcommand*{\eqref}[1]{%
  \hyperref[{#1}]{\textup{\tagform@{\ref*{#1}}}}%
}
  \DeclareTextCommandDefault\textcommabelow[1]
\hmode@bgroup\ooalign{\null#1\crcr\hidewidth\raise-.31ex
     \hbox{\check@mathfonts\fontsize\ssf@size\z@
     \math@fontsfalse\selectfont,}\hidewidth}\egroup}%
\definecolor{pgcol}{rgb}{0,0,1.} %
\definecolor{tjcol}{cmyk}{.74, 0, 1, .41} %
\definecolor{hmcol}{rgb}{0.9,.5,0} %
\definecolor{islamicgreen}{rgb}{0.0, 0.56, 0.0}
\definecolor{darkpastelgreen}{rgb}{0.01, 0.75, 0.24}
 \newcommand*{\fres}[2]{ {\left.\kern-\nulldelimiterspace #1 \vphantom{\big|} \right|_{\kern-1pt #2} }}
\newcommand{\EE}{\mathbb{E}}
\newcommand{\arrow}[1]{\overset{\rightarrow}{#1}}
\begin{document}
\allowdisplaybreaks

\hyphenation{Lip-schitz}

\title[Upper and lower bounds for the Lipschitz constant of random neural networks]{Upper and lower bounds for the Lipschitz constant of random neural networks}

\author{Paul Geuchen}
\address[P. Geuchen]{Mathematical Institute for Machine Learning and Data Science (MIDS),
Catholic University of Eichstätt--Ingolstadt (KU),
Auf der Schanz 49, 85049 Ingolstadt, Germany}
\email{paul.geuchen@ku.de}
\thanks{}

\author{Dominik Stöger}
\address[D. Stöger]{Mathematical Institute for Machine Learning and Data Science (MIDS),
	Catholic University of Eichstätt--Ingolstadt (KU), Auf der Schanz 49, 85049 Ingolstadt, Germany}
\email{dominik.stoeger@ku.de}
\thanks{}

\author{Thomas Telaar}
\address[T. Telaar]{Chair of Mathematics --- Statistics,
Catholic University of Eichstätt--Ingolstadt (KU),
Ostenstraße 28,
85072 Eichstätt,
Germany}
\email{thomas.heindl@mail.de}
\thanks{}

\author{Felix Voigtlaender}\thanks{}
\address[F. Voigtlaender]{Mathematical Institute for Machine Learning and Data Science (MIDS),
Catholic University of Eichstätt--Ingolstadt (KU), Auf der Schanz 49, 85049 Ingolstadt, Germany}
\email{felix.voigtlaender@ku.de}

\subjclass[2020]{68T07, 26A16, 60B20, 60G15}

\keywords{Lipschitz constant, Random neural networks, Adversarial robustness}

\date{\today}

\begin{abstract}    
Empirical studies have widely demonstrated that neural networks are highly sensitive to small,
adversarial perturbations of the input. %
The worst-case robustness against these so-called adversarial examples can be quantified by the Lipschitz constant of the neural network.
In this paper, we study upper and lower bounds for the Lipschitz constant of \textit{random ReLU neural networks}.
Specifically, we assume that the weights and biases follow a generalization of the He initialization, where general symmetric
distributions for the biases are permitted.
For deep networks of fixed depth and sufficiently large width, our established upper bound is larger than the lower bound by a factor that is logarithmic in the width.
In contrast, for shallow neural networks we characterize the Lipschitz constant up to an absolute numerical constant
that is independent of all parameters.
\end{abstract}

\maketitle

\section{Introduction}

Deep neural networks have achieved remarkable success across a diverse range of applications in the past decade. 
However, it is well-known that small, adversarially chosen perturbations of the input can drastically alter the output of the neural network.
These perturbations, which often can be chosen to be invisible to the human eye,
are known in the literature as \textit{adversarial examples}.
One of the first papers to empirically examine this phenomenon was \cite{szegedy2013intriguing}.
This observation has lead to an extensive body of research,
focusing both on devising increasingly sophisticated adversarial attacks
(see, for example,  \cite{papernot2016limitations,moosavi2016deepfool,kurakin2018adversarial,athalye2018obfuscated,carlini2018audio})
and on constructing robust defenses against such attacks
(see, for example, \cite{madry2018towards,schott2018towards,pang2019improving,dong2020adversarial,bui2021unified}).
Arguably, this has resulted in an "arms race" between adversarial attacks and defenses. 
The theoretical understanding of adversarial examples, however, remains limited.

A natural measure for the worst-case robustness against adversarial examples is given
by the Lipschitz constant of a neural network function $\Phi: \mathbb{R}^d \rightarrow \mathbb{R}$,
which is defined via
\begin{equation*}
  \lip (\Phi )
  :=
  \underset{x,y \in \mathbb{R}^d, \, x \neq y}{\sup}  \frac{\vert \Phi (x)- \Phi (y) \vert}{\Vert x - y \Vert_2},
\end{equation*}
where by $\Vert \cdot \Vert_2$ we denote the $\ell_2$-norm of a vector.
Indeed, the Lipschitz constant measures how much a small change of the input can perturb the output. 
A small Lipschitz constant may imply that no adversarial examples exist,
whereas a large Lipschitz constant is indicative of the existence of adversarial examples.
For these reasons, several works have proposed algorithms to determine
upper bounds for the Lipschitz constant of a trained deep neural network---see,
e.g., \cite{fazlyab2019efficient,ebihara2023local,shi2022efficiently}---in order to quantify the worst-case robustness against adversarial examples.

\emph{Random neural networks}, i.e., neural networks whose weights and biases are chosen at random,
are often used as a first step in theoretically understanding empirically observed phenomena in deep learning;
see for instance \cite{dirksen2022separation,bartlett2021adversarial,daniely2020most}.
In this spirit, in this paper, we establish upper and lower bounds for the Lipschitz constant of random ReLU neural networks.
Our paper makes the following two main contributions:
\begin{enumerate}
  \item In the case of shallow neural networks with ReLU activation function
        and random weights following a variant of the He initialization \cite{he2015delving},
        we prove high-probability upper and lower bounds for the Lipschitz constant.
        These bounds match up to an absolute numerical constant.

  \item In the case of deep neural networks with ReLU activation function,
        we again derive high-probability upper and lower bounds for the Lipschitz constant.
        For this, we assume that our networks are wide enough, i.e.,
        they satisfy $N\gtrsim L^2 d$, where $N$ is the width of the neural network,
        $d$ is the dimension of the input, and $L$ is the depth of the neural network.
        If the depth $L$ is assumed to be a constant, our upper and lower bounds match up to a factor that is logarithmic in the width.
\end{enumerate}
Regarding the biases, in our results we allow arbitrary symmetric distributions, 
whereas earlier works on the Lipschitz constant of neural networks \cite{nguyen2021tight,buchanan2021deep}
require all biases to be initialized to zero, as in the classical He initialization.

The paper is structured as follows:
\Cref{sec:Background} provides technical background regarding random ReLU networks.
In Section \ref{sec:mainresults}, we present our main theoretical results
and in Section \ref{sec:preliminaries} we review some technical tools needed for our proofs.
The proofs regarding the upper bound of the Lipschitz constant are given in \Cref{sec:upper}. 
The corresponding lower bound is derived in \Cref{sec:low_bound}.

\subsection{Related Work}\label{subsec:relatedwork}

In \cite{bubeck2021law}, it has been conjectured that any shallow neural network
(with one hidden layer)  which interpolates the training data must have large Lipschitz constant,
unless the neural network is extremely overparameterized.
In \cite{bubeck2021universal}, this conjecture was proven and generalized to a much broader class
of models beyond (shallow) neural networks. 
However, while this result states that neural networks interpolating the data
can only be adversarially robust if they are extremely overparameterized,
it does not imply that overparameterized networks are inherently adversarially robust;
this is because only a \emph{lower} bound for the Lipschitz constant (which tends to zero as the number of network parameters goes to infinity) is presented.
Moreover, when the data is no longer \emph{perfectly} interpolated,
the lower bound for the Lipschitz constant in \cite{bubeck2021universal} no longer applies.
This leaves open the possibility of training adversarially robust neural networks
without huge overparameterization under such conditions.
For these reasons, a more detailed analysis of the Lipschitz constant of neural networks is essential.

\emph{Upper} bounds for the Lipschitz constants of random neural networks were derived in \cite{nguyen2021tight,buchanan2021deep}
in the context of the lazy training regime \cite{chizat2019lazy}. 
In contrast to our work, these two results do not contain any lower bounds. 
We refer to \Cref{remark:compare} for a detailed comparison of the results in \cite{nguyen2021tight,buchanan2021deep} with our bounds.

A \emph{lower} bound on the gradient of random ReLU networks at a fixed point was derived in \cite{bartlett2021adversarial}
in the context of proving the existence of an adversarial example at a fixed point. This result immediately implies a lower bound on the Lipschitz constant as well,
since random ReLU networks are almost surely differentiable at a fixed point (cf. \Cref{app:diff}).
We refer to \Cref{remark:compare_lower} for a comparison to the lower bound that we derive.

Our analysis of the Lipschitz constant is based on bounding the \emph{supremum} of the norm of the gradients of a random ReLU network.
Regarding the gradient at a fixed point, \cite{hanin2020products} even characterized the asymptotic distribution of the norm 
of the directional derivative in a fixed direction, which is shown to be a log normal distribution.
However, similar to \cite{nguyen2021tight} this bound only considers the feature maps, i.e., the neural network excluding the
output layer.  

A line of research related to studying the Lipschitz constant of neural networks is concerned with proving the existence
of adversarial perturbations in \emph{random} neural networks.
The first result was obtained in \cite{shamir2019simple}, where it was shown that
by only perturbing a few entries of the input vector one can construct an adversarial example.
Realizing that these perturbations may be very large,
this result was further improved in \cite{daniely2020most},
where it was shown that adversarial examples can be constructed with much smaller perturbations.
However, this result applies only to random ReLU networks for which the width of each layer
is small relative to the width of the previous layer and for which the penultimate layer still needs to be large enough (depending on the input dimension).
In the case of shallow networks, \cite{bubeck2021single} improved the latter result for networks
with ReLU activation or smooth activation whose width depends subexponentially on the input dimension.
The follow-up work \cite{bartlett2021adversarial} extended this result to the multi-layer setting.
Recently, this has been further improved in \cite{montanari2023adversarial}
which presented a result that requires no restriction on the width of the neural network
and which applies to general non-constant locally Lipschitz continuous activations.
All of the above results, however, are of a somewhat different nature compared to the results of the paper at hand.
The above results show, for an arbitrarily \emph{fixed} input, the existence,
with high probability, of a small adversarial perturbation. 
In contrast, our paper considers the adversarial robustness over all possible inputs and,
thereby, characterizes worst-case adversarial robustness.

Finally, let us note that going beyond random neural networks,
in \cite{wang2022adversarial} the existence of adversarial examples was recently shown
for trained neural networks in the lazy training regime \cite{chizat2019lazy}.
Moreover, in \cite{vardi2022gradient,frei2023double} it was shown that in neural networks
with one hidden layer, the bias of gradient flow towards margin maximization induces
the existence of adversarial examples.
In \cite{bombari2023beyond}, adversarial robustness of (trained) random features
and neural tangent kernel regression models was studied.
Here, adversarial robustness was measured with respect to a sensitivity measure,
which is the $\ell_2$-norm of the gradient at a data point scaled by the $\ell_2$-norm of this data point.
 This sensitivity measure was analyzed for one fixed vector from the input data distribution,
 i.e., the \emph{average-case} adversarial robustness was analyzed.

\subsection{Notation}

For any mathematical statement $A$, we set $\mathbbm{1}_A =1$ if $A$ is true and $\mathbbm{1}_A = 0$ otherwise.
The notation $\mathbbm{1}_A$ will also be used for the indicator function of a set $A$. 
We write $a_+ \defeq \max \{0,a\}$ for $a \in \RR$.
{\color{black} For two quantities $\alpha$ and $\beta$ we write $\alpha \lesssim \beta$ if there exists an absolute constant $C>0$ with $\alpha \leq  C\cdot\beta$.
Moreover, we write $\alpha \gtrsim \beta$ if $\beta \lesssim \alpha$ and $\alpha \asymp \beta$ if both $\alpha \lesssim \beta$ and $\beta \lesssim \alpha$.}

For two vectors $x,y \in \RR^k$, we write $\langle x,y \rangle \defeq \sum_{i=1}^k x_iy_i$
for the inner product of $x$ and $y$.
Moreover, we denote by $\Vert x \Vert_2 \defeq \sqrt{\langle x, x \rangle}$ the \emph{Euclidean norm} of $x$. 
Given a metric space $T$ with metric $\varrho$, for $x \in T$ and $r>0$ we write
\begin{equation*}
  B^\varrho_{T} (x,r)
  \defeq \left\{ y \in T : \ \varrho(x,y) < r\right\},
  \quad
  \overline{B}^\varrho_{T} (x,r)
  \defeq \left\{ y \in T : \ \varrho(x,y) \leq r\right\}.
\end{equation*}
When considering the Euclidean norm on $\RR^k$, we denote the open and closed ball by $B_k(x,r)$
and $\overline{B}_k(x,r)$, respectively.
Furthermore, we denote $\mathbb{S}^{k-1} \defeq \left\{ x \in \RR^k: \ \Vert x \Vert_2 = 1\right\}$.
For a subset $T \subseteq \RR^k$, we denote the diameter of $T$ as
$\diam(T) \defeq \underset{x,y \in T}{\sup} \Vert x - y \Vert_2 \in [0, \infty]$. 

For a matrix $A \in \RR^{k_1 \times k_2}$, we define 
\begin{equation*}
  \Vert A \Vert_2 \defeq \underset{ x \in \B_{k_2}(0,1)}{\sup} \Vert Ax \Vert_2.
\end{equation*}
 By $A_{i,-}$ and $A_{-,j}$ we denote the $i$-th row and $j$-th column of $A$, respectively.
 We write $v \odot w \in \RR^k$ for the \emph{Hadamard product} of two vectors $v, w \in \RR^k$,
 which is defined as $(v \odot w)_j \defeq v_j \cdot w_j $ for $j \in \{1,...,k\}$.

For any set $A$ and an element $a \in A$, we denote by $\delta_a$ the \emph{Dirac-measure} at $a$.
For $\sigma > 0$, we write $\mathcal{N}(0, \sigma^2)$ for the standard normal distribution
with expectation $0$ and variance $\sigma^2$. 
\paul{We write $\mathcal{N}(0, I_k)$ for the distribution of a $k$-dimensional random vector with independent $\mathcal{N}(0,1)$-entries.}
Moreover, we write $\EE[X]$ for the expectation of a random variable $X$.
By $\PP^X$ we denote the probability measure induced by some random variable $X$,
i.e., $\PP^X(A) = \PP(X \in A) = \PP (X^{-1}(A))$. Furthermore, we denote $A^c$ for the complement of any event $A$.

\section{Background}\label{sec:Background}

In the present paper, we derive bounds on the Lipschitz constant of random ReLU networks at initialization.
We consider a variant of the so-called \emph{He initialization} as introduced in \cite{he2015delving}.
In \cite{he2015delving}, the entries of the weight matrices $W^{(\ell)} \in \RR^{N^{(\ell + 1)} \times N^{(\ell)}}$ are identically and independently drawn from a Gaussian distribution
with zero mean and standard deviation $\sqrt{2/N^{(\ell)}}$
and the biases are initialized to zero.
We consider a standard deviation of $\sqrt{2/N^{(\ell+1)}}$, which is commonly used in theoretical studies of random ReLU networks (cf. \cite{buchanan2021deep,allen2019convergence,dirksen2022separation}).
{\color{black}This choice of the random initialization slightly simplifies the proofs and is natural since with this initialization, each layer is \emph{isometric in expectation}, 
meaning that $\EE [\Vert \relu(W^{(\ell)}x)\Vert_2^2] = \Vert x \Vert_2^2$ 
for every vector $x \in \RR^{N^{(\ell)}}$ that is fed into the $\ell$th layer of the network.
Since the $\relu$ is positively homogeneous, our results readily imply corresponding bounds for other initialization choices where the network weights are normally distributed, possibly even with a different variance in different layers.
In particular, for the usual He-initialization, one has to multiply all our upper/lower bounds for Lipschitz constants by $1/\sqrt{d}$.}

Moreover, we allow more general biases than in \cite{he2015delving}.
Specifically, we require that the biases are also drawn independently from certain probability distributions.
In order to derive upper bounds for the Lipschitz constant (see \Cref{sec:upper}) 
and lower bounds in the case of shallow networks (see \Cref{sec:low_bound_shallow}),
we do not need to impose \emph{any} assumptions on these probability distributions.
To derive lower bounds in the case of deep networks (see \Cref{subsec:deep_lower}), however,
we require these probability distributions to be symmetric.
Note that assuming symmetry of the distributions of the biases is very natural and in particular covers
the zero initialization that is used in \cite{he2015delving}.
In the following assumption, we formally introduce the considered random initialization.

\begin{assumption} \label{assum:1}
We consider ReLU networks with $d$ input neurons, 1 output neuron, a width of $N$ and $L$ hidden layers.
Precisely, we consider maps 
\begin{equation} \label{eq:relu-network}
  \Phi: \quad \RR^d \to \RR,
  \quad
  \Phi(x)
  \defeq \left( V^{(L)} \circ \relu \circ V^{(L-1)} \circ  \hdots \circ \relu \circ V^{(0)}\right) (x).
\end{equation}
Here, $\relu$ denotes the function defined as 
\begin{equation*}
  \relu (x) \defeq \max \{0,x\}, \quad x \in \RR,
\end{equation*} 
and the application of $\relu$ in \eqref{eq:relu-network} is to be understood componentwise, i.e., 
\begin{equation*}
  \relu ((x_1, ..., x_N)) = (\relu (x_1),..., \relu(x_N)). 
\end{equation*}
Furthermore, the maps $V^{(\ell)}$ for $0 \leq \ell \leq L$ are affine-linear maps,
which in detail means the following:
There are matrices $W^{(0)} \in \RR^{N \times d}$, $W^{(\ell)} \in \RR^{N \times N}$
for $1 \leq \ell \leq L-1$ and $W^{(L)} \in \RR^{1 \times N}$
as well as \emph{biases} $b^{(\ell)} \in \RR^N$ for $0 \leq \ell < L$ and $b^{(L)} \in \RR$ such that
\begin{equation*}
  V^{(\ell)} (x) = W^{(\ell)}x + b^{(\ell)}
  \quad \text{for every } 0 \leq \ell \leq L.
\end{equation*}

We assume that the matrices $W^{(\ell)}$ as well as the biases $b^{(\ell)}$ are randomly
chosen in the following way:
For $0 \leq \ell < L$, we have 
\begin{equation*}
  \left( W^{(\ell)}\right)_{i,j} \overset{\mathrm{i.i.d.}}{\sim} \mathcal{N}(0,2/N),
  \quad
  \left( W^{(L)}\right)_{1,j} \overset{\mathrm{i.i.d.}}{\sim} \mathcal{N}(0,1).
\end{equation*}
Concerning the biases, we assume
\begin{equation*}
  \left( b^{(\ell)}\right)_i \sim \mathcal{D}^{(\ell)}_i,
  \quad 0 \leq \ell \leq L.
\end{equation*}
Here, each $\mathcal{D}^{(\ell)}_i$ is an arbitrary probability distribution over $\RR$.
Furthermore, the entries of the random variables $W^{(0)}, ..., W^{(L)}, b^{(0)}, ..., b^{(L)}$ are assumed to be jointly independent.
\end{assumption}

The above assumption, which suffices for deriving the upper bound on the Lipschitz constant for networks of arbitrary depth and the lower bound in the case of shallow networks,
imposes almost no restrictions on the distribution of the biases.
However, for deriving lower bounds \paul{for deep networks} we will use the following more restrictive assumption
on the initialization of the biases.
\begin{assumption}\label{assum:2}
  Let $\Phi: \RR^d \to \RR$ be a random ReLU network satisfying \Cref{assum:1}.
  Moreover, we assume that each $\mathcal{D}_i^{(\ell)}$ is a symmetric probability distribution over $\RR$
  (e.g., $\mathcal{N}(0, \sigma^2)$ for some parameter $\sigma > 0$, uniform on some interval $[-m,m] \subseteq \RR$, or initialized to zero).
\end{assumption}

We restrict ourselves to square matrices
(i.e., we assume that each layer of the network has the same amount of neurons).
We believe, however, that our proofs still work in the case of layers with varying widths
that are uniformly bounded from below by $\Omega (dL^2)$.

\section{Main results}\label{sec:mainresults}

In this section, we discuss the main results of the paper.
In the case of shallow networks, we present upper and lower bounds on the Lipschitz constant
that hold almost without any restrictions on the input dimension $d$ and
the network width $N$.
Moreover, the upper and lower bounds match up to an absolute multiplicative constant.
In the case of deep networks, we have to impose the additional assumption $N \gtrsim d L^2$. 
Furthermore, there is a gap between the upper and lower bounds;
this gap grows exponentially with the depth $L$ of the network
and logarithmically with the network width $N$.
\subsection{Shallow networks}

We provide a high-probability version and an expectation version for the upper and lower bounds. 
The result for the upper bound reads as follows:

\begin{theorem}\label{thm:main_1}
  There exist absolute constants $C, c_1>0$ satisfying the following:
  If $\Phi: \RR^d \to \RR$ is a random shallow ReLU network of arbitrary width $N \in \NN$
  with random weights and random biases following \Cref{assum:1}, then the following hold:
  \begin{enumerate}
    \item We have
          \begin{align*}
            \lip(\Phi) \leq C \cdot \sqrt{d} 
          \end{align*}
          with probability at least $(1-2\exp(-\min\{d,N\})) \cdot (1-2\exp(-c_1 \cdot \max\{d,N\}))_+$.
    
    \item $\displaystyle \EE \left[\lip(\Phi) \right] \leq C  \cdot \sqrt{d}.$
  \end{enumerate}
\end{theorem}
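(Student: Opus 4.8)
The plan is to realize $\lip(\Phi)$ as the supremum of a Gaussian process over the finitely many activation patterns of the network, and then to treat the regimes $N\lesssim d$ and $N\gtrsim d$ separately. First I would reduce to gradients: a shallow ReLU network $\Phi=V^{(1)}\circ\relu\circ V^{(0)}$ is continuous and piecewise affine with finitely many pieces, hence globally Lipschitz with $\lip(\Phi)=\sup_x\mnorm{\nabla\Phi(x)}_2$, the supremum running over the full-measure set of differentiability points. Writing $W\defeq W^{(0)}$, $w\defeq(W^{(1)})^{\top}\in\RR^N$ (so $w_i\iid\N(0,1)$), and $D_x\defeq\operatorname{diag}\big((\mathbbm 1_{\langle W_{i,-},x\rangle+b^{(0)}_i>0})_{i=1}^N\big)$, one has $\nabla\Phi(x)=W^{\top}D_xw$ at every differentiability point, and therefore
\[
  \lip(\Phi)=\max_{S\in\mathcal S}\ \sup_{v\in\sph^{d-1}}\big\langle w,\,D_SWv\big\rangle ,
\]
where $D_S$ is the $0/1$ diagonal matrix supported on $S$ and $\mathcal S$ is the set of activation patterns $S(x)=\{i:\langle W_{i,-},x\rangle+b^{(0)}_i>0\}$ realized by some $x\in\RR^d$. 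Since these patterns correspond to cells of an arrangement of $N$ affine hyperplanes in $\RR^d$, $\card(\mathcal S)\le\sum_{k=0}^d\binom Nk$, which is $\le 2^N$ in general and $\le(eN/d)^d$ when $N\ge d$.

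In the regime $N\lesssim d$ the trivial submultiplicative estimate already suffices. From the displayed formula, $\lip(\Phi)\le\mnorm{W}_2\,\mnorm{w}_2$ (bound $\mnorm{D_SWv}_2\le\mnorm{W}_2$ and apply Cauchy--Schwarz, equivalently use $1$-Lipschitzness of $\relu$). Standard concentration of the Euclidean norm of a Gaussian vector gives $\mnorm{w}_2\lesssim\sqrt N$ outside an event of probability $\lesssim e^{-N}$, and the operator-norm bound for Gaussian matrices gives $\mnorm{W}_2\le\sqrt{2/N}(\sqrt N+2\sqrt d)\lesssim\sqrt{d/N}$ (using $N\le d$) outside an event of probability $\lesssim e^{-d}$. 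Multiplying yields $\lip(\Phi)\lesssim\sqrt d$ on the intersection; since here $\min\{d,N\}=N$ and $\max\{d,N\}=d$, the resulting probability matches the one claimed in the theorem.

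The regime $N\gtrsim d$ is the crux: the submultiplicative bound is now off by a factor $\sqrt{N/d}$, so one must genuinely exploit that only $\card(\mathcal S)\le(eN/d)^d$ patterns occur, yet without paying a spurious $\log N$ factor. I would condition on $(W,b^{(0)})$ and observe that $w\mapsto\lip(\Phi)=\sup_{u\in\mathcal U}\langle w,u\rangle$, with $\mathcal U\defeq\{D_SWv:\ S\in\mathcal S,\ v\in\sph^{d-1}\}\subseteq\RR^N$, is a supremum of linear forms whose coefficient vectors have norm $\le\mnorm{W}_2$, hence convex and $\mnorm{W}_2$-Lipschitz in $w$. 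The Gaussian concentration (Borell--TIS) inequality then gives $\lip(\Phi)\le\EE_w[\lip(\Phi)\mid W,b^{(0)}]+t$ with probability $\ge1-e^{-t^2/(2\mnorm{W}_2^2)}$; on a $W$-event of probability $\ge1-2e^{-cN}$ one has $\mnorm{W}_2\le C$, so choosing $t\asymp\sqrt d$ reduces the whole problem to the \emph{deterministic} estimate
\[
  \EE_w\big[\lip(\Phi)\mid W,b^{(0)}\big]=\EE_w\sup_{u\in\mathcal U}\langle w,u\rangle\ \lesssim\ \sqrt d ,
\]
i.e.\ to bounding the Gaussian width of $\mathcal U$ for all $W$ in a suitable high-probability event.

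This last display is where I expect the real difficulty to lie. The set $\mathcal U$ is a union of $\card(\mathcal S)$ leaves $\mathcal U_S=D_SW(\sph^{d-1})$, each the image of $\sph^{d-1}$ under a linear map of norm $\le\mnorm{W}_2\le C$, hence of radius $\le C$ with metric entropy $\log N(\mathcal U_S,\eps)\le d\log(3C/\eps)$, and $\diam(\mathcal U)\le 2C$. Covering $\mathcal U$ leaf-by-leaf gives $\log N(\mathcal U,\eps)\lesssim d\log(N/d)+d\log(1/\eps)$ and hence, via Dudley's integral, a width of order $\sqrt d+\sqrt{d\log(N/d)}$ — precisely the unwanted logarithm. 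To remove it one has to use the combinatorial geometry of the arrangement: leaves indexed by adjacent cells differ only in the coordinates where a preactivation changed sign, a single such sign flip displaces a point of $\mathcal U$ by at most $\max_i\mnorm{W_{i,-}}_2$ (of order $\sqrt{d/N}$ on a high-probability event), and a scale-by-scale count of how many preactivations can be simultaneously small, combined with the mild growth of the number of cells at bounded combinatorial distance from a fixed one, should yield $\log N(\mathcal U,\eps)\lesssim d\log(C/\eps)$ uniformly in $\eps$, whence $\int_0^{2C}\sqrt{\log N(\mathcal U,\eps)}\,d\eps\lesssim\sqrt d$. Extracting $\sqrt d$ rather than $\sqrt{d\log N}$ from the geometry of the ReLU activation regions is the one genuinely hard step; granting it, intersecting the probabilistic events and optimizing constants so that the exceptional probability matches $(1-2e^{-\min\{d,N\}})(1-2e^{-c_1\max\{d,N\}})_+$ finishes part~(1), and part~(2) follows by integrating the tail bound of part~(1) (or, when $N\lesssim d$, directly from $\EE[\mnorm{w}_2\mnorm{W}_2]\lesssim\sqrt d$).
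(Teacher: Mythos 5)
Your reduction to $\lip(\Phi)\le\sup_{S,v}\langle w,D_SWv\rangle$, the easy regime $N\lesssim d$ via $\|W\|_2\|w\|_2$, and the idea of conditioning on $(W^{(0)},b^{(0)})$ and bounding a Gaussian width over the union of leaves $D_SW\,\sph^{d-1}$ all match the structure of the paper's argument (which runs everything through Dudley's inequality for the set $\mathcal{L}=\{D^{(0)}(x)W^{(0)}z\}$). However, the proposal has a genuine gap exactly at the step you yourself flag as "the one genuinely hard step": you do not prove that $\log\mathcal{N}(\mathcal{U},\|\cdot\|_2,\eps)\lesssim d\log(C/\eps)$ without the $\log(N/d)$ term, and the route you sketch for it --- adjacency of cells in the hyperplane arrangement, displacement per sign flip of order $\max_i\|W_{i,-}\|_2$, and a "scale-by-scale count of how many preactivations can be simultaneously small" --- is not carried out and is not obviously workable: the quantities you invoke (number of cells at bounded combinatorial distance, number of simultaneously small preactivations) are random and would require their own uniform control over all $x$, which is essentially the original problem in disguise. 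Without this step the argument only yields $\sqrt{d\log(eN/d)}$, i.e.\ the bound the paper proves for \emph{deep} networks, not the $\log$-free bound claimed in the shallow case.

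For comparison, the paper removes the logarithm by a different mechanism. Writing each point of $\mathcal{L}$ as a Hadamard product $\tau_{\alpha,v}=(W^{(0)}v)\odot\bigl(f_\alpha((W^{(0)}_{1,-},b^{(0)}_1)^T),\dots\bigr)$ with $f_\alpha$ a halfspace indicator, it first covers the base points $v$ by an $\eps/(4\|W^{(0)}\|_2)$-net of $\overline{B}_d(0,1)\cap\ker(W^{(0)})^\perp$, and then, for each net point $v_i$, covers the class of halfspace indicators in $L^2(\mu_i)$ for the \emph{weighted empirical measure} $\mu_i=\|w_i\|_2^{-2}\sum_\ell (w_i)_\ell^2\,\delta_{(W^{(0)}_{\ell,-},b^{(0)}_\ell)^T}$ with $w_i=W^{(0)}v_i$; this weighting makes the $L^2(\mu_i)$ distance between two indicators equal (up to the factor $\|w_i\|_2$) to the Euclidean distance between the corresponding points of $\mathcal{L}$. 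The Haussler/Dudley bound $\mathcal{N}(\mathcal{F},L^2(\mu),\delta)\le(2/\delta)^{C\cdot\vc(\mathcal{F})}$ with $\vc(\mathcal{F})=\rang(W^{(0)})+1$ then gives a covering number depending only on $\rang(W^{(0)})\le\min\{d,N\}$ and not on $N$ at all, which is precisely what kills the logarithm. If you want to complete your argument, replacing your cell-adjacency heuristic by this VC/empirical-$L^2$ covering device is the missing ingredient; the rest of your outline (Borell--TIS or Dudley's tail version, the norm bounds on $W^{(0)}$ and $w$, and integrating the tail for part (2)) is sound.
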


The proof of \Cref{thm:main_1} can be found in \Cref{sec:shallow}.
 
The upper bound from above is complemented by the following lower bound,
which indeed shows that the bounds are tight up to multiplicative constants.

\begin{theorem}\label{thm:main_shallow_lower}
  There exist absolute constants $c,c_1>0$ satisfying the following: If $\Phi: \RR^d \to \RR$ is a random shallow ReLU network with width $N$ and with random weights and biases following \Cref{assum:1}, then the following hold:
\begin{enumerate}
\item{The inequality
\begin{equation*}
\lip(\Phi) \geq \frac{1}{4\sqrt{2}} \cdot \sqrt{d}
\end{equation*}
holds with probability at least $(1- 2 \exp(-c_1N))_+(1-2\exp(-c_1d))_+$.}
\item{ If $d,N > \frac{\ln(2)}{c_1}$, it holds
\begin{equation*}
\EE [\lip(\Phi)] \geq c \cdot \sqrt{d}.
\end{equation*}
}
\end{enumerate}
\end{theorem}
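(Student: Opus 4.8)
The plan is to exhibit a single direction $u \in \sph^{d-1}$ along which the restricted map $t \mapsto \Phi(tu)$ already has large slope with high probability, and then to argue that this slope is, up to constants, $\sqrt{d}$. Write $W \defeq W^{(0)} \in \RR^{N \times d}$, $v \defeq W^{(1)} \in \RR^{1 \times N}$, $b \defeq b^{(0)} \in \RR^N$, and $c \defeq b^{(1)} \in \RR$, so that $\Phi(x) = \sum_{j=1}^N v_j \, \relu(\langle W_{j,-}, x\rangle + b_j) + c$. Since $\relu$ is $1$-homogeneous and the rows $W_{j,-}$ are centered Gaussian, a clean way to produce a large directional derivative is to look at the behavior of $\Phi$ along a ray $x = tu$ for a \emph{random} but suitably chosen $u$, and to lower bound $\lip(\Phi)$ by $\limsup_{t \to \infty} |\Phi(tu) - \Phi(0)| / (t \|u\|_2)$, or more robustly by the derivative at a generic point on the ray where $\Phi$ is differentiable (random ReLU networks are a.s.\ differentiable at a fixed point, as noted in the excerpt).

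The key computation: for a fixed unit vector $u$, the gradient of $\Phi$ at a point $x$ where no neuron is exactly at its kink equals $\nabla \Phi(x) = \sum_{j : \langle W_{j,-},x\rangle + b_j > 0} v_j W_{j,-}$, so $\lip(\Phi) \geq \|\nabla \Phi(x)\|_2 \geq \langle \nabla \Phi(x), u \rangle = \sum_{j} v_j \langle W_{j,-}, u\rangle \mathbbm{1}_{\langle W_{j,-}, x\rangle + b_j > 0}$. I would choose $x = t u$ with $t \to \infty$ so that, for each $j$, the indicator becomes $\mathbbm{1}_{\langle W_{j,-}, u\rangle > 0}$ (the bias becomes irrelevant for large $t$ as long as $\langle W_{j,-}, u\rangle \neq 0$, which holds a.s.). Then the relevant quantity is $S \defeq \sum_{j=1}^N v_j \, \langle W_{j,-}, u\rangle_+$. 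Conditioning on $W$: each $v_j \sim \N(0,1)$ is independent, so $S \mid W \sim \N\!\big(0, \sum_j \langle W_{j,-}, u\rangle_+^2\big)$, and $\sum_j \langle W_{j,-}, u\rangle_+^2$ concentrates around $N \cdot \EE[\langle W_{1,-}, u\rangle_+^2] = N \cdot \tfrac12 \cdot \tfrac{2}{N} = 1$ (half of the second moment of an $\N(0,2/N)$ variable). So $|S|$ is of order $1$ with high probability — but that only gives $\lip(\Phi) \gtrsim 1$, not $\sqrt{d}$.

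To recover the $\sqrt{d}$, I instead bound $\lip(\Phi) \geq \|\nabla\Phi(x)\|_2$ directly rather than projecting onto one direction. At $x = tu$ with $t$ large, $\nabla\Phi(tu) = \sum_{j \in J} v_j W_{j,-}$ where $J = \{ j : \langle W_{j,-}, u\rangle > 0\}$; conditionally on $W$, this is a Gaussian vector in $\RR^d$ with covariance $\Sigma \defeq \sum_{j \in J} W_{j,-}^\top W_{j,-}$, and $\EE \|\nabla \Phi(tu)\|_2^2 = \operatorname{tr}(\Sigma) = \sum_{j \in J} \|W_{j,-}\|_2^2 \approx \tfrac{|J|}{N} \cdot 2d \approx d$ since $|J| \approx N/2$ and each $\|W_{j,-}\|_2^2 \approx 2d/N$. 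The hard part — and where I expect most of the work — is turning these in-expectation statements into the stated two-sided high-probability bound with the clean constant $\tfrac{1}{4\sqrt2}$: one needs (i) a lower tail bound for $\|\nabla\Phi(tu)\|_2$ given $W$, e.g.\ via Gaussian concentration or a Paley–Zygmund argument on the quadratic form, which forces the event $\{1 - 2\exp(-cd)\}$; (ii) concentration of $\operatorname{tr}(\Sigma)$ and control of its operator norm over the randomness of $W$, which forces the event $\{1 - 2\exp(-cN)\}$; and (iii) a careful but routine argument that taking $t$ large enough (depending on the realization) makes all the sign indicators stabilize without ever hitting a kink — this uses only that finitely many Gaussians are a.s.\ nonzero. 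For part (2), I would integrate the high-probability bound: on the good event $\lip(\Phi) \geq \tfrac{1}{4\sqrt2}\sqrt d$, and $\lip(\Phi) \geq 0$ always, so $\EE[\lip(\Phi)] \geq \tfrac{1}{4\sqrt2}\sqrt d \cdot (1 - 2e^{-cN})_+(1-2e^{-cd})_+ \geq c_1 \sqrt d$ provided $d, N > \ln(2)/c$ so that both factors exceed a fixed positive constant. The main obstacle is the quadratic-form lower tail in step (i) combined with bookkeeping the constants so that the final constant comes out as $\tfrac{1}{4\sqrt2}$.
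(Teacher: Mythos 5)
Your approach is correct in outline but genuinely different from the paper's. The paper first proves a \emph{deterministic} reduction (\Cref{prop:shallow_low_linear}): by walking far along the ray spanned by $v_0 = \sum_i c_i a_i$ in both directions and applying the triangle inequality to the two one-sided derivatives, one gets $\lip(\Phi) \geq \tfrac12 \lip(\widetilde\Phi) = \tfrac12 \Vert W^{(1)}W^{(0)}\Vert_2$ for \emph{every} realization. The probabilistic work then reduces to lower-bounding $\Vert (W^{(0)})^T (W^{(1)})^T\Vert_2$, and here the key simplification is that, conditionally on $W^{(1)}$, the vector $(W^{(0)})^T(W^{(1)})^T$ is exactly $\tfrac{\sqrt{2}\Vert W^{(1)}\Vert_2}{\sqrt N}$ times an $\mathcal N(0,I_d)$ vector; two applications of norm concentration for standard Gaussian vectors then give the clean constant $\tfrac{1}{4\sqrt2}$. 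Your route instead keeps the full gradient at infinity along a single ray, $G \defeq \sum_{j \in J} v_j W_{j,-}$ with $J = \{j : \langle W_{j,-},u\rangle>0\}$, which conditionally on $W^{(0)}$ is $\mathcal N(0,\Sigma)$ with $\Sigma = \sum_{j\in J}W_{j,-}^TW_{j,-}$; this trades the paper's exactly-Gaussian-vector structure for a Gaussian quadratic form with a random covariance. Both routes work, but yours needs more machinery for the same shallow result: a Bernstein bound showing $\operatorname{tr}(\Sigma)$ concentrates around its mean $d$, an operator-norm bound $\Vert\Sigma\Vert_2 \leq \Vert W^{(0)}\Vert_2^2 \lesssim 1+d/N$ to make the effective rank $\operatorname{tr}(\Sigma)/\Vert\Sigma\Vert_2 \gtrsim \min\{d,N\}$, and a lower-tail bound for $\Vert G\Vert_2^2 = \sum_i \lambda_i Z_i^2$. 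The compensation is that a gradient-at-a-point strategy is the one that survives for deep networks (cf.\ \Cref{prop:not_working_deep} and \Cref{subsec:deep_lower}), whereas the linear-network reduction is strictly a depth-one phenomenon.

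Two remarks so that this becomes a complete proof rather than a plan. First, the three steps you defer are all standard and do go through: the stabilization of the activation pattern along the ray uses only that $\langle W_{j,-},u\rangle \neq 0$ almost surely for all $j$; $\operatorname{tr}(\Sigma)=\sum_j \Vert W_{j,-}\Vert_2^2\mathbbm 1_{\langle W_{j,-},u\rangle>0}$ is a sum of $N$ i.i.d.\ nonnegative terms with mean $d/N$ and sub-exponential norm $O(d/N)$, so $\operatorname{tr}(\Sigma)\geq d/2$ with probability $1-2\exp(-cN)$; and the lower tail $\PP\bigl(\Vert G\Vert_2^2 \leq \tfrac12\operatorname{tr}(\Sigma)\,\big|\,W^{(0)}\bigr)\leq \exp\bigl(-c\operatorname{tr}(\Sigma)/\Vert\Sigma\Vert_2\bigr)$ follows from a Hanson--Wright or Laurent--Massart type estimate. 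But since you explicitly flag these as ``where most of the work is'' without carrying them out, the proposal as written has not yet proved the theorem. Second, your route produces $\lip(\Phi)\geq c\sqrt d$ for \emph{some} absolute constant with failure probability of order $\exp(-c\min\{d,N\})$; matching the specific constant $\tfrac{1}{4\sqrt2}$ and the exact product form of the probability requires choosing the thresholds in the three concentration steps carefully, which is possible (your good event gives $\Vert G\Vert_2 \geq \sqrt{d}/2$) but is not automatic from the sketch. Part (2) via Markov's inequality on the good event is exactly the paper's argument.
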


The proof of \Cref{thm:main_shallow_lower} can be found in \Cref{sec:low_bound_shallow}.

The two results combined show that in the case of shallow networks it holds
\begin{equation*}
  \lip(\Phi) \asymp \sqrt{d},
\end{equation*}
where the involved constants are absolute.
This holds with high probability and in expectation and moreover
without almost any restrictions on the width $N$ or the input dimension $d$.

\subsection{Deep networks}

In the case of deep networks, our upper bound for the Lipschitz constant
reads as follows:

\begin{theorem}\label{thm:main_2}
  There exist absolute constants $C, c_1 > 0$ such that, if $N > d + 2$ and $\Phi: \RR^d \to \RR$
  is a random ReLU network of width $N$ and with $L$ hidden layers with random weights
  and biases following \Cref{assum:1}, the following hold:
  \begin{enumerate}
  \item The inequality
        \begin{align*}
          \lip(\Phi)
          \leq C \cdot (3\sqrt{2})^{L}
                 \cdot \sqrt{L}
                 \cdot \sqrt{\ln \left(\frac{\ee N}{d+1}\right)}
                 \cdot \sqrt{d} 
        \end{align*}
        holds with probability at least $(1-2\exp(- d))\left((1-2\exp(-c_1N))_+\right)^L$.
        \vspace*{0.1cm}
  
  \item \(
          \displaystyle
          \EE \left[\lip(\Phi) \right]
          \leq C \cdot (2 \sqrt{2})^{L}
                 \cdot \sqrt{L}
                 \cdot \sqrt{\ln \left(\frac{\ee N}{d+1}\right)}
                 \cdot \sqrt{d}.
        \)
  \end{enumerate}
\end{theorem}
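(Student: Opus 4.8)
The plan is to bound $\lip(\Phi)$ layerwise and then control the product of the operator norms of the (restricted) linear maps that arise by differentiating $\Phi$. Since $\Phi$ is piecewise affine, on each activation region the gradient has the form $\nabla \Phi(x)^\top = W^{(L)} D^{(L-1)} W^{(L-1)} D^{(L-2)} \cdots D^{(0)} W^{(0)}$, where each $D^{(\ell)} = \diag(\mathbbm 1_{\text{neuron active}})$ is a diagonal $0/1$ matrix depending on $x$. Hence $\lip(\Phi) = \esssup_x \|\nabla\Phi(x)\|_2 \le \|W^{(L)}\|_2 \cdot \prod_{\ell=0}^{L-1} \sup_{D} \|D W^{(\ell)}\|_2$, where the inner supremum is over all $0/1$ diagonal matrices $D$; but that naive bound is too lossy because $\sup_D \|D W^{(\ell)}\|_2 = \|W^{(\ell)}\|_2 \approx 2\sqrt 2$, giving only $(2\sqrt2)^L$ with no way to see the role of $d$. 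Instead the key is that the relevant vector being propagated forward from $x$ lives (essentially) in a low-dimensional subspace: the input perturbation direction is in $\RR^d$, and after the first layer the signal that matters is $W^{(0)} u$ for a unit vector $u \in \RR^d$, so downstream one only needs operator norms of $D^{(\ell)} W^{(\ell)}$ restricted to the $(d{+}1)$-dimensional (or so) subspace carrying the propagated signal plus its history. I would therefore set up an inductive quantity $\rho_\ell := \sup_{x} \|\text{(Jacobian of first } \ell \text{ layers at } x)\|_{2 \to 2}$, restricted to image directions, and show $\rho_{\ell+1} \le \alpha \cdot \rho_\ell$ with $\alpha$ a constant times $\sqrt 2$ on the relevant event, exploiting that a fixed low-dimensional subspace of $\RR^N$ is mapped by $W^{(\ell)}$ with operator norm controlled by a Gaussian matrix concentration bound (Section \ref{sec:preliminaries} tools), and that intersecting with any coordinate subspace via $D^{(\ell)}$ cannot increase the norm.

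Concretely, the main technical engine will be a uniform (over all activation patterns $D$) bound on $\|D W^{(\ell)} P\|_2$ where $P$ is the orthogonal projection onto an (at most) $k$-dimensional subspace with $k \asymp d$: a net argument over the Grassmannian of $k$-planes together with a net over the $2^N$ sign patterns, balanced against Gaussian concentration for $\|D W^{(\ell)} v\|_2$ for fixed $v$. The union bound over $2^N$ patterns is what forces the $\sqrt{\ln(\ee N/(d+1))}$ factor and the width hypothesis $N > d+2$: the entropy $N \ln 2$ must be absorbed by the $\exp(-c k)$-type tail with $k \asymp d$, so after optimizing one pays a $\sqrt{N/d}$-free logarithmic factor $\sqrt{\ln(\ee N/(d+1))}$ rather than a polynomial one. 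I would first prove this as a standalone lemma (giving, on an event of probability $\ge 1 - 2\exp(-c_1 N)$, the bound $\sup_D \|D W^{(\ell)} P\|_2 \le 3\sqrt 2 \cdot \sqrt{\ln(\ee N/(d+1))}$ or similar), then chain it across the $L$ inner layers, and finally handle $W^{(L)} \in \RR^{1\times N}$ separately: $\|W^{(L)} D\|_2 = \|D W^{(L),\top}\|_2$ is a chi-type quantity and on the subspace of dimension $\asymp d$ contributes one more $\sqrt d \cdot \sqrt{\ln(\ee N/(d+1))}$ factor — actually the $\sqrt d$ in the final bound comes from combining the first-layer factor (which maps $\RR^d$ and contributes $\|W^{(0)}\|_{2\to 2}$-type growth together with the $\sqrt d$ from the output layer's interaction with a $d$-dimensional subspace) carefully; I would track exactly where the single $\sqrt d$ and the single $\sqrt L$ enter so as not to double-count.

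For the high-probability statement (part (1)), I intersect: one event of probability $\ge 1 - 2\exp(-d)$ controlling the first layer / output layer interaction over the unit sphere $\sph^{d-1}$ (a net of size $e^{O(d)}$), and $L$ events each of probability $\ge 1 - 2\exp(-c_1 N)$ controlling the inner layers; a union bound gives the stated $(1 - 2\exp(-d))((1-2\exp(-c_1 N))_+)^L$. For part (2), the expectation bound, I would integrate the tail: writing $\lip(\Phi) = \prod \|\cdot\|$, I would use that each factor is sub-Gaussian/sub-exponential around its typical value (Gaussian Lipschitz concentration for operator norms), so $\EE[\lip(\Phi)]$ is within a constant factor of the high-probability value, with the base of the exponential improving from $3\sqrt2$ to $2\sqrt2$ because in expectation one does not pay the worst-case constant in the net argument — here I'd invoke a moment bound $\EE[\|D W^{(\ell)} P\|_2^m]^{1/m} \le (2\sqrt2 + o(1))\sqrt{\ln(\ee N/(d+1))}$ and multiply through using independence across layers (the layers' weight matrices are jointly independent, so the product's expectation factorizes). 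The main obstacle is the uniform-over-activation-patterns operator norm lemma: getting the dependence on $N$ down to merely logarithmic (rather than $\sqrt{N/d}$) requires carefully exploiting that after projecting onto a $d$-dimensional subspace, $D W^{(\ell)} P$ is a $d$-dimensional Gaussian operator whose norm has tails $\exp(-c t^2 N)$ for $t$ above a constant, which beats the $2^N$ union bound exactly when one allows the extra $\sqrt{\ln(\ee N/(d+1))}$ slack — threading this optimization, and confirming that the propagated subspace really does stay $O(d)$-dimensional through all $L$ layers (it does, because each layer's Jacobian image of a $d$-dimensional space is again $\le d$-dimensional), is the crux.
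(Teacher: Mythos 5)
Your high-level reduction (gradient formula, sup over activation patterns, chaining through the layers, separate treatment of the output layer) matches the paper's strategy, but the mechanism you propose for obtaining the factor $\sqrt{\ln(\ee N/(d+1))}$ --- a union bound over all $2^N$ sign patterns per layer, balanced against Gaussian tails --- does not work, and this is where the real content of the proof lies. For the inner layers your union bound is actually fine (for a fixed $d$-dimensional subspace, $\|D W^{(\ell)} P\|_2$ has tails $\exp(-ct^2)$ above $\sqrt{2/N}(\sqrt N + \sqrt d + t)$, so taking $t \asymp \sqrt N$ beats the $2^N$ entropy and yields a \emph{constant} per layer). The failure is at the output layer: $W^{(L)}$ has $\mathcal N(0,1)$ entries with no $1/\sqrt N$ scaling, so $\sup_{Y}\langle (W^{(L)})^T, Y\rangle$ over a union of $M$ ellipsoids of radius $\Lambda$ and dimension $d$ is of order $\Lambda(\sqrt d + \sqrt{\ln M})$. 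With $M = 2^{NL}$ (all conceivable pattern tuples) this gives $\Lambda\sqrt{NL}$, not $\Lambda\sqrt{Ld\ln(\ee N/(d+1))}$; there is no optimization that converts the entropy $NL\ln 2$ into $Ld\ln(\ee N/(d+1))$. The missing idea is the paper's \Cref{lem:D_card}: the number of activation-pattern tuples $(D^{(L-1)}(x),\dots,D^{(0)}(x))$ that are actually \emph{realized} as $x$ ranges over $\RR^d$ is at most $(\ee N/(d+1))^{L(d+1)}$, proved layer by layer via Sauer's lemma, because conditional on the earlier patterns the pre-activations of layer $\ell$ are affine in $x\in\RR^d$, so the realized sign vectors form a class of VC dimension at most $d+1$ evaluated at $N$ points (this is also where the hypothesis $N>d+2$ enters). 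Feeding this count into a covering-number bound for the set $\mathcal L$ of \Cref{prop:key} and applying Dudley's inequality to the Gaussian process $\langle (W^{(L)})^T, Y\rangle$ yields the stated bound.

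Two smaller points. First, your expectation argument asserts that $\EE[\lip(\Phi)]$ factorizes over layers by independence; it does not, since the supremum over $x$ couples all layers. The correct route (and the paper's) is to condition on $W^{(0)},\dots,W^{(L-1)},b^{(0)},\dots,b^{(L-1)}$, bound the conditional expectation over $W^{(L)}$ by $C\,\Lambda\,\sqrt{L}\,\sqrt{\ln(\ee N/(d+1))}\,\sqrt d$ with $\Lambda=\prod_{\ell=0}^{L-1}\|W^{(\ell)}\|_2$, and only then use independence to compute $\EE[\Lambda]\le \sqrt2(1+\sqrt{d/N})(2\sqrt2)^{L-1}$; this is also what produces the better base $2\sqrt2$ in part~(2) versus $3\sqrt2$ in part~(1) (the latter comes from the high-probability bound $\|W^{(\ell)}\|_2\le 3\sqrt2$ at $t=\sqrt N$). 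Second, be careful with $\lip(\Phi)=\sup_x\|\nabla\Phi(x)\|_2$: as the paper notes around \eqref{eq:lowbound}, one only has an inequality when the supremum ranges over all of $\RR^d$, which is all that is needed for the upper bound but should not be stated as an equality.
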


The proof of \Cref{thm:main_2} can be found in \Cref{sec:deep}.
We remark that for the upper bounds in the case of deep networks above,
we impose the assumption $N > d+2$ on the network width, which is not present
in our results for shallow networks.

Again, we complement the upper bound by a lower bound,
which shows---at least for fixed depth $L$---that the bounds are tight up to
a factor that is logarithmic in the width of the network.

\begin{theorem}\label{thm:main_3}
  There exist absolute constants $C, c,c_1 > 0$ with the following property: 
  If $N \geq CdL^2$ and if $\Phi:\RR^d \to \RR$ is a random ReLU network of width $N$, with $L$ hidden layers 
  with random weights and biases following \Cref{assum:2}, 
  then the following two properties are satisfied:
\begin{enumerate}
\item It holds
\begin{equation*}
\lip(\Phi) \geq \frac{1}{4} \sqrt{d}
\end{equation*}
with probability at least $\left(1 - \frac{1}{2^N}-\exp(-N/(CL^2))\right)^L \cdot (1- 2 \exp (-c_1 d ))_+$. 
\item If we additionally assume $N \geq CL^2 \ln(4L)$ and $d> \frac{\ln(2)}{c_1}$, it holds \begin{equation*}
\EE \left[\lip(\Phi)\right] \geq c \cdot \sqrt{d}.
\end{equation*}
\end{enumerate}
\end{theorem}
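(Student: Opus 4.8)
The plan is to bound $\lip(\Phi)$ from below by the Euclidean norm of the gradient of $\Phi$ at the single input $x_0 \defeq e_1$, the first standard basis vector of $\RR^d$. Since $\Phi$ is continuous and piecewise affine, $\lip(\Phi)=\sup\{\|\nabla\Phi(x)\|_2:\Phi\text{ differentiable at }x\}$, and on the event that no hidden layer is entirely inactive at $x_0$ the network is almost surely differentiable at $x_0$, with gradient $\nabla\Phi(x_0)=(W^{(0)})^{T}v^{(0)}$, where $v^{(L-1)}\defeq D^{(L-1)}(W^{(L)})^{T}$ and $v^{(\ell)}\defeq D^{(\ell)}(W^{(\ell+1)})^{T}v^{(\ell+1)}$ for $\ell=L-2,\dots,0$, and $D^{(\ell)}\in\RR^{N\times N}$ is the $0/1$ diagonal matrix recording which neurons of layer $\ell$ are active at $x_0$. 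The two things to prove are that, on an event of the stated probability, $\|v^{(0)}\|_2^2\gtrsim N$, and simultaneously $\|(W^{(0)})^{T}v^{(0)}\|_2^2\gtrsim \tfrac dN\|v^{(0)}\|_2^2\gtrsim d$; together these give $\lip(\Phi)\ge\|\nabla\Phi(x_0)\|_2\ge\tfrac14\sqrt d$.

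The central difficulty is that each $D^{(\ell)}$ depends on $W^{(1)},\dots,W^{(\ell)}$, which are then re-used when $v^{(\ell+1)}$ is pushed backwards through layer $\ell$, so a naive conditioning does not produce fresh Gaussians. To handle this I condition on the $\sigma$-algebra $\mathcal W_\parallel$ generated by the first column of $W^{(0)}$, by all biases, and, for $1\le\ell\le L-1$, by the single column of $W^{(\ell)}$ in the direction of the layer-$(\ell-1)$ activation $h^{(\ell-1)}(x_0)$ — the only part of $W^{(\ell)}$ the forward pass at $x_0$ ever uses. Then $\mathcal W_\parallel$ determines the entire forward pass at $x_0$, in particular all $D^{(\ell)}$ and all active counts $k_\ell\defeq\rang D^{(\ell)}$, while the complementary "perpendicular" matrices $W^{(1)}_\perp,\dots,W^{(L-1)}_\perp$, together with $W^{(L)}$ and columns $2,\dots,d$ of $W^{(0)}$, remain (conditionally) independent Gaussians. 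Here the symmetry of the bias distributions from \Cref{assum:2} enters: conditioned on the previous layer being nonzero, every preactivation of layer $\ell$ at $x_0$ is a sum of an independent symmetric Gaussian and an independent symmetric bias, hence symmetric, so its sign is $+$ with probability $\tfrac12$ independently across neurons; thus $k_\ell\sim\mathrm{Binomial}(N,\tfrac12)$ given that layer $\ell-1$ is alive, which a Chernoff bound controls, and the probability that a layer dies is at most $2^{-N}$. Chaining these estimates over the $L$ layers yields the factor $\bigl(1-2^{-N}-\exp(-N/(CL^2))\bigr)^{L}$.

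For the backward recursion, fix $\ell$ and condition additionally on $W^{(L)},W^{(L-1)}_\perp,\dots,W^{(\ell+2)}_\perp$, so that $v^{(\ell+1)}$ becomes a fixed vector of norm $\rho_{\ell+1}$ while $W^{(\ell+1)}_\perp$ is still fresh. Splitting $W^{(\ell+1)}=W^{(\ell+1)}_\parallel+W^{(\ell+1)}_\perp$ relative to $h^{(\ell)}(x_0)$ gives $v^{(\ell)}=a_\ell+D^{(\ell)}b_\ell$, where $a_\ell$ is a fixed scalar multiple of $h^{(\ell)}(x_0)$ (so $D^{(\ell)}a_\ell=a_\ell$) and $b_\ell=(W^{(\ell+1)}_\perp)^{T}v^{(\ell+1)}\sim\mathcal N\bigl(0,\tfrac{2\rho_{\ell+1}^2}{N}(I-\widehat h\,\widehat h^{\,T})\bigr)$ with $\widehat h=h^{(\ell)}(x_0)/\|h^{(\ell)}(x_0)\|_2$. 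Since $b_\ell\perp h^{(\ell)}(x_0)$ and $D^{(\ell)}h^{(\ell)}(x_0)=h^{(\ell)}(x_0)$, the cross term vanishes: $\|v^{(\ell)}\|_2^2=\|a_\ell\|_2^2+b_\ell^{T}D^{(\ell)}b_\ell\ge b_\ell^{T}D^{(\ell)}b_\ell$, and $b_\ell^{T}D^{(\ell)}b_\ell$ is exactly $\tfrac{2\rho_{\ell+1}^2}{N}$ times a $\chi^2$-variable with $k_\ell-1$ degrees of freedom (whose conditional mean is $\tfrac{2\rho_{\ell+1}^2}{N}(k_\ell-1)$). Combining with $k_\ell\ge(1-c/L)\tfrac N2$ and the matching $\chi^2$-lower-tail bound — whose failure probability is $\exp(-cN/L^2)$ precisely because we work at scale $\varepsilon\asymp1/L$ — gives $\|v^{(\ell)}\|_2^2\ge(1-C'/L)\|v^{(\ell+1)}\|_2^2$; likewise $\|v^{(L-1)}\|_2^2\sim\chi^2_{k_{L-1}}\ge(1-c/L)k_{L-1}$. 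Multiplying the $L$ per-layer factors keeps $(1-C'/L)^{L}$ bounded below by an absolute constant, whence $\|v^{(0)}\|_2^2\ge k_{L-1}/8\gtrsim N$ on the good event; this is exactly where the hypothesis $N\gtrsim L^2$ (and, for later use, $N\ge CdL^2$) is needed, so that all the $1/L$-scale concentrations hold simultaneously with the stated probability.

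Finally, passing from $v^{(0)}$ to the gradient: $v^{(0)}$ is a function of $\mathcal W_\parallel$ and of $W^{(1)}_\perp,\dots,W^{(L-1)}_\perp,W^{(L)}$, hence independent of columns $2,\dots,d$ of $W^{(0)}$, so conditionally $\|\nabla\Phi(x_0)\|_2^2\ge\sum_{j=2}^{d}\langle W^{(0)}_{-,j},v^{(0)}\rangle^2\sim\tfrac{2\|v^{(0)}\|_2^2}{N}\chi^2_{d-1}$, and a lower-tail estimate for $\chi^2_{d-1}$ (failure probability $\le2\exp(-c_1d)$) yields $\|\nabla\Phi(x_0)\|_2^2\ge\tfrac{(d-1)\|v^{(0)}\|_2^2}{N}\ge\tfrac d{16}$; intersecting all the good events gives part (1). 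For part (2) one integrates instead of truncating: writing $G$ for the forward/backward good event (which is independent of columns $2,\dots,d$ of $W^{(0)}$ and forces $\|v^{(0)}\|_2^2\gtrsim N$), one has $\EE[\lip(\Phi)]\ge\EE\bigl[\|\nabla\Phi(x_0)\|_2\mathbbm 1_G\bigr]\gtrsim\EE\bigl[\sqrt{\chi^2_{d-1}}\,\bigr]\PP(G)\gtrsim\sqrt d$, since the extra hypothesis $N\ge CL^2\ln(4L)$ forces $\PP(G)\ge\tfrac12$ and $\EE\sqrt{\chi^2_{d-1}}\asymp\sqrt d$ for $d$ above an absolute threshold. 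The main obstacle is the one flagged in the second paragraph: decoupling the forward activation patterns from the weights re-used in the backward pass — via the parallel/perpendicular splitting together with a carefully ordered sequence of conditionings — and doing so tightly enough (error $O(1/L)$ per layer) that the lower bound does not decay exponentially in $L$.
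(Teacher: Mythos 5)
Your proposal is correct in its essentials, but it reaches the lower bound by a genuinely different route than the paper. The paper also works at a single point $x_0\neq 0$ and uses \eqref{eq:upbound}, but it handles the forward/backward dependence without any weight splitting: it observes that the $i$-th row of $\sqrt{N}\,D^{(\ell)}(x_0)W^{(\ell)}$ depends only on $W^{(\ell)}_{i,-}$ and $b^{(\ell)}_i$, and that the symmetry of the bias makes these rows independent, isotropic and sub-gaussian \emph{jointly} in $(W^{(\ell)},b^{(\ell)})$ (\Cref{thm: isotropic_rows,thm: dev_conditions}). It then applies the high-probability matrix deviation inequality of Liaw et al.\ layer by layer over the at most $d$-dimensional image of the preceding product, obtaining a two-sided almost-isometry for $D^{(L-1)}W^{(L-1)}\cdots D^{(0)}W^{(0)}$ on all of $\RR^d$ with per-layer distortion $C(\sqrt{d}+u)/\sqrt{N}$ (\Cref{prop:gell}); the output layer is handled by an SVD of this product together with rotation invariance of $W^{(L)}$ (\Cref{prop:grad_lower}). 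Your argument replaces this by the rank-one/perpendicular decomposition of each $W^{(\ell)}$ relative to the forward activation direction, an exact $\chi^2_{k_\ell-1}$ computation for the backward vectors $v^{(\ell)}$ (your observation that $\hat h^{(\ell)}$ is supported on the active set, so that $P_{S_\ell}-\hat h^{(\ell)}(\hat h^{(\ell)})^T$ is a rank-$(k_\ell-1)$ projection, is what makes this exact), and binomial concentration for the active counts. What each buys: the paper's route is uniform over directions and simultaneously yields the matching \emph{upper} bound on $\|\nabla\Phi(x_0)\|_2$ stated in \Cref{prop:grad_lower}, whereas your route is more elementary (no chaining, Gaussian width, or matrix deviation inequality) and makes visible that the backward recursion itself only needs $N\gtrsim L^2$, with the dimension $d$ entering solely through the fresh columns of $W^{(0)}$ at the last step.

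Two caveats you should address when writing this up. First, your final step lower-bounds $\|(W^{(0)})^Tv^{(0)}\|_2^2$ by a $\tfrac{2\|v^{(0)}\|_2^2}{N}\chi^2_{d-1}$ variable, which is vacuous for $d=1$ and costs a factor $(d-1)/d$ in general; the theorem's constant $\tfrac14\sqrt d$ then forces either a separate treatment of $d=1$ or a way to also exploit the first (conditioned) column of $W^{(0)}$. Second, the iterated Gaussian conditioning must be set up carefully: the directions $\hat h^{(\ell-1)}$ are random, so one has to argue that they are measurable with respect to the previously revealed information and independent of $W^{(\ell)}$ before invoking the independence of $W^{(\ell)}\hat h^{(\ell-1)}$ and $W^{(\ell)}_\perp$, and the per-layer good events must be combined via the chain rule for conditional probabilities (as the paper does via \Cref{prop:highprob}). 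Neither point is a flaw in the strategy, but both require explicit care in a full proof.
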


\Cref{thm:main_3} is proven in \Cref{subsec:deep_lower}.
At first glance, it might be irritating that the probability in (1) depends on $N/(CL^2)$.
Here, it should be noted that the condition $N \geq CdL^2$ is equivalent to $N/(CL^2) \geq d$.
Therefore, the bound on the Lipschitz constant in particular holds with high probability depending
on the input dimension $d$.
We remark that in contrast to the upper bound, where we assume that $N > d+2$,
for the lower bound we assume that $N \gtrsim d L^2$.
Moreover, we assume that the biases are drawn from symmetric distributions,
\paul{whereas this assumption is not needed in \Cref{thm:main_2,thm:main_shallow_lower,thm:main_1}. 
However, we emphasize once more that the assumption that the biases are drawn from symmetric distributions
is natural and includes in particular the initialization to zero (as originally introduced in \cite{he2015delving}), 
a Gaussian initialization, a symmetric uniform initialization as well as a Rademacher initialization.}
{\color{black}Moreover, we note that the fact that the numerical constant in \Cref{thm:main_3}(1) is better than the one in \Cref{thm:main_shallow_lower}(1) is merely a proof artifact 
since we did not try to optimize numerical constants.}

The above results provide bounds on the Lipschitz constant of deep ReLU networks.
The upper and lower bounds are tight up to a factor that
depends exponentially on the depth $L$, but is only logarithmic in $N/d$.

Note that for both the upper and lower bound for \emph{deep} networks, we impose some additional assumptions
on the input dimension $d$, the width $N$, and the depth $L$,
whereas no such assumption is needed in the case of shallow networks.
Specifically, for the upper bound we require $N > d+2$ and for the lower bound $N \gtrsim d L^2$.

\begin{remark}\label{remark:compare}
In this remark, we compare the upper bounds for the Lipschitz constant of random $\relu$
networks which we derive in this work (see \Cref{thm:main_1,thm:main_2})
to the bounds that were shown in \cite{buchanan2021deep,nguyen2021tight}
in the context of the lazy training regime \cite{chizat2019lazy}.

We first note that neither of the papers \cite{buchanan2021deep,nguyen2021tight}
provides \emph{lower} bounds for the Lipschitz constant.

The analysis in \cite{nguyen2021tight} concentrates on bounding the Lipschitz constant
of the feature maps and does not include the final layer.
In our setting ($L$ hidden layers, each with $N$ neurons and input dimension $d$
with weights following the variant of the He initialization),
the bound in \cite[Theorem~6.2]{nguyen2021tight} shows that the output of the penultimate layer
has Lipschitz constant at most $\mathcal{O} (2^L \max \{1, \sqrt{d/N}\})$ up to log factors,
with high probability.
If one naively combines this bound with the (sharp) bound of $\sqrt{N}$
for the Lipschitz constant of the function mapping the output of the penultimate layer
to the output of the network, one achieves a bound of $\mathcal{O}(2^L\max\{\sqrt{N}, \sqrt{d}\})$,
which is (for fixed depth $L$ and $N \gg d$) significantly weaker
than the bound of $\mathcal{O}({C^L\sqrt{d}})$ (up to log factors) that we derive.
Moreover, the analysis in \cite{nguyen2021tight} heavily uses the fact that the biases are all zero,
whereas our techniques can handle quite general distributions of the biases. 
 
A sharper version of this bound has been obtained as an auxiliary result in \cite[Theorem~B.5]{buchanan2021deep}: 
Under the additional assumption $N \gtrsim d^4L \ln^4 (N)$,
this result implies that the Lipschitz constant is bounded by $\mathcal{O}(\sqrt{d})$ up to log factors,
with high probability.
In particular, it is remarkable that the bound is independent of the network depth $L$,
whereas our bound depends \emph{exponentially} on $L$. 
Removing the exponential dependence on $L$ in our setting is interesting but beyond the scope of the present work. 
Adapting the techniques from \cite{buchanan2021deep} to our setting
with quite general distributions of the biases is not straightforward,
since \cite{buchanan2021deep} heavily uses that all biases are zero
in order to reduce the problem of bounding the global Lipschitz constant
to bounding the Lipschitz constant on the sphere.
Finally, we point out that the upper bound in \cite{buchanan2021deep} implies
that our lower bound of $\Omega(\sqrt{d})$ is sharp,
at least in the case of all biases being zero and $N \gtrsim d^4L \ln^4(N) + dL^2$.
\end{remark}
\begin{remark}\label{remark:compare_lower}
This remark is dedicated to comparing the lower bounds that we derive in this work (see \Cref{thm:main_3,thm:main_shallow_lower})
with a lower bound on the norm of the gradient of random ReLU networks at a fixed point that was derived in \cite[Lemma~2.2]{bartlett2021adversarial}.
Noting that the initialization scheme considered in \cite{bartlett2021adversarial} is slightly different from the setting that we consider,
transferred to our setting the lower bound from \cite{bartlett2021adversarial} states that the norm of the gradient at a single fixed point is lower bounded by $\Omega\left(\sqrt{d} \cdot \sqrt{2}^{-L}\right)$ with high probability. 
This lower bound immediately implies the same lower bound for the Lipschitz constant of random ReLU
networks, since such networks are almost surely differentiable at a fixed point (see \Cref{app:diff}). While this lower bound is for fixed depth $L$ at par 
with the lower bound of $\Omega(\sqrt{d})$ that we derive, it is important to note that our bound does \emph{not} depend on the depth $L$ of the network,
whereas the bound from \cite{bartlett2021adversarial} has \emph{exponential} decay in $L$. 
Moreover, in \cite{bartlett2021adversarial} it is also assumed that all the biases are equal to zero, whereas we allow more general distributions for the biases.

In principle, it might be possible to adapt the techniques from \cite{bartlett2021adversarial} to derive a lower bound
that is similar to ours but this is not done in \cite{bartlett2021adversarial}. 
As far as we are aware, our paper is the first to state and derive a sharp lower bound matching the upper bound from \cite{buchanan2021deep}.
\end{remark}

\section{Technical preliminaries}\label{sec:preliminaries}

In this section, we discuss the necessary technical preliminaries for our proofs.

\subsection{The gradient of ReLU networks}
\label{subsec:gradient}

To obtain bounds on the Lipschitz constant of a neural network,
we will use the gradient of the network.
To conveniently express this gradient, we introduce the following notation:
For a vector $x \in \RR^N$ we define the diagonal matrix $\diag(x) \in \RR^{N \times N}$ via
\begin{equation*}
  \left(\diag(x)\right)_{i,j}
  \defeq \begin{cases}
            \mathbbm{1}_{x_i > 0},& i=j, \\
            0,                    &i \neq j .
          \end{cases}
\end{equation*}
This leads to the following recursive representation of a ReLU network $\Phi$
with the same notation as in \Cref{assum:1}:
Let $x =: x^{(0)} \in \RR^d$ and define recursively 
\begin{align} \label{eq:d-matrices}
  D^{(\ell)}(x)
  &\defeq \diag(W^{(\ell)}x^{(\ell)} + b^{(\ell)}), \nonumber\\
  x^{(\ell +1)}
  &\defeq D^{(\ell)}(x) \cdot \left(W^{(\ell)}x^{(\ell)} + b^{(\ell)}\right)
         = \relu(W^{(\ell)}x^{(\ell)} + b^{(\ell)}),
  \quad 0 \leq \ell < L.
\end{align}
Then it holds $\Phi(x) = W^{(L)}x^{(L)} + b^{(L)}$.

A ReLU network is not necessarily everywhere differentiable,
since the ReLU itself is not differentiable at $0$.
Nevertheless, the following proposition states how the gradient of a ReLU network
can be computed almost everywhere.

\begin{proposition}[cf.~{\cite[Theorem III.1]{berner2019towards}}]\label{prop:grad_relu}
Let $\Phi : \RR^d \to \RR$ be a ReLU network. Then it holds for almost every $x \in \RR^d$ that $\Phi$ is differentiable at $x$ with
\begin{equation*}
\left(\nabla \Phi (x) \right)^T = W^{(L)} \cdot D^{(L-1)}(x) \cdot W^{(L-1)} \cdots D^{(0)}(x)\cdot W^{(0)}.
\end{equation*}
Here, the matrices $D^{(\ell)}(x)$ are defined as in \eqref{eq:d-matrices} and $\Phi$ is any realization of a random ReLU network as in \Cref{assum:1}. 
\end{proposition}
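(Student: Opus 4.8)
The plan is to prove the formula for $\nabla \Phi(x)$ by induction on the depth $L$, exploiting the recursive structure in \eqref{eq:d-matrices} together with the chain rule, while carefully tracking the null set on which differentiability may fail. First I would set up the layerwise maps: write $g^{(\ell)}(x) \defeq W^{(\ell)} x^{(\ell)} + b^{(\ell)}$ for the pre-activation of layer $\ell$, so that $x^{(\ell+1)} = \relu(g^{(\ell)}(x))$ and $\Phi(x) = g^{(L)}(x)$. The key observation is that each $x^{(\ell)}$ is a continuous, piecewise-affine function of $x$, hence differentiable off a Lebesgue-null set, and that the single-variable $\relu$ is differentiable everywhere except at $0$ with derivative $\mathbbm{1}_{t>0}$. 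The composition of the affine map $x \mapsto W^{(\ell)} x^{(\ell)} + b^{(\ell)}$ with the componentwise $\relu$ then has, at any point where $x^{(\ell)}$ is differentiable and where no coordinate of $g^{(\ell)}(x)$ vanishes, the Jacobian $D^{(\ell)}(x) \cdot W^{(\ell)} \cdot J^{(\ell)}(x)$, where $J^{(\ell)}(x)$ is the Jacobian of $x \mapsto x^{(\ell)}$; this is exactly where the diagonal matrix $\diag(\cdot)$ enters, since $(\diag(g^{(\ell)}(x)))_{i,i} = \mathbbm{1}_{(g^{(\ell)}(x))_i > 0}$ records the derivative of $\relu$ at the $i$-th pre-activation coordinate. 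Chaining these Jacobians from $\ell = L-1$ down to $\ell = 0$ and using that $x^{(0)} = x$ has Jacobian $I_d$ yields the claimed product formula, and since $\Phi = g^{(L)}$ is affine post-composed onto $x^{(L)}$, the extra $W^{(L)}$ factor appears and the bias $b^{(L)}$ drops out upon differentiation.

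The main technical point — and the step I expect to be the only real obstacle — is controlling the bad set. For each layer $\ell$, the set $B_\ell \defeq \{x : (g^{(\ell)}(x))_i = 0 \text{ for some } i\}$ could a priori have positive measure (e.g. if $W^{(\ell)}_{i,-} x^{(\ell)} + b^{(\ell)}_i$ is identically zero as a function of $x$ because $W^{(\ell)}_{i,-} = 0$ and $b^{(\ell)}_i = 0$). To handle this cleanly I would argue, as in \cite{berner2019towards}, that $x \mapsto \Phi(x)$ and each $x \mapsto x^{(\ell)}$ are piecewise affine with respect to a \emph{finite} polyhedral decomposition of $\RR^d$ (the pieces being determined by the finitely many sign patterns of the pre-activations), so that on the interior of each such polyhedral piece the map is affine and the product formula holds by direct computation, restricted to that piece. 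The union of the boundaries of these finitely many polyhedra is a finite union of affine hyperplanes, hence Lebesgue-null, and off this null set $\Phi$ is differentiable with the stated gradient. An equivalent route, if one prefers an inductive formulation: assume that off a null set $\mathcal{E}_{\ell}$ the map $x \mapsto x^{(\ell)}$ is differentiable with Jacobian $D^{(\ell-1)}(x) W^{(\ell-1)} \cdots D^{(0)}(x) W^{(0)}$, then observe that $x^{(\ell+1)}$ fails to be differentiable only on $\mathcal{E}_\ell \cup B_\ell$, and verify that $B_\ell$ — being a subset of the preimage of a measure-zero set under the locally Lipschitz, piecewise-affine map $x \mapsto g^{(\ell)}(x)$, with the caveat about the identically-zero components handled as above — adds nothing essential to the null set, completing the induction.

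Finally I would note the (trivial) remark in the statement that "$\Phi$ is any realization of a random ReLU network as in \Cref{assum:1}": the proposition is a deterministic statement about ReLU networks, valid for every fixed choice of the weights and biases, so no probabilistic input is used here; the phrase merely flags that later we will apply it to almost-sure realizations. I would close by recording that since the exceptional set is Lebesgue-null, Fubini-type arguments in subsequent sections (integrating over $x$ drawn from an absolutely continuous distribution, or over directions) may freely ignore it. Since the piecewise-affine structure and the computation of the Jacobian on each piece are routine, and the bulk of the content coincides with \cite[Theorem III.1]{berner2019towards}, the proof is essentially a citation with the chain-rule/induction sketch above filled in; I would keep it short.
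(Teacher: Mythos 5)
Your proposal is correct; note that the paper itself gives no proof of this proposition at all --- it is stated purely as a citation of \cite[Theorem~III.1]{berner2019towards} --- so your sketch simply supplies the standard argument (layerwise polyhedral decomposition, chain rule on the interior of each cell, null boundary set) that the citation delegates. The one loose phrase is the appeal to ``preimage of a measure-zero set under a locally Lipschitz map'' (which is false in general, e.g.\ for constant maps), but you correctly repair it via the case distinction: on each polyhedral cell a pre-activation coordinate is affine, so its zero set is either a null hyperplane slice or the whole cell, and in the latter case $D^{(\ell)}_{i,i}=\mathbbm{1}_{0>0}=0$ matches the (zero) derivative of the identically-vanishing $\relu$ output, so the product formula still holds there.
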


\subsection{The Lipschitz constant of ReLU networks}
\label{subsec:lip}

The following well-known proposition establishes a relation between the Lipschitz constant
and the gradient of a function.

\begin{proposition}\label{prop:lipgrad}
  Let $f: \RR^d \to \RR$ be Lipschitz continuous and $M \subseteq \RR^d$ any measurable subset
  of $\RR^d$ with Lebesgue measure $\lambda^d (\RR^d \setminus M) = 0$
  such that $f$ is differentiable in every $x \in M$.
  Then it holds that
  \begin{equation*}
    \lip(f) = \underset{x \in M}{\sup} \Vert \nabla f (x) \Vert_2.
  \end{equation*}
\end{proposition}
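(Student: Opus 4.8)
The plan is to prove the two inequalities $\lip(f) \leq \sup_{x\in M}\|\nabla f(x)\|_2$ and $\lip(f) \geq \sup_{x\in M}\|\nabla f(x)\|_2$ separately. The easier direction is the second one: for any $x \in M$, since $f$ is differentiable at $x$ and Lipschitz with constant $\lip(f)$, the directional derivative in any unit direction $v$ satisfies $|\langle \nabla f(x), v\rangle| = \lim_{t\to 0}|f(x+tv)-f(x)|/|t| \leq \lip(f)$; taking the supremum over $v \in \mathbb{S}^{d-1}$ gives $\|\nabla f(x)\|_2 \leq \lip(f)$, and then taking the supremum over $x \in M$ yields the claim.

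For the first (harder) direction, I would fix $x, y \in \RR^d$ with $x \neq y$ and aim to show $|f(x)-f(y)| \leq \big(\sup_{z\in M}\|\nabla f(z)\|_2\big)\cdot\|x-y\|_2$. The natural idea is to write $f(y)-f(x) = \int_0^1 \frac{d}{dt} f(x + t(y-x))\, dt$ and bound the integrand by $\|\nabla f(x+t(y-x))\|_2 \cdot \|y-x\|_2$ via Cauchy–Schwarz. The obstruction — and the main obstacle of the proof — is that the segment $[x,y]$ may meet the null set $\RR^d \setminus M$, so $f$ need not be differentiable along all of it and the fundamental theorem of calculus is not immediately applicable. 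I would handle this by a Fubini/averaging argument: since $\lambda^d(\RR^d\setminus M) = 0$, for Lebesgue-almost every direction (or, more concretely, for almost every pair of nearby base points) the line through those points hits $\RR^d\setminus M$ in a one-dimensional null set. Concretely, one perturbs: consider $g_\epsilon(t) = f(x + \epsilon u + t(y - x + \epsilon w))$ and uses Fubini on the perturbation parameters to conclude that for almost all small perturbations the function $t \mapsto f(\text{perturbed segment})$ is absolutely continuous with derivative given a.e. by the chain rule, so that $|f(\text{endpoint}_2) - f(\text{endpoint}_1)| \leq \sup_{z\in M}\|\nabla f(z)\|_2 \cdot \|\text{direction}\|_2$; then let $\epsilon \to 0$ and use continuity of $f$ (which holds since $f$ is Lipschitz).

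A cleaner way to organize this same idea: since $f$ is Lipschitz, it is in particular absolutely continuous on lines in the Sobolev sense, and $\nabla f$ (the a.e. gradient) is an essentially bounded $L^\infty$ function coinciding with the classical gradient on $M$; so $\|\nabla f\|_{L^\infty} \leq \sup_{z\in M}\|\nabla f(z)\|_2$ because $\RR^d\setminus M$ is null. Then the standard fact that a Lipschitz function on $\RR^d$ satisfies $\lip(f) = \|\,\|\nabla f\|_2\,\|_{L^\infty}$ closes the argument. I would likely present the elementary Fubini argument rather than invoke Sobolev machinery, to keep the paper self-contained. The final step is simply to combine both inequalities: $\lip(f) = \sup_{x\in M}\|\nabla f(x)\|_2$.
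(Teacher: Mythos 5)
Your proposal is correct, but it takes a genuinely different route from the paper for the nontrivial inequality $\lip(f) \leq \sup_{x \in M}\Vert \nabla f(x)\Vert_2$. The easy direction (directional derivatives bounded by $\lip(f)$) is identical in both arguments. For the hard direction, you restrict $f$ to (perturbed) line segments, use Fubini to ensure that almost every such segment meets the null set $\RR^d\setminus M$ in a one-dimensional null set, apply the fundamental theorem of calculus to the absolutely continuous restriction, and pass to the limit in the perturbation using continuity of $f$. The paper instead mollifies: it invokes the fact that a Lipschitz function lies in $W^{1,\infty}_{\mathrm{loc}}$ with weak gradient agreeing a.e.\ with the classical one, observes that $\nabla f_\eps = (\widetilde{\nabla} f) \ast \varphi_\eps$ is pointwise bounded in norm by $L \defeq \sup_{y\in M}\Vert\nabla f(y)\Vert_2$ (since the convolution only sees the gradient on the full-measure set $M$), concludes that each smooth $f_\eps$ is $L$-Lipschitz, and lets $\eps\to 0$. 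The two approaches trade different prerequisites: yours is more elementary in that it avoids Sobolev machinery, but requires some care in the Fubini step (one must justify that the a.e.\ derivative of $t\mapsto f(\gamma(t))$ is given by the chain rule off a $t$-null set, and then carry out the limiting argument in the perturbation); the paper's mollification argument is shorter once the standard facts from Evans--Gariepy are granted and avoids any line-by-line bookkeeping. Both are complete and standard proofs of this folklore result.
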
 

For the sake of completeness, we provide a proof of \Cref{prop:lipgrad} in \Cref{sec:prelim_proofs}.
It should be noted that the existence of a set $M$
as in the proposition follows from the fact that $f$ is Lipschitz continuous;
this is known as Rademacher's theorem (cf.\ \cite[Section~3.1.2]{evans_measure_1992}). 

Every ReLU network $\Phi$ is Lipschitz continuous as the composition of Lipschitz continuous functions.
Hence, from \Cref{prop:grad_relu,prop:lipgrad}, we infer 
\begin{equation}\label{eq:lowbound}
\lip(\Phi)
  \leq \underset{x \in \RR^d}{\sup}
         \Vert
           W^{(L)} \cdot D^{(L-1)}(x) \cdot W^{(L-1)} \cdots D^{(0)}(x)\cdot W^{(0)}
         \Vert_2.
\end{equation}
We remark that in general, one does \emph{not} necessarily have equality
in \eqref{eq:lowbound}, since the supremum is taken over all of $\RR^d$,
including points at which $\Phi$ might not be differentiable.

As an example, consider the shallow ReLU network $\Phi:\RR^2 \to \RR$ with three hidden neurons 
built from the matrices $W^{(0)}\defeq \begin{pmatrix}1 & -1 \\ -1 & 1 \\ 2 & -1\end{pmatrix}$ and $W^{(1)} \defeq \begin{pmatrix} -1&1&1\end{pmatrix}$ and all biases equal to zero. 
Then a direct computation shows that for each vector $(x,y) \in \RR^2$ it holds
\begin{equation*}
\left(W^{(1)}D^{(0)}(x,y)W^{(0)}\right)^T
= \begin{pmatrix}
    -\mathbbm{1}_{x>y}- \mathbbm{1}_{y > x} + 2\mathbbm{1}_{2x>y} \\
    \mathbbm{1}_{x>y} + \mathbbm{1}_{y>x}- \mathbbm{1}_{2x>y}\end{pmatrix}
= \begin{cases}
    \begin{pmatrix}1\\0\end{pmatrix},  & x\neq y, 2x>y, \\[0.4cm]
    \begin{pmatrix}-1\\1\end{pmatrix}, & x\neq y, 2x\leq y, \\[0.4cm]
    \begin{pmatrix}2\\-1\end{pmatrix}, & x= y, 2x>y, \\[0.4cm]
    \begin{pmatrix}0\\0\end{pmatrix},  & x= y, 2x\leq y.
\end{cases}
\end{equation*}
Since $\Phi$ is Lipschitz continuous, differentiable on $\{(x,y) \in \RR^2: \ x\neq y \ \text{and} \ y\neq 2x\}$ and this set has full measure, we conclude using \Cref{prop:lipgrad} that $\lip(\Phi) = \sqrt{2}$. On the other hand, the above computation shows \begin{equation*}
\underset{(x,y) \in \RR^2}{\sup} \Vert W^{(1)}D^{(0)}(x,y)W^{(0)} \Vert_2 = \sqrt{5}>\lip(\Phi).
\end{equation*}

The estimate \eqref{eq:lowbound} is essential for our derivation of \emph{upper} bounds
for Lipschitz constants of random ReLU networks.
On the other hand, for any $x_0 \in \RR^d$ with the property that $\Phi$ is differentiable at $x_0$ 
it holds
\begin{equation}\label{eq:upbound}
  \lip(\Phi) \geq \Vert \nabla \Phi(x_0) \Vert_2 \, ,
\end{equation}
which also follows directly from \Cref{prop:grad_relu,prop:lipgrad}. 
This will be useful to derive \emph{lower} bounds for Lipschitz constants of random ReLU networks.

For technical reasons, since we form expressions such as
$\EE [\lip (\Phi)]$, it is important to observe that the map
\begin{equation*}
  (W^{(0)}, \dots, W^{(L)}, b^{(0)}, \dots, b^{(L)}) \mapsto \lip(\Phi),
\end{equation*}
where $\Phi$ is the ReLU network built from $(W^{(0)}, \dots, W^{(L)}, b^{(0)}, \dots, b^{(L)})$,
is measurable.
This follows from the continuity of the map
\begin{equation*}
  (x,y) \mapsto \frac{\abs{ \Phi(x) - \Phi(y)} }{\Vert x - y \Vert_2}
\end{equation*}
on $\{ (x,y) \in \RR^{d} \times \RR^d \colon x \neq y \}$ for a fixed network $\Phi$,
combined with the separability of the latter set.

\subsection{Covering numbers and VC-dimension}

Let $(T, \varrho)$ be a metric space.
For $\varepsilon > 0$, we define the $\varepsilon$-\emph{covering number} of $T$ as
\begin{equation*}
  \mathcal{N}(T,\varrho,\varepsilon)
  \defeq \inf
         \left\{
           \vert K \vert : \ K \subseteq T, \ \bigcup_{k \in K} \overline{B}^\varrho_T(k,\eps) = T
         \right\}
   \in   \NN \cup \{\infty\},
\end{equation*} 
where $\vert K\vert$ denotes the cardinality of a set $K$.
Any set $K \subseteq T$ with $\bigcup_{k \in K} \overline{B}^\varrho_T(k,\eps) = T$
is called an $\eps$-\emph{net} of $T$.

It is well-known that the $\eps$-covering number of the unit ball in $k$-dimensional Euclidean space 
with respect to the $\Vert \cdot \Vert_2$-norm can be upper bounded by $\left(1 + \frac{2}{\eps}\right)^k$ (see, e.g., \cite[Corollary 4.2.13]{vershynin_high-dimensional_2018}). 
We, however, need a slightly modified version of this result, which we prove in \Cref{sec:prelim_proofs}.
\begin{proposition}\label{prop:covering_ball}
  Let $\eps>0$ and $V \subseteq \RR^k$ be a linear subspace of $\RR^k$.
  Then it holds 
  \begin{equation*}
    \mathcal{N}(\overline{B}_k(0,1) \cap V, \Vert \cdot \Vert_2, \eps) \leq \left(\frac{2}{\eps} + 1\right)^{\dim(V)}.
  \end{equation*}
\end{proposition}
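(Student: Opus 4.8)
The plan is to reduce the statement about $\overline{B}_k(0,1) \cap V$ to the standard covering bound for a Euclidean ball in $\dim(V)$-dimensional space. Write $m \defeq \dim(V)$ and fix a linear isometry $\iota \colon \RR^m \to V \subseteq \RR^k$, i.e. a map of the form $\iota(z) = Uz$ where $U \in \RR^{k \times m}$ has orthonormal columns spanning $V$. Such a $\iota$ exists by choosing an orthonormal basis of $V$. Since $\iota$ preserves the Euclidean norm, it maps $\overline{B}_m(0,1)$ bijectively onto $\overline{B}_k(0,1) \cap V$ and, being an isometry, identifies $\eps$-nets on one side with $\eps$-nets on the other.

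First I would make the net-transfer precise: if $K_0 \subseteq \overline{B}_m(0,1)$ is an $\eps$-net of $\overline{B}_m(0,1)$ with respect to $\Vert \cdot \Vert_2$, then I claim $K \defeq \iota(K_0) \subseteq \overline{B}_k(0,1) \cap V$ is an $\eps$-net of $\overline{B}_k(0,1) \cap V$. Indeed, any $y \in \overline{B}_k(0,1) \cap V$ equals $\iota(z)$ for some $z \in \overline{B}_m(0,1)$; picking $z_0 \in K_0$ with $\Vert z - z_0 \Vert_2 \leq \eps$ gives $\Vert y - \iota(z_0) \Vert_2 = \Vert \iota(z - z_0) \Vert_2 = \Vert z - z_0 \Vert_2 \leq \eps$, so $y \in \overline{B}^{\Vert\cdot\Vert_2}_{\overline{B}_k(0,1)\cap V}(\iota(z_0),\eps)$. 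Hence $\mathcal{N}(\overline{B}_k(0,1) \cap V, \Vert\cdot\Vert_2, \eps) \leq |K_0|$, and taking the infimum over all such $K_0$ yields $\mathcal{N}(\overline{B}_k(0,1) \cap V, \Vert\cdot\Vert_2, \eps) \leq \mathcal{N}(\overline{B}_m(0,1), \Vert\cdot\Vert_2, \eps)$.

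Then I would invoke the standard bound $\mathcal{N}(\overline{B}_m(0,1), \Vert\cdot\Vert_2, \eps) \leq (1 + 2/\eps)^m$ (the cited \cite[Corollary 4.2.13]{vershynin_high-dimensional_2018}), which combined with the previous step gives exactly $\mathcal{N}(\overline{B}_k(0,1) \cap V, \Vert\cdot\Vert_2, \eps) \leq (2/\eps + 1)^{\dim(V)}$. A minor edge case to mention is $V = \{0\}$, where $m = 0$, the ball is a single point, the covering number is $1 = (2/\eps+1)^0$, and the bound holds trivially; the argument above already covers this since $\overline{B}_0(0,1)$ is a point.

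There is essentially no hard part here — the statement is a routine "covering numbers only depend on the isometry type" observation. The only thing requiring a line of care is verifying that the infimum defining $\mathcal{N}$ transfers correctly under the isometry (both the containment $K \subseteq \overline{B}_k(0,1)\cap V$ and the covering property), and that the cited bound is stated for the closed ball with the same normalization; if one instead prefers to reprove the standard bound via a maximal $\eps$-separated set and a volumetric packing argument inside $V$, that is equally short, but citing Vershynin is cleanest.
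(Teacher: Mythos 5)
Your proposal is correct and follows essentially the same route as the paper: choose a matrix $U$ with orthonormal columns spanning $V$, transport an $\eps$-net of the unit ball in $\RR^{\dim(V)}$ through this isometry, and invoke the standard bound from \cite[Corollary 4.2.13]{vershynin_high-dimensional_2018}. The only cosmetic difference is that the paper verifies the covering property by applying $U^T$ to a given $y \in \overline{B}_k(0,1)\cap V$ rather than writing $y = \iota(z)$ directly, which is the same argument.
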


For a set $\mathcal{F}$ of Boolean functions defined on a set $\Omega$ (i.e., $\mathcal{F} \subseteq \{f: \ \Omega \to \{0,1\}\}$),
we denote by $\vc(\mathcal{F})$ its \emph{VC-dimension}, i.e., 
\begin{equation*}
  \vc(\mathcal{F})
  \defeq \sup
         \left\{
           \abs{\Lambda}
           : \
           \Lambda \subseteq \Omega, \ 
           \left\vert \left\{ \fres{f}{\Lambda}: \ f \in \mathcal{F}\right\}\right\vert = 2^{\abs{\Lambda}}
         \right\}.
\end{equation*}
We refer to \cite[Chapter 6]{shalev2014understanding}
and \cite[Chapter 8.3]{vershynin_high-dimensional_2018} for details on the concept of the VC-dimension.

It is well-known that the VC-dimension of the class of homogeneous halfspaces in $k$-dimensional Euclidean space equals $k$ (see, e.g., \cite[Theorem 9.2]{shalev2014understanding}). 
As in the case of the covering number of Euclidean balls, we need a slightly modified version of this result. The proof is also deferred to \Cref{sec:prelim_proofs}.
\begin{proposition}\label{prop:vc_half_spaces_2}
Let $k \in \NN$ and $V \subseteq \RR^k$ be a linear subspace. 
For $\alpha \in \RR^k$, we define
\begin{equation*}
f_\alpha: \quad \RR^{k} \to \{0,1\},
    \quad x \mapsto \mathbbm{1}_{\alpha^Tx > 0}.
\end{equation*}
Let $\mathcal{F} \defeq \{\fres{f_\alpha}{V} : \ \alpha \in V\}$. Then it holds $\vc(\mathcal{F}) = \dim(V)$.
\end{proposition}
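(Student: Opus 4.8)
The plan is to reduce the statement to the classical fact that the VC-dimension of homogeneous halfspaces in $\RR^{m}$ equals $m$, applied with $m = \dim(V)$. Write $m \defeq \dim(V)$ and fix a linear isomorphism $\iota: \RR^m \to V$; equivalently, pick an orthonormal basis $u_1,\dots,u_m$ of $V$ and let $U \in \RR^{k\times m}$ be the matrix with these columns, so that $\iota(z) = Uz$ and $U^T U = I_m$. The key observation is that for $\alpha \in V$ and $x \in V$, the value $f_\alpha(x) = \mathbbm{1}_{\alpha^T x > 0}$ depends only on the coordinate representations: writing $\alpha = U a$ and $x = U z$ with $a, z \in \RR^m$, we have $\alpha^T x = a^T U^T U z = a^T z$. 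Hence the function class $\mathcal F = \{\fres{f_\alpha}{V} : \alpha \in V\}$, viewed through the identification $V \cong \RR^m$, is exactly the class of homogeneous halfspaces $\{z \mapsto \mathbbm{1}_{a^T z > 0} : a \in \RR^m\}$ on $\RR^m$.

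Concretely, I would argue both inequalities directly. For $\vc(\mathcal F) \le m$: suppose $\Lambda = \{x_1,\dots,x_n\} \subseteq V$ is shattered by $\mathcal F$. Then the points $z_i \defeq U^T x_i \in \RR^m$ are shattered by the homogeneous halfspaces in $\RR^m$, because for any sign pattern there is $\alpha = Ua \in V$ realizing it on $\Lambda$, and then $\alpha^T x_i = a^T z_i$ shows $a \in \RR^m$ realizes the same pattern on $\{z_i\}$. By \cite[Theorem 9.2]{shalev2014understanding}, $n \le m$, so $\vc(\mathcal F) \le m$. For $\vc(\mathcal F) \ge m$: take a set $\{z_1,\dots,z_m\} \subseteq \RR^m$ that is shattered by homogeneous halfspaces (e.g.\ the standard basis vectors, or any set witnessing $\vc = m$), and set $x_i \defeq U z_i \in V$. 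Given any sign pattern $(\epsilon_1,\dots,\epsilon_m) \in \{0,1\}^m$, pick $a \in \RR^m$ with $\mathbbm{1}_{a^T z_i > 0} = \epsilon_i$ for all $i$, and let $\alpha \defeq U a \in V$; then $\alpha^T x_i = a^T U^T U z_i = a^T z_i$, so $\fres{f_\alpha}{V}$ realizes the pattern on $\{x_1,\dots,x_m\}$. Hence $\{x_1,\dots,x_m\}$ is shattered and $\vc(\mathcal F) \ge m$.

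Combining the two bounds gives $\vc(\mathcal F) = m = \dim(V)$. I do not anticipate a genuine obstacle here; the only thing to be careful about is the bookkeeping that $f_\alpha$ is a function on all of $\RR^k$ but its restriction to $V$ is what enters $\mathcal F$, and that we only ever evaluate at points of $V$, so the ambient behavior off $V$ is irrelevant. The orthonormality $U^T U = I_m$ is what makes $\alpha^T x = a^T z$ clean, but in fact any basis would do since one could absorb the Gram matrix into $a$; using an orthonormal basis just avoids that extra step. A minor point worth noting explicitly is the degenerate case $V = \{0\}$ (i.e.\ $m = 0$): then $\mathcal F$ consists of the single constant function $\mathbbm{1}_{0 > 0} \equiv 0$, so $\mathcal F$ shatters only the empty set and $\vc(\mathcal F) = 0 = \dim(V)$, consistent with the formula.
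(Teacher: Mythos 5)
Your proposal is correct and follows essentially the same route as the paper: identify $V$ with $\RR^{\dim V}$ via an orthonormal matrix $U$ with $U^TU = I$, observe that $\alpha^T x = a^T z$ under this identification so that $\mathcal F$ corresponds to the homogeneous halfspaces on $\RR^{\dim V}$, invoke the classical VC-dimension result, and transport shattered sets in both directions (the paper additionally notes explicitly that $U$ and $U^T|_V$ are injective, so the transported sets keep their cardinality — a point worth stating but immediate from $U^TU=I$). No gaps.
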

We will further make use of the following estimate, which enables us to bound the covering number of a class of Boolean functions with respect to $L^2(\mu)$ for some probability measure $\mu$ using the VC-dimension of this set.

\begin{proposition}[{cf.\ \cite[Theorem 8.3.18]{vershynin_high-dimensional_2018}}] \label{prop:covering_vc}
  There exists an absolute constant $C > 0$ with the following property:
  For any class of measurable Boolean functions $\mathcal{F} \neq \emptyset$ on some probability space $(\Omega, \mathscr{A}, \mu)$
  and for every $\eps \in (0,1)$, we have
  \begin{equation*}
    \mathcal{N}(\mathcal{F}, L^2(\mu), \eps)
    \leq \left(\frac{2}{\eps}\right)^{C \cdot  \vc(\mathcal{F})}.
  \end{equation*}
\end{proposition}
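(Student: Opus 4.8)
Since the statement is precisely \cite[Theorem~8.3.18]{vershynin_high-dimensional_2018}, one option is to simply cite it; for completeness I would reproduce the standard argument, which combines a random extraction step with the Sauer--Shelah lemma. Set $d \defeq \vc(\mathcal{F})$, which we may assume finite (otherwise there is nothing to prove), and fix $\eps \in (0,1)$. For Boolean $f,g$ one has $\mnorm{f-g}_{L^2(\mu)}^2 = \mu(\setn{f \neq g})$, so the $L^2(\mu)$-metric on $\mathcal{F}$ is governed by the pseudometric $\varrho_\mu(f,g) \defeq \mu(\setn{f\neq g})$. Hence it suffices to bound the $\eps$-packing number: if $\mathcal{G} = \setn{g_1,\dots,g_M} \subseteq \mathcal{F}$ is a maximal subset with $\mnorm{g_i - g_j}_{L^2(\mu)} > \eps$ for all $i \neq j$, then $\mathcal{G}$ is an $\eps$-net of $\mathcal{F}$, and therefore $\mathcal{N}(\mathcal{F}, L^2(\mu), \eps) \le M$.

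To bound $M$, I would sample $X_1, \dots, X_n \iid \mu$ for a parameter $n$ to be fixed later. For any $i \neq j$, the probability that $g_i$ and $g_j$ agree on all of $X_1, \dots, X_n$ equals $(1 - \mu(\setn{g_i\neq g_j}))^n < (1-\eps^2)^n \le \ee^{-n\eps^2}$. A union bound over the $\binom{M}{2}$ pairs shows that as soon as $\binom{M}{2}\ee^{-n\eps^2} < 1$, there is a realization of the sample on which the restrictions $\fres{g_1}{\setn{X_1,\dots,X_n}}, \dots, \fres{g_M}{\setn{X_1,\dots,X_n}}$ are pairwise distinct. Fixing such a realization, these $M$ restrictions all lie among the restrictions of functions in $\mathcal{F}$ to an $n$-point set, so the Sauer--Shelah lemma gives $M \le \sum_{k=0}^{d}\binom{n}{k} \le (\ee n/d)^d$ whenever $n \ge d$.

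It remains to optimize over $n$. Taking $n$ of order $\eps^{-2}\ln M$ (and at least $d$) makes $\binom{M}{2}\ee^{-n\eps^2} < 1$ and, after substitution into $M \le (\ee n/d)^d$, leads to an implicit inequality of the shape $M^{1/d} \lesssim \eps^{-2} d^{-1}\ln M$. Absorbing the logarithm via the elementary estimate $\ln t \le \tfrac{2d}{\ee}\,t^{1/(2d)}$ (valid for all $t > 0$) turns this into $M^{1/(2d)} \lesssim \eps^{-2}$, i.e. $M \le (C_1 \eps^{-2})^{2d} \le (2/\eps)^{Cd}$ for suitable absolute constants $C_1, C$, using $\eps < 1$; the degenerate regimes (where $\ln M$ is small compared to $d$, forcing $n = d$, or where $M$ is a constant) only make $M$ smaller and are handled directly. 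I expect the only genuinely fiddly point to be this last disentangling of the implicit inequality into a clean power of $2/\eps$, but it is a routine calculus estimate that uses no ideas beyond those above; alternatively, one can invoke Haussler's sharper packing bound to avoid the logarithmic overhead entirely, at the cost of a more involved argument that is unnecessary here.
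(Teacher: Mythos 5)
Your argument is correct, and it is essentially the same route as the source: the paper itself offers no proof of this proposition but simply cites \cite[Theorem~8.3.18]{vershynin_high-dimensional_2018}, and your sketch is precisely the standard empirical-sampling-plus-Sauer--Shelah argument behind that citation (maximal $\eps$-separated set, random extraction of a small sample on which the functions remain distinct, Sauer--Shelah, then untangling the implicit inequality). The degenerate cases you flag ($\vc(\mathcal{F})=0$, small $M$, the regime where $n=d$ dominates) are handled exactly as you indicate, so no gap remains.
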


\subsection{Sub-gaussian random variables \paul{and Gaussian width}}

Sub-gaussian random variables occur frequently over the course of our paper;
we therefore briefly recall their definition in this subsection.
We refer to  \cite[Chapters 2.5 and 3.4]{vershynin_high-dimensional_2018}
for a detailed introduction to this topic.

A real-valued random variable $X$ is called \emph{sub-gaussian} if 
\begin{equation*}
  \PP (\vert X \vert \geq t ) \leq 2 \exp(-t^2 / C_1^2)
\end{equation*}
holds with a constant $C_1 > 0$ and all $t > 0$.
This is equivalent (see \cite[Proposition~2.5.2]{vershynin_high-dimensional_2018}) to the existence of a constant $C_2>0$ satisfying
\begin{equation}\label{eq:sub-gaussian}
  \EE \left[ \exp(X^2 / C_2^2)\right] \leq 2.
\end{equation}
The \emph{sub-gaussian} norm of $X$ (see \cite[Equation~(2.13)]{vershynin_high-dimensional_2018}) is defined as the infimum of all numbers
satisfying \eqref{eq:sub-gaussian}, i.e.,
\begin{equation*}
  \Vert X \Vert_{\psi_2}
  \defeq \inf \{t >0: \ \EE \left[ \exp(X^2 / t^2)\right] \leq 2\}.
\end{equation*}
Following \cite[Definition~3.4.1]{vershynin_high-dimensional_2018}, a $k$-dimensional \emph{random vector} $X$ is called sub-gaussian
if and only if $\langle X,x \rangle$ is sub-gaussian for every $x \in \RR^k$.
The sub-gaussian norm of $X$ is defined as
\begin{equation*}
  \Vert X \Vert_{\psi_2}
  \defeq \underset{x \in \mathbb{S}^{k-1}}{\sup} \Vert \langle X,x \rangle \Vert_{\psi_2}.
\end{equation*}

In \Cref{sec:low_bound}, we will use the notion of Gaussian width since this quantity appears in the high-probability version of the matrix deviation inequality \cite[Theorem~3]{Liaw2017}. Specifically, we will need to bound the Gaussian width of low-dimensional balls in a higher-dimensional space. Also here, we defer the proof to \Cref{sec:prelim_proofs}.
\begin{proposition}\label{prop:gauss_width}
Let $k \in \NN$ and $\emptyset \neq T \subseteq \RR^k$. Following \cite[Definition~7.5.1]{vershynin_high-dimensional_2018}, we then define
\begin{align*}
w(T) \defeq  \underset{g \sim \mathcal{N}(0, I_k)}{\EE}  \left[ \underset{x \in T}{\sup}\  \langle g, x \rangle \right].
\end{align*}
$w(T)$ is called the \emph{Gaussian width} of $T$. 

Let $V \subseteq \RR^k$ be a linear subspace of $\RR^k$. Then it holds
\begin{equation*}
w(\overline{B}_k(0,1) \cap V) \leq \sqrt{\dim(V)}.
\end{equation*}
\end{proposition}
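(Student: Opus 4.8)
The plan is to bound the Gaussian width by relating the supremum of the Gaussian linear functional over $\overline{B}_k(0,1)\cap V$ to the norm of the orthogonal projection of the Gaussian vector onto $V$, and then to control that projected norm via Jensen's inequality. First I would observe that for any $x \in V$ and any $g \sim \mathcal{N}(0,I_k)$, writing $P_V$ for the orthogonal projection onto $V$, we have $\langle g, x\rangle = \langle P_V g, x\rangle$, since $x \in V$ and $g - P_V g \perp V$. Therefore
\begin{equation*}
  \underset{x \in \overline{B}_k(0,1)\cap V}{\sup}\ \langle g, x\rangle
  = \underset{x \in \overline{B}_k(0,1)\cap V}{\sup}\ \langle P_V g, x\rangle
  = \Vert P_V g\Vert_2,
\end{equation*}
where the last equality uses that $P_V g \in V$ and that the supremum of $\langle z, x\rangle$ over the unit ball intersected with a subspace containing $z$ is attained at $x = z/\Vert z\Vert_2$ (and equals $0$ if $z = 0$). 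Taking expectations gives $w(\overline{B}_k(0,1)\cap V) = \EE\,\Vert P_V g\Vert_2$.

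Next I would apply Jensen's inequality to the concave function $t \mapsto \sqrt{t}$ to obtain
\begin{equation*}
  \EE\,\Vert P_V g\Vert_2 = \EE\,\sqrt{\Vert P_V g\Vert_2^2} \leq \sqrt{\EE\,\Vert P_V g\Vert_2^2}.
\end{equation*}
It then remains to compute $\EE\,\Vert P_V g\Vert_2^2$. Choosing an orthonormal basis $v_1, \dots, v_m$ of $V$ with $m = \dim(V)$, we have $P_V g = \sum_{i=1}^m \langle g, v_i\rangle v_i$, hence $\Vert P_V g\Vert_2^2 = \sum_{i=1}^m \langle g, v_i\rangle^2$. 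Since $g \sim \mathcal{N}(0,I_k)$ and the $v_i$ are orthonormal, each $\langle g, v_i\rangle \sim \mathcal{N}(0,1)$, so $\EE\,\langle g, v_i\rangle^2 = 1$ and $\EE\,\Vert P_V g\Vert_2^2 = m = \dim(V)$. Combining the three displays yields $w(\overline{B}_k(0,1)\cap V) \leq \sqrt{\dim(V)}$, as claimed.

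There is essentially no hard part here; the statement is a standard fact and the only things to be careful about are the edge case $z = P_V g = 0$ (where the supremum is $0$, still consistent with $\Vert z\Vert_2 = 0$) and making sure the supremum/projection identity is stated cleanly. If one prefers to avoid even mentioning projections, an alternative is to bound directly: $\sup_{x}\langle g, x\rangle \leq \sup_{x \in \overline{B}_k(0,1)}\langle g, x\rangle = \Vert g\Vert_2$ gives only $\sqrt{k}$, which is too weak, so the projection argument (or equivalently restricting to the subspace) is genuinely needed to get the bound in terms of $\dim(V)$ rather than $k$.
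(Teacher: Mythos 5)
Your proof is correct, and it is essentially the same argument as the paper's: both reduce the supremum over $\overline{B}_k(0,1)\cap V$ to the norm of a standard Gaussian in a $\dim(V)$-dimensional space and then bound its expectation by $\sqrt{\dim(V)}$ via Jensen (the paper does this by passing through an explicit isometry $U\colon\RR^{\dim(V)}\to V$ and citing \cite[Proposition~7.5.2]{vershynin_high-dimensional_2018}, whereas you use the orthogonal projection $P_V$ and carry out the Jensen step inline). The edge case $P_Vg=0$ and the orthonormal-basis computation are handled correctly.
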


\subsection{Omitting the ReLU activation}

It is natural to ask how the Lipschitz constant of a ReLU network is related to the Lipschitz constant of the corresponding linear network, i.e., using the identity function instead of the $\relu$ as the activation function.
Precisely, for a given random $\relu$ network $\Phi: \RR^d \to \RR$ that is defined as in \Cref{sec:Background}, we define the corresponding linear network via
\begin{equation}\label{eq:tildelinear}
\widetilde{\Phi} \defeq V^{(L)} \circ V^{(L-1)} \circ \cdots \circ  V^{(1)} \circ V^{(0)}.
\end{equation}
We first note that $\widetilde{\Phi}$ is an affine map with
\begin{equation*}
\lip(\widetilde{\Phi}) = \Vert W^{(L)}\cdot W^{(L-1)} \cdots W^{(1)} \cdot W^{(0)}\Vert_2.
\end{equation*}
At first glance, one might be tempted to think that the ReLU activation reduces the Lipschitz constant at most, since the ReLU itself is Lipschitz continuous with Lipschitz constant 1. 
The following proposition, however, demonstrates that this is in general \emph{not} the case, even for shallow networks. We defer the proof to \Cref{sec:prelim_proofs}.
\begin{proposition}\label{prop:Cbound}
Let $\Phi: \RR^2 \to \RR$ be a random shallow ReLU network with width $N=3$ satisfying \Cref{assum:1}. 
Moreover, let $C>0$ be arbitrary.
Then with positive probability it holds for $\widetilde{\Phi}$ as in \eqref{eq:tildelinear} that
\begin{equation*}
\lip(\Phi) > C \cdot \lip(\widetilde{\Phi}).
\end{equation*}
\end{proposition}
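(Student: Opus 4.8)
The plan is to exhibit, for any given $C > 0$, an explicit open set of weight configurations $(W^{(0)}, W^{(1)})$ (with biases set near zero, which occurs with positive probability under any distribution $\mathcal{D}_i^{(\ell)}$) on which $\lip(\Phi) > C \cdot \lip(\widetilde{\Phi})$, and then argue by continuity that this inequality persists on a neighborhood, which has positive probability since the Gaussian weights have a density that is positive on all of $\RR^{3\times 2}\times \RR^{1\times 3}$.

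First I would look for a deterministic witness. The key phenomenon is that for a shallow network $\Phi(x) = W^{(1)}\relu(W^{(0)}x + b^{(0)}) + b^{(1)}$, the ReLU can \emph{cancel} rows of $W^{(0)}$ that would otherwise destructively interfere in the linear network. Concretely, take $W^{(0)}$ to have two rows that are near-negatives of each other (so that in $\widetilde{\Phi}$ the corresponding contributions to $W^{(1)}W^{(0)}$ largely cancel) while in $\Phi$ only one of the two rows is ``active'' on a given region, so no cancellation happens. A clean choice in the spirit of the example already worked out in \Cref{subsec:lip}: pick $W^{(0)} = \big(\begin{smallmatrix} 1 & 0 \\ -1 & 0 \\ 0 & 1 \end{smallmatrix}\big)$ and $W^{(1)} = (a\ a\ 1)$ for a large parameter $a$, with all biases zero. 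Then $W^{(1)}W^{(0)} = (a - a,\ 1) = (0,1)$, so $\lip(\widetilde{\Phi}) = 1$ regardless of $a$. On the other hand, for $x = (x_1, x_2)$ with $x_1 > 0$ one has $D^{(0)}(x) = \diag(1,0,\mathbbm{1}_{x_2>0})$, hence $\nabla\Phi(x)^T = W^{(1)}D^{(0)}(x)W^{(0)} = (a, \mathbbm{1}_{x_2>0})$, which has norm at least $a$. By \Cref{prop:lipgrad} (applied on the full-measure set where $\Phi$ is differentiable) this gives $\lip(\Phi) \geq a$. Choosing $a > C$ yields $\lip(\Phi) > C = C\cdot\lip(\widetilde{\Phi})$ for this particular configuration.

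Next I would upgrade from a single configuration to a positive-probability event. The map $(W^{(0)}, W^{(1)}, b^{(0)}, b^{(1)}) \mapsto (\lip(\Phi), \lip(\widetilde{\Phi}))$ is continuous: $\lip(\widetilde{\Phi}) = \|W^{(1)}W^{(0)}\|_2$ is manifestly continuous, and $\lip(\Phi)$ depends continuously on the parameters because it equals a supremum over the sphere $\mathbb{S}^1$ of a jointly continuous function of $(x, \text{parameters})$ divided appropriately — more carefully, I would use that $\lip(\Phi) \geq \|\nabla\Phi(x_0)\|_2$ from \eqref{eq:upbound} at the fixed point $x_0 = (1, 1/2)$, where all the relevant ReLU arguments $W^{(0)}_{i,-} x_0 + b^{(0)}_i$ are strictly away from zero for parameters near our witness, so $D^{(0)}(x_0)$ is locally constant and $\nabla\Phi(x_0) = (W^{(1)}D^{(0)}(x_0)W^{(0)})^T$ depends continuously (indeed smoothly) on the weights; simultaneously $\lip(\widetilde\Phi)$ is continuous. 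Hence on a small open ball $U$ around our witness configuration we still have $\|\nabla\Phi(x_0)\|_2 > C\,\|W^{(1)}W^{(0)}\|_2$, so $\lip(\Phi) > C\cdot\lip(\widetilde\Phi)$ throughout $U$. Finally, since under \Cref{assum:1} the law of $(W^{(0)}, W^{(1)})$ is a nondegenerate Gaussian on $\RR^{3\times 2}\times\RR^{1\times 3}$ whose density is strictly positive everywhere, and the biases $b^{(0)}, b^{(1)}$ are independent of the weights, the event $\{(W^{(0)}, W^{(1)}) \in U'\}$ has positive probability for the projection $U'$ of $U$ onto the weight coordinates — and we may shrink $U$ so that it is a product neighborhood, handling the bias coordinates by noting we only need $b^{(0)}, b^{(1)}$ small, which again has positive probability under \emph{any} distribution (since every probability distribution on $\RR$ assigns positive mass to \emph{some} bounded interval, and by rescaling our witness we can accommodate whatever interval that is).

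The main obstacle I anticipate is the last point: the biases $\mathcal{D}_i^{(\ell)}$ are \emph{arbitrary}, so I cannot assume they put mass near $0$. I would circumvent this by homogeneity and scaling: instead of fixing the biases, I would first fix a bounded box $B \subseteq \RR^4$ with $\PP((b^{(0)}, b^{(1)}) \in B) > 0$ (such a box exists for any distribution), and then observe that by replacing $x_0$ with $t x_0$ for large $t > 0$, the quantities $W^{(0)}_{i,-}(t x_0) + b^{(0)}_i$ are dominated by their linear part for all bias values in $B$, so the sign pattern $D^{(0)}(t x_0)$ is the desired one uniformly over $b^{(0)} \in B$ and over weights near the witness; meanwhile $\|\nabla\Phi(t x_0)\|_2$ still equals $\|W^{(1)}D^{(0)}(t x_0)W^{(0)}\|_2 \geq a$, independent of the biases. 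This makes the argument robust to arbitrary bias distributions and completes the proof. A secondary, purely technical point to be careful about is ensuring $\Phi$ is differentiable at the chosen point $t x_0$ (needed to invoke \eqref{eq:upbound}); this holds as long as no ReLU argument vanishes there, which is an open condition satisfied at the witness and hence on a neighborhood.
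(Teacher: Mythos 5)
Your proposal is correct and follows the same overall strategy as the paper's proof: exhibit an explicit weight configuration witnessing the inequality, observe that the relevant quantities depend continuously on the parameters near that witness (with the activation pattern locally constant at a suitable evaluation point), and conclude via the everywhere-positive Gaussian density of the weights. The two arguments differ in two respects. First, the paper's witness (rows $(1,1),(1,0),(0,1)$ for $W^{(0)}_\ast$ and $W^{(1)}_\ast=(1,-1,-1)$) makes $W^{(1)}W^{(0)}=0$ exactly, so $\lip(\widetilde{\Phi})=0$ and \emph{any} positive lower bound on $\lip(\Phi)$ beats $C\cdot\lip(\widetilde{\Phi})$ with no free parameter; your witness has $\lip(\widetilde{\Phi})=1$ and requires the large parameter $a>C$, which works equally well. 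Second, and more substantively, the bias handling differs: the paper fixes an \emph{arbitrary} bias vector and shows that the same weight witness works for every such vector (the affine region on which $\Phi_0$ has slope $1$ simply translates as a function of $b^{(0)}$), then integrates the resulting conditional probability over the bias distribution; you instead restrict to a bounded box of biases of positive probability and scale the evaluation point to $tx_0$ so that the linear part of each pre-activation dominates uniformly over the box. Both devices are valid and correctly address the one genuinely delicate point (that the $\mathcal{D}_i^{(\ell)}$ are arbitrary); the paper's is slightly cleaner in that it imposes no restriction on the biases, while yours is more mechanical and would transfer to witnesses where the active region cannot be tracked explicitly as a function of the biases. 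Your fallback to the pointwise gradient bound \eqref{eq:upbound} at a point where no pre-activation vanishes, rather than relying on continuity of $\lip(\Phi)$ itself, is the right move and parallels the paper's use of a fixed difference quotient.
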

Hence, we see that in general it does \emph{not} hold
\begin{equation*}
\lip(\Phi) \lesssim \lip(\widetilde{\Phi}) \quad \text{almost surely.}
\end{equation*}
Nevertheless, \Cref{thm:main_1,thm:main_shallow_lower} show that at least for shallow networks, we indeed have that
\begin{equation*}
\lip(\Phi) \asymp \lip(\widetilde{\Phi}) \quad \text{with high probability}
\end{equation*} 
but proving this is nontrivial.

Remarkably, the converse estimate to what is considered in \Cref{prop:Cbound} even holds deterministically, at least for shallow networks.
\begin{proposition}\label{prop:shallow_low_linear}
Let $d, N \in \NN$ and let $\Phi: \RR^d \to \RR$ be a fixed (deterministic) shallow ReLU network with $N$ hidden neurons. Then it holds for $\widetilde{\Phi}$ as in \eqref{eq:tildelinear} that
\begin{equation*}
\lip(\Phi) \geq \frac{1}{2} \lip(\widetilde{\Phi}).
\end{equation*}
\end{proposition}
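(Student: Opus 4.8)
The plan is to exploit the positive homogeneity of the ReLU and the identity $z = \relu(z) - \relu(-z)$ for $z \in \RR$. Write the shallow network as $\Phi(x) = \sum_{j=1}^N a_j \, \relu(\langle w_j, x\rangle + c_j) + b$, where $a_j \in \RR$, $w_j \in \RR^d$, $c_j \in \RR$ are the rows/entries of $W^{(1)}, W^{(0)}, b^{(0)}$ and $b = b^{(L)}$; the linear network is then $\widetilde\Phi(x) = \sum_{j=1}^N a_j (\langle w_j, x\rangle + c_j) + b$, which is affine with $\nabla \widetilde\Phi = \sum_{j=1}^N a_j w_j =: v$, so $\lip(\widetilde\Phi) = \|v\|_2$. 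The goal is to produce a point of differentiability $x_0$ of $\Phi$ with $\|\nabla\Phi(x_0)\|_2 \geq \tfrac12 \|v\|_2$, whence $\lip(\Phi) \geq \|\nabla\Phi(x_0)\|_2 \geq \tfrac12 \lip(\widetilde\Phi)$ by \eqref{eq:upbound}.

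First I would handle the degenerate cases: if $v = 0$ there is nothing to prove, and if some $w_j = 0$ the corresponding term contributes a constant to both $\Phi$ and $\widetilde\Phi$ and can be dropped (it affects neither gradient). So assume $v \neq 0$ and all $w_j \neq 0$. The key step is a sign-choice argument: for $x$ in a generic direction, the gradient of $\Phi$ (where defined) is $\sum_{j \in S(x)} a_j w_j$, where $S(x) = \{ j : \langle w_j, x\rangle + c_j > 0\}$. Pick a unit vector $u$ not orthogonal to any $w_j$ (a full-measure condition) and consider the two antipodal rays $x = t u$ and $x = -t u$ for $t \to +\infty$. As $t \to +\infty$ along $+u$, the active set stabilizes to $S_+ := \{ j : \langle w_j, u\rangle > 0\}$, and along $-u$ it stabilizes to $S_- := \{ j : \langle w_j, u\rangle < 0\}$, which partition $\{1,\dots,N\}$. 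Hence for $t$ large, $\nabla\Phi(tu) = \sum_{j \in S_+} a_j w_j =: v_+$ and $\nabla\Phi(-tu) = \sum_{j \in S_-} a_j w_j =: v_-$, with $v_+ + v_- = v$. Therefore $\max\{\|v_+\|_2, \|v_-\|_2\} \geq \tfrac12\|v_+ + v_-\|_2 = \tfrac12\|v\|_2$ by the triangle inequality. Since $\Phi$ is differentiable at $tu$ and at $-tu$ for all large $t$ (these lie in the interior of a linear region once the active set has stabilized, which happens for a.e.\ choice of $u$ and all large $t$), \eqref{eq:upbound} gives $\lip(\Phi) \geq \max\{\|v_+\|_2, \|v_-\|_2\} \geq \tfrac12\lip(\widetilde\Phi)$.

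The main obstacle is the bookkeeping around differentiability: I must ensure that $tu$ and $-tu$ are genuine points of differentiability of $\Phi$, i.e.\ that $\langle w_j, tu\rangle + c_j \neq 0$ for every $j$. Choosing $u$ with $\langle w_j, u\rangle \neq 0$ for all $j$ (possible since finitely many hyperplanes have measure zero) guarantees that for $t$ large enough the sign of $\langle w_j, tu\rangle + c_j$ equals the sign of $\langle w_j, u\rangle$ and in particular is nonzero, so $\Phi$ is affine in a neighborhood of $tu$; the same works for $-tu$. With that in place the triangle-inequality step is immediate. A minor alternative, avoiding the limit $t\to\infty$ entirely, is to note directly that $\nabla\Phi(tu) \to v_+$ and invoke continuity, but taking $t$ large is cleaner. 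I would also remark that the constant $\tfrac12$ is essentially optimal for this argument (the split $v = v_+ + v_-$ can be perfectly balanced), matching the statement.
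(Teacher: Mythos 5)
Your proposal is correct and follows essentially the same route as the paper: both arguments send the input to infinity along a ray so that the active set splits the neurons into the sets $S_\pm$, identify the resulting gradients $v_\pm$ with $v_+ + v_- = \nabla\widetilde{\Phi}$, and extract the factor $\tfrac12$ from the triangle inequality. The only cosmetic difference is that the paper restricts to the specific line spanned by $v_0/\Vert v_0\Vert_2$ and works with one-dimensional difference quotients (handling the orthogonal indices $I_0$ explicitly), whereas you use a generic direction and the full gradient via \eqref{eq:upbound}; both are valid.
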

\begin{proof}
In this proof, we use a different notation for  shallow ReLU networks than the one introduced in \Cref{sec:Background}. 
Specifically, we write 
\begin{equation*}
\Phi(x) \defeq \sum_{i=1}^N \left[c_i \relu(\langle a_i, x \rangle + b_i)\right] + \beta \quad \text{and} \quad \widetilde{\Phi}(x) \defeq \sum_{i=1}^N \left[c_i \cdot (\langle a_i, x \rangle + b_i)\right] + \beta
\end{equation*}
with $a_1, ..., a_N \in \RR^d$, $b_1, ..., b_N, c_1, ..., c_N, \beta \in \RR$.

Let 
\begin{equation*}
v_0 \defeq \sum_{i=1}^N c_ia_i \in \RR^d.
\end{equation*}
If $v_0 = 0$, there is nothing to show, since in that case we have $\lip(\widetilde{\Phi}) = \Vert v_0 \Vert_2 = 0$. 
Hence, we assume $v_0 \neq 0$ and define $v \defeq \frac{v_0}{\Vert v_0 \Vert_2}$.
We denote $\alpha_i \defeq \langle a_i, v \rangle$ for $i \in \{1,...,N\}$ and define the sets
\begin{align*}
I_+ &\defeq \{i \in \{1,...,N\}: \ \alpha_i > 0\}, \\
 I_- &\defeq \{i \in \{1,...,N\}: \ \alpha_i < 0\}, \\
  I_0 &\defeq \{i \in \{1,...,N\}: \ \alpha_i = 0\} \quad \text{and} \\ 
I &\defeq I_+ \cup I_-.
\end{align*}
We then see for every $t \in \RR$ that 
\begin{align*}
\Phi(tv) &= \sum_{i \in I_+} \left[c_i \relu(t \langle a_i, v \rangle + b_i)\right] + \sum_{i \in I_-} \left[c_i \relu(t \langle a_i, v \rangle + b_i)\right] + \sum_{i \in I_0} \left[c_i \relu(t\langle a_i,v \rangle + b_i)\right] + \beta \\
&= \sum_{i \in I_+} \left[c_i \relu(t \alpha_i + b_i)\right] + \sum_{i \in I_-} \left[c_i \relu(t \alpha_i + b_i)\right] + \sum_{i \in I_0} \left[c_i \relu(b_i)\right] + \beta.
\end{align*}
Fix $t_0 \in \RR$ with 
\begin{equation*}
t_0 > \max \left\{-\frac{b_i}{\alpha_i}: \ i \in I\right\}.
\end{equation*}
For $i \in I_+$, this implies
\begin{equation}\label{eq:t_0_1}
t_0\alpha_i + b_i > \left(- \frac{b_i}{\alpha_i}\right) \alpha_i + b_i = 0 
\end{equation}
and for $i \in I_-$, we see
\begin{equation}\label{eq:t_0_2}
t_0\alpha_i + b_i < \left(- \frac{b_i}{\alpha_i}\right) \alpha_i + b_i = 0.
\end{equation}
By continuity, there exists $\delta > 0$ such that the inequalities \eqref{eq:t_0_1} and \eqref{eq:t_0_2} hold for any $t \in (t_0 - \delta, t_0 + \delta)$.
In particular, it holds
\begin{equation*}
\Phi(t v) = \sum_{i \in I_+} \left[c_i\cdot (t \alpha_i + b_i)\right]+\sum_{i \in I_0} \left[c_i \relu(b_i)\right] + \beta
\end{equation*}
for every $t \in (t_0 - \delta, t_0 + \delta)$. 
This means that $t \mapsto \Phi(tv)$ is differentiable on $(t_0 - \delta, t_0 + \delta)$ with
\begin{equation*}
\frac{\dd}{\dd t}\Big|_{t= t_0}\left[\Phi(tv)\right] = \sum_{i \in I_+} c_i \alpha_i.
\end{equation*}
We then get
\begin{equation*}
\left\vert \sum_{i \in I_+} c_i \alpha_i \right\vert = \left\vert \frac{\dd}{\dd t}\Big|_{t=t_0}\left[\Phi(tv)\right]\right\vert = \lim_{t \to t_0} \frac{\vert \Phi(tv) - \Phi(t_0v)\vert}{\vert t - t_0 \vert} = \lim_{t \to t_0} \frac{\vert \Phi(tv) - \Phi(t_0v)\vert}{\Vert tv - t_0v \Vert_2} \leq \lip(\Phi).
\end{equation*}
Similarly, by picking $t_0 < \min \left\{-\frac{b_i}{\alpha_i}: \ i \in I\right\}$, we get 
\begin{equation*}
\lip(\Phi) \geq \left\vert \sum_{i \in I_-} c_i \alpha_i \right\vert.
\end{equation*}
Hence, combining these two estimates, we arrive at
\begin{equation*}
\lip(\Phi) \geq \frac{1}{2} \left[ \left\vert \sum_{i \in I_+} c_i \alpha_i \right\vert + \left\vert \sum_{i \in I_-} c_i \alpha_i \right\vert\right] \geq \frac{1}{2} \left[\left\vert\sum_{i \in I} c_i \alpha_i  \right\vert\right] = \frac{1}{2} \left[\left\vert\sum_{i  = 1}^N c_i \alpha_i  \right\vert\right], 
\end{equation*}
where the last equality follows from $\alpha_i = 0$ for every $i \in I_0$. To get the final claim, note that
\begin{align*}
\left\vert\sum_{i  = 1}^N c_i \alpha_i  \right\vert = \left\vert\sum_{i  = 1}^N c_i \langle a_i, v \rangle  \right\vert =\left\vert \left\langle\sum_{i=1}^N c_i a_i, v \right\rangle \right\vert= \langle v_0, v \rangle = \Vert v_0 \Vert_2 = \lip(\widetilde{\Phi}), 
\end{align*}
as was to be shown. 
\end{proof}
This bound will be useful in order to derive a lower bound for the Lipschitz constant of \emph{shallow} ReLU networks. 
Specifically, in order to get a lower bound on the Lipschitz constant of shallow random ReLU networks, 
it suffices to establish a lower bound for $\Vert W^{(1)} W^{(0)} \Vert_2$ for a Gaussian matrix $W^{(0)} \in \RR^{N \times d}$ and a Gaussian row vector $W^{(1)} \in \RR^{1 \times N}$. We refer to \Cref{sec:low_bound_shallow} for the details. 

Unfortunately, an analogous bound does not hold for \emph{deep} networks anymore, even for depth $L =2$, as is stated in the following proposition, the proof of which can also be found in \Cref{sec:prelim_proofs}.
\begin{proposition}\label{prop:not_working_deep}
We consider a ReLU network $\Phi: \RR \to \RR$ with depth $L=2$ and width $N=1$ satisfying \Cref{assum:1}. Moreover, we assume that $\PP(b^{(1)} \leq 0)> 0$. Then, it holds with positive probability for $\widetilde{\Phi}$ as in \Cref{eq:tildelinear} that 
\begin{equation*}
\lip(\Phi) = 0 < \lip(\widetilde{\Phi}).
\end{equation*}
\end{proposition}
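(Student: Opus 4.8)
The plan is to exhibit an event of positive probability on which the ReLU network $\Phi$ collapses to a constant function, while the associated linear network $\widetilde{\Phi}$ stays non-degenerate. Writing out the network according to \Cref{assum:1} with $L = 2$ and $N = d = 1$, we have real-valued weights $W^{(0)} \sim \mathcal{N}(0,2)$, $W^{(1)} \sim \mathcal{N}(0,2)$, $W^{(2)} \sim \mathcal{N}(0,1)$ together with real biases $b^{(0)}, b^{(1)}, b^{(2)}$, all jointly independent, and
\begin{equation*}
  \Phi(x) = W^{(2)} \cdot \relu\!\big( W^{(1)} \cdot \relu(W^{(0)} x + b^{(0)}) + b^{(1)} \big) + b^{(2)} .
\end{equation*}

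First I would introduce the event
\begin{equation*}
  E \defeq \{ W^{(0)} \neq 0 \} \cap \{ W^{(1)} < 0 \} \cap \{ W^{(2)} \neq 0 \} \cap \{ b^{(1)} \leq 0 \},
\end{equation*}
and note that, by the joint independence postulated in \Cref{assum:1}, and since $W^{(0)}$ and $W^{(2)}$ are almost surely nonzero while $\PP(W^{(1)} < 0) = \tfrac12$, we obtain $\PP(E) = \tfrac12 \cdot \PP(b^{(1)} \leq 0) > 0$ using the hypothesis $\PP(b^{(1)} \leq 0) > 0$.

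Next I would argue that $\Phi$ is constant on $E$: for every $x \in \RR$ one has $\relu(W^{(0)} x + b^{(0)}) \geq 0$, hence $W^{(1)} \relu(W^{(0)} x + b^{(0)}) \leq 0$ because $W^{(1)} < 0$, and therefore $W^{(1)} \relu(W^{(0)} x + b^{(0)}) + b^{(1)} \leq b^{(1)} \leq 0$; applying $\relu$ yields $0$, so $\Phi(x) = b^{(2)}$ for all $x$ and thus $\lip(\Phi) = 0$. On the other hand, $\widetilde{\Phi}$ is the affine map $x \mapsto W^{(2)} W^{(1)} W^{(0)} x + (\text{const})$, so that $\lip(\widetilde{\Phi}) = \lvert W^{(2)} W^{(1)} W^{(0)} \rvert$, which is strictly positive on $E$ since all three factors are nonzero there. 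Combining the two statements gives $\lip(\Phi) = 0 < \lip(\widetilde{\Phi})$ on the positive-probability event $E$.

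There is no genuine obstacle here; the only point requiring a little care is to choose the event so that all three weights are nonzero (guaranteeing $\lip(\widetilde{\Phi}) > 0$) while simultaneously forcing the pre-activation of the second $\relu$ to be nonpositive for \emph{every} input — which is exactly what the combination $W^{(1)} < 0$ and $b^{(1)} \leq 0$ achieves, using that the output of the first $\relu$ layer is always nonnegative.
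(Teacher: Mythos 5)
Your proof is correct and follows essentially the same route as the paper: condition on the positive-probability event $\{W^{(1)}<0,\ b^{(1)}\leq 0,\ W^{(0)},W^{(2)}\neq 0\}$, on which the second $\relu$'s pre-activation is nonpositive for every input (making $\Phi$ constant) while $\lip(\widetilde{\Phi})=\lvert W^{(2)}W^{(1)}W^{(0)}\rvert>0$. Your version is in fact marginally more careful, noting that $\Phi\equiv b^{(2)}$ (constant) rather than identically zero, and spelling out the independence computation for $\PP(E)$.
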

{\color{black}\Cref{prop:not_working_deep} does not rule out the possibility that the inequality $\lip(\Phi) \gtrsim \lip(\tilde{\Phi})$ holds with high probability as $d \to \infty$. 
However, the proposition shows that the case of deep networks is qualitatively different from the case of shallow networks, where $\lip(\Phi) \gtrsim \lip(\tilde{\Phi})$ holds 
even deterministically (see \Cref{prop:shallow_low_linear}).
Hence, one must carry out a more detailed analysis of the gradients in the case of deep networks (see \Cref{subsec:deep_lower}).}

\section{Proof of the upper bound} \label{sec:upper}

Referring to \eqref{eq:lowbound}, the goal of this section is to establish upper bounds for
\begin{equation*}
\underset{x \in \RR^d}{\sup}\Vert W^{(L)} \cdot D^{(L-1)}(x) \cdot W^{(L-1)} \cdots D^{(0)}(x)\cdot W^{(0)}\Vert_2
\end{equation*}
that hold with high probability and in expectation, where the randomness is over the random matrices $W^{(0)} , ..., W^{(L)}$ and the random bias vectors $b^{(0)}, ..., b^{(L-1)}$. For this to make sense one first needs to know that 
\begin{equation} \label{eq:suppp}
\underset{x \in \RR^d}{\sup}\Vert W^{(L)} \cdot D^{(L-1)}(x) \cdot W^{(L-1)}\cdots D^{(0)}(x)\cdot W^{(0)}\Vert_2
\end{equation} 
is indeed measurable; we refer to \Cref{app:measurable} for a proof of this fact. 

Throughout the entire section, we assume that \Cref{assum:1} is satisfied. 
Since the random matrices $W^{(\ell)}$ and the random biases $b^{(\ell)}$ are jointly independent it is possible to calculate the expectation iteratively by first assuming that the matrices
 $W^{(0)}, ... ,W^{(L-1)}$ and the biases $b^{(0)}, ..., b^{(L-1)}$ are fixed and deriving an upper bound when only $W^{(L)}$ is assumed to be random. Then as the final step we are also going to allow randomness in $W^{(0)}, ..., W^{(L-1)}$ and $b^{(0)}, ..., b^{(L-1)}$ to get the desired result. In other words, we are conditioning on $W^{(0)}, ..., W^{(L-1)}, b^{(0)}, ..., b^{(L-1)}$.

The central tool for deriving these bounds is \emph{Dudley's inequality} which can be found for example in \cite[Theorems~5.25~and~5.29]{van2014probability}. We refer to \Cref{app:dudley} for details on Dudley's inequality. The key idea of this section is contained in the following proposition. 
\begin{proposition}\label{prop:key}
Let the matrices $W^{(0)}, ..., W^{(L-1)}$ and the biases $b^{(0)},..., b^{(L-1)}$ be fixed and set $\Lambda \defeq \Vert W^{(L-1)} \Vert_2 \cdots \Vert W^{(0)} \Vert_2$. For $x \in \RR^d$ and $z \in \overline{B}_d(0,1)$ we define
\begin{equation*}
Y_{z,x} \defeq D^{(L-1)}(x) W^{(L-1)}\cdots D^{(0)}(x) \cdot W^{(0)}z \in \RR^N
\end{equation*}
and further
\begin{equation*}
\mathcal{L} = \mathcal{L}(d,N,L,W^{(0)}, ..., W^{(L-1)}, b^{(0)},..., b^{(L-1)})\defeq \left\{ Y_{z,x}: \ x \in \RR^d, \ z \in \overline{B}_d(0,1)\right\} \subseteq \RR^N. 
\end{equation*}
Then there exists an absolute constant $C>0$ such that the following holds: Given any $u \geq 0$, we have
\begin{equation*}
\underset{x \in \RR^d}{\sup} \Vert W^{(L)} \cdot D^{(L-1)}(x)\cdot W^{(L-1)} \cdots D^{(0)}(x) \cdot W^{(0)}\Vert_2 \leq  C \cdot \left( \int_0^{\Lambda} \sqrt{\ln \left(\mathcal{N}(\mathcal{L}, \Vert \cdot \Vert_2, \eps)\right)} \ \dd \eps + u \Lambda \right)
\end{equation*}
with probability at least $(1 - 2\exp(-u^2))$ (with respect to the choice of $W^{(L)}$). Moreover, 
\begin{equation*}
\underset{W^{(L)}}{\EE} \left[ \underset{x \in \RR^d}{\sup} \Vert W^{(L)} \cdot D^{(L-1)}(x) \cdot W^{(L-1)}\cdots D^{(0)}(x)\cdot W^{(0)}\Vert_2\right] \leq C \cdot \int_0^{\Lambda}  \sqrt{\ln \left(\mathcal{N}(\mathcal{L}, \Vert \cdot \Vert_2, \eps)\right)} \ \dd \eps.
\end{equation*}
\end{proposition}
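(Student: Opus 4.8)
The plan is to identify the left-hand side with the supremum of a Gaussian process indexed by $\mathcal{L}$ and then to invoke Dudley's inequality, in its expectation form for the second assertion and in its tail form for the first. First I would set $g \defeq (W^{(L)})^T \in \RR^N$; since the entries of $W^{(L)}$ are i.i.d.\ $\mathcal{N}(0,1)$, the vector $g$ is distributed as $\mathcal{N}(0, I_N)$. For a fixed $x \in \RR^d$ the matrix $W^{(L)} D^{(L-1)}(x) W^{(L-1)} \cdots D^{(0)}(x) W^{(0)}$ is a $1 \times d$ row vector, so its operator norm equals $\sup_{z \in \overline{B}_d(0,1)} \big\vert W^{(L)} D^{(L-1)}(x) W^{(L-1)} \cdots D^{(0)}(x) W^{(0)} z \big\vert = \sup_{z \in \overline{B}_d(0,1)} \vert \langle g, Y_{z,x}\rangle \vert$. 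Taking the supremum over $x \in \RR^d$, the quantity to be bounded equals $\sup_{Y \in \mathcal{L}} \vert \langle g, Y\rangle \vert$, that is, (the absolute value of) the supremum of the centered Gaussian process $Y \mapsto \langle g, Y\rangle$ over $Y \in \mathcal{L}$; its canonical increment pseudometric is $\big(\EE \vert \langle g, Y - Y'\rangle\vert^2\big)^{1/2} = \Vert Y - Y'\Vert_2$, so the associated covering numbers are exactly $\mathcal{N}(\mathcal{L}, \Vert \cdot \Vert_2, \eps)$.

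Next I would collect two elementary facts about $\mathcal{L}$. Because $Y_{z,x}$ is linear in $z$, we have $Y_{-z,x} = -Y_{z,x}$, so $\mathcal{L}$ is symmetric; together with $\langle g, -Y\rangle = - \langle g, Y\rangle$ this lets us drop the absolute value, $\sup_{Y \in \mathcal{L}} \vert \langle g, Y\rangle\vert = \sup_{Y \in \mathcal{L}} \langle g, Y\rangle$. Moreover each $D^{(\ell)}(x)$ is a diagonal matrix with entries in $\{0,1\}$, hence $\Vert D^{(\ell)}(x)\Vert_2 \leq 1$, so that $\Vert Y_{z,x}\Vert_2 \leq \Vert W^{(L-1)}\Vert_2 \cdots \Vert W^{(0)}\Vert_2 \, \Vert z\Vert_2 \leq \Lambda$; that is, $\mathcal{L} \subseteq \overline{B}_N(0, \Lambda)$, and in particular $\diam(\mathcal{L}) \leq 2\Lambda$. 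Since also $0 = Y_{0,x} \in \mathcal{L}$, a single ball of radius $\Lambda$ centered at $0 \in \mathcal{L}$ covers $\mathcal{L}$, so $\mathcal{N}(\mathcal{L}, \Vert \cdot\Vert_2, \eps) = 1$ whenever $\eps \geq \Lambda$; therefore the Dudley entropy integral over $(0,\infty)$ reduces to the integral over $(0,\Lambda)$ appearing in the statement. (For the separability needed to apply Dudley, note that as $x$ ranges over $\RR^d$ the tuple $(D^{(0)}(x), \dots, D^{(L-1)}(x))$ takes only finitely many values, so $\mathcal{L}$ is a finite union of images of $\overline{B}_d(0,1)$ under linear maps, hence compact.)

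Finally I would apply the two forms of Dudley's inequality from \cite[Theorems~5.25~and~5.29]{van2014probability} to the centered process $Y \mapsto \langle g, Y\rangle$, which vanishes at $Y = 0 \in \mathcal{L}$. The expectation form gives $\EE_{W^{(L)}}\big[\sup_{Y \in \mathcal{L}} \langle g, Y\rangle\big] \leq C \int_0^{\infty} \sqrt{\ln \mathcal{N}(\mathcal{L}, \Vert \cdot\Vert_2, \eps)}\, \dd\eps = C \int_0^{\Lambda} \sqrt{\ln \mathcal{N}(\mathcal{L}, \Vert \cdot\Vert_2, \eps)}\, \dd\eps$, which, after the reduction of the first paragraph, is the second claim. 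The tail form gives, for every $u \geq 0$, that $\sup_{Y \in \mathcal{L}} \langle g, Y\rangle \leq C\big(\int_0^{\infty} \sqrt{\ln \mathcal{N}(\mathcal{L}, \Vert\cdot\Vert_2, \eps)}\, \dd\eps + u \cdot \diam(\mathcal{L})\big)$ with probability at least $1 - 2\exp(-u^2)$; bounding $\diam(\mathcal{L}) \leq 2\Lambda$ and absorbing the constant $2$ into the absolute constant $C$ yields the first claim.

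I do not anticipate a serious obstacle: this proposition isolates precisely the \enquote{soft} half of the upper-bound argument, the entire difficulty having been pushed into the covering number $\mathcal{N}(\mathcal{L}, \Vert \cdot \Vert_2, \eps)$, which is estimated later using the VC-dimension and covering bounds of \Cref{sec:preliminaries}. The only points that need care are the reduction to a one-sided supremum via symmetry of $\mathcal{L}$ (so that Dudley applies directly to $\sup_{Y} \langle g, Y\rangle$ and not merely to $\sup_{Y, Y'}(\langle g, Y\rangle - \langle g, Y'\rangle)$), the truncation of the entropy integral at $\Lambda$ coming from the crude bound $\Vert D^{(\ell)}(x)\Vert_2 \leq 1$, and---to be entirely rigorous---the measurability of $\sup_{Y \in \mathcal{L}} \vert\langle g, Y\rangle\vert$ as a function of $W^{(L)}$, which is handled separately in \Cref{app:measurable}.
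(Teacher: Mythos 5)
Your proposal is correct and follows essentially the same route as the paper's proof: rewrite the supremum as $\sup_{Y \in \mathcal{L}} \langle (W^{(L)})^T, Y\rangle$, observe that $\mathcal{L} \subseteq \overline{B}_N(0,\Lambda)$ with $0 \in \mathcal{L}$ so that the entropy integral truncates at $\Lambda$ and $\diam(\mathcal{L}) \leq 2\Lambda$, and then apply the tail and expectation forms of Dudley's inequality. Your explicit remarks on the symmetry of $\mathcal{L}$ (to drop the absolute value) and on separability are left implicit in the paper but are correct and welcome.
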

\begin{proof}
For $y \in \mathcal{L}$ it holds that there are $x \in \RR^d$ and $z \in \overline{B}_d(0,1)$ satisfying 
\begin{equation*}
y=Y_{z,x} = D^{(L-1)}(x) W^{(L-1)}\cdots D^{(0)}(x) W^{(0)}z. 
\end{equation*}
We compute
\begin{align*}
\Vert y \Vert_2 &\leq \underbrace{\Vert D^{(L-1)}(x)\Vert_2}_{\leq 1} \cdot \Vert W^{(L-1)}\Vert_2 \cdots \underbrace{\Vert D^{(0)}(x) \Vert_2}_{\leq 1} \cdot \Vert W^{(0)} \Vert_2 \cdot \underbrace{\Vert z \Vert_2}_{\leq 1} \\
&\leq \Vert W^{(L-1)} \Vert_2 \cdots \Vert W^{(0)} \Vert_2 = \Lambda.
\end{align*}
Hence, since $0 = D^{(L-1)}(0) W^{(L-1)}\cdots D^{(0)}(0)W^{(0)}0 \in \mathcal{L}$ it follows 
\begin{equation}\label{eq:cov=1}
\mathcal{N}(\mathcal{L}, \Vert \cdot \Vert_2, \eps) = 1 \quad \text{for } \eps \geq  \Lambda.
\end{equation}
To get the final result we rewrite
\begin{align*}
\underset{x \in \RR^d}{\sup} \Vert W^{(L)}D^{(L-1)}(x)\cdot W^{(L-1)}\cdots D^{(0)}(x) W^{(0)} \Vert_2  &=  \underset{x \in \RR^d, z \in \overline{B}_d(0,1)}{\sup} \left\langle \left(W^{(L)}\right)^T, Y_{z,x}\right\rangle  \\
&=   \underset{Y \in \mathcal{L}}{\sup} \left\langle \left(W^{(L)}\right)^T, Y \right\rangle.
\end{align*}
From the observation $\mathcal{L} \subseteq \overline{B}_N(0, \Lambda)$ we infer $\diam(\mathcal{L}) \leq 2\Lambda$. \Cref{prop:dudley} and \eqref{eq:cov=1} then yield the claim by noticing once again that $0 \in \mathcal{L}$.
\end{proof}
Given the above proposition, the problem of bounding the Lipschitz constant of random ReLU networks has been transferred to bounding the covering numbers of the set $\mathcal{L}$. Finding upper bounds for these covering numbers is the essential task in the following two subsections. In fact, in the following we show that
\begin{equation*}
\mathcal{N}(\mathcal{L}, \Vert \cdot \Vert_2, \eps) \leq \left(\frac{9 \Vert W^{(L-1)} \Vert_2 \cdots \Vert W^{(0)} \Vert_2}{\eps} \right)^{C(d+1)} \cdot \left(\frac{\ee N}{d+1}\right)^{L(d+1)},
\end{equation*}
at least if $N > d+2$. Here, $C>0$ is an absolute constant. However, in the case of shallow networks, i.e., $L=1$, it is possible to show that the above inequality holds without the additional factor $\left(\frac{\ee N}{d+1}\right)^{L(d+1)}$, without the assumption $N > d+2$ and $d$ can be replaced by $\min\{N,d\}$, which in the end leads to a sharper bound on the Lipschitz constant. Therefore, the cases of shallow and deep networks are treated separately in \Cref{sec:shallow,sec:deep}, respectively. The final bounds on the covering numbers can be found in \Cref{lem:cov_num_bound,lem:cov_bound}.

\subsection{The shallow case}

\label{sec:shallow}
Firstly, we consider shallow neural networks, i.e., networks that only have a single hidden layer and can hence be written as
\begin{equation*}
\left(x \mapsto (W^{(1)} \cdot x + b^{(1)})\right) \circ \relu \circ \left( x \mapsto (W^{(0)}\cdot x + b^{(0)}) \right).
\end{equation*}
As already explained above, we are from now on going to assume that the matrix $W^{(0)}$ and the vector $b^{(0)}$ are fixed and only assume randomness in $W^{(1)}$ and $b^{(1)}$.
\begin{lemma} \label{lem:scalarproduct_alternative_form}
Let $W^{(0)} \in \RR^{N \times d}$ and $b^{(0)} \in \RR^N$ be fixed. We recall that for $\alpha \in \RR^{d+1}$ we define
\begin{equation*}
f_\alpha: \quad \RR^{d+1} \to \{0,1\}, \quad x \mapsto \mathbbm{1}_{\alpha^Tx > 0}.
\end{equation*}
Furthermore, for any vector $z \in \RR^{d}$, let
\begin{equation*}
\tau_{\alpha, z} \defeq (W^{(0)}z) \odot \left( f_\alpha\left((W^{(0)}_{1,-}, b^{(0)}_1)^T\right) , ..., f_\alpha \left((W^{(0)}_{N,-}, b^{(0)}_N)^T\right)\right) \in \RR^N.
\end{equation*}
Then the following two statements hold:
\begin{enumerate}
\item{$Y_{z,x}=  \tau_{(x,1)^T, z}$ for every $x \in \RR^d$ and $z \in \B_d(0,1)$,}
\item{\label{item:lem_2}$\Vert \tau_{\alpha,z} \Vert_2 \leq \Vert W^{(0)} \Vert_2 \cdot \Vert z \Vert_2$ for all $\alpha \in \RR^{d+1}, z \in \RR^d$.}
\end{enumerate}
Here, $Y_{z,x}$ is as introduced in \Cref{prop:key}.
\end{lemma}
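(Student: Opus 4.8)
The plan is to verify the two claims essentially by unwinding the definitions, since in the shallow case ($L=1$) the recursive construction in \eqref{eq:d-matrices} collapses to a single layer. Recall from \Cref{prop:key} that for a shallow network $Y_{z,x} = D^{(0)}(x) \cdot W^{(0)} z$ with $D^{(0)}(x) = \diag(W^{(0)} x^{(0)} + b^{(0)}) = \diag(W^{(0)} x + b^{(0)})$, since $x^{(0)} = x$. The matrix $D^{(0)}(x)$ is diagonal with $(i,i)$-entry equal to $\mathbbm{1}_{(W^{(0)}x + b^{(0)})_i > 0} = \mathbbm{1}_{\langle W^{(0)}_{i,-}, x\rangle + b^{(0)}_i > 0}$. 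The key observation for (1) is that $\langle W^{(0)}_{i,-}, x\rangle + b^{(0)}_i = \langle (W^{(0)}_{i,-}, b^{(0)}_i), (x,1)\rangle = \langle (x,1)^T, (W^{(0)}_{i,-}, b^{(0)}_i)^T\rangle = (x,1) \cdot (W^{(0)}_{i,-}, b^{(0)}_i)^T$, so that the $i$th diagonal entry of $D^{(0)}(x)$ equals $\mathbbm{1}_{\alpha^T (W^{(0)}_{i,-}, b^{(0)}_i)^T > 0} = f_\alpha\big((W^{(0)}_{i,-}, b^{(0)}_i)^T\big)$ with $\alpha = (x,1)^T \in \RR^{d+1}$. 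Hence applying the diagonal matrix $D^{(0)}(x)$ to the vector $W^{(0)} z$ is exactly the Hadamard product of $W^{(0)}z$ with the vector of these indicator values, which is by definition $\tau_{(x,1)^T, z}$. This gives $Y_{z,x} = D^{(0)}(x) W^{(0)} z = \tau_{(x,1)^T, z}$ for every $x \in \RR^d$ and $z \in \overline{B}_d(0,1)$, proving (1). (In fact the restriction $z \in \overline{B}_d(0,1)$ plays no role here; the identity holds for all $z \in \RR^d$.)

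For (2), I would argue that forming the Hadamard product with a $\{0,1\}$-valued vector can only zero out some coordinates and leaves the others unchanged, hence it does not increase the Euclidean norm. Concretely, writing $m_i \defeq f_\alpha\big((W^{(0)}_{i,-}, b^{(0)}_i)^T\big) \in \{0,1\}$, we have $\Vert \tau_{\alpha,z}\Vert_2^2 = \sum_{i=1}^N m_i^2 \, (W^{(0)}z)_i^2 = \sum_{i : m_i = 1} (W^{(0)}z)_i^2 \leq \sum_{i=1}^N (W^{(0)}z)_i^2 = \Vert W^{(0)} z\Vert_2^2$, and then $\Vert W^{(0)} z\Vert_2 \leq \Vert W^{(0)}\Vert_2 \cdot \Vert z\Vert_2$ by the definition of the operator norm. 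Taking square roots gives $\Vert \tau_{\alpha,z}\Vert_2 \leq \Vert W^{(0)}\Vert_2 \cdot \Vert z\Vert_2$ for all $\alpha \in \RR^{d+1}$ and $z \in \RR^d$, which is claim (2). Alternatively one can phrase this as $\Vert v \odot w\Vert_2 \leq \Vert v\Vert_\infty \cdot \Vert w\Vert_2$ applied with $v$ the $\{0,1\}$-mask (so $\Vert v\Vert_\infty \leq 1$) and $w = W^{(0)}z$, but the coordinatewise estimate above is the most transparent.

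This lemma is essentially bookkeeping, so I do not anticipate a genuine obstacle; the only point requiring a little care is making sure the pairing conventions line up so that the mask in the definition of $\tau_{\alpha,z}$ with $\alpha = (x,1)^T$ really reproduces $D^{(0)}(x)$ — i.e., that $(x,1)^T$ is the correct augmented vector and that $f_\alpha$ is evaluated at the augmented rows $(W^{(0)}_{i,-}, b^{(0)}_i)^T$ in the right order. Once the definitions of $\diag(\cdot)$, $f_\alpha$, $\odot$, and $Y_{z,x}$ are all written out side by side, both statements follow immediately.
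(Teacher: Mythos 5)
Your proof is correct and follows essentially the same route as the paper: part (1) by comparing the $i$th coordinate of $D^{(0)}(x)W^{(0)}z$ with the definition of $\tau_{(x,1)^T,z}$ via the identity $\langle W^{(0)}_{i,-},x\rangle + b^{(0)}_i = \langle (W^{(0)}_{i,-},b^{(0)}_i)^T,(x,1)^T\rangle$, and part (2) by observing that the $\{0,1\}$-mask cannot increase the Euclidean norm before invoking the operator-norm bound. No gaps.
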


\begin{proof}
\leavevmode
\begin{enumerate}
\item{ Let $x \in \RR^d$ and $z \in \B_d(0,1)$. For every $i \in \{1,...,N\}$ we calculate
\begin{align*}
(D^{(0)}(x) \cdot W^{(0)} \cdot z)_i &= \mathbbm{1}_{W^{(0)}_{i,-} x + b^{(0)}_i > 0} \cdot \left(W^{(0)}z\right)_i \\
&= \left(W^{(0)}z\right)_i \cdot f_{(x,1)^T}\left(\left(W^{(0)}_{i,-}, b^{(0)}_i\right)^T\right) = \left(\tau_{(x,1)^T,z}\right)_i,
\end{align*}
which yields the claim.}
\item{It is immediate that
\begin{equation*}
\Vert \tau_{\alpha, z} \Vert_2 \leq \Vert W^{(0)}z \Vert_2 \cdot \underset{i=1,...,N}{\max} \underbrace{\abs{f_\alpha\left(\left(W^{(0)}_{i,-}, b^{(0)}_i\right)^T\right)}}_{\leq 1} \leq \Vert W^{(0)} \Vert_2 \cdot \Vert z \Vert_2. \qedhere
\end{equation*}
}
\end{enumerate}
\end{proof}

The desired bound for the covering number of $\mathcal{L}$ in the case of shallow networks is contained in the following lemma. 
\begin{lemma} \label{lem:cov_num_bound}
Assume that $W^{(0)} \in \RR^{N \times d}$ and $b^{(0)} \in \RR^N$ are fixed. There exists an absolute constant $C>0$ such that, writing $k \defeq \rang(W^{(0)})$, for every $\varepsilon \in (0, \Vert W^{(0)} \Vert_2)$ it holds
\begin{equation*}
\mathcal{N}(\mathcal{L}, \Vert \cdot \Vert_2, \eps) \leq \left(\frac{9 \Vert W^{(0)} \Vert_2}{\varepsilon}\right)^{C\cdot k}.
\end{equation*}
Here, $\mathcal{L}$ is as introduced in \Cref{prop:key}.
\end{lemma}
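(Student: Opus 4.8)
The plan is to cover $\mathcal{L}$ by taking products of a net of its ``continuous part'' with a net of its finite ``combinatorial part'', and then to pass from the resulting external net to an internal net of $\mathcal{L}$. By \Cref{lem:scalarproduct_alternative_form}, every element of $\mathcal{L}$ can be written as $(W^{(0)} z) \odot s$ with $z \in \overline{B}_d(0,1)$ and $s = \big( f_{(x,1)^T}((W^{(0)}_{1,-}, b^{(0)}_1)^T), \dots, f_{(x,1)^T}((W^{(0)}_{N,-}, b^{(0)}_N)^T) \big) \in \{0,1\}^N$ for some $x \in \RR^d$. Write $\mathcal{F}$ for the class of all Boolean functions $i \mapsto f_{\alpha}((W^{(0)}_{i,-}, b^{(0)}_i)^T)$ on $\{1, \dots, N\}$ with $\alpha \in \RR^{d+1}$, so that each sign pattern $s$ occurring above lies in $\mathcal{F}$. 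The crucial point --- which is exactly why the bound depends only on $k \defeq \rang(W^{(0)})$ rather than on $d$ or $N$ --- is twofold: the vectors $W^{(0)} z$ lie in the $k$-dimensional subspace $V \defeq \{ W^{(0)} w : w \in \RR^d \}$, and $\vc(\mathcal{F}) \le k+1$.

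For the continuous part, $\{ W^{(0)} z : \Vert z \Vert_2 \le 1 \} \subseteq \overline{B}_N(0, \Vert W^{(0)} \Vert_2) \cap V$, so by rescaling \Cref{prop:covering_ball} there is an $(\eps/4)$-net $\mathcal{Z} \subseteq \overline{B}_N(0, \Vert W^{(0)} \Vert_2) \cap V$ of the latter set with $\vert \mathcal{Z} \vert \le (8 \Vert W^{(0)} \Vert_2 / \eps + 1)^k$; in particular $\Vert u_0 \Vert_2 \le \Vert W^{(0)} \Vert_2$ for all $u_0 \in \mathcal{Z}$. For the combinatorial part --- the heart of the argument --- fix $u_0 \in \mathcal{Z} \setminus \{0\}$ and equip $\{1, \dots, N\}$ with the probability measure $\mu_{u_0}$ given by $\mu_{u_0}(\{i\}) \defeq (u_0)_i^2 / \Vert u_0 \Vert_2^2$; then $\Vert u_0 \odot s - u_0 \odot s' \Vert_2 = \Vert u_0 \Vert_2 \cdot \Vert s - s' \Vert_{L^2(\mu_{u_0})}$ for all Boolean $s, s'$. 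To see $\vc(\mathcal{F}) \le k+1$: any index set shattered by $\mathcal{F}$ corresponds to pairwise distinct points $p_i \defeq (W^{(0)}_{i,-}, b^{(0)}_i)^T$ lying in $\widetilde{V} \defeq \spann\{p_1, \dots, p_N\}$, and on $\widetilde{V}$ the class $\{ f_\alpha|_{\widetilde{V}} : \alpha \in \RR^{d+1} \}$ coincides with $\{ f_\beta|_{\widetilde{V}} : \beta \in \widetilde{V} \}$, which has VC-dimension $\dim \widetilde{V} = \rang([\, W^{(0)} \mid b^{(0)} \,]) \le k+1$ by \Cref{prop:vc_half_spaces_2}. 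Since $\eps / (4 \Vert W^{(0)} \Vert_2) \in (0,1)$, \Cref{prop:covering_vc} then provides a net $\mathcal{G}_{u_0} \subseteq \mathcal{F}$ that $(\eps/(4\Vert W^{(0)}\Vert_2))$-covers $\mathcal{F}$ in $L^2(\mu_{u_0})$ with $\vert \mathcal{G}_{u_0} \vert \le (8 \Vert W^{(0)} \Vert_2 / \eps)^{C_1 (k+1)}$, where $C_1$ is the absolute constant from \Cref{prop:covering_vc}.

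Now set $\mathcal{M} \defeq \{0\} \cup \{ u_0 \odot g : u_0 \in \mathcal{Z} \setminus \{0\}, \ g \in \mathcal{G}_{u_0} \}$. Given $(W^{(0)}z) \odot s \in \mathcal{L}$, choose $u_0 \in \mathcal{Z}$ with $\Vert W^{(0)} z - u_0 \Vert_2 \le \eps/4$; if $u_0 = 0$ then $\Vert (W^{(0)}z)\odot s \Vert_2 \le \eps/4$ and $0 \in \mathcal{M}$ is within $\eps/2$, and otherwise choose $g \in \mathcal{G}_{u_0}$ with $\Vert s - g \Vert_{L^2(\mu_{u_0})} \le \eps/(4\Vert W^{(0)}\Vert_2)$ and estimate, via $(W^{(0)}z)\odot s - u_0 \odot g = \big((W^{(0)}z - u_0)\odot s\big) + \big(u_0 \odot (s-g)\big)$, that $\Vert (W^{(0)}z)\odot s - u_0 \odot g \Vert_2 \le \eps/4 + \Vert u_0 \Vert_2 \, \Vert s - g \Vert_{L^2(\mu_{u_0})} \le \eps/2$. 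Hence $\mathcal{M}$ is an external $(\eps/2)$-net of $\mathcal{L}$. Replacing each $m \in \mathcal{M}$ with $\overline{B}_N(m, \eps/2) \cap \mathcal{L} \ne \emptyset$ by some point of that intersection yields a set $K \subseteq \mathcal{L}$ with $\vert K \vert \le \vert \mathcal{M} \vert$ that $\eps$-covers $\mathcal{L}$, so $\mathcal{N}(\mathcal{L}, \Vert \cdot \Vert_2, \eps) \le \vert \mathcal{M} \vert \le 1 + (8A+1)^k (8A)^{C_1(k+1)}$ with $A \defeq \Vert W^{(0)} \Vert_2 / \eps > 1$. Using $A > 1$ and $k \ge 1$ (the case $k = 0$ is vacuous, since then $\Vert W^{(0)} \Vert_2 = 0$ and $(0, \Vert W^{(0)} \Vert_2) = \emptyset$), this is at most $(9 \Vert W^{(0)} \Vert_2 / \eps)^{C k}$ for a suitable absolute constant $C$.

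The main obstacle is precisely to avoid the factor one would incur by a more naive argument: if instead one covered each of the finitely many sign-pattern ``slices'' of $\mathcal{L}$ separately and bounded their number by Sauer--Shelah, one would obtain an extra factor of roughly $(\ee N/(d+1))^{d+1}$ --- exactly the $N$-dependence this lemma is meant to eliminate (and which does reappear, unavoidably, in the deep case). The device that removes it is the coordinate reweighting turning the Hadamard structure into an $L^2(\mu)$-distance, so that the VC bound $\vc(\mathcal{F}) \le k+1$, and not the cardinality of the sign-pattern set, governs the exponent. A minor but genuine technicality is that the covering numbers in this paper are defined via nets contained in the set, which forces the external-to-internal passage and the accompanying bookkeeping of constants.
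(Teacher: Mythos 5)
Your proof is correct and follows essentially the same strategy as the paper's: a net of the $k$-dimensional continuous part combined, via a data-dependent reweighting measure, with a VC-based $L^2(\mu)$ net of the sign patterns, followed by the external-to-internal passage. The only (cosmetic) differences are that you net the image $W^{(0)}\overline{B}_d(0,1)$ inside the range of $W^{(0)}$ in $\RR^N$ rather than the ball $\overline{B}_d(0,1)\cap\ker(W^{(0)})^\perp$ in the domain, and you bound $\vc(\mathcal{F})\le k+1$ via the span of the data points rather than via $\ker(W^{(0)})^\perp\times\RR$; both yield the same estimates.
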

\begin{proof}
Without loss of generality, we assume that $W^{(0)} \neq 0$, since otherwise $(0, \Vert W^{(0)} \Vert_2) = \emptyset$. Note that according to \Cref{lem:scalarproduct_alternative_form} we can write
\begin{equation*}
\mathcal{L} = \left\{\tau_{(x,1)^T, v}: \ x \in \RR^d, v \in \B_d(0,1) \right\}.
\end{equation*}
We can even weaken this identity and infer
\begin{equation*}
\mathcal{L} = \left\{\tau_{(x,1)^T, v}: \ x \in \ker(W^{(0)})^{\perp}, v \in \B_d(0,1) \cap \ker(W^{(0)})^{\perp} \right\}.
\end{equation*}
Here, $\ker(W^{(0)})^{\perp}$ denotes the orthogonal complement of $\ker(W^{(0)})$. We further note 
\begin{equation*}
\dim(\ker(W^{(0)})^\perp) = k.
\end{equation*}

Let $\eps \in (0, \Vert W^{(0)} \Vert_2)$. From \Cref{prop:covering_ball} we infer the existence of a natural number $M \in \NN$ with $M \leq \left( \frac{8 \Vert W^{(0)} \Vert_2}{\varepsilon} + 1\right)^k$ and $v_1, ..., v_M \in \B_d(0,1) \cap \ker(W^{(0)})^\perp$ such that
\begin{equation*}
\B_d(0,1) \cap \ker(W^{(0)})^\perp \subseteq \bigcup_{i=1}^M \overline{B}_d\left(v_i,\frac{\varepsilon}{ 4 \Vert W^{(0)} \Vert_2}\right) \cap \ker(W^{(0)})^\perp.
\end{equation*}
For $i \in \{1,...,M\}$ let $w_i \defeq W^{(0)}v_i$. Fix $i \in \{1,...,M\}$ and first assume $w_i \neq 0$. Define a probability measure $\mu_i$ on $\ker(W^{(0)})^{\perp} \times \RR$ by
\begin{equation*}
\mu_i \defeq \frac{1}{\Vert w_i \Vert_2^2} \cdot \sum_{\ell = 1}^N (w_i)_\ell^2 \cdot \delta_{\left(W^{(0)}_{\ell,-}, b^{(0)}_\ell\right)^T}.
\end{equation*}
Here, we note that by definition it holds $\left(W^{(0)}_{\ell,-}\right)^T \in \ker(W^{(0)})^\perp$ for every $\ell \in \{1,...,N\}$.

Let $\mathcal{F} \defeq \left\{ \fres{f_\alpha}{\ker(W^{(0)})^\perp \times \RR} : \ \alpha \in \ker(W^{(0)})^\perp \times \RR\right\}$, where $f_\alpha$ is as introduced in the previous \Cref{lem:scalarproduct_alternative_form}. From \Cref{prop:vc_half_spaces_2} we infer that 
\begin{equation*}
\vc(\mathcal{F})  = k+1. 
\end{equation*}
Further, \Cref{prop:covering_vc} shows for every $\delta \in (0,1)$ that
\begin{equation*}
\mathcal{N}(\mathcal{F}, L^2(\mu_i), \delta) \leq \left( \frac{2}{\delta}\right)^{C'(k+1)}
\end{equation*}
with an absolute constant $C'>0$. Thus, there exists $K_i \in \NN$ with $K_i \leq \left(\frac{8 \Vert W^{(0)} \Vert_2}{\varepsilon}\right)^{C'(k+1)}$ and vectors $\alpha_1^{(i)},..., \alpha_{K_i}^{(i)} \in \ker(W^{(0)})^\perp \times \RR$ such that
\begin{equation*}
\mathcal{F} \subseteq \bigcup_{j=1}^{K_i} \overline{B}_\mathcal{F}^{L^2(\mu_i)} \left(\fres{f_{\alpha_j^{(i)}}}{\ker(W^{(0)})^\perp \times \RR},\frac{\epsilon}{4\Vert W^{(0)} \Vert_2}\right).
\end{equation*}
If $w_i=0$ let $\alpha_j^{(i)} \defeq 0 \in \ker(W^{(0)})^\perp \times \RR$ for $1\leq j \leq K_i \defeq 1$. 

Now, let $v \in \B_d(0,1) \cap \ker(W^{(0)})^\perp$ and $x \in \ker(W^{(0)})^\perp$ be arbitrary. Then there exists $i \in \{1,...,M\}$ such that 
\begin{equation} \label{eq:v_bound}
\Vert v - v_i \Vert_2 \leq \frac{\varepsilon}{4\Vert W^{(0)} \Vert_2}.
\end{equation} 
Let us first consider the case $w_i = W^{(0)}v_i \neq 0$. Then there exists $j \in \{1,..., K_i\}$ such that
\begin{equation*}
\left\Vert \fres{f_{(x,1)^T}}{\ker(W^{(0)})^\perp \times \RR} - \fres{f_{\alpha_j^{(i)}}}{\ker(W^{(0)})^\perp \times \RR}\right\Vert_{L^2(\mu_i)} \leq \frac{\epsilon}{4\Vert W^{(0)} \Vert_2}.
\end{equation*} 
We compute
\begin{align}
\left\Vert \tau_{(x,1)^T,v}- \tau_{\alpha_j^{(i)}, v_i}\right\Vert_2 &\leq \left\Vert \tau_{(x,1)^T,v} - \tau_{(x,1)^T, v_i}\right\Vert_2 + \left\Vert \tau_{(x,1)^T, v_i} - \tau_{\alpha_j^{(i)},v_i}\right\Vert_2 \nonumber\\
&= \left\Vert \tau_{(x,1)^T, v- v_i}\right\Vert_2 + \left\Vert \tau_{(x,1)^T, v_i} - \tau_{\alpha_j^{(i)},v_i}\right\Vert_2 \nonumber\\
\overset{\text{Lemma}~\ref{lem:scalarproduct_alternative_form}~\eqref{item:lem_2}}&{\leq} \Vert W^{(0)} \Vert_2 \cdot \Vert v - v_i \Vert_2 + \left\Vert \tau_{(x,1)^T, v_i} - \tau_{\alpha_j^{(i)},v_i}\right\Vert_2 \nonumber\\
\label{eq:first_bound}
\overset{\eqref{eq:v_bound}}&{\leq} \frac{\varepsilon}{4} + \left\Vert \tau_{(x,1)^T, v_i} - \tau_{\alpha_j^{(i)},v_i}\right\Vert_2.
\end{align}
Finally, we note because of $w_i = W^{(0)}v_i$ and by definition of $\mu_i$ that
\begin{align}
\left\Vert \tau_{(x,1)^T, v_i} - \tau_{\alpha_j^{(i)}, v_i}\right\Vert_2^2 &= \sum_{\ell = 1}^N (W^{(0)}v_i)_\ell^2 \cdot \left( f_{(x,1)^T}\left((W^{(0)}_{\ell, -}, b^{(0)}_\ell)^T\right) - f_{\alpha_j^{(i)}}\left((W^{(0)}_{\ell, -}, b^{(0)}_\ell)^T\right)\right)^2 \nonumber\\
&= \Vert w_i \Vert_2^2 \cdot \left\Vert \fres{f_{(x,1)^T}}{\ker(W^{(0)})^\perp \times \RR} - \fres{f_{\alpha_j^{(i)}}}{\ker(W^{(0)})^\perp \times \RR}\right\Vert^2_{L^2(\mu_i)} \nonumber \\
&\leq \Vert w_i \Vert_2^2 \cdot \left(\frac{\eps}{4\Vert W^{(0)} \Vert_2}\right)^2 
\label{eq:second_bound}\leq \Vert W^{(0)} \Vert_2^2 \cdot \Vert v_i \Vert_2^2 \cdot \left(\frac{\eps}{4\Vert W^{(0)} \Vert_2}\right)^2 \leq \left(\frac{\eps}{4}\right)^2,
\end{align}
and this trivially remains true in the case $w_i = 0$ if we choose $j=1$.

Overall, \eqref{eq:first_bound} and \eqref{eq:second_bound} together imply in any case that
\begin{equation*}
\left\Vert \tau_{(x,1)^T, v} - \tau_{\alpha_j^{(i)},v_i} \right\Vert_2 \leq \frac{\eps}{2}.
\end{equation*}
Hence, the set 
\begin{equation*}
\left\{ \tau_{\alpha_j^{(i)}, v_i}: \ 1 \leq i \leq M, \ 1 \leq j \leq K_i\right\}
\end{equation*}
is an $\frac{\eps}{2}$-net of $\mathcal{L}$ with respect to $\Vert \cdot \Vert_2$. However, this set does not necessarily have to be a subset of $\mathcal{L}$. Yet, using \cite[Exercise 4.2.9]{vershynin_high-dimensional_2018} for the first inequality, we get
\begin{align*}
\mathcal{N}(\mathcal{L}, \Vert \cdot \Vert_2, \eps) &\leq \sum_{i=1}^M K_i \leq \left(\frac{8 \Vert W^{(0)} \Vert_2}{\eps} + 1\right)^k \cdot \left(\frac{8\Vert W^{(0)} \Vert_2}{\eps}\right)^{C'(k+1)} \\
\overset{\eps < \Vert W^{(0)} \Vert_2}&{\leq} \left(\frac{9\Vert W^{(0)} \Vert_2}{\eps}\right)^{C'(k+1) + k} 
\leq \left( \frac{9\Vert W^{(0)} \Vert_2}{\eps}\right)^{(2C'+1)k},
\end{align*}
so the claim follows choosing $C= 2C' + 1$.
\end{proof}

The derived bound for the covering number of $\mathcal{L}$ leads to the following bound when we only assume randomness in $W^{(1)}$. 
\begin{proposition}\label{thm:lower_bound_1}
There exists an absolute constant $C>0$ such that for fixed $W^{(0)} \in \RR^{N \times d}$ and $b^{(0)} \in \RR^N$, writing $k = \rang(W^{(0)})$, the following hold:
\begin{enumerate}
\item{
For any $u \geq 0$, we have
\begin{equation*}
\underset{x \in \RR^d}{\sup} \left\Vert W^{(1)} \cdot D^{(0)}(x) \cdot W^{(0)}\right\Vert_2 \leq C \cdot \Vert W^{(0)} \Vert_2 \cdot (\sqrt{k} + u)
\end{equation*}
with probability at least $1 - 2\exp(-u^2)$ (with respect to the choice of $W^{(1)}$).}
\item{$\displaystyle
\underset{W^{(1)}}{\EE} \left[ \underset{x \in \RR^d}{\sup} \left\Vert W^{(1)} \cdot D^{(0)}(x) \cdot W^{(0)}\right\Vert_2\right] \leq C \cdot \sqrt{k} \cdot \Vert W^{(0)} \Vert_2.$
}
\end{enumerate}
\end{proposition}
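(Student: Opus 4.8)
The plan is to specialize \Cref{prop:key} to the shallow case $L = 1$ and then feed in the covering number estimate of \Cref{lem:cov_num_bound}. For $L = 1$ we have $\Lambda = \Vert W^{(0)} \Vert_2$, and the quantity $\sup_{x \in \RR^d} \Vert W^{(1)} D^{(0)}(x) W^{(0)} \Vert_2$ is precisely the object controlled in \Cref{prop:key}. Hence both claimed bounds follow at once provided we can bound the Dudley integral
\begin{equation*}
  \int_0^{\Vert W^{(0)} \Vert_2} \sqrt{\ln \mathcal{N}(\mathcal{L}, \Vert \cdot \Vert_2, \eps)} \, \dd \eps
\end{equation*}
by $C \sqrt{k} \, \Vert W^{(0)} \Vert_2$ with an absolute constant $C$.

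First I would dispose of the degenerate case $W^{(0)} = 0$ (equivalently $k = 0$): then $D^{(0)}(x) W^{(0)} = 0$ for every $x$, so the supremum is identically $0$ and both statements hold trivially. Assume henceforth $W^{(0)} \neq 0$. By \Cref{lem:cov_num_bound}, for every $\eps \in (0, \Vert W^{(0)} \Vert_2)$ we have $\mathcal{N}(\mathcal{L}, \Vert \cdot \Vert_2, \eps) \leq \big(9 \Vert W^{(0)} \Vert_2 / \eps\big)^{C' k}$, hence $\sqrt{\ln \mathcal{N}(\mathcal{L}, \Vert \cdot \Vert_2, \eps)} \leq \sqrt{C' k} \cdot \sqrt{\ln\!\big(9 \Vert W^{(0)} \Vert_2 / \eps\big)}$. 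Substituting $\eps = \Vert W^{(0)} \Vert_2 \, t$ turns the Dudley integral into $\sqrt{C' k}\,\Vert W^{(0)} \Vert_2 \int_0^1 \sqrt{\ln(9/t)} \, \dd t$, and $\int_0^1 \sqrt{\ln(9/t)}\,\dd t$ is a finite absolute constant, since the integrand has only an integrable logarithmic singularity at $0$. This gives the required bound on the Dudley integral.

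With this in hand, the expectation bound in \Cref{prop:key} immediately yields part (2), while for part (1), \Cref{prop:key} gives, for every $u \geq 0$, with probability at least $1 - 2\exp(-u^2)$,
\begin{equation*}
  \sup_{x \in \RR^d} \Vert W^{(1)} D^{(0)}(x) W^{(0)} \Vert_2 \leq C \big( C'' \sqrt{k}\, \Vert W^{(0)} \Vert_2 + u\, \Vert W^{(0)} \Vert_2 \big) \leq C''' \, \Vert W^{(0)} \Vert_2 \big( \sqrt{k} + u \big),
\end{equation*}
which is the assertion after renaming constants. I do not anticipate a real obstacle: the substantive work (the chaining argument and the covering number bound via the VC dimension) is already packaged in \Cref{prop:key,lem:cov_num_bound}. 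The only points requiring care are keeping track that every constant produced is absolute, handling the $W^{(0)} = 0$ corner case, and the elementary verification that $\int_0^1 \sqrt{\ln(9/t)} \, \dd t < \infty$.
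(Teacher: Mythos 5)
Your proposal is correct and follows essentially the same route as the paper: both specialize \Cref{prop:key} to $L=1$, invoke \Cref{lem:cov_num_bound} to bound the Dudley integral by $C\sqrt{k}\,\Vert W^{(0)}\Vert_2$ via a change of variables reducing it to the absolute constant $\int_0^1\sqrt{\ln(9/t)}\,\dd t$, and then read off both parts. The only cosmetic difference is the choice of substitution and that the paper handles the degenerate case by assuming $k\geq 1$ without loss of generality rather than discussing $W^{(0)}=0$ explicitly.
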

\begin{proof}
Without loss of generality we assume $k \geq 1$. We observe
\begin{align*}
 \int_0^{\Vert W^{(0)}\Vert_2} \sqrt{\ln (\mathcal{N}(\mathcal{L}, \Vert \cdot \Vert_2, \eps))} \ \dd\eps \overset{\text{Lemma}~\ref{lem:cov_num_bound}}&{\leq}  \int_0^{\Vert W^{(0)} \Vert_2}\sqrt{C_1 \cdot k} \cdot \sqrt{\ln \left( \frac{9 \Vert W^{(0)} \Vert_2}{\eps}\right)} \ \dd\eps \\
&= \sqrt{C_1} \cdot \sqrt{k} \cdot 9 \Vert W^{(0)} \Vert_2 \cdot \int_0^{\frac{1}{9}} \sqrt{\ln (1/\sigma)} \ \dd\sigma \\
& \leq C_2 \cdot \sqrt{k} \cdot \Vert W^{(0)} \Vert_2.
\end{align*}
Here, $C_1>0$ is the absolute constant from \Cref{lem:cov_num_bound} and $C_2 \defeq 9\cdot\sqrt{C_1} \cdot \int_0^{1/9} \sqrt{\ln (1/\sigma)} \ \dd \sigma$. At the equality, we applied the substitution $\frac{1}{\sigma} = \frac{9 \Vert W^{(0)} \Vert_2}{\eps}$. We combine this estimate with \Cref{prop:key} and get for any $u \geq 0$ that
\begin{align*}
\underset{x \in \RR^d}{\sup} \left\Vert W^{(1)} \cdot D^{(0)}(x) \cdot W^{(0)}\right\Vert_2 &\leq C_3 \cdot \left(C_2 \cdot \sqrt{k} \cdot \Vert W^{(0)} \Vert_2 + u \cdot \Vert W^{(0)} \Vert_2 \right) \\
&\leq C_3 \cdot \max\{1, C_2\} \cdot \Vert W^{(0)} \Vert_2 \cdot (\sqrt{k} + u)
\end{align*}
with probability at least $1 - 2\exp(-u^2)$, as well as
\begin{equation*}
\underset{W^{(1)}}{\EE} \left[ \underset{x \in \RR^d}{\sup} \Vert W^{(1)}D^{(0)}(x) W^{(0)} \Vert_2\right] \leq C_3 \cdot C_2 \cdot \sqrt{k} \cdot \Vert W^{(0)} \Vert_2,
\end{equation*}
where $C_3 > 0$ is the absolute constant from \Cref{prop:key}. Hence, the claim follows by letting $C \defeq C_3\max\{C_2,1\}$.
\end{proof}
Until now, we have conditioned on $W^{(0)}, b^{(0)}$. Reintroducing the randomness with respect to $W^{(0)}, b^{(0)}$ leads to the following statement.
\begin{proposition} \label{thm:pre_main}
There exist absolute constants $C, c_1 > 0$ such that, writing $k \defeq \min\{d,N\}$, the following hold:
\begin{enumerate}
\item{ \label{item1:pre_main}
For any $u,t \geq 0$ it holds
\begin{equation*}
\underset{x \in \RR^d}{\sup} \Vert W^{(1)} \cdot D^{(0)}(x) \cdot W^{(0)}\Vert_2 \leq C\cdot \left(1 + \frac{\sqrt{d} + t}{\sqrt{N}}\right)(\sqrt{k} + u)
\end{equation*}
with probability at least $(1-2\exp(-u^2))_+ \cdot (1-2\exp(-c_1t^2))_+$ with respect to the choice of $W^{(0)}, W^{(1)}, b^{(0)}, b^{(1)}$.
}
\item{ \label{item2:pre_main}
$ \displaystyle
\EE \left[ \underset{x \in \RR^d}{\sup} \left\Vert W^{(1)} \cdot D^{(0)}(x) \cdot W^{(0)}\right\Vert_2\right] \leq {\color{black} C \cdot \sqrt{d}}.
$
}
\end{enumerate}
\end{proposition}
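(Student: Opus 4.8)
The plan is to derive both statements from \Cref{thm:lower_bound_1}, which already controls the supremum when the first layer $(W^{(0)}, b^{(0)})$ is held fixed, by reinstating the randomness of $W^{(0)}$ and feeding in standard concentration estimates for the operator norm of a Gaussian matrix. Two elementary facts about the random matrix $W^{(0)} \in \RR^{N \times d}$ with i.i.d.\ $\mathcal{N}(0, 2/N)$ entries enter the argument. First, $W^{(0)}$ has full rank, so that $\rang(W^{(0)}) = \min\{d, N\}$ almost surely; this is exactly what turns the $\rang(W^{(0)})$-dependence in \Cref{thm:lower_bound_1} into the $\min\{d,N\}$ appearing in the statement. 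Second, writing $W^{(0)} = \sqrt{2/N}\, G$ with $G \in \RR^{N \times d}$ a standard Gaussian matrix, the classical estimates $\EE \Vert G \Vert_2 \leq \sqrt{N} + \sqrt{d}$ and $\PP(\Vert G \Vert_2 > \sqrt{N} + \sqrt{d} + s) \leq 2\exp(-s^2/2)$ for all $s \geq 0$ (see, e.g., \cite[Section~7.3]{vershynin_high-dimensional_2018}) rescale to
\begin{equation*}
  \EE \Vert W^{(0)} \Vert_2 \leq \sqrt{2} \Bigl( 1 + \tfrac{\sqrt{d}}{\sqrt{N}} \Bigr)
  \qquad \text{and} \qquad
  \PP \Bigl( \Vert W^{(0)} \Vert_2 > \sqrt{2} \bigl( 1 + \tfrac{\sqrt{d} + t}{\sqrt{N}} \bigr) \Bigr) \leq 2\exp(-t^2/2) .
\end{equation*}

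For part~(1), I would condition on $(W^{(0)}, b^{(0)})$ and let $\mathcal{G}$ be the event that $W^{(0)}$ has full rank and $\Vert W^{(0)} \Vert_2 \leq \sqrt{2}(1 + \tfrac{\sqrt{d}+t}{\sqrt{N}})$, so that $\PP(\mathcal{G}) \geq 1 - 2\exp(-t^2/2)$. Since $W^{(1)}$ is independent of $(W^{(0)}, b^{(0)})$ by \Cref{assum:1}, for every outcome in $\mathcal{G}$ we may apply \Cref{thm:lower_bound_1}~(1) to the realized first layer (whose distribution is irrelevant there): conditionally, the event
\begin{equation*}
  \sup_{x \in \RR^d} \bigl\Vert W^{(1)} D^{(0)}(x) W^{(0)} \bigr\Vert_2 \leq C' \cdot \Vert W^{(0)} \Vert_2 \cdot \bigl( \sqrt{\rang(W^{(0)})} + u \bigr)
\end{equation*}
has probability at least $1 - 2\exp(-u^2)$, where $C'$ is the constant from \Cref{thm:lower_bound_1}. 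On $\mathcal{G}$ the right-hand side is at most $C' \sqrt{2} \, (1 + \tfrac{\sqrt{d}+t}{\sqrt{N}})(\sqrt{\min\{d,N\}} + u)$. Integrating the conditional probability over $\mathcal{G}$ via the law of total probability shows, whenever $1 - 2\exp(-u^2) \geq 0$, that the claimed inequality holds with probability at least $(1 - 2\exp(-u^2)) \cdot \PP(\mathcal{G}) \geq (1 - 2\exp(-u^2))(1 - 2\exp(-t^2/2))$; since any probability is nonnegative, the bound with $(\cdot)_+$ on both factors holds in all cases. Taking $C := \sqrt{2}\,C'$ and $c_1 := 1/2$ completes part~(1).

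For part~(2), I would use the tower property together with \Cref{thm:lower_bound_1}~(2): conditioning on $(W^{(0)}, b^{(0)})$ gives
\begin{equation*}
  \EE \Bigl[ \sup_{x \in \RR^d} \bigl\Vert W^{(1)} D^{(0)}(x) W^{(0)} \bigr\Vert_2 \Bigr]
  \leq C' \cdot \EE \bigl[ \sqrt{\rang(W^{(0)})} \cdot \Vert W^{(0)} \Vert_2 \bigr]
  = C' \sqrt{\min\{d,N\}} \cdot \EE \bigl[ \Vert W^{(0)} \Vert_2 \bigr] ,
\end{equation*}
where the last equality uses that $\rang(W^{(0)}) = \min\{d,N\}$ almost surely. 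Inserting $\EE \Vert W^{(0)} \Vert_2 \leq \sqrt{2}(1 + \sqrt{d}/\sqrt{N})$ gives part~(2) with the same absolute constant $C = \sqrt{2}\,C'$. (That the supremum in question is a measurable function of the network parameters, so that these expectations are well defined, is recorded in \Cref{app:measurable}.)

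I do not anticipate a real obstacle: the argument is essentially bookkeeping that splices \Cref{thm:lower_bound_1} together with off-the-shelf Gaussian matrix bounds. The two points that require a little care are the correct multiplication of the two independent tail probabilities --- in combination with the $(\cdot)_+$ truncations, which make the high-probability statement vacuous when $u$ or $t$ is too small for the factors to be positive --- and the almost-sure identity $\rang(W^{(0)}) = \min\{d,N\}$, without which \Cref{thm:lower_bound_1} would only yield a bound in terms of $\rang(W^{(0)})$ rather than $\min\{d,N\}$.
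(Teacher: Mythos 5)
Your proposal is correct and follows essentially the same route as the paper: condition on $(W^{(0)}, b^{(0)})$, apply \Cref{thm:lower_bound_1}, control $\Vert W^{(0)} \Vert_2$ by standard Gaussian operator-norm concentration, and combine the two independent events (the paper packages this last step as \Cref{prop:highprob}). The only cosmetic difference is that you invoke the almost-sure identity $\rang(W^{(0)}) = \min\{d,N\}$, which is true but not needed: since the bound from \Cref{thm:lower_bound_1} is monotone increasing in the rank, the deterministic inequality $\rang(W^{(0)}) \leq \min\{d,N\}$ already suffices, and this is all the paper uses.
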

\begin{proof}
We first note that for any matrix $W^{(0)}$ it holds $\rang(W^{(0)}) \leq k$. 

Let us first deal with Part \eqref{item1:pre_main}. Let $C_2 > 0$ be the (absolute) constant from \Cref{thm:lower_bound_1} and $C_3 \defeq \sqrt{2}C_2$. For fixed $u,t \geq 0$ let 
\begin{equation*}
A \defeq \left\{ (W^{(1)}, W^{(0)}, b^{(0)}): \ \underset{x \in \RR^d}{\sup} \Vert W^{(1)} \cdot D^{(0)}(x) \cdot W^{(0)}\Vert_2 \leq C_3 \left(1 + \frac{\sqrt{d} + t}{\sqrt{N}}\right)(\sqrt{k} + u)\right\}.
\end{equation*}
Furthermore let 
\begin{equation*}
A_1 \defeq \left\{ (W^{(0)}, b^{(0)}) : \ \Vert W^{(0)} \Vert_2 \leq \sqrt{2} \left(1 + \frac{\sqrt{d} + t}{\sqrt{N}}\right)\right\}
\end{equation*}
and for fixed $(W^{(0)}, b^{(0)})$, let
\begin{equation*}
A_2\left(W^{(0)}, b^{(0)}\right) \defeq  \left\{ W^{(1)}: \ \underset{x \in \RR^d}{\sup} \Vert W^{(1)} \cdot D^{(0)}(x) \cdot W^{(0)}\Vert_2 \leq C_2 \cdot \Vert W^{(0)}  \Vert_2 \cdot (\sqrt{k} + u)\right\}.
\end{equation*}
Note that then
\begin{equation*}
(W^{(0)}, b^{(0)}) \in A_1, \ W^{(1)} \in A_2\left(W^{(0)}, b^{(0)}\right) \quad \Longrightarrow \quad  (W^{(1)}, W^{(0)}, b^{(0)}) \in A.
\end{equation*}
From \Cref{thm:lower_bound_1} and since probabilities are always non-negative we infer
\begin{equation*}
\PP^{W^{(1)}} \left(A_2\left(W^{(0)}, b^{(0)}\right)\right) \geq (1 - 2\exp(-u^2))_+
\end{equation*}
for any $(W^{(0)}, b^{(0)})$. Furthermore, it holds 
\begin{equation*}
\Vert W^{(0)} \Vert_2 = \sqrt{\frac{2}{N}} \cdot \left\Vert \sqrt{\frac{N}{2}} \cdot W^{(0)} \right\Vert_2 \leq \sqrt{\frac{2}{N}}\cdot (\sqrt{N} + \sqrt{d} + t) = \sqrt{2}\left(1 + \frac{\sqrt{d} + t}{\sqrt{N}}\right)
\end{equation*}
with probability at least $(1 - 2 \exp(-c_1 t^2))_+$ for some absolute constant $c_1 > 0$, as follows from \cite[Corollary 7.3.3]{vershynin_high-dimensional_2018} by noting that the matrix $\sqrt{\frac{N}{2}} W^{(0)}$ has independent $\mathcal{N}(0,1)$-entries. The claim of Part \eqref{item1:pre_main} then follows from \Cref{prop:highprob}.

Let us now deal with Part \eqref{item2:pre_main}. Using \Cref{thm:lower_bound_1} we derive
\begin{equation*}
\underset{W^{(0)},b^{(0)},W^{(1)}}{\EE} \left[ \underset{x \in \RR^d}{\sup} \left\Vert W^{(1)} \cdot D^{(0)}(x) \cdot W^{(0)}\right\Vert_2 \right]\leq C_2 \cdot \sqrt{k} \cdot \underset{W^{(0)}}{\EE} \ \Vert W^{(0)} \Vert_2.
\end{equation*}
From \cite[Theorem 7.3.1]{vershynin_high-dimensional_2018} we get
\begin{equation*}
\underset{W^{(0)}}{\EE} \ \Vert W^{(0)} \Vert_2 = \sqrt{\frac{2}{N}} \cdot \underset{W^{(0)}}{\EE} \   \left\Vert \sqrt{\frac{N}{2}} \cdot W^{(0)} \right\Vert_2 \leq\sqrt{\frac{2}{N}}(\sqrt{N} + \sqrt{d}) =\sqrt{2} \left(1 + \sqrt{\frac{d}{N}}\right).
\end{equation*}
{\color{black}
Hence, we obtain 
\[
\EE \left[ \underset{x \in \RR^d}{\sup} \left\Vert W^{(1)} \cdot D^{(0)}(x) \cdot W^{(0)}\right\Vert_2\right] \leq C_3  \cdot \left(\sqrt{k} + \sqrt{\frac{kd}{N}}\right).
\]
If $k= N$ (so that $N \leq d$), we get 
\[
\sqrt{k} + \sqrt{\frac{kd}{N}} = \sqrt{N} + \sqrt{d} \overset{N \leq d}{\leq} 2 \cdot \sqrt{d}
\]
and if $k=d$ (so that $d \leq N$), we have 
\[
\sqrt{k} + \sqrt{\frac{kd}{N}} = \sqrt{d} + \frac{d}{\sqrt{N}} \overset{\sqrt{d} \leq \sqrt{N}}{\leq} 2 \cdot \sqrt{d}.
\]
Thus, the final claim follows letting $C \defeq 2C_3$.}
\end{proof}

The transfer of \Cref{thm:pre_main} to obtain an upper bound of the Lipschitz constant of shallow ReLU networks follows directly from \eqref{eq:lowbound}.
\begin{theorem} \label{thm:1_main}
There exist absolute constants $C, c_1 > 0$ such that if $\Phi: \RR^d \to \RR$ is a random shallow ReLU network with width $N$ satisfying \Cref{assum:1} and writing $k \defeq \min\{d,N\}$, the following hold:
\begin{enumerate}
\item{For any $u,t \geq 0$, we have
\begin{equation*}
\lip(\Phi) \leq C \cdot \left(1 + \frac{\sqrt{d} + t}{\sqrt{N}}\right)(\sqrt{k} + u)
\end{equation*}
with probability at least $(1-2\exp(-u^2))_+ \cdot (1-2\exp(-c_1t^2))_+$.}
\item{$ \displaystyle
\EE \left[\lip(\Phi) \right]  \leq C \cdot \sqrt{d}.
$}
\end{enumerate}
\end{theorem}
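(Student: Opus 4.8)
The plan is to obtain this theorem as an immediate consequence of the gradient bound \eqref{eq:lowbound} combined with \Cref{thm:pre_main}, since the substantive analysis has already been completed. Specialized to a shallow network ($L=1$), \eqref{eq:lowbound} reads
\begin{equation*}
  \lip(\Phi)
  \leq \underset{x \in \RR^d}{\sup} \Vert W^{(1)} \cdot D^{(0)}(x) \cdot W^{(0)} \Vert_2 ,
\end{equation*}
so it suffices to transfer the two bounds from \Cref{thm:pre_main} through this inequality, with $k = \min\{d,N\}$ as there.

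For part (1), I would fix $u, t \geq 0$ and apply \Cref{thm:pre_main}~\eqref{item1:pre_main}, which bounds the right-hand side above by $C \cdot (1 + \frac{\sqrt{d} + t}{\sqrt{N}})(\sqrt{k} + u)$ on an event of probability at least $(1 - 2\exp(-u^2))_+ (1 - 2\exp(-c_1 t^2))_+$; intersecting with the deterministic inequality above yields the claim with the same absolute constants $C$, $c_1$. For part (2), I would first take expectations in the displayed inequality --- this is legitimate because $\lip(\Phi)$ is a genuine random variable (the measurability observation in \Cref{subsec:lip}) and the expectation is monotone --- to get $\EE[\lip(\Phi)] \leq \EE[\sup_{x \in \RR^d} \Vert W^{(1)} D^{(0)}(x) W^{(0)} \Vert_2]$, and then invoke \Cref{thm:pre_main}~\eqref{item2:pre_main}, which bounds this expectation by $C (1 + \sqrt{d}/\sqrt{N})\sqrt{k}$.

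Since the theorem merely repackages \Cref{thm:pre_main}, there is essentially no obstacle at this stage; the genuine difficulties were handled earlier --- namely, estimating the covering numbers of $\mathcal{L}$ via the VC-dimension of (restricted) halfspaces (\Cref{lem:cov_num_bound}), running Dudley's inequality conditionally on $W^{(0)}, b^{(0)}$ (\Cref{prop:key,thm:lower_bound_1}), and removing that conditioning using the operator-norm bound for Gaussian matrices (\Cref{thm:pre_main}). The only minor points worth noting in the write-up are that $\rang(W^{(0)}) \leq \min\{d,N\} = k$ holds deterministically, so the $\sqrt{k}$ appearing in \Cref{thm:pre_main} is the correct quantity, and that \eqref{eq:lowbound} applies verbatim to shallow networks; both are immediate.
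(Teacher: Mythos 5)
Your proposal is correct and is exactly the route the paper takes: the paper states that \Cref{thm:1_main} follows directly from \Cref{thm:pre_main} via the gradient bound \eqref{eq:lowbound}, and gives no further argument. Your additional remarks on measurability of $\lip(\Phi)$ and on $\rang(W^{(0)}) \leq k$ are accurate but not needed beyond what the paper already records.
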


By plugging in certain values for $u$ and $t$, we can now prove \Cref{thm:main_1}.
\renewcommand*{\proofname}{Proof of \Cref{thm:main_1}}
\begin{proof}
Let $\widetilde{C}$ and $\widetilde{c_1}$ be the relabeled constants from \Cref{thm:1_main} and $k \defeq \min\{d,N\}$, as well as $\ell \defeq \max\{d,N\}$. To show Part (1), we set $u = \sqrt{k}$ and $t = \sqrt{\ell}$. Then, since the inequality $\sqrt{\ln(2)} < \sqrt{\ln(\ee)} = 1$ holds, we get $u \geq \sqrt{\ln(2)}$ and thus $1-2\exp(-u^2) \geq 0$. \Cref{thm:1_main} shows 
\begin{equation*}
\lip(\Phi) \leq 2\widetilde{C} \cdot \left(1 + \frac{\sqrt{d} +\sqrt{\ell}}{\sqrt{N}}\right) \sqrt{k} 
\end{equation*}
with probability at least $(1-2\exp(-k)) \cdot (1-2\exp(-\widetilde{c_1} \ell))_+$. If $d \leq N$ we get
\begin{equation*}
2\widetilde{C}\cdot \left(1 + \frac{\sqrt{d} +\sqrt{\ell}}{\sqrt{N}}\right) \sqrt{k} \leq 2\widetilde{C}\cdot  \left(1 + \frac{\sqrt{N} + \sqrt{N}}{\sqrt{N}}\right) \cdot \sqrt{d} = 6\widetilde{C} \cdot \sqrt{d}
\end{equation*}
and if $d \geq N$ we infer
\begin{align*}
2\widetilde{C}\cdot \left(1 + \frac{\sqrt{d} +\sqrt{\ell}}{\sqrt{N}}\right) \sqrt{k} = 2\widetilde{C}\cdot \left(1 + \frac{\sqrt{d} +\sqrt{d}}{\sqrt{N}}\right) \sqrt{N}= 2\widetilde{C}\cdot \left(\sqrt{N} + \sqrt{d} + \sqrt{d}\right) \leq 6\widetilde{C} \cdot \sqrt{d}.
\end{align*}
{\color{black}Part (2) follows directly from \Cref{thm:1_main}(2).} 
\end{proof}
\renewcommand*{\proofname}{Proof}

\subsection{The deep case}
\newcommand{\D}{\mathscr{D}}
\label{sec:deep}
In the following, we treat the case of deep networks, meaning $L > 1$. The proofs of this section also apply in the case of shallow networks, 
but are only relevant in the case of deep networks, 
since in the case of shallow networks better bounds have been derived in the preceding subsection. Again, we first assume that the matrices $W^{(0)}, ..., W^{(L-1)}$ and the biases $b^{(0)}, ..., b^{(L-1)}$ are fixed and the matrix $W^{(L)}$ is initialized randomly according to \Cref{assum:1}. In this setting, we denote
\begin{equation*}
\Lambda \defeq \Vert W^{(L-1)} \Vert_2 \cdots \Vert W^{(0)} \Vert_2.
\end{equation*}

As the first step, we state a bound on the cardinality of the set of all possible combinations $(D^{(L-1)}(x), ..., D^{(0)}(x))$ 
of the $D$-matrices occurring in the formula for $\nabla \Phi(x)$ from \Cref{prop:grad_relu}.
This cardinality is known as \textit{the number of activation patterns} in the literature \cite{hanin2019deep}
and is related but not necessarily identical to the number of linear regions,
which has been studied in, e.g., \cite{zaslavsky1975facing,montufar2014number}.
Naively, one can bound the number of activation patterns by $2^{LN}$.
A much sharper bound that, however, only holds on bounded input sets has been derived in \cite{hanin2019deep}.
Moreover, a bound sharper than $2^{LN}$ that holds on the entire input space
has been derived in \cite[Proposition~3]{montufar2017notes}.
The following inequality is similar to the one presented in \cite{montufar2017notes}.
In order to keep the paper self-contained we decided to include the short proof below.
\begin{lemma}\label{lem:D_card}
Assume $d+2 <N$. For fixed $W^{(0)},..., W^{(L-1)}$ and $b^{(0)}, ..., b^{(L-1)}$ we define
\begin{equation*}
\mathscr{D} \defeq \left\{ \left(D^{(L-1)}(x),..., D^{(0)}(x)\right): \ x \in \RR^d\right\},
\end{equation*}
with $D^{(0)}(x),..., D^{(L-1)}(x)$ as defined in \Cref{subsec:gradient}.
Then it holds that
\begin{equation*}
\vert \mathscr{D} \vert \leq \left(\frac{\ee N}{d+1}\right)^{L(d+1)}.
\end{equation*}
\end{lemma}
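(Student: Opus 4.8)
The key idea is that each $D^{(\ell)}(x)$ is determined by the sign pattern of the pre-activations $W^{(\ell)} x^{(\ell)} + b^{(\ell)} \in \RR^N$, and that the map $x \mapsto (\text{pre-activations at layer } 0, \dots, \text{layer } L-1)$ factors through a sequence of piecewise-linear maps each taking values in a space of dimension at most $d+1$. Concretely, I would proceed layer by layer. Fix a region of $\RR^d$ on which all the matrices $D^{(0)}(x), \dots, D^{(\ell-1)}(x)$ are constant; on such a region the map $x \mapsto x^{(\ell)}$ is affine, hence $x \mapsto W^{(\ell)} x^{(\ell)} + b^{(\ell)}$ is affine as well, and therefore the sign pattern of this $\RR^N$-valued affine map of a $d$-dimensional variable can take at most a bounded number of values. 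The relevant combinatorial fact is that the number of sign patterns (equivalently, the number of cells in an arrangement of $N$ affine hyperplanes in $\RR^d$) is at most $\sum_{i=0}^{d} \binom{N}{i} \le \left(\frac{\ee N}{d}\right)^{d}$ — but since we track an extra affine coordinate (the constant $1$, as in the $(x,1)^T$ device used in \Cref{lem:scalarproduct_alternative_form} and \Cref{prop:vc_half_spaces_2}), the effective dimension is $d+1$ and the count is $\left(\frac{\ee N}{d+1}\right)^{d+1}$.

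The cleanest way to make this rigorous is via the VC-dimension machinery already set up: the class of functions $x \mapsto \mathbbm{1}_{\langle \alpha, (x,1)\rangle > 0}$ for $\alpha \in \RR^{d+1}$ has VC-dimension $d+1$ by \Cref{prop:vc_half_spaces_2}, and by the Sauer--Shelah lemma a class of VC-dimension $k$ takes at most $\sum_{i=0}^{k}\binom{N}{i} \le \left(\frac{\ee N}{k}\right)^{k}$ distinct values on any $N$ points. I would apply this inductively: suppose after fixing $(D^{(L-1)}(x), \dots, D^{(\ell)}(x))$ to a given value, the set of $x$ realizing it has been partitioned into pieces on each of which $x \mapsto x^{(\ell)}$ is affine; on one such piece the $N$ coordinates of $x \mapsto W^{(\ell-1)} x^{(\ell-1)} + b^{(\ell-1)}$ — wait, re-indexing: on a piece where $x\mapsto x^{(\ell)}$ is affine, $x\mapsto W^{(\ell)}x^{(\ell)}+b^{(\ell)}$ is an affine map $\RR^d\to\RR^N$, so it is of the form $x\mapsto(\langle\alpha_1,(x,1)\rangle,\dots,\langle\alpha_N,(x,1)\rangle)$ with $\alpha_i\in\RR^{d+1}$; hence $D^{(\ell)}(x)$ takes at most $\left(\frac{\ee N}{d+1}\right)^{d+1}$ distinct values on that piece, and each choice refines the piece into a subpiece on which $x\mapsto x^{(\ell+1)}=\relu(W^{(\ell)}x^{(\ell)}+b^{(\ell)})$ is again affine (composition of an affine map with coordinatewise multiplication by a fixed $0/1$ diagonal). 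Iterating over $\ell = 0, 1, \dots, L-1$ multiplies the bounds, giving $|\mathscr{D}| \le \left(\left(\frac{\ee N}{d+1}\right)^{d+1}\right)^{L} = \left(\frac{\ee N}{d+1}\right)^{L(d+1)}$.

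I expect the main obstacle to be the bookkeeping of the induction: one must carefully phrase the inductive hypothesis so that it simultaneously controls (i) the number of distinct tuples $(D^{(\ell-1)}(x), \dots, D^{(0)}(x))$ produced so far and (ii) the fact that within each such tuple's preimage the next layer map remains affine, so that the VC/Sauer--Shelah bound applies cleanly at the next step. A minor subtlety is that $D^{(\ell)}(x)$ uses the strict inequality $x_i > 0$, which matches the strict halfspace $f_\alpha$ in \Cref{prop:vc_half_spaces_2} exactly, so no boundary issues arise. The hypothesis $d+2 < N$ is presumably used only to guarantee $\frac{\ee N}{d+1} > 1$ (indeed $> \ee$), so that the per-layer bound $\left(\frac{\ee N}{d+1}\right)^{d+1}$ genuinely dominates the trivial count and, more importantly, so that the Sauer--Shelah estimate $\sum_{i=0}^{d+1}\binom{N}{i}\le\left(\frac{\ee N}{d+1}\right)^{d+1}$ is valid (this standard inequality requires $N \ge d+1$). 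I would state that estimate as a small lemma or cite it, then carry out the induction.
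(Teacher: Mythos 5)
Your proposal follows essentially the same route as the paper's proof: an induction over layers in which one fixes the tuple $(D^{(\ell)}(x),\dots,D^{(0)}(x))$, observes that $x\mapsto x^{(\ell+1)}$ is affine on the corresponding piece, rewrites the next layer's activation pattern via the $(x,1)^T$ halfspace device of \Cref{prop:vc_half_spaces_2}, and applies Sauer's lemma to get the per-layer factor $\bigl(\frac{\ee N}{d+1}\bigr)^{d+1}$, which is then multiplied across the $L$ layers. Your reading of the hypothesis $d+2<N$ is also correct — it is exactly what the cited form of Sauer's lemma requires (the number of points must exceed the VC-dimension plus one).
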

\begin{proof}
For any $0\leq \ell \leq L-1$ we define
\begin{equation*}
\D^{(\ell)} \defeq \left\{ \left(D^{(\ell)}(x), ..., D^{(0)}(x)\right): \ x \in \RR^d\right\}
\end{equation*}
and claim that it holds
\begin{equation}\label{eq:ind_d}
\vert \D^{(\ell)} \vert \leq \left(\frac{\ee N}{d+1}\right)^{(\ell + 1)(d+1)},
\end{equation}
which we will show by induction over $\ell$. 

To this end, we start with the case $\ell = 0$. Using the notation introduced in \cref{prop:vc_half_spaces_2}, we see for any $i \in \{1,...,N\}$ and $x \in \RR^d$ that 
\begin{equation*}
\left(D^{(0)}(x)\right)_{i,i} = f_{(x,1)^T}\left((W^{(0)}_{i,-}, b^{(0)}_{i})^T\right).
\end{equation*}
By definition of $D^{(0)}(x)$, this yields
\begin{align*}
\vert\D^{(0)} \vert &= \left\vert \left\{ \left(f_{(x,1)^T}\left((W^{(0)}_{1,-}, b^{(0)}_{1})^T\right),..., f_{(x,1)^T}\left((W^{(0)}_{N,-}, b^{(0)}_{N})^T\right)\right): \ x \in \RR^d\right\}\right\vert \\
&\leq \left\vert \left\{ \left(f_{\alpha}\left((W^{(0)}_{1,-}, b^{(0)}_{1})^T\right),..., f_{\alpha}\left((W^{(0)}_{N,-}, b^{(0)}_{N})^T\right)\right): \ \alpha \in \RR^{d+1}\right\}\right\vert \leq \left(\frac{\ee N}{d+1}\right)^{d+1}.
\end{align*}
To obtain the last inequality we employed Sauer's lemma (cf. \cite[Lemma 6.10]{shalev2014understanding}) and the estimate for the VC-dimension of halfspaces (cf. \Cref{prop:vc_half_spaces_2}) and the assumption $d+2 < N$.

We now assume that the claim holds for some fixed $0 \leq \ell \leq L-2$. We then see 
\begin{align*}
\D^{(\ell + 1)} &= \left\{ \left(D^{(\ell+1)}(x), ..., D^{(0)}(x)\right): \ x \in \RR^d\right\} \\
&= \! \!\biguplus_{(C^{(\ell)}, ..., C^{(0)}) \in \D^{(\ell)}} \! \left\{ \left(D^{(\ell + 1)}(x), C^{(\ell)}, ..., C^{(0)}\right): \! \ x \in \RR^d \!\text{ with } \! D^{(j)}(x)=C^{(j)} \ \!\text{for } \! \text{all } \! j\in \{0,...,\ell\}\right\}
\end{align*}
and hence
\begin{equation} \label{eq:D_bound}
\abs{\D^{(\ell + 1)}} = \sum_{(C^{(\ell)}, ..., C^{(0)}) \in \D^{(\ell)}} \abs{\left\{ D^{(\ell + 1)}(x): \ x \in \RR^d \text{ with } D^{(j)}(x)=C^{(j)} \ \text{for all } j\in \{0,...,\ell\}\right\}}.
\end{equation}
We thus fix $(C^{(\ell)}, ..., C^{(0)}) \in \D^{(\ell)}$ and seek to bound $\abs{\mathcal{A}}$ where
\begin{equation*}
\mathcal{A} \defeq \left\{ D^{(\ell + 1)}(x): \ x \in \RR^d, \  D^{(j)}(x) = C^{(j)} \ \text{for all } j\in \{0,...,\ell\}\right\}.
\end{equation*}
With $x^{(\ell)}$ as in \Cref{subsec:gradient} (for $1 \leq \ell \leq L$), it is immediate that
\begin{align*}
&\norel\abs{\mathcal{A}} \\
&= \abs{\left\{\left(\mathbbm{1}_{W^{(\ell + 1)}_{1,-}x^{(\ell+1)} + b^{(\ell + 1)}_1 > 0}, ..., \mathbbm{1}_{W^{(\ell + 1)}_{N,-}x^{(\ell+1)} + b^{(\ell + 1)}_N > 0}\right):  x \in \RR^d \!\text{ with }  \! \forall \  0\leq j \leq \ell\!:\!D^{(j)}(x)=C^{(j)}\right\}}.
\end{align*}
Fix $x \in \RR^d$ with $D^{(j)}(x)=C^{(j)}$ for every $j \in \{0,..., \ell\}$. From the definition of $x^{(\ell+1)}$ (see \Cref{subsec:gradient}) we may write
\begin{equation*}
x^{(\ell+1)} = C^{(\ell)}W^{(\ell)} \cdots C^{(0)}W^{(0)}x + \bar{c},
\end{equation*} 
where $\bar{c} = \bar{c}(C^{(0)}, ..., C^{(\ell)}, W^{(0)}, ..., W^{(\ell)}, b^{(0)}, ..., b^{(\ell)})\in \RR^N$ is a fixed vector. We thus get for any $i \in \{1,...,N\}$ that
\begin{equation*}
W^{(\ell + 1)}_{i,-}x^{(\ell+1)} + b^{(\ell+1)}_i = W^{(\ell + 1)}_{i,-}C^{(\ell)}W^{(\ell)} \cdots C^{(0)}W^{(0)}x + c_i,
\end{equation*}
where $c = c(C^{(0)}, ..., C^{(\ell)}, W^{(0)}, ..., W^{(\ell+1)}, b^{(0)}, ..., b^{(\ell+1)}) \in \RR^N$ is fixed. Writing 
\begin{equation*}
V \defeq W^{(\ell + 1)}C^{(\ell)}W^{(\ell)} \cdots C^{(0)}W^{(0)} \in \RR^{N \times d} 
\end{equation*}
we infer
\begin{align*}
\abs{\mathcal{A}} &= \left\vert \left\{ \left(f_{(x,1)^T}\left((V_{1,-}, c_{1})^T\right),..., f_{(x,1)^T}\left((V_{N,-}, c_{N})^T\right)\right): \ x \in \RR^d\right\}\right\vert \\
&\leq \left\vert \left\{ \left(f_{\alpha}\left((V_{1,-}, c_{1})^T\right),..., f_{\alpha}\left((V_{N,-}, c_{N})^T\right)\right): \ \alpha \in \RR^{d+1}\right\}\right\vert \leq \left(\frac{\ee N}{d+1}\right)^{d+1}
\end{align*}
where we again used \cite[Lemma 6.10]{shalev2014understanding} and \Cref{prop:vc_half_spaces_2} for the last inequality, again noting that we assumed $d+2 < N$. 

Combining this result with \eqref{eq:D_bound} and the induction hypothesis, we see
\begin{equation*}
\abs{\D^{(\ell+1)}} \leq \abs{\D^{(\ell)}} \cdot \left(\frac{\ee N}{d+1}\right)^{d+1} \leq \left(\frac{\ee N}{d+1}\right)^{(\ell + 2)(d+1)}.
\end{equation*}
By induction this shows that \eqref{eq:ind_d} holds for all $\ell \in \{0,...,L-1\}$. The statement of the lemma then follows by noting that $\D = \D^{(L-1)}$.
\end{proof}

Having established a bound for the cardinality of the set $\mathscr{D}$, we can now bound the covering numbers of $\mathcal{L}$ in the case of deep networks. 
\begin{lemma}\label{lem:cov_bound}
Let $N > d+2$ and $W^{(0)},..., W^{(L-1)}$ and $b^{(0)}, ..., b^{(L-1)}$ be fixed. Moreover, let
\begin{equation*}
\Lambda \defeq \Vert W^{(L-1)} \Vert_2 \cdots \Vert W^{(0)} \Vert_2.
\end{equation*}
Then for every arbitrary $\varepsilon \in (0, \Lambda)$ it holds that
\begin{equation*}
\mathcal{N}(\mathcal{L}, \Vert \cdot \Vert_2, \varepsilon) \leq \left(\frac{3 \Lambda}{\eps} \right)^d \cdot \left(\frac{\ee N}{d+1}\right)^{L(d+1)}.
\end{equation*}
Here, $\mathcal{L}$ is as defined in \Cref{prop:key}. 
\end{lemma}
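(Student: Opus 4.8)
The plan is to exploit that, for fixed $x$, the element $Y_{z,x}$ from \Cref{prop:key} is \emph{linear} in $z$, so that $\mathcal{L}$ is a union of boundedly many linear images of the $d$-dimensional unit ball, each of which is easy to cover. Concretely, write $Y_{z,x} = M(x)\,z$ with $M(x) \defeq D^{(L-1)}(x) W^{(L-1)} \cdots D^{(0)}(x) W^{(0)} \in \RR^{N\times d}$. Since $W^{(0)},\dots,W^{(L-1)}$ are fixed, the matrix $M(x)$ depends on $x$ only through the tuple $(D^{(L-1)}(x),\dots,D^{(0)}(x)) \in \mathscr{D}$; hence $\{M(x) : x \in \RR^d\}$ has at most $|\mathscr{D}| \le (\ee N/(d+1))^{L(d+1)}$ elements by \Cref{lem:D_card} (this is where the hypothesis $N > d+2$ enters, via that lemma). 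Enumerating these matrices as $M_1,\dots,M_T$ with $T \le (\ee N/(d+1))^{L(d+1)}$, we obtain $\mathcal{L} = \bigcup_{t=1}^{T} M_t\bigl(\overline{B}_d(0,1)\bigr)$.

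Next I would bound $\mathcal{N}\bigl(M_t(\overline{B}_d(0,1)), \Vert \cdot \Vert_2, \eps\bigr)$ for a single $t$. Because each $D^{(\ell)}(x)$ is a diagonal $0/1$ matrix and therefore has operator norm at most $1$, we get $\Vert M_t \Vert_2 \le \Vert W^{(L-1)} \Vert_2 \cdots \Vert W^{(0)} \Vert_2 = \Lambda$. If $M_t = 0$ the image is $\{0\}$ and a single ball suffices; otherwise, apply \Cref{prop:covering_ball} to the linear subspace $\ker(M_t)^{\perp}$, whose dimension is $\rang(M_t) \le d$, with radius $\delta \defeq \eps/\Vert M_t \Vert_2$, to get a $\delta$-net $z_1,\dots,z_m$ of $\overline{B}_d(0,1) \cap \ker(M_t)^{\perp}$ with $m \le (2/\delta + 1)^{d}$. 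Since $M_t$ annihilates $\ker(M_t)$ and is $\Vert M_t \Vert_2$-Lipschitz, the set $\{M_t z_1,\dots,M_t z_m\} \subseteq M_t(\overline{B}_d(0,1))$ is an $\eps$-net of $M_t(\overline{B}_d(0,1))$, of cardinality at most $(2\Vert M_t \Vert_2/\eps + 1)^{d} \le (2\Lambda/\eps + 1)^{d} \le (3\Lambda/\eps)^{d}$, where the last step uses $\eps < \Lambda$ (so $\Lambda/\eps > 1$) together with the monotonicity of $a \mapsto a^{d}$ on $[1,\infty)$ applied to $a = 2\Lambda/\eps + 1$.

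Finally, the union of these $T$ internal $\eps$-nets is an $\eps$-net of $\mathcal{L}$ contained in $\mathcal{L}$, whence
\[
\mathcal{N}(\mathcal{L}, \Vert \cdot \Vert_2, \eps) \;\le\; \sum_{t=1}^{T} \mathcal{N}\bigl(M_t(\overline{B}_d(0,1)), \Vert \cdot \Vert_2, \eps\bigr) \;\le\; T\cdot\bigl(\tfrac{3\Lambda}{\eps}\bigr)^{d} \;\le\; \bigl(\tfrac{\ee N}{d+1}\bigr)^{L(d+1)} \bigl(\tfrac{3\Lambda}{\eps}\bigr)^{d},
\]
which is the asserted bound.

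I do not expect a genuine obstacle here: all the substantive work is already done in \Cref{lem:D_card}. The only points demanding a little care are (i) building each per-piece net \emph{inside} the piece — handled by pushing a net of the ball through $M_t$, rather than covering the ambient ball $\overline{B}_N(0,\Lambda)\cap\ker(M_t)^{\perp}$ with possibly external centers; (ii) keeping the exponent uniformly equal to $d$ via the monotonicity remark above even when $\rang(M_t) < d$; and (iii) the elementary observation that $M(x)$ depends on $x$ only through the tuple of $D$-matrices, which is what lets \Cref{lem:D_card} be brought to bear.
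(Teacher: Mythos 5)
Your proof is correct and follows essentially the same route as the paper's: decompose $\mathcal{L}$ as a union of linear images of $\overline{B}_d(0,1)$ indexed by $\mathscr{D}$, bound $|\mathscr{D}|$ via \Cref{lem:D_card}, and cover each piece by pushing a net of the ball through the corresponding matrix. The only cosmetic difference is that you restrict the ball net to $\ker(M_t)^{\perp}$ before pushing it forward, whereas the paper simply covers all of $\overline{B}_d(0,1)$ with radius $\eps/\Lambda$; both yield the same bound.
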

\begin{proof}
We can assume that $\Lambda > 0$ since otherwise $(0, \Lambda) = \emptyset$. Using the notation $\mathscr{D}$ as introduced previously in \cref{lem:D_card}, we see immediately that
\begin{equation*}
\mathcal{L} = \bigcup_{(C^{(L-1)}, ..., C^{(0)}) \in \D} \left[ C^{(L-1)}W^{(L-1)} \cdots C^{(0)}W^{(0)} \B_d(0,1) \right]
\end{equation*}
and hence it holds for every $\eps > 0$ that
\begin{equation}\label{eq:cov_num}
\mathcal{N}(\mathcal{L}, \Vert \cdot \Vert_2, \eps) \leq \sum_{(C^{(L-1)}, ..., C^{(0)}) \in \D} \mathcal{N}\left(C^{(L-1)}W^{(L-1)} \cdots C^{(0)}W^{(0)} \B_d(0,1), \Vert \cdot \Vert_2, \eps \right).
\end{equation}
This can be seen from the fact that the union of $\eps$-nets of $C^{(L-1)}W^{(L-1)} \cdots C^{(0)}W^{(0)} \B_d(0,1)$, where $(C^{(L-1)}, ..., C^{(0)})$ runs through all elements of $\D$, is an $\eps$-net of $\mathcal{L}$.

Fix $(C^{(L-1)}, ..., C^{(0)}) \in \D$ and define $V \defeq C^{(L-1)}W^{(L-1)} \cdots C^{(0)}W^{(0)}$. 
From \Cref{prop:covering_ball} we infer that there are $w_1, ..., w_M \in \B_d(0,1)$ such that
\begin{equation*}
\B_d(0,1) \subseteq \bigcup_{i=1}^M \overline{B}_d\left(w_i,\frac{\eps}{\Lambda}\right)
\end{equation*}
with $M \leq \left(\frac{2 \Lambda}{\eps} + 1\right)^d$. 
Let $v_i \defeq Vw_i$ for every $i \in \{1,...,M\}$. 
Hence, it holds $v_i \in V\B_d(0,1)$ for every $i \in \{1,...,M\}$. Let $u \in V \B_d(0,1)$ be arbitrary and choose $u' \in \B_d(0,1)$ satisfying $ u = Vu'$. 
By choice of $w_1, ..., w_M$ there exists $i \in \{1,...,M\}$ with 
\begin{equation*}
\Vert u' - w_i \Vert_2 \leq \frac{\eps}{\Lambda}.
\end{equation*}
But then it holds
\begin{align*}
  \Vert u - v_i \Vert_2 =\Vert u - Vw_i \Vert_2 
&\leq \Vert V  \Vert_2 \cdot \Vert u' - w_i \Vert_2 \\
&\leq \underbrace{\Vert C^{(L-1)} \Vert_2}_{\leq 1} \cdot \Vert W^{(L-1)} \Vert_2 \cdots \underbrace{\Vert C^{(0)} \Vert_2}_{\leq 1} \cdot \Vert W^{(0)} \Vert_2 \cdot \frac{\eps}{\Vert W^{(L-1)} \Vert_2 \cdots \Vert W^{(0)} \Vert_2} \\
&\leq \eps.
\end{align*}
Hence, we conclude
\begin{equation*}
\mathcal{N}\left(V\B_d(0,1), \Vert \cdot \Vert_2, \eps \right) \leq \left(\frac{2 \Lambda}{\eps} + 1\right)^d \leq \left(\frac{3 \Lambda}{\eps}\right)^d,
\end{equation*}
since $\eps \leq \Lambda$. Combining this estimate with Equation \eqref{eq:cov_num} and \cref{lem:D_card} yields the claim.
\end{proof}

The following proposition establishes the final bound when the expectation is calculated only with respect to $W^{(L)}$, i.e., conditioning on $W^{(0)}, ..., W^{(L-1)}, b^{(0)}, ..., b^{(L-1)}$. 
\begin{proposition}\label{thm:deep_final}
Let the matrices $W^{(L-1)}, ..., W^{(0)}$ and the biases $b^{(L-1)}, ..., b^{(0)}$ be fixed and define $\Lambda \defeq \Vert W^{(L-1)} \Vert_2 \cdots \Vert W^{(0)} \Vert_2$. Moreover, assume that $d+2 <  N$. Then the following hold, with an absolute constant $C>0$:
\begin{enumerate}
\item{ \label{item:deep_highprob} For any $u \geq 0$, we have
\begin{equation*}
\underset{x \in \RR^d}{\sup} \Vert W^{(L)}D^{(L-1)}(x)W^{(L-1)}\cdots D^{(0)}(x) W^{(0)} \Vert_2 \leq C \cdot \Lambda \cdot \sqrt{L} \cdot \sqrt{\ln \left(\frac{\ee N}{d+1}\right)} \cdot (\sqrt{d} + u)
\end{equation*}
with probability at least $(1 - 2\exp(-u^2))$ (with respect to the choice of $W^{(L)}$).
}
\item{
$\displaystyle
\underset{W^{(L)}}{\EE} \left[\underset{x \in \RR^d}{\sup} \Vert W^{(L)}D^{(L-1)}(x)W^{(L-1)}\cdots D^{(0)}(x) W^{(0)} \Vert_2 \right] \leq C \cdot \Lambda \cdot \sqrt{L} \cdot \sqrt{\ln \left(\frac{\ee N}{d+1}\right)} \cdot \sqrt{d}.
$
}
\end{enumerate}
\end{proposition}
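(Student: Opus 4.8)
The plan is to combine the abstract Dudley-type bound of \Cref{prop:key} with the covering-number estimate of \Cref{lem:cov_bound} and then to estimate the resulting entropy integral explicitly. Since the matrices $W^{(0)},\dots,W^{(L-1)}$ and biases $b^{(0)},\dots,b^{(L-1)}$ are fixed, \Cref{prop:key} already reduces both the high-probability and the expectation statement to controlling
\begin{equation*}
  I \defeq \int_0^{\Lambda} \sqrt{\ln\left(\mathcal{N}(\mathcal{L}, \Vert \cdot \Vert_2, \eps)\right)} \ \dd\eps ,
\end{equation*}
so the only real work is to bound $I$ by $C \cdot \Lambda \cdot \sqrt{L} \cdot \sqrt{\ln(\ee N/(d+1))} \cdot \sqrt{d}$ for an absolute constant $C$, after which everything else follows by plugging back into \Cref{prop:key}.

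First I would insert the bound $\mathcal{N}(\mathcal{L}, \Vert \cdot \Vert_2, \eps) \leq (3\Lambda/\eps)^d \cdot (\ee N/(d+1))^{L(d+1)}$ from \Cref{lem:cov_bound} (valid for $\eps \in (0,\Lambda)$ under the assumption $N > d+2$), take logarithms, and use the elementary inequality $\sqrt{a+b} \leq \sqrt{a} + \sqrt{b}$ to split
\begin{equation*}
  I \leq \sqrt{d} \int_0^{\Lambda} \sqrt{\ln(3\Lambda/\eps)} \ \dd\eps + \Lambda \cdot \sqrt{L(d+1)\ln(\ee N/(d+1))} .
\end{equation*}
The first integral is handled by the substitution $\sigma = \eps/(3\Lambda)$, which turns it into $3\Lambda \int_0^{1/3} \sqrt{\ln(1/\sigma)} \ \dd\sigma$; the remaining integral $\int_0^{1/3}\sqrt{\ln(1/\sigma)}\,\dd\sigma$ is a finite absolute constant. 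For the second term I would use $d+1 \leq 2d$ (recall $d \geq 1$) to write $\sqrt{L(d+1)} \leq \sqrt{2}\,\sqrt{L}\,\sqrt{d}$. Combining the two bounds and using that $\sqrt{L} \geq 1$ together with $\ln(\ee N/(d+1)) \geq 1$ (which holds since $N \geq d+1$), I obtain $I \leq C \cdot \Lambda \cdot \sqrt{L}\cdot \sqrt{\ln(\ee N/(d+1))} \cdot \sqrt{d}$.

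Finally, I would feed this estimate back into \Cref{prop:key}. For the expectation statement this is immediate. For the high-probability statement, \Cref{prop:key} gives, with probability at least $1 - 2\exp(-u^2)$,
\begin{equation*}
  \underset{x \in \RR^d}{\sup} \Vert W^{(L)}D^{(L-1)}(x)W^{(L-1)}\cdots D^{(0)}(x) W^{(0)} \Vert_2 \leq C' \cdot \left( \Lambda \sqrt{L}\, \sqrt{\ln(\ee N/(d+1))}\, \sqrt{d} + u\Lambda \right),
\end{equation*}
and I would absorb the additive term $u\Lambda$ into the main term by once more invoking $\sqrt{L}\,\sqrt{\ln(\ee N/(d+1))} \geq 1$, which yields the claimed bound $C \cdot \Lambda \cdot \sqrt{L} \cdot \sqrt{\ln(\ee N/(d+1))}\cdot(\sqrt{d} + u)$. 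There is no genuine obstacle here beyond bookkeeping of absolute constants: the only points requiring mild care are verifying that all the elementary inequalities ($\sqrt{a+b}\le\sqrt{a}+\sqrt{b}$, $d+1\le 2d$, $\ee N/(d+1)\ge \ee$) are in force and that $\int_0^{1/3}\sqrt{\ln(1/\sigma)}\,\dd\sigma$ is finite, which it is.
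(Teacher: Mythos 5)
Your proposal is correct and follows essentially the same route as the paper's proof: insert the covering-number bound of \Cref{lem:cov_bound} into the entropy integral from \Cref{prop:key}, split the logarithm via $\sqrt{a+b}\le\sqrt{a}+\sqrt{b}$, evaluate the first integral by the substitution $\sigma=\eps/(3\Lambda)$, and absorb the remaining terms using $d+1\le 2d$, $L\ge 1$, and $\ln(\ee N/(d+1))\ge 1$. The bookkeeping of constants and the absorption of the $u\Lambda$ term match the paper's argument.
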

\begin{proof}
Using \Cref{lem:cov_bound} and the elementary inequality $\sqrt{x+y}  \leq \sqrt{x} + \sqrt{y}$ for $x,y \geq 0$ we infer
\begin{align*}
\sqrt{\ln \left(\mathcal{N}(\mathcal{L}, \Vert \cdot \Vert_2, \eps)\right)} &\leq \sqrt{\ln \left(\left(\frac{3 \Lambda}{\eps} \right)^d \cdot \left(\frac{\ee N}{d+1}\right)^{L(d+1)}\right)} \\
&\leq \sqrt{d} \cdot \sqrt{\ln \left(\frac{3 \Lambda}{\eps}\right)} + \sqrt{L}\cdot \sqrt{d+1} \cdot \sqrt{\ln \left(\frac{\ee N}{d+1}\right)}
\end{align*}
for any $\eps \in (0, \Lambda)$ where $\mathcal{L}$ is as in \Cref{prop:key}. This yields
\begin{equation} \label{eq:proof1}
\int_0^\Lambda \sqrt{\ln \left(\mathcal{N}(\mathcal{L}, \Vert \cdot \Vert_2, \eps)\right)} \ \dd \eps \leq \sqrt{d} \cdot \int_0^\Lambda \sqrt{\ln \left(\frac{3 \Lambda}{\eps}\right)} \ \dd \eps + \Lambda \cdot \sqrt{L}\cdot \sqrt{d+1} \cdot \sqrt{\ln \left(\frac{\ee N}{d+1}\right)}.
\end{equation}
Using the substitution $\sigma = \frac{\eps}{3\Lambda}$ we get
\begin{align} \label{eq:proof2}
\int_0^\Lambda \sqrt{\ln \left(\frac{3 \Lambda}{\eps}\right)} \ \dd \eps = 3\Lambda \cdot \int_0^{\frac{1}{3}} \sqrt{\ln \left( 1/\sigma\right)}\ \dd \sigma \leq C_1 \cdot \Lambda
\end{align}
with $C_1 \defeq 3 \cdot \int_0^{1/3} \sqrt{\ln \left( 1/\sigma\right)} \dd \sigma$. Overall, we thus see
\begin{align}
\int_0^\Lambda \sqrt{\ln \left(\mathcal{N}(\mathcal{L}, \Vert \cdot \Vert_2, \eps)\right)} \ \dd \eps \overset{\eqref{eq:proof1}, \eqref{eq:proof2}}&{\leq} \Lambda \cdot \left( C_1  \cdot \sqrt{d} + \sqrt{L}\cdot \sqrt{d+1} \cdot \sqrt{\ln \left(\frac{\ee N}{d+1}\right)}\right) \nonumber\\
\overset{d < d+1 \leq 2d}&{\leq} \sqrt{2} \cdot \max\{1, C_1\}  \cdot \Lambda \cdot \left( \sqrt{d} + \sqrt{L} \cdot \sqrt{d} \cdot \sqrt{\ln \left(\frac{\ee N}{d+1}\right)}\right) \nonumber\\
\label{alig:int}
\overset{L \geq 1, N \geq d+1}&{\leq} \underbrace{2\sqrt{2} \cdot \max\{1, C_1\}}_{=: C_2}  \cdot \Lambda \cdot \sqrt{L} \cdot \sqrt{\ln \left(\frac{\ee N}{d+1}\right)} \cdot \sqrt{d}. 
\end{align}
We can now prove \eqref{item:deep_highprob}. From \Cref{prop:key} we obtain an absolute constant $C_3>0$ such that for any $u \geq 0$ the estimate
\begin{align*}
\underset{x \in \RR^d}{\sup} \Vert W^{(L)} \cdot D^{(L-1)}(x)W^{(L-1)} \cdots D^{(0)}(x)W^{(0)}\Vert_2 &\leq C_3 \left( C_2  \cdot \Lambda \cdot \sqrt{L} \cdot \sqrt{\ln \left(\frac{\ee N}{d+1}\right)} \cdot \sqrt{d} + u \cdot \Lambda \right) \\
&\leq C_3C_2 \cdot \Lambda \cdot \sqrt{L} \cdot \sqrt{\ln \left(\frac{\ee N}{d+1}\right)} \cdot (\sqrt{d} + u)
\end{align*}
holds with probability at least $(1 - 2 \exp(-u^2))$ with respect to $W^{(L)}$. 

For the expectation bound, simply note that 
\begin{align*}
\underset{W^{(L)}}{\EE} \left[ \underset{x \in \RR^d}{\sup} \Vert W^{(L)} \cdot D^{(L-1)}(x)W^{(L-1)} \cdots D^{(0)}(x)W^{(0)}\Vert_2\right] \leq C_3C_2 \cdot \Lambda \cdot \sqrt{L} \cdot \sqrt{\ln \left(\frac{\ee N}{d+1}\right)} \cdot \sqrt{d}
\end{align*}
follows directly from \Cref{prop:key}. Hence, the claim follows letting $C \defeq C_2C_3$.
\end{proof}

Incorporating randomness in $W^{(0)}, ..., W^{(L-1)}$ and $b^{(0)}, ..., b^{(L-1)}$ leads to the following theorem.
\begin{theorem} \label{thm:deep_finall}
There exist absolute constants $C, c_1 > 0$ such that for $N > d+2$, random weight matrices $W^{(0)},...,W^{(L)}$ and random bias vectors $b^{(0)},...,b^{(L)}$ as in \Cref{assum:1} the following hold:
\begin{enumerate}
\item{
For every $u,t \geq 0$ we have 
\begin{align*}
&\norel \underset{x \in \RR^d}{\sup} \Vert W^{(L)}D^{(L-1)}(x)W^{(L-1)}\cdots D^{(0)}(x) W^{(0)} \Vert_2 \\
&\leq C \cdot \left(1 + \frac{\sqrt{d} + t}{\sqrt{N}}\right)\left(2\sqrt{2} + \frac{\sqrt{2} t}{\sqrt{N}}\right)^{L-1} \cdot \sqrt{L} \cdot \sqrt{\ln \left(\frac{\ee N}{d+1}\right)} \cdot (\sqrt{d} + u)
\end{align*}
with probability at least $(1-2\exp(-u^2))_+\left((1-2\exp(-c_1 t^2))_+\right)^L$ with respect to $W^{(0)},..., W^{(L)}$ and $b^{(0)},...,b^{(L)}$.
}
\item{
$\displaystyle
 \norel\EE\left[\underset{x \in \RR^d}{\sup} \Vert W^{(L)}D^{(L-1)}(x)W^{(L-1)}\cdots D^{(0)}(x) W^{(0)} \Vert_2 \right] $ \\
$\displaystyle \leq C \cdot \left( 1 + \frac{\sqrt{d}}{\sqrt{N}}\right) \cdot (2 \sqrt{2})^{L-1} \cdot \sqrt{L} \cdot \sqrt{\ln \left(\frac{\ee N}{d+1}\right)} \cdot \sqrt{d}.$
}
\end{enumerate}
\end{theorem}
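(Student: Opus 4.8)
The plan is to reduce everything to \Cref{thm:deep_final} by conditioning on $W^{(0)},\dots,W^{(L-1)}$ and $b^{(0)},\dots,b^{(L-1)}$, and then to control the random product $\Lambda = \Vert W^{(L-1)}\Vert_2 \cdots \Vert W^{(0)}\Vert_2$ through standard operator-norm bounds for Gaussian matrices, in exact analogy with the passage from \Cref{thm:lower_bound_1} to \Cref{thm:pre_main} in the shallow case. Writing $W^{(\ell)} = \sqrt{2/N}\,\widetilde W^{(\ell)}$, the matrix $\widetilde W^{(0)}\in\RR^{N\times d}$ has i.i.d.\ $\mathcal{N}(0,1)$ entries and each $\widetilde W^{(\ell)}\in\RR^{N\times N}$ with $1\leq\ell\leq L-1$ has i.i.d.\ $\mathcal{N}(0,1)$ entries, so \cite[Corollary~7.3.3]{vershynin_high-dimensional_2018} yields, for every $t\geq 0$ and an absolute constant $c_1>0$ (taking the minimum of the constants for the $N\times d$ and $N\times N$ cases),
\[
  \Vert W^{(0)}\Vert_2 \leq \sqrt{2}\Bigl(1 + \tfrac{\sqrt d + t}{\sqrt N}\Bigr)
  \quad\text{and}\quad
  \Vert W^{(\ell)}\Vert_2 \leq 2\sqrt 2 + \tfrac{\sqrt 2\, t}{\sqrt N}\quad(1\leq \ell\leq L-1),
\]
each with probability at least $(1-2\exp(-c_1 t^2))_+$. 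Since $W^{(0)},\dots,W^{(L-1)}$ are jointly independent, these $L$ inequalities hold simultaneously with probability at least $\bigl((1-2\exp(-c_1 t^2))_+\bigr)^L$, and on that event $\Lambda \leq \sqrt 2\,\bigl(1 + \tfrac{\sqrt d + t}{\sqrt N}\bigr)\bigl(2\sqrt 2 + \tfrac{\sqrt 2 t}{\sqrt N}\bigr)^{L-1}$.

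For Part~(1) I would combine this with \Cref{thm:deep_final}~(1): conditionally on \emph{any} realization of $W^{(0)},\dots,W^{(L-1)},b^{(0)},\dots,b^{(L-1)}$, the supremum in question is at most $C\Lambda\sqrt L\,\sqrt{\ln(\ee N/(d+1))}\,(\sqrt d + u)$ with conditional probability at least $1-2\exp(-u^2)$. Intersecting the good event for $\Lambda$ with this event and invoking the conditioning lemma \Cref{prop:highprob} exactly as in the proof of \Cref{thm:pre_main}, one obtains that the bound with $\Lambda$ replaced by its deterministic upper bound holds with probability at least $(1-2\exp(-u^2))_+\bigl((1-2\exp(-c_1 t^2))_+\bigr)^L$; absorbing the factor $\sqrt 2$ into $C$ gives the stated inequality.

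For Part~(2) I would instead use the expectation statement \Cref{thm:deep_final}~(2) together with the independence of $W^{(L)}$ from the remaining variables and the tower property, which yields
\[
  \EE\Bigl[\sup_{x\in\RR^d}\Vert W^{(L)}D^{(L-1)}(x)W^{(L-1)}\cdots D^{(0)}(x)W^{(0)}\Vert_2\Bigr]
  \leq C\,\sqrt L\,\sqrt{\ln(\ee N/(d+1))}\,\sqrt d\cdot \EE[\Lambda],
\]
and then $\EE[\Lambda] = \prod_{\ell=0}^{L-1}\EE\Vert W^{(\ell)}\Vert_2$ by joint independence. By \cite[Theorem~7.3.1]{vershynin_high-dimensional_2018} one has $\EE\Vert W^{(0)}\Vert_2 \leq \sqrt 2(1 + \sqrt d/\sqrt N)$ and $\EE\Vert W^{(\ell)}\Vert_2 \leq 2\sqrt 2$ for $1\leq \ell\leq L-1$, hence $\EE[\Lambda]\leq \sqrt 2(1 + \sqrt d/\sqrt N)(2\sqrt 2)^{L-1}$; substituting and absorbing $\sqrt 2$ into $C$ gives the claim.

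Both parts are essentially bookkeeping once \Cref{thm:deep_final} and the covering-number bound \Cref{lem:cov_bound} are available; the one place demanding care is the conditioning in Part~(1), namely ensuring that the conditional high-probability bound of \Cref{thm:deep_final} holds uniformly over the conditioning variables so that it may be intersected with the independent event controlling $\Lambda$. This is precisely the setting of \Cref{prop:highprob}, so I anticipate no genuine obstacle here — the substantive work lies upstream in \Cref{thm:deep_final}, \Cref{lem:cov_bound} and \Cref{lem:D_card}.
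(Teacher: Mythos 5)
Your proposal is correct and follows essentially the same route as the paper's proof: condition on $W^{(0)},\dots,W^{(L-1)},b^{(0)},\dots,b^{(L-1)}$, apply \Cref{thm:deep_final}, bound $\Lambda$ via \cite[Corollary~7.3.3]{vershynin_high-dimensional_2018} (resp.\ \cite[Theorem~7.3.1]{vershynin_high-dimensional_2018} for the expectation), and glue the events together with \Cref{prop:highprob} and independence. The only cosmetic difference is that you invoke independence of the $L$ norm events directly, where the paper phrases the same step as an iterative application of \Cref{prop:highprob}.
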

\begin{proof}
Let $\widetilde{C}$ be the relabeled constant from \Cref{thm:deep_final} and define $C \defeq \sqrt{2} \widetilde{C}$. We start with (1): In view of \Cref{thm:deep_final,prop:highprob} it suffices to show that for every $t \geq 0$ the estimate
\begin{equation*}
\Lambda \leq \sqrt{2}\left(1 + \frac{\sqrt{d} + t}{\sqrt{N}}\right)\left(2\sqrt{2} + \frac{\sqrt{2}t}{\sqrt{N}}\right)^{L-1}
\end{equation*}
holds with probability at least $\left((1-2\exp(-c_1 t^2))_+\right)^L$, where
\begin{equation*}
\Lambda \defeq \Vert W^{(L-1)}\Vert_2 \cdots \Vert W^{(0)} \Vert_2.
\end{equation*}
To show this, note that \cite[Corollary 7.3.3]{vershynin_high-dimensional_2018} yields
\begin{equation*}
\Vert W^{(0)} \Vert_2 = \sqrt{\frac{2}{N}} \left\Vert \sqrt{\frac{N}{2}} W^{(0)}\right\Vert_2\leq \sqrt{2} \left(1 + \frac{\sqrt{d} + t}{\sqrt{N}}\right)
\end{equation*}
with probability at least $(1- 2\exp(-c_1 t^2))_+$, as well as
\begin{equation*}
\Vert W^{(\ell)} \Vert_2 = \sqrt{\frac{2}{N}} \left\Vert \sqrt{\frac{N}{2}} W^{(\ell)} \right\Vert_2 \leq \sqrt{\frac{2}{N}} (2 \sqrt{N} + t)= 2\sqrt{2} + \frac{\sqrt{2}t}{\sqrt{N}}
\end{equation*}
with probability at least $(1- 2\exp(-c_1 t^2))_+$ for any $1 \leq \ell \leq L-1$. Hence, the claim follows from an iterative application of \Cref{prop:highprob}.

To conclude (2) we apply \cite[Theorem 7.3.1]{vershynin_high-dimensional_2018} to the matrices $\sqrt{\frac{N}{2}} W^{(\ell)}$ for $0 \leq \ell \leq L-1$. This yields $\underset{W^{(\ell)}}{\EE} \Vert W^{(\ell)} \Vert_2 \leq 2 \sqrt{2}$ for every $1 \leq \ell \leq L-1$ and further $\underset{W^{(0)}}{\EE} \Vert W^{(0)}\Vert_2\leq \sqrt{2} \left(1 + \frac{\sqrt{d}}{\sqrt{N}} \right)$. The independence of the matrices $W^{(\ell)}$ combined with \Cref{thm:deep_final} then yields the claim.
\end{proof}
{\color{black} With a simple modification of the proof of \Cref{thm:deep_finall} one can obtain the same high-probability bound even without assuming that the matrices 
$W^{(0)},...,W^{(L-1)}$ and the biases $b^{(0)},...,b^{(L-1)}$ are jointly independent, at the cost of changing the quantity $\left((1-2\exp(-c_1 t^2))_+\right)^L$ to 
$1-2L\exp(-c_1 t^2)$. 
Note that one still needs to assume that $W^{(L)}$ and $b^{(L)}$ are independent of $W^{(0)},...,W^{(L-1)}$ and $b^{(0)},...,b^{(L-1)}$.}

Now we obtain an upper bound on the Lipschitz constant directly using \Cref{thm:deep_finall} and \eqref{eq:lowbound}:
\begin{theorem}\label{thm:final_deep_lipschitz}
Let $\Phi: \RR^d \to \RR$ be a random ReLU network of width $N$ and with $L$ hidden layers satisfying \Cref{assum:1}. Moreover, let $d+2<N$. Then the following hold, for certain absolute constants $C, c_1 > 0:$
\begin{enumerate}
\item{For every $u,t \geq 0$, we have
\begin{align*}
\lip(\Phi)\leq C \cdot \left(1 + \frac{\sqrt{d} + t}{\sqrt{N}}\right)\left(2\sqrt{2} + \frac{\sqrt{2} t}{\sqrt{N}}\right)^{L-1} \cdot \sqrt{L} \cdot \sqrt{\ln \left(\frac{\ee N}{d+1}\right)} \cdot (\sqrt{d} + u)
\end{align*}
with probability at least $(1-2\exp(-u^2))_+\left((1-2\exp(-c_1 t^2))_+\right)^L$.
}
\item{
$\displaystyle
\EE \left[\lip(\Phi) \right]
\leq C \cdot \left( 1 + \frac{\sqrt{d}}{\sqrt{N}}\right) \cdot (2 \sqrt{2})^{L-1} \cdot \sqrt{L} \cdot \sqrt{\ln \left(\frac{\ee N}{d+1}\right)} \cdot \sqrt{d}.
$
}
\end{enumerate}
\end{theorem}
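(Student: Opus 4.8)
The statement is an immediate consequence of the deterministic estimate \eqref{eq:lowbound} together with \Cref{thm:deep_finall}, which already bounds exactly the quantity appearing on the right-hand side of \eqref{eq:lowbound}. So the plan has essentially two steps, and I expect the whole thing to be a short deduction rather than anything requiring new ideas.

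First I would recall that $\Phi$ is globally Lipschitz continuous, being a composition of affine maps with the $1$-Lipschitz map $\relu$; hence \Cref{prop:grad_relu,prop:lipgrad} apply and yield
\begin{equation*}
\lip(\Phi) \leq \underset{x \in \RR^d}{\sup} \Vert W^{(L)} D^{(L-1)}(x) W^{(L-1)} \cdots D^{(0)}(x) W^{(0)}\Vert_2 ,
\end{equation*}
which is precisely \eqref{eq:lowbound}. I would also note that both sides are measurable functions of $(W^{(0)},\dots,W^{(L)},b^{(0)},\dots,b^{(L)})$: for $\lip(\Phi)$ this was observed in \Cref{subsec:lip}, and for the supremum it is shown in \Cref{app:measurable}, so the expectations below are well defined.

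For Part (1), I would fix $u,t \geq 0$ and apply \Cref{thm:deep_finall}(1): with probability at least $(1-2\exp(-u^2))_+\big((1-2\exp(-c_1 t^2))_+\big)^L$ the supremum above is at most $C \cdot (1 + \tfrac{\sqrt d + t}{\sqrt N})(2\sqrt 2 + \tfrac{\sqrt 2 t}{\sqrt N})^{L-1} \cdot \sqrt L \cdot \sqrt{\ln(\ee N/(d+1))} \cdot (\sqrt d + u)$, so combining this with the displayed inequality gives the asserted bound for $\lip(\Phi)$ on the same event. For Part (2), monotonicity of the expectation and the same pointwise inequality give $\EE[\lip(\Phi)] \leq \EE\big[\sup_x \Vert W^{(L)} D^{(L-1)}(x) W^{(L-1)} \cdots D^{(0)}(x) W^{(0)}\Vert_2\big]$, and \Cref{thm:deep_finall}(2) bounds the right-hand side by $C \cdot (1 + \tfrac{\sqrt d}{\sqrt N}) \cdot (2\sqrt 2)^{L-1} \cdot \sqrt L \cdot \sqrt{\ln(\ee N/(d+1))} \cdot \sqrt d$, as claimed.

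Since all the real work — the cardinality bound for the sign patterns (\Cref{lem:D_card}), the covering-number bound for $\mathcal{L}$ (\Cref{lem:cov_bound}), the conditional Dudley estimate (\Cref{thm:deep_final}), and the removal of the conditioning on $W^{(0)},\dots,W^{(L-1)}$ via operator-norm bounds for Gaussian matrices (\Cref{thm:deep_finall}) — has already been carried out, I do not anticipate any genuine obstacle at this stage; the only subtle point is the measurability issue mentioned above, which is needed so that $\EE[\lip(\Phi)]$ is meaningful.
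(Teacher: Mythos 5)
Your proposal is correct and matches the paper's argument exactly: the paper likewise derives \Cref{thm:final_deep_lipschitz} directly by combining the pointwise bound \eqref{eq:lowbound} with \Cref{thm:deep_finall}, and the measurability issue is handled in \Cref{app:measurable} just as you note. No gaps.
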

 By plugging in special values for $t$ and $u$ and using $d \leq N$ we can now prove \Cref{thm:main_2}.

\renewcommand*{\proofname}{Proof of \Cref{thm:main_2}}
\begin{proof}
Let $\widetilde{C}$ and $\widetilde{c_1}$ be the relabeled constants from \Cref{thm:final_deep_lipschitz} and let $C \defeq 6\widetilde{C}$ and $c_1 \defeq \widetilde{c_1}$. 
Part (1) follows from \Cref{thm:final_deep_lipschitz} by plugging in $u = \sqrt{d}$ and $t = \sqrt{N}$, which yields
\begin{equation*}
\lip(\Phi)\leq \widetilde{C} \cdot 3 \cdot (3\sqrt{2})^{L-1} \cdot \sqrt{L} \cdot \sqrt{\ln \left(\frac{\ee N}{d+1}\right)} \cdot 2\cdot \sqrt{d} = 6\widetilde{C} \cdot (3\sqrt{2})^{L-1} \cdot \sqrt{L}\cdot \sqrt{\ln \left(\frac{\ee N}{d+1}\right)} \cdot \sqrt{d}
\end{equation*}
with probability at least $(1-2\exp(-d))\left((1-2\exp(-c_1 N))_+\right)^L$,
where we also used $d \leq N$ and $1-2\exp(-u^2) = 1-2\exp(-d) \geq 0$.
Part (2) follows immediately from $d \leq N$ and part (2) of \Cref{thm:final_deep_lipschitz}.
\end{proof}
\renewcommand*{\proofname}{Proof}

\newcommand{\gell}{\mathcal{G}^{(\ell)}}
\newcommand{\hell}{\mathcal{H}^{(\ell)}}
\newcommand{\gelll}{\mathcal{G}^{(\ell+1)}}
\newcommand{\G}{\mathcal{G}'}

\section{Proof of the lower bound} \label{sec:low_bound}
In this section, we establish \emph{lower} bounds on the Lipschitz constant of randomly initialized ReLU networks. 
The strategy for deriving these lower bounds for shallow networks differs significantly from the approach for deep networks. 
The lower bound in the case of shallow networks (see \Cref{sec:low_bound_shallow}) follows from the fact that the 
Lipschitz constant of a shallow ReLU network can be lower bounded by the Lipschitz constant of the corresponding \emph{linear} network (see \Cref{prop:shallow_low_linear}), combined with
 concentration properties of Gaussian matrices and vectors. 
In the case of deep networks (see \Cref{subsec:deep_lower}), we follow the approach that was already described in \eqref{eq:upbound}: 
Our strategy is to fix a point $x^{(0)} \in \RR^d \setminus \{0\}$ and derive lower bounds for the expression
\begin{equation*}
\Vert W^{(L)} D^{(L-1)}(x_0)W^{(L-1)} \cdots D^{(0)}(x_0) W^{(0)} \Vert_2.
\end{equation*}

\subsection{The shallow case}\label{sec:low_bound_shallow}
In this subsection, we deal with the case of shallow networks, i.e., $L=1$. 
We make heavy use of \Cref{prop:shallow_low_linear}, which states that we only have to consider the corresponding \emph{linear} network in that case, 
i.e., the network that arises from a ReLU network by omitting the ReLU activation. 
This reduces the problem to bounding the norm of the product of a Gaussian matrix with a Gaussian vector (from below). 
\begin{theorem}\label{thm:shallow_low_bound_2}
There exists a constant $c>0$ with the following property: If $\Phi:\RR^d \to \RR$ is a random shallow ReLU network with width $N$ satisfying \Cref{assum:1}, then for every $t,u \geq 0$ it holds
\begin{equation*}
\lip(\Phi) \geq \frac{1}{\sqrt{2}} \cdot \left(1 - \frac{u}{\sqrt{N}}\right)_+  \cdot (\sqrt{d} - t)_+
\end{equation*}
with probability at least $(1-2\exp(-ct^2))_+(1-2\exp(-cu^2))_+$. Recall that we write $a_+ = \max\{0,a\}$ for any number $a \in \RR$.
\end{theorem}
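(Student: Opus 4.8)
The plan is to deduce this from \Cref{prop:shallow_low_linear}, which bounds the Lipschitz constant of a shallow ReLU network below by one half of the Lipschitz constant of the associated \emph{linear} network, and then to invoke standard concentration of the Euclidean norm of Gaussian vectors. Concretely, \Cref{prop:shallow_low_linear} gives $\lip(\Phi) \geq \tfrac{1}{2}\lip(\widetilde{\Phi}) = \tfrac{1}{2}\Vert W^{(1)} W^{(0)} \Vert_2$. Since $W^{(1)} \in \RR^{1\times N}$ is a row vector, setting $g \defeq (W^{(1)})^T$ we have $g \sim \mathcal{N}(0, I_N)$, $g$ is independent of $W^{(0)}$, and $\Vert W^{(1)} W^{(0)} \Vert_2 = \Vert (W^{(0)})^T g \Vert_2$. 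Thus everything reduces to a lower bound on $\Vert (W^{(0)})^T g \Vert_2$; we may assume $u < \sqrt{N}$ and $t < \sqrt{d}$, since otherwise the asserted bound is $0$.

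The second step identifies the Gaussian structure of $(W^{(0)})^T g$. The matrix $A \defeq \sqrt{N/2}\,(W^{(0)})^T \in \RR^{d\times N}$ has i.i.d.\ $\mathcal{N}(0,1)$ entries, so conditionally on $g\neq 0$ the vector $\zeta \defeq A g / \Vert g \Vert_2$ is distributed as $\mathcal{N}(0, I_d)$, independently of $\Vert g \Vert_2$. Hence
\begin{equation*}
  \Vert (W^{(0)})^T g \Vert_2 = \sqrt{\tfrac{2}{N}}\,\Vert A g \Vert_2 = \sqrt{\tfrac{2}{N}}\,\Vert g \Vert_2\cdot\Vert \zeta \Vert_2 .
\end{equation*}

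The third step is concentration. By concentration of the norm of standard Gaussian vectors (\cite[Theorem~3.1.1]{vershynin_high-dimensional_2018}), there is an absolute constant $c>0$ with $\PP\big(\Vert \zeta \Vert_2 < \sqrt{d} - t \,\big|\, g\big) \leq 2\exp(-ct^2)$ for every fixed $g \neq 0$, and $\PP\big(\Vert g \Vert_2 < \sqrt{N} - u\big) \leq 2\exp(-cu^2)$. On the event $\{\Vert g \Vert_2 \geq \sqrt{N}-u\}\cap\{\Vert \zeta \Vert_2 \geq \sqrt{d}-t\}$ the previous display and \Cref{prop:shallow_low_linear} give
\begin{equation*}
  \lip(\Phi) \geq \tfrac{1}{2}\sqrt{\tfrac{2}{N}}\,\Vert g \Vert_2\,\Vert \zeta \Vert_2 \geq \tfrac{1}{2}\sqrt{\tfrac{2}{N}}\,(\sqrt{N}-u)(\sqrt{d}-t) = \tfrac{1}{\sqrt{2}}\Big(1-\tfrac{u}{\sqrt{N}}\Big)(\sqrt{d}-t).
\end{equation*}
Since the first event depends only on $g$ and is contained in $\{g\neq 0\}$, while the conditional probability of the second event is at least $1-2\exp(-ct^2)$ for every such $g$, conditioning on $g$ (equivalently, an application of \Cref{prop:highprob}) shows the intersection has probability at least $(1-2\exp(-cu^2))_+(1-2\exp(-ct^2))_+$; together with the degenerate cases $u\geq\sqrt{N}$ or $t\geq\sqrt{d}$ this proves the theorem.

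I do not expect a genuine obstacle here: once \Cref{prop:shallow_low_linear} is in hand, the remaining content is routine Gaussian concentration. The only points deserving care are the bookkeeping of independence — making sure $\zeta$ and $\Vert g \Vert_2$ decouple so that the two tail bounds multiply — and the $(\cdot)_+$ truncations covering the degenerate parameter ranges; the constant $\tfrac{1}{\sqrt{2}}$ is simply $\tfrac{1}{2}\cdot\sqrt{2}$, the $\sqrt{2}$ reflecting the He-type variance $2/N$ of the entries of $W^{(0)}$.
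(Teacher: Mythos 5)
Your proposal is correct and follows essentially the same route as the paper: reduce to $\tfrac12\Vert W^{(1)}W^{(0)}\Vert_2$ via \Cref{prop:shallow_low_linear}, condition on $W^{(1)}$, use rotation invariance to see that the normalized vector $(W^{(0)})^T (W^{(1)})^T/\Vert W^{(1)}\Vert_2$ is (up to the factor $\sqrt{2/N}$) standard Gaussian in $\RR^d$, and multiply the two norm-concentration bounds via \Cref{prop:highprob}. The constants and probability bounds match the paper's proof exactly.
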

\begin{proof}
From \Cref{prop:shallow_low_linear} we infer that it holds
\begin{equation*}
\lip(\Phi) \geq \frac{1}{2}\lip(\widetilde{\Phi}) = \frac{1}{2} \cdot \Vert W^{(1)}\cdot W^{(0)} \Vert_2.
\end{equation*}
Therefore, in the following we only consider the expression $\Vert W^{(1)}\cdot W^{(0)} \Vert_2$.

To this end, we introduce the notations $U^{(0)} \defeq (W^{(0)})^T \in \RR^{d \times N}$ and $U^{(1)} \defeq (W^{(1)})^T \in \RR^{N}$. 
Moreover, we fix $t,u \geq 0$. Consider the set
\begin{equation*}
A_1 \defeq \{W^{(1)}: \ \Vert W^{(1)} \Vert_2 = \Vert U^{(1)} \Vert_2 \geq (\sqrt{N} - u)_+\}.	
\end{equation*} 
Since the entries of $W^{(1)}$ are $\mathcal{N}(0,1)$-distributed, and since the norm of a Gaussian random vector concentrates around the square root of its size (see, e.g., \cite[Theorem~3.1.1~\&~Equation~(2.14)]{vershynin_high-dimensional_2018}, we infer
\begin{equation*}
\PP^{W^{(1)}}(A_1) \geq (1 - 2\exp(-cu^2))_+
\end{equation*}
with a suitably chosen constant $c>0$. We now \emph{fix} the vector $W^{(1)}$ and consider the set
\begin{equation*}
A_2 (W^{(1)}) \defeq \left\{W^{(0)}: \ \Vert U^{(0)} U^{(1)}\Vert_2 \geq \frac{\sqrt{2} \Vert U^{(1)} \Vert_2}{\sqrt{N}} \cdot (\sqrt{d}-t)_+\right\}.
\end{equation*}
Firstly, assume that $W^{(1)} \neq 0$. Note then that $\frac{\sqrt{N}}{\sqrt{2} \Vert U^{(1)} \Vert_2} \cdot U^{(0)} U^{(1)} \sim \mathcal{N}(0, I_d)$, as follows from the independence of the rows of $U^{(0)}$ and \cite[Exercise~3.3.3~(a)]{vershynin_high-dimensional_2018}. 
Therefore, using again \cite[Theorem~3.1.1~\&~Equation~(2.14)]{vershynin_high-dimensional_2018} we get
\begin{align*}
&\norel \quad\frac{\sqrt{N}}{\sqrt{2} \Vert U^{(1)} \Vert_2} \cdot \Vert U^{(0)} U^{(1)} \Vert_2 \geq (\sqrt{d} - t)_+ \\
&\Leftrightarrow \quad  \Vert U^{(0)} U^{(1)} \Vert_2 \geq \frac{\sqrt{2} \Vert U^{(1)} \Vert_2}{\sqrt{N}} \cdot (\sqrt{d}-t)_+
\end{align*}
with probability at least $1- 2\exp(-ct^2)$ and the last inequality remains true in the case $U^{(1)}  = 0$. 
Hence, we see
\begin{equation*}
\PP^{W^{(0)}} (A_2(W^{(1)})) \geq (1 - 2\exp(-ct^2))_+.
\end{equation*}
For any tuple $(W^{(0)}, W^{(1)})$ with $W^{(1)} \in A_1$ and $W^{(0)} \in A_2(W^{(1)})$ we get
\begin{align*}
\Vert U^{(0)} U^{(1)}\Vert_2 \geq \frac{\sqrt{2} \Vert U^{(1)} \Vert_2}{\sqrt{N}} \cdot (\sqrt{d}-t)_+ \geq \frac{\sqrt{2} (\sqrt{N}-u)_+}{\sqrt{N}} \cdot (\sqrt{d}-t)_+ = \sqrt{2} \cdot \left(1 - \frac{u}{\sqrt{N}}\right)_+ \cdot (\sqrt{d} - t)_+.
\end{align*}
Therefore, \Cref{prop:highprob} yields the claim.
\end{proof}
Again, we plug in special values for $u$ and $t$ to derive the main result.

\renewcommand*{\proofname}{Proof of \Cref{thm:main_shallow_lower}}
\begin{proof}
Let $c_2>0$ be the constant appearing in \Cref{thm:shallow_low_bound_2}. We then pick $u = \frac{\sqrt{N}}{2}$ and $t = \frac{\sqrt{d}}{2}$ and directly get
\begin{equation*}
\lip(\Phi) \geq \frac{1}{\sqrt{2}} \cdot \frac{1}{2} \cdot \frac{1}{2} \cdot \sqrt{d} = \frac{1}{4\sqrt{2}} \cdot \sqrt{d}
\end{equation*}
with probability at least $(1-2\exp(-c_2 N/4))_+(1-2\exp(-c_2 d/4))_+$. 
Hence, the first claim follows picking $c \defeq c_2/4$.

For the expectation bound, we assume $d,N > \frac{\ln(2)}{c}$ and use Markov's inequality to get
\begin{align*}
\EE[\lip(\Phi)] &\geq \PP\left(\lip(\Phi) \geq \frac{1}{4\sqrt{2}} \cdot \sqrt{d}\right) \cdot \frac{1}{4\sqrt{2}} \cdot \sqrt{d} \\
&\geq (1- 2 \exp(-cN))(1-2\exp(-cd)) \cdot \frac{1}{4\sqrt{2}} \cdot \sqrt{d} \\
&\geq \underbrace{\big(1- 2 \exp(-c\cdot\left(\lfloor \ln(2) / c\rfloor + 1\right))\big)\big(1-2\exp(-c\cdot\left(\lfloor \ln(2) / c\rfloor + 1\right))\big) \cdot \frac{1}{4\sqrt{2}}}_{=: c_1} \cdot \sqrt{d} \\
&= c_1 \cdot \sqrt{d}. \qedhere
\end{align*}
\end{proof}
\renewcommand*{\proofname}{Proof}

\subsection{The deep case} \label{subsec:deep_lower}
In this subsection, we deal with the case of deep networks, i.e., $L \geq 2$. 
We note that the condition $L \geq 2$ is not needed in order for our proofs to work and in particular that our proofs also work in the case of shallow networks. 
However, for what follows we will need an additional assumption on the distribution of the biases (see \Cref{assum:2}) and the additional condition $N \gtrsim dL^2$ to prove our final lower bound on the Lipschitz constant,
whereas these assumptions are \emph{not} needed in the case of shallow networks. 
That is why we presented a different proof for shallow networks in \Cref{sec:low_bound_shallow}.

The basic observation is that, if we fix a point $x_0 \in \RR^d \setminus \{0\}$, the weights $W^{(0)}, ..., W^{(\ell - 1)}$ and biases $b^{(0)}, ..., b^{(\ell -1 )}$ and assume that the output of the $\ell$-th layer is non-zero (i.e., $x^{(\ell)} \neq 0$ with $x^{(\ell)}$ as in \Cref{eq:d-matrices}), 
then the matrix $\sqrt{N} D^{(\ell)}(x_0) W^{(\ell)}$ has \emph{isotropic, independent and sub-gaussian rows} (with respect to the randomness in $W^{(\ell)}$ and $b^{(\ell)}$), 
which is shown in \Cref{thm: isotropic_rows,thm: dev_conditions}. 
Here, a random vector $X \in \RR^k$ is called isotropic iff
\begin{equation*}
\EE[X X^T] = I_{k \times k}
\end{equation*}
with $I_{k \times k}$ denoting the $k$-dimensional identity matrix. 

Afterwards, using the \emph{matrix deviation inequality} (see \cite[Theorem~3]{Liaw2017}) 
we show that the product $D^{(L-1)}(x_0)W^{(L-1)} \cdots D^{(0)}(x_0) W^{(0)}$ is almost isometric 
with high probability which then implies the claim. 

We start by showing that for some fixed $x_0 \in \RR^d \setminus \{0\}$ the matrices 
\begin{equation*}
\sqrt{N} D^{(\ell)}(x_0) W^{(\ell)}
\end{equation*}
have isotropic, independent and sub-gaussian rows 
when conditioning on the previous weights and biases $W^{(0)}, ..., W^{(\ell - 1)}, b^{(0)}, ..., b^{(\ell - 1)}$ and assuming $x^{(\ell)} \neq 0$. 
Since $W^{(0)} \in \RR^{N \times d}$ whereas in contrast $W^{(\ell)} \in \RR^{N \times N}$ for $1 \leq \ell \leq L-1$,
we are going to prove the result generally for matrices $W \in \RR^{N \times k}$. 
Let us therefore first introduce the basic assumptions of what follows.
\begin{assumption} \label{assum_1}
Let $W \in \RR^{N \times k}$ be a random matrix and $b \in \RR^N$ a random vector with 
\begin{equation*}
    W_{i,j} \sim \mathcal{N}\left(0, 2/N\right), \quad b_i \sim \mathcal{D}_i, \quad \text{for } 1 \leq i \leq N \quad \text{and} \quad 1 \leq j \leq k,
\end{equation*}
where each $\mathcal{D}_i$ is a symmetric probability distribution on $\RR$. Furthermore, we assume that all the entries of $W$ and $b$ are jointly independent. 
\end{assumption}
First, we show that the rows of the matrix are indeed isotropic, which is done in the following lemma. 
\label{isotropic}
\begin{lemma} \label{thm: isotropic_rows}
Let \Cref{assum_1} be satisfied and fix any vector $x \in \RR^k \setminus \{0\}$. Then each row of
\begin{equation*}
    \sqrt{N} \cdot \diag(Wx + b) \cdot W \in \RR^{N \times k}
\end{equation*}
is an isotropic random vector. Here, $\Delta$ is defined as in \Cref{subsec:gradient}.
\end{lemma}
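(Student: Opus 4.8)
The plan is to write out the $i$th row of $\sqrt{N}\cdot\diag(Wx+b)\cdot W$ explicitly, reduce the isotropy claim to a single second-moment identity, and then obtain that identity from the joint symmetry of $(W,b)$ under negation.

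First I would fix $x \in \RR^k\setminus\{0\}$ and, for each $i\in\{1,\dots,N\}$, abbreviate $w_i\defeq (W_{i,-})^T\in\RR^k$, so that $w_i\sim\mathcal{N}(0,\tfrac2N I_k)$ and $w_i$ is independent of $b_i\sim\mathcal{D}_i$. Since $\bigl(\diag(Wx+b)\bigr)_{i,i}=\mathbbm{1}_{\langle w_i,x\rangle+b_i>0}$, the $i$th row of $\sqrt N\,\diag(Wx+b)\,W$ is the transpose of
\[
Z_i\defeq \sqrt N\cdot\mathbbm{1}_{\langle w_i,x\rangle+b_i>0}\cdot w_i\in\RR^k.
\]
Isotropy of the $i$th row thus amounts to $\EE[Z_iZ_i^T]=I_k$, i.e.\ to
\[
\EE\bigl[\mathbbm{1}_{\langle w_i,x\rangle+b_i>0}\,w_iw_i^T\bigr]=\tfrac1N I_k .
\]
Note that the rows need not be identically distributed (the $\mathcal{D}_i$ may differ), so I would argue for each $i$ separately; from here on I drop the index $i$ and write $w,b$.

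The key step is a symmetrization. Because $w$ is a centered Gaussian vector, $b$ has a symmetric law, and $w,b$ are independent, the pair $(w,b)$ has the same distribution as $(-w,-b)$. Applying this to the map $(w,b)\mapsto \mathbbm{1}_{\langle w,x\rangle+b>0}\,ww^T$ (which is integrable since $ww^T$ has Gaussian entries) and using $(-w)(-w)^T=ww^T$ gives
\[
\EE\bigl[\mathbbm{1}_{\langle w,x\rangle+b>0}\,ww^T\bigr]=\EE\bigl[\mathbbm{1}_{\langle w,x\rangle+b<0}\,ww^T\bigr].
\]
Adding the two identities yields $2\,\EE[\mathbbm{1}_{\langle w,x\rangle+b>0}\,ww^T]=\EE[\mathbbm{1}_{\langle w,x\rangle+b\neq0}\,ww^T]$. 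Since $x\neq0$, conditionally on $b$ the variable $\langle w,x\rangle$ is a nondegenerate Gaussian, so $\PP(\langle w,x\rangle+b=0\mid b)=0$; integrating over $b$ shows $\PP(\langle w,x\rangle+b=0)=0$, hence $\mathbbm{1}_{\langle w,x\rangle+b\neq0}=1$ almost surely. Therefore $2\,\EE[\mathbbm{1}_{\langle w,x\rangle+b>0}\,ww^T]=\EE[ww^T]=\tfrac2N I_k$, which gives $\EE[Z Z^T]=N\cdot\tfrac1N I_k=I_k$, as claimed.

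The argument is short; the only point requiring care is that the boundary event $\{\langle w,x\rangle+b=0\}$ is null even when $\mathcal{D}_i$ carries atoms — this is precisely where the hypothesis $x\neq0$ enters, via non-degeneracy of $\langle w,x\rangle$ — together with the observation that it is the \emph{symmetry} of the bias distribution, and nothing finer, that produces the cancellation.
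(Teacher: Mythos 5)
Your proof is correct. It reaches the same identity as the paper but by a noticeably more uniform route. The paper first reduces to the special case $x=\alpha e_1$ via an orthogonal change of variables (invoking rotation invariance of the Gaussian), then verifies $\EE\langle (V_{i,-})^T,y\rangle^2=\Vert y\Vert_2^2$ by an entry-by-entry case analysis of $\EE[\mathbbm{1}_{\alpha W_{i,1}+b_i>0}W_{i,j}W_{i,\ell}]$: independence kills the off-diagonal terms, symmetry plus continuity handles $j=\ell\neq 1$, and the $(X,Y)\sim(-X,-Y)$ symmetrization is reserved for the single entry $j=\ell=1$. You instead apply that same symmetrization once, to the full matrix-valued quantity $\mathbbm{1}_{\langle w,x\rangle+b>0}\,ww^T$, exploiting $(-w)(-w)^T=ww^T$; adding the resulting identity to itself and discarding the null boundary event $\{\langle w,x\rangle+b=0\}$ (where, as you correctly note, $x\neq 0$ and the nondegeneracy of $\langle w,x\rangle$ conditional on $b$ are what is needed, even when $\mathcal{D}_i$ has atoms) yields $\EE[\mathbbm{1}_{>0}\,ww^T]=\tfrac12\EE[ww^T]=\tfrac1N I_k$ in one stroke. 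This simultaneously recovers the vanishing of the off-diagonal entries and the value of the diagonal ones, so the rotation step and the case distinction become unnecessary. The only cosmetic slip is the remark that "$ww^T$ has Gaussian entries" — its entries are products of Gaussians, not Gaussians — but they are integrable, which is all the symmetrization requires.
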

\begin{proof}
We first consider the case $x = (\alpha,0,...,0)^T \in \RR^k$ with $\alpha \in \RR \setminus \{0\}$. Let $i \in \{1,...,N\}$ and define $V \defeq \sqrt{N} \cdot \diag(Wx + b) \cdot W$. It is well-known that $V_{i,-}$ is an isotropic random vector if and only if
\begin{equation*}
    \EE \langle \left(V_{i,-}\right)^T, y \rangle^2 = \Vert y \Vert_2^2
\end{equation*}
for every $y \in \RR^k$; see \cite[Lemma 3.2.3]{vershynin_high-dimensional_2018}. Therefore, take any arbitrary vector $y \in \RR^k$. A direct calculation yields
\begin{align}
    \EE \left[\langle \left(V_{i,-}\right)^T,y\rangle^2\right] &= \EE  \left[\left(\sum_{\ell=1}^k V_{i,\ell} \hspace{0.05cm} y_\ell\right)^2 \right] = \EE \left[\sum_{j, \ell = 1}^k V_{i,\ell} \hspace{0.05cm} y_\ell V_{i, j}\hspace{0.05cm} y_j \right] = \sum_{j, \ell = 1}^k y_j y_\ell \EE \left[V_{i,j} V_{i,\ell}\right] \nonumber\\
    &= N \cdot \sum_{j, \ell = 1}^k y_j y_\ell \EE \left(\mathbbm{1}_{(Wx + b)_i > 0} \cdot W_{i,j} \cdot W_{i, \ell}\right) \nonumber\\
    \label{eq: first_computation}
    &= N \cdot \sum_{j, \ell = 1}^k y_j y_\ell \EE \left(\mathbbm{1}_{\alpha W_{i,1} + b_i > 0} \cdot W_{i,j} \cdot W_{i, \ell}\right).
\end{align}
If $j \neq \ell$ and w.l.o.g. $j \neq 1$ (since $j \neq 1$ or $\ell \neq 1$, because otherwise $j = 1 = \ell$), it follows by independence that
\begin{equation} \label{eq: second_computation}
    \EE \left(\mathbbm{1}_{\alpha W_{i,1} + b_i> 0} \cdot W_{i,j} \cdot W_{i, \ell}\right) = \underbrace{\EE \left(W_{i,j}\right)}_{=0} \cdot \EE \left(\mathbbm{1}_{\alpha W_{i,1} +b_i> 0} \cdot W_{i, \ell}\right) = 0.
\end{equation}
If $j = \ell \neq 1$, we see
\begin{equation} \label{eq: third_computation}
    \EE \left(\mathbbm{1}_{\alpha W_{i,1} + b_i> 0} \cdot W_{i,j} \cdot W_{i, \ell}\right) = \EE \left(\mathbbm{1}_{\alpha W_{i,1} + b_i> 0}\right) \cdot \EE \left(W_{i,j}^2\right) = \frac{1}{2} \cdot \frac{2}{N} = \frac{1}{N},
\end{equation}
using that $\alpha W_{i,1} + b_i$ has a symmetric and continuous probability distribution.
Here, the continuity follows from $\alpha \neq 0$ and the fact that the random variables $W_{i,1}$ and $b_i$ are independent with $W_{i,1}$ having an absolutely continuous distribution (see, e.g., \cite[Proposition~9.1.6]{dudley2002real}).
If $j = \ell = 1$, we have
\begin{equation*} 
    \EE \left(\mathbbm{1}_{\alpha W_{i,1} + b_i> 0} \cdot W_{i,j} \cdot W_{i, \ell}\right) = \EE \left(\mathbbm{1}_{\alpha W_{i,1} + b_i > 0} \cdot W_{i,1}^2\right).
\end{equation*}
For simplicity, we write $X = W_{i,1}$ and $Y = b_i$ and note since $(X,Y) \sim (-X, -Y)$ (since $X$ and $Y$ are independent and both symmetrically distributed) that
\begin{equation*}
    \EE \left(\mathbbm{1}_{\alpha X+Y > 0} \cdot X^2\right) = \EE \left(\mathbbm{1}_{-\alpha X-Y > 0} \cdot (-X)^2\right) = \EE \left(\mathbbm{1}_{\alpha X+Y < 0} \cdot X^2\right).
\end{equation*}
This yields
\begin{align}
    \EE \left(\mathbbm{1}_{\alpha X+Y > 0} \cdot X^2\right) &= \frac{1}{2} \left(\EE \left(\mathbbm{1}_{\alpha X+Y > 0} \cdot X^2\right) + \EE \left(\mathbbm{1}_{\alpha X+Y < 0} \cdot X^2\right)\right) \nonumber\\
    \label{eq: fourth_computation}
    &= \frac{1}{2}\EE \left(\left(\mathbbm{1}_{\alpha X+Y > 0}+\mathbbm{1}_{\alpha X+Y < 0}\right) \cdot X^2\right) = \frac{1}{2}\EE (X^2) = 1/N.
\end{align}
Here, we used that $\EE (\mathbbm{1}_{\alpha X + Y = 0} \cdot X^2) = 0$, since $\alpha X$ has an absolutely continuous distribution (note $\alpha \neq 0$) and $X$ and $Y$ are independent so that $\PP (\alpha X + Y = 0) =0$. 
Inserting \eqref{eq: second_computation}, \eqref{eq: third_computation} and \eqref{eq: fourth_computation} into \eqref{eq: first_computation} yields 
\begin{equation*}
    \EE \langle \left(V_{i,-}\right)^T, y\rangle^2 = \left\Vert y \right\Vert_2^2,
\end{equation*}
hence the rows of $V$ are isotropic in the case $x = (\alpha,0,...,0)^T \in \RR^k$ with $\alpha \in \RR \setminus \{0\}$.

Now consider an arbitrary vector $x \in \RR^k \setminus \{0\}$. 
Taking an orthogonal deterministic matrix $U \in \RR^{k \times k}$ with $x =  U\alpha e_1$ for some $\alpha \in \RR \setminus \{0\}$ yields
\begin{equation*}
    \diag(Wx + b) \cdot W = \diag\left(WU\alpha e_1 + b\right) \cdot WU \cdot U^{-1}.
\end{equation*}
For every $j \in \{1,...,N\}$ we see
\begin{align*}
\left((WU)^T\right)_{-,j} = \left(U^T W^T\right)_{-,j} = U^T \left(W^T \right)_{-,j } \sim \mathcal{N}\left(0, \frac{2}{N}I_k \right),
\end{align*}
where in the last step we used the rotation invariance of the Gaussian distribution \cite[Proposition 3.3.2]{vershynin_high-dimensional_2018}. This yields
\begin{equation*}
(WU)_{j,-} \sim \mathcal{N}\left(0, \frac{2}{N}I_k\right)
\end{equation*}
for every $j \in \{1,...,N\}$. Since $(WU)_{j,-}$ only depends on $W_{j,-}$, this implies
\begin{equation*}
(WU)_{i,j} \iid \mathcal{N}(0, 2/N) \quad \text{for all } i \in \{1,...,N\}, j \in \{1,...,k\}.
\end{equation*}
Hence, the matrix $V \defeq \diag\left((WU) \alpha e_1 + b \right) \cdot (WU)$ has isotropic rows by what has just been shown (the case $x = (\alpha, 0, ..., 0)^T$).
But then $V \cdot U^{-1}$ has isotropic rows too. Indeed, let $i \in \{1,...,N\}$. Then we have $\left(\left(V \cdot U^{-1}\right)_{i,-}\right)^T = U \cdot \left(V_{i,-}\right)^T$ where $\left(V_{i,-}\right)^T$ is an isotropic random vector. Writing $Z \defeq \left(V_{i,-}\right)^T$, we see for any vector $ y \in \RR^k$ that
\begin{equation*}
    \EE \langle UZ , y \rangle^2= \EE \langle Z, U^T y \rangle^2 = \left\Vert U^T y \right\Vert_2^2 = \left\Vert y \right\Vert_2 ^2,
\end{equation*}
so $UZ$ is isotropic as was to be shown.
\end{proof}
Our next lemma collects all of the properties of the matrices $\sqrt{N}D^{(\ell)}(x_0) W^{(\ell)}$ that we will need. 
\begin{lemma} \label{thm: dev_conditions}
    Let \Cref{assum_1} be satisfied and fix $x \in \RR^k \setminus \{0\}$. Let $\Delta(v), \ v\in \RR^N$, be as defined in \Cref{subsec:gradient}. Then the rows of $\sqrt{N} \diag(Wx + b) \cdot W$ are jointly independent, isotropic random vectors, and sub-gaussian with
    \begin{equation*}
        \left\Vert \left(\sqrt{N} \cdot \diag(Wx + b) \cdot W\right)_{i,-}\right\Vert_{\psi_2} \leq C \quad \text{for all } i \in \{1,...,N\},
    \end{equation*}
    where $C>0$ is an absolute constant. 
\end{lemma}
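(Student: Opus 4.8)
The plan is to verify the three asserted properties one by one; only isotropy requires genuine work, and that has already been handled by \Cref{thm: isotropic_rows}, so the proof reduces to assembling that result with two elementary observations. First I would record that the $i$-th row of $\sqrt{N}\cdot\diag(Wx+b)\cdot W$ equals $Z_i \defeq \sqrt{N}\,\mathbbm{1}_{(Wx+b)_i > 0}\,W_{i,-}$, and that this is a function of the pair $(W_{i,-},b_i)$ alone, since $(Wx+b)_i = \langle W_{i,-}, x\rangle + b_i$. As \Cref{assum_1} stipulates that all entries of $W$ and all entries of $b$ are jointly independent, the pairs $(W_{1,-},b_1),\dots,(W_{N,-},b_N)$ are mutually independent, and hence so are the rows $Z_1,\dots,Z_N$. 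Isotropy of each $Z_i$ is precisely \Cref{thm: isotropic_rows} applied to the fixed vector $x \neq 0$.

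It remains to bound the sub-gaussian norm of each row. The key elementary fact is that multiplication by a $\{0,1\}$-valued random variable does not increase the sub-gaussian norm: for a real random variable $X$ and an event $A$ one has $(\mathbbm{1}_A X)^2 \leq X^2$ pointwise, hence $\EE[\exp((\mathbbm{1}_A X)^2/t^2)] \leq \EE[\exp(X^2/t^2)]$ for every $t > 0$, so $\Vert \mathbbm{1}_A X \Vert_{\psi_2} \leq \Vert X \Vert_{\psi_2}$; testing against every direction $y \in \mathbb{S}^{k-1}$ through the identity $\langle \mathbbm{1}_A Y, y\rangle = \mathbbm{1}_A \langle Y, y\rangle$ extends this to random vectors, i.e.\ $\Vert \mathbbm{1}_A Y \Vert_{\psi_2} \leq \Vert Y \Vert_{\psi_2}$. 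Applying this with $Y = \sqrt{N}\,(W_{i,-})^T$ and $A = \{(Wx+b)_i > 0\}$ gives $\Vert (Z_i)^T \Vert_{\psi_2} \leq \Vert \sqrt{N}\,(W_{i,-})^T \Vert_{\psi_2}$. Since $W_{i,j} \sim \mathcal{N}(0, 2/N)$, the vector $\sqrt{N}\,(W_{i,-})^T$ is distributed according to $\mathcal{N}(0, 2 I_k)$, and for any $\sigma > 0$ a Gaussian vector $g \sim \mathcal{N}(0,\sigma^2 I_k)$ satisfies $\Vert g \Vert_{\psi_2} = \sigma \cdot \Vert \mathcal{N}(0,1) \Vert_{\psi_2}$, because every one-dimensional projection of $g$ onto a unit vector is $\mathcal{N}(0,\sigma^2)$-distributed and $\Vert \mathcal{N}(0,1) \Vert_{\psi_2}$ is a dimension-free absolute constant. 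Hence $\Vert (Z_i)^T \Vert_{\psi_2} \leq \sqrt{2}\,\Vert \mathcal{N}(0,1) \Vert_{\psi_2} =: C$ for all $i \in \{1,\dots,N\}$, which is the desired bound.

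I do not anticipate any real obstacle: each step is either a direct invocation of \Cref{thm: isotropic_rows} or a one-line manipulation of the definitions of joint independence and of the sub-gaussian norm, and the substantive content of the statement lies entirely in the isotropy computation already carried out in the preceding lemma.
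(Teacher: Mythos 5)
Your proposal is correct and follows essentially the same route as the paper: independence of the rows because row $i$ depends only on $(W_{i,-},b_i)$, isotropy by direct citation of the preceding lemma, and the sub-gaussian bound by dropping the $\{0,1\}$-valued indicator via pointwise domination and then using that $\sqrt{N}\,W_{i,-}$ is $\mathcal{N}(0,2I_k)$-distributed. The only cosmetic difference is that you package the indicator-removal step as a general lemma on random vectors, whereas the paper carries it out coordinate-by-coordinate for each test direction $y$; the underlying argument is the same.
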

\begin{proof}
    Note that
    \begin{equation*}
        \left(\sqrt{N} \diag(Wx + b) \cdot W\right)_{i,-} = \sqrt{N} \cdot \mathbbm{1}_{(Wx+b)_i > 0} \cdot W_{i,-},
    \end{equation*}
    which only depends on $b_i$ and the $i$-th row of $W$, which implies that the rows of this matrix are jointly independent. 
    For every vector $y \in \RR^k$ and every $i \in \{1,...,N\}$ we see
    \begin{align*}
    	 \abs{\left(\sqrt{N} \diag(Wx + b) \cdot W\right)_{i,-} \cdot y } &= \abs{\sum_{j=1}^k \left(\diag(Wx + b) \cdot \sqrt{N}W\right)_{i,j}y_j} = \abs{\sum_{j=1}^k \mathbbm{1}_{(Wx + b)_i > 0} \cdot \sqrt{N}W_{i,j}y_j} \\
    	 &= \mathbbm{1}_{(Wx + b)_i > 0} \cdot \abs{\sum_{j = 1}^k \sqrt{N}W_{i,j}y_j} \leq \abs{\sum_{j = 1}^k \sqrt{N}W_{i,j}y_j} .
    \end{align*}
    Note that the $W_{i,j}y_j$ ($j \in \{1,...,k\}$) are independent with $\sqrt{N}W_{i,j}y_j \sim \mathcal{N}(0, 2 y_j^2)$ and hence
    \begin{equation*}
    	\sum_{j = 1}^k \sqrt{N}W_{i,j}y_j \sim \mathcal{N}(0, 2 \cdot \Vert y \Vert_2^2).
    \end{equation*}
    Hence, by definition (see \cite[Section~2.5.2~and~Definition~3.4.1]{vershynin_high-dimensional_2018}) it follows that the random variable $\left(\sqrt{N} \diag(Wx + b) \cdot W\right)_{i,-} \cdot y$ is sub-gaussian and since $y \in \RR^k$ has been chosen arbitrarily we deduce that $\left(\sqrt{N} \diag(Wx + b) \cdot W\right)_{i,-}$ is sub-gaussian with
    \begin{equation*}
    \left\Vert \left(\sqrt{N} \diag(Wx + b) \cdot W\right)_{i,-}\right\Vert_{\psi_2} \leq \underset{y \in \mathbb{S}^{k-1} }{\sup} \left\Vert \sum_{j = 1}^k \sqrt{N}W_{i,j}y_j\right\Vert_{\psi_2} \leq \sqrt{2} \cdot C_1 =: C,
    \end{equation*}
    where $C_1$ is an absolute constant according to \cite[Example 2.5.8 (i)]{vershynin_high-dimensional_2018}.
    
    The isotropy has already been shown in \Cref{thm: isotropic_rows}.
\end{proof}

We now turn to the proof of the lower bound in the case of deep networks. 
For what follows, we assume that the considered $\relu$ network satisfies \Cref{assum:2}. 
We take a \emph{fixed} vector 
\begin{equation*}
x_0\defeq x^{(0)} \in \RR^d \setminus \{0\}
\end{equation*}
 and define the matrices $D^{(0)}(x_0),..., D^{(L-1)}(x_0)$ 
as introduced in \Cref{subsec:gradient}. 
Since $x_0$ is a fixed vector, we omit the argument and just write $D^{(\ell)}$ instead of $D^{(\ell)}(x_0)$. 

First, we prove that the product matrix $D^{(L-1)} W^{(L-1)} \cdots D^{(0)} W^{(0)}$ is almost isometric with high probability. 
This will be based on the fact that the rows of $\sqrt{N}D^{(\ell)}W^{(\ell)}$ are independent, isotropic random vectors (see \Cref{thm: dev_conditions}). 
However, in order to guarantee these properties, we have to make sure that the output of the previous layer $x^{(\ell)}$ is \emph{not} zero. 
Hence, in the following proposition we carefully keep track of this event as well. 
Moreover, since the ultimate goal is to apply \Cref{eq:upbound}, we have to ensure that the network $\Phi$ is differentiable at $x_0$ with
\begin{equation*}
\left(\nabla \Phi(x_0)\right)^T = W^{(L)}\cdot  D^{(L-1)} \cdot W^{(L-1)}\cdots D^{(0)} \cdot W^{(0)},
\end{equation*}
which is satisfied if all pre-activations throughout the network are non-zero. This is why we also consider this event in the following proposition.
\begin{proposition}\label{prop:gell}
Let $W^{(0)},..., W^{(L)}$ and $b^{(0)},..., b^{(L)}$ as in \Cref{assum:2}, and let $x^{(0)} \in \RR^d$ be fixed and let $D^{(0)},..., D^{(L-1)}$ and $x^{(1)},..., x^{(L)}$ as in \Cref{subsec:gradient}. For every $C>0, u \geq 0$ and $\ell \in \{1,...,L\}$, we write $\gell = \gell(u, C)$ for the event defined via the following three properties:
\begin{enumerate}
\item $(W^{(\ell')}x^{(\ell')} + b^{(\ell')})_i \neq 0$ \quad  for all $\ell' \in \{0,..., \ell-1\}$ and $i \in \{1,...,N\}$,
\item $x^{(\ell')} \neq 0$\quad for all $\ell' \in \{0,..., \ell\}$,
\item $\displaystyle \left(\left(1 - \frac{C \!\cdot\! (\sqrt{d} + u)}{\sqrt{N}}\right)_+\right) ^{\ell}\Vert y \Vert_2 \leq \Vert D^{(\ell-1)}W^{(\ell-1)}\cdots D^{(0)}W^{(0)}y \Vert_2 \leq \left( 1+ \frac{C \!\cdot \!(\sqrt{d} + u)}{\sqrt{N}}\right)^\ell \Vert y \Vert_2 $ holds uniformly over all $y \in \RR^d$.
\end{enumerate}
Recall that $a_+ = \max\{a,0\}$ for any $a \in \RR$.
Then, there exists an absolute constant $C>0$ with 
\begin{equation*}
\PP (\mathcal{G}^{(\ell)} ) = \PP (\mathcal{G}^{(\ell)}(u,C) )\geq \left(\left(1 - \frac{1}{2^N} - \exp(-u^2)\right)_+\right)^\ell
\end{equation*} 
for every $u \geq 0$.
\end{proposition}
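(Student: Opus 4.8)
The plan is to prove the proposition by induction on $\ell$, peeling off one layer at a time and treating its weight matrix as fresh randomness conditionally on the earlier layers. We may assume $x^{(0)}\neq 0$, as in the discussion preceding the proposition. Throughout, write $M^{(\ell-1)}\defeq D^{(\ell-1)}W^{(\ell-1)}\cdots D^{(0)}W^{(0)}$, so that property~(3) of $\mathcal{G}^{(\ell)}$ is precisely an almost-isometry statement about $M^{(\ell-1)}$, and set $V^{(\ell-1)}\defeq M^{(\ell-1)}(\RR^d)\subseteq\RR^N$, a subspace of dimension at most $d$. The three tools are: \Cref{thm: dev_conditions} (conditionally on the earlier layers, and provided the input $x^{(\ell)}$ to layer $\ell$ is nonzero, $\sqrt{N}\,D^{(\ell)}W^{(\ell)}$ has independent, isotropic, sub-gaussian rows with $\psi_2$-norm at most an absolute constant $K$); the matrix deviation inequality \cite[Theorem~3]{Liaw2017}; and \Cref{prop:gauss_width}. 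The absolute constant $C$ in $\mathcal{G}^{(\ell)}(u,C)$ is the one produced below by combining these.

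For the base case $\ell=1$: since $x^{(0)}\neq 0$, each coordinate of $W^{(0)}x^{(0)}+b^{(0)}$ is the sum of a nondegenerate centered Gaussian and an independent symmetric variable, hence has a symmetric, absolutely continuous law; so all these coordinates are almost surely nonzero (property~(1)), while the events $\{(W^{(0)}x^{(0)}+b^{(0)})_i\le 0\}$ are independent, each of probability $\tfrac12$, so that $\PP(x^{(1)}=0)=2^{-N}$. For property~(3), apply the matrix deviation inequality to $\sqrt{N}\,D^{(0)}W^{(0)}$ on the symmetric set $T=\overline{B}_d(0,1)$: with $w(T)\le\sqrt d$ by \Cref{prop:gauss_width} and $\mathrm{rad}(T)\le1$, it gives, with probability at least $1-\exp(-u^2)$, the bound $\sup_{y\in T}\bigl|\,\|\sqrt{N}\,D^{(0)}W^{(0)}y\|_2-\sqrt{N}\,\|y\|_2\,\bigr|\le C(\sqrt d+u)$; here the factor $2$ in the failure probability of \cite[Theorem~3]{Liaw2017} is absorbed into a slightly larger absolute $C$ using $w(T)\le\sqrt d$, $\mathrm{rad}(T)\le1$ and $d\ge1$. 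Dividing by $\sqrt{N}$ and using homogeneity of the Euclidean norm gives property~(3) with exponent $1$, and a union bound over $\{x^{(1)}=0\}$ and the deviation event yields $\PP(\mathcal{G}^{(1)})\ge 1-2^{-N}-\exp(-u^2)$; since the left-hand side is nonnegative, this is the claim for $\ell=1$.

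For the inductive step, assume the bound holds for some $1\le\ell\le L-1$. Since $\mathcal{G}^{(\ell)}$ is determined by $W^{(0)},\dots,W^{(\ell-1)},b^{(0)},\dots,b^{(\ell-1)}$ and forces $x^{(\ell)}\neq 0$, one conditions on a realization of these earlier parameters lying in $\mathcal{G}^{(\ell)}$ and treats $W^{(\ell)},b^{(\ell)}$ as fresh randomness with $x^{(\ell)}$ a fixed nonzero vector. As in the base case, the coordinates of $W^{(\ell)}x^{(\ell)}+b^{(\ell)}$ are then symmetric and absolutely continuous, hence almost surely nonzero (this extends property~(1) to index $\ell$) and $\PP(x^{(\ell+1)}=0\mid\mathcal{G}^{(\ell)})=2^{-N}$ (this controls property~(2)). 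By \Cref{thm: dev_conditions}, $\sqrt{N}\,D^{(\ell)}W^{(\ell)}$ has independent isotropic sub-gaussian rows with $\psi_2$-norm $\le K$, so the matrix deviation inequality applied on the symmetric set $T=\overline{B}_N(0,1)\cap V^{(\ell-1)}$ (with $w(T)\le\sqrt{\dim V^{(\ell-1)}}\le\sqrt d$ and $\mathrm{rad}(T)\le1$) shows that, with conditional probability at least $1-\exp(-u^2)$, $(1-\delta)_+\|v\|_2\le\|D^{(\ell)}W^{(\ell)}v\|_2\le(1+\delta)\|v\|_2$ for all $v\in V^{(\ell-1)}$, where $\delta\defeq C(\sqrt d+u)/\sqrt{N}$ with the same absolute $C$. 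Composing this with property~(3) of $\mathcal{G}^{(\ell)}$ via $M^{(\ell)}y=D^{(\ell)}W^{(\ell)}M^{(\ell-1)}y$ and $M^{(\ell-1)}y\in V^{(\ell-1)}$ yields property~(3) with exponent $\ell+1$. Hence $\mathcal{G}^{(\ell)}$ intersected with the events $\{x^{(\ell+1)}\neq 0\}$, $\{\text{the deviation bound holds}\}$ and $\{\text{no layer-}\ell\text{ pre-activation vanishes}\}$ is contained in $\mathcal{G}^{(\ell+1)}$; a union bound gives $\PP(\mathcal{G}^{(\ell+1)}\mid\mathcal{G}^{(\ell)})\ge 1-2^{-N}-\exp(-u^2)$ and therefore $\PP(\mathcal{G}^{(\ell+1)})\ge\PP(\mathcal{G}^{(\ell)})\cdot\bigl(1-2^{-N}-\exp(-u^2)\bigr)$. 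If $1-2^{-N}-\exp(-u^2)\le 0$ the asserted bound is trivial; otherwise it combines with the inductive hypothesis to give $\PP(\mathcal{G}^{(\ell+1)})\ge\bigl((1-2^{-N}-\exp(-u^2))_+\bigr)^{\ell+1}$.

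The main obstacle is the conditioning bookkeeping rather than any single hard estimate: one has to verify that $\mathcal{G}^{(\ell)}$ is determined by the first $\ell$ layers' parameters, that on $\mathcal{G}^{(\ell)}$ the input $x^{(\ell)}$ to layer $\ell$ is nonzero (which is exactly the hypothesis needed to invoke \Cref{thm: dev_conditions} for $\sqrt{N}\,D^{(\ell)}W^{(\ell)}$), and that the new failure modes at step $\ell$ have conditional probabilities summing to at most $2^{-N}+\exp(-u^2)$, with the \emph{same} absolute constant $C$ working at every layer — this last point is why it matters that the $\psi_2$-bound in \Cref{thm: dev_conditions} is absolute and that $w(T)\le\sqrt d$ is uniform in $\ell$. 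The only genuinely new probabilistic input beyond the cited results is that every fresh pre-activation has a symmetric, absolutely continuous law, which uses both the Gaussian part of $W^{(\ell)}$ and the symmetry of the biases from \Cref{assum:2}; this is precisely the point where \Cref{assum:2}, and not merely \Cref{assum:1}, is needed. Everything else, including the handling of the positive parts in the lower bound, is routine.
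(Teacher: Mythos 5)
Your proposal is correct and follows essentially the same route as the paper's proof: induction over $\ell$, with the three failure events (a vanishing pre-activation, a vanishing layer output, and failure of the almost-isometry) controlled respectively by absolute continuity and symmetry of the pre-activation laws, the $2^{-N}$ symmetry argument, and the matrix deviation inequality applied to $\sqrt{N}D^{(\ell)}W^{(\ell)}$ on the at-most-$d$-dimensional image of the previous layers, followed by a union bound and a conditioning/product argument. The only cosmetic difference is that the paper packages the conditioning step via \Cref{prop:highprob} rather than conditional probabilities, which sidesteps the degenerate case you handle separately.
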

\begin{proof}
The proof is via induction over $\ell$, where the constant $C$ is determined later. We note that for fixed $\ell$, the defining conditions of $\gell$ only depend on the weights $W^{(0)},..., W^{(\ell - 1)}$ and the biases $b^{(0)},..., b^{(\ell - 1)}$. 

We start with the case $\ell = 1$. We denote
\begin{align*}
A(1) &\defeq \left\{(W^{(0)},..., W^{(\ell - 1)}, b^{(0)}, ..., b^{(\ell - 1)}): \ \text{(1) is satisfied}\right\} \\
B(1) &\defeq \left\{(W^{(0)},..., W^{(\ell - 1)}, b^{(0)}, ..., b^{(\ell - 1)}): \ \text{(2) is satisfied}\right\} \\
C(1) &\defeq \left\{(W^{(0)},..., W^{(\ell - 1)}, b^{(0)}, ..., b^{(\ell - 1)}): \ \text{(3) is satisfied}\right\}.
\end{align*}
For every $i \in \{1,...,N\}$, we have, using $\ast$ to denote the convolution of two measures,
\begin{equation*}
\left(W^{(0)} x^{(0)} + b^{(0)} \right)_i \sim \mathcal{N}(0, 2/N \cdot \Vert x^{(0)} \Vert_2^2) \ast \mathcal{D}^{(0)}_i,
\end{equation*}
where the latter is an absolutely continuous and symmetric probability distribution. Note that we assume $x^{(0)} \neq 0$. Hence, we first conclude from the joint independence of the above random variables that
\begin{equation*}
\PP^{(W^{(0)}, b^{(0)})} \left(A(1)\right) = 1.
\end{equation*}
Moreover, we have
\begin{equation*}
(W^{(0)}, b^{(0)}) \notin B(1) \quad \Longleftrightarrow \quad \left(W^{(0)} x^{(0)} + b^{(0)} \right)_i \leq 0 \quad \text{for all } i \in \{1,...,N\}.
\end{equation*}
From the joint independence, the symmetry and the fact that the random variables $\left(W^{(0)} x^{(0)} + b^{(0)} \right)_i$ follow an absolutely continuous distribution for every $i$, we infer
\begin{equation*}
\PP^{(W^{(0)},b^{(0)})} \left(B(1)^c\right) = \frac{1}{2^N}.
\end{equation*}
Moreover, note that $\sqrt{N}D^{(0)}W^{(0)} = \sqrt{N}\diag(W^{(0)}x^{(0)}+ b^{(0)}) W^{(0)}$ is a matrix with independent isotropic sub-gaussian rows according to \Cref{thm: dev_conditions}. Therefore, the high probability version of the matrix deviation inequality (see \cite[Theorem 3]{Liaw2017}) yields
\begin{equation*}
\underset{y \in \overline{B}_d(0,1)}{\sup} \left\vert \Vert \sqrt{N}D^{(0)}W^{(0)}y\Vert_2 - \sqrt{N} \Vert y \Vert_2 \right\vert \leq C (\sqrt{d} + u)
\end{equation*}
with probability at least $ 1-\exp(- u ^2)$, where we employed \Cref{prop:gauss_width}. $C> 0$ is an absolute constant according to \Cref{thm: dev_conditions}. This yields
\begin{equation*}
\left(1 - \frac{C(\sqrt{d} + u)}{\sqrt{N}}\right)_+\Vert y \Vert_2 \leq \Vert D^{(0)}W^{(0)}y \Vert_2 \leq \left( 1+ \frac{C(\sqrt{d} + u)}{\sqrt{N}}\right) \Vert y \Vert_2 \quad \text{ for all } y \in \RR^d
\end{equation*}
with probability at least $1 - \exp(-u^2)$. This gives us $\PP(C(1)^c) \leq \exp(-u^2)$. This gives us in total
\begin{equation*}
\PP^{(W^{(0)},b^{(0)})}\left(\mathcal{G}^{(1)}\right) \geq 1 - \PP\left(A(1)^c\right)-\PP\left(B(1)^c\right)-\PP\left(C(1)^c\right) \geq 1 - 0 - \frac{1}{2^N} - \exp(-u^2).
\end{equation*}

Fix $1 \leq \ell < L$, set $V' \defeq  D^{(\ell-1)} W^{(\ell-1)} \cdots D^{(0)} W^{(0)}$ and write $\overset{\rightarrow}{W} \defeq (W^{(0)}, ..., W^{(\ell - 1)})$ for the tuple of the first $\ell$ weight matrices and $\overset{\rightarrow}{b} \defeq (b^{(0)}, ..., b^{(\ell - 1)})$ for the tuple of the first $\ell$ bias vectors and assume by induction that
\begin{equation} \label{eq: ind_hyp}
\PP\left((\arrow{W}, \arrow{b}) \in \gell\right) \geq  \left(\left(1 - \frac{1}{2^N} - \exp(-u^2)\right)_+\right)^\ell.
\end{equation} 
 Furthermore, we write
\begin{align*}
&A(\arrow{W}, \arrow{b}) \defeq \left\{ (W^{(\ell)}, b^{(\ell)}) : \ \left(W^{(\ell)}x^{(\ell)} + b^{(\ell)}\right)_i \neq 0 \text{ for all } i \in \{1,...,N\}\right\} \\
 &B(\arrow{W}, \arrow{b}) \defeq \left\{(W^{(\ell)}, b^{(\ell)}): \ x^{(\ell + 1)} = \relu(W^{(\ell)}x^{(\ell)} + b^{(\ell)}) \neq 0\right\} \\
 & C (\overset{\rightarrow}{W}, \overset{\rightarrow}{b}) \defeq \\
 &\left\{ (W^{(\ell )}, b^{(\ell)}):  \! \left(1\! - \!\frac{C(\sqrt{d}\! + \! u)}{\sqrt{N}}\right)_+ \! \Vert V'y \Vert_2 \!\leq \!\Vert D^{(\ell)}W^{(\ell)}V'y\Vert_2 \!\leq\! \left( 1\!+ \!\frac{C(\sqrt{d} \!+ \!u)}{\sqrt{N}}\right)\Vert V'y \Vert_2 \  \text{for all } \! y \! \in \! \RR^d\right\} 
\end{align*}
for each $\overset{\rightarrow}{W}, \overset{\rightarrow}{b}$, where we note that $V'$ is a function of only $\arrow{W}$ and $\arrow{b}$ (and $x^{(0)}$, which is fixed).
Then we see that
\begin{equation*} 
(\arrow{W}, \arrow{b}) \in \gell, \ (W^{(\ell)}, b^{(\ell)}) \in A(\arrow{W}, \arrow{b}) \cap B(\arrow{W}, \arrow{b}) \cap C(\arrow{W}, \arrow{b}) \quad \Longrightarrow \quad (\arrow{W}, W^{(\ell)}, \arrow{b}, b^{(\ell)}) \in \gelll.
\end{equation*}
In view of \Cref{prop:highprob}, we thus seek to bound 
\begin{equation*}
\PP^{(W^{(\ell)}, b^{(\ell)})} \left(A (\overset{\rightarrow}{W}, \overset{\rightarrow}{b}) \cap B(\arrow{W}, \arrow{b}) \cap C(\arrow{W}, \arrow{b})\right)
\end{equation*}
 from below, where $ (\overset{\rightarrow}{W}, \overset{\rightarrow}{b}) \in \gell$ is fixed. 
 
 To this end, we consider the sets $A(\arrow{W}, \arrow{b}), B(\arrow{W}, \arrow{b}), C(\arrow{W}, \arrow{b})$ individually. Note that the vector $x^{(\ell)}$ is fixed and from $(\arrow{W}, \arrow{b}) \in \gell$ we infer $x^{(\ell)} \neq 0$. Hence,
 \begin{equation*}
 \left(W^{(\ell)} x^{(\ell)} + b^{(\ell)} \right)_i \sim \mathcal{N}(0, 2/N \cdot \Vert x^{(\ell)} \Vert_2^2) * \mathcal{D}^{(\ell)}_i
 \end{equation*}
 for every $i \in \{1,...,N\}$, where the latter is again a symmetric and (absolutely) continuous probability distribution. Similar to the case $\ell = 1$, this gives us 
 \begin{equation*}
 \PP^{(W^{(\ell)}, b^{(\ell)})} \left(A(\arrow{W}, \arrow{b})\right) = 1.
 \end{equation*}
 Moreover, as in the case $\ell = 1$, we get 
\begin{equation*}
(W^{(\ell)}, b^{(\ell)}) \notin B(\arrow{W}, \arrow{b}) \quad \Longleftrightarrow \quad \left(W^{(\ell)} x^{(\ell)} + b^{(\ell)} \right)_i \leq 0 \quad \text{for all } i \in \{1,...,N\}.
\end{equation*}
 Again, from the joint independence, the symmetry and the absolute continuity of the distribution of the random variables 
 \begin{equation*}
 \left(W^{(\ell)} x^{(\ell)} + b^{(\ell)} \right)_i , 
 \end{equation*}
 we get
 \begin{equation*}
 \PP^{(W^{(\ell)}, b^{(\ell)})} \left(B(\arrow{W}, \arrow{b})^c\right) = \frac{1}{2^N}.
 \end{equation*}
 Lastly, according to \Cref{thm: dev_conditions}, the matrix
\begin{equation*}
\sqrt{N} D^{(\ell)}W^{(\ell)} = \sqrt{N} \diag(W^{(\ell)}x^{(\ell)} + b^{(\ell)})W^{(\ell)}
\end{equation*}
has independent, sub-gaussian, isotropic rows. Hence, we may again apply the high-probability version of the matrix deviation inequality (\cite[Theorem 3]{Liaw2017}) and obtain
\begin{equation*}
\underset{y \in \IM(V') \cap \overline{B}_N(0,1)}{\sup} \left\vert \sqrt{N} \Vert D^{(\ell)}W^{(\ell)}y\Vert_2 - \sqrt{N} \Vert y \Vert_2\right\vert \leq C (\sqrt{d} + u) 
\end{equation*}
with probability at least $ 1 - \exp(-u^2)$, where we again used \Cref{prop:gauss_width}, noting that the subspace $\IM(V')$ is at most $d$-dimensional since
\begin{equation*}
V' =  D^{(\ell-1)} W^{(\ell-1)} \cdot \cdot \cdot D^{(0)} W^{(0)} \in \RR^{N \times d}.
\end{equation*}
But this directly implies
\begin{equation*}
\left(1- \frac{C(\sqrt{d} + u)}{\sqrt{N}}\right)_+ \Vert y \Vert_2 \leq \Vert D^{(\ell)}W^{(\ell)}y \Vert_2 \leq \left(1+ \frac{C(\sqrt{d} + u)}{\sqrt{N}}\right) \Vert y \Vert_2 \quad \forall y \in \IM(V')
\end{equation*}
with probability at least $1 - \exp(-u^2)$, which means
\begin{equation*}
\PP^{(W^{(\ell)}, b^{(\ell)})} (C (\overset{\rightarrow}{W}, \overset{\rightarrow}{b})) \geq 1-  \exp(-u^2).
\end{equation*}
Hence, we get
\begin{align*}
&\norel\PP^{(W^{(\ell)}, b^{(\ell)})} \left(A (\overset{\rightarrow}{W}, \overset{\rightarrow}{b}) \cap B(\arrow{W}, \arrow{b}) \cap C(\arrow{W}, \arrow{b})\right) \\
&\geq 1 -  \PP^{(W^{(\ell)}, b^{(\ell)})}\left(A (\overset{\rightarrow}{W}, \overset{\rightarrow}{b})^c\right) - \PP^{(W^{(\ell)}, b^{(\ell)})}\left(B (\overset{\rightarrow}{W}, \overset{\rightarrow}{b})^c\right) - \PP^{(W^{(\ell)}, b^{(\ell)})}\left(C (\overset{\rightarrow}{W}, \overset{\rightarrow}{b})^c\right) \\
&\geq 1 - 0 - \frac{1}{2^N} - \exp(-u^2) = 1 - \frac{1}{2^N} -\exp(-u^2)
\end{align*}
for every fixed $(\arrow{W}, \arrow{b}) \in \gell$.
Using \Cref{prop:highprob}, we obtain
\begin{align*}
\PP \left(\gelll \right) \geq \left(1 - \frac{1}{2^N} -\exp(-u^2)\right)_+ \cdot \PP^{(\arrow{W}, \arrow{b})}(\gell) \geq \left(\left(1 - \frac{1}{2^N} -\exp(-u^2)\right)_+\right)^{\ell + 1},
\end{align*}
as was to be shown. Here, we applied the $+$-operator since probabilities are non-negative. 
\end{proof}
To finalize our result, we introduce randomness in $W^{(L)}$ as well and use an argument based on the singular value decomposition of the product $D^{(L-1)}W^{(L-1)} \cdots D^{(0)}W^{(0)}$.
\begin{proposition} \label{prop:grad_lower}
Let $\Phi$ and $W^{(0)},..., W^{(L)}$ as well as $b^{(0)},..., b^{(L)}$ as in \Cref{assum:2} and let $x_0 \defeq x^{(0)} \in \RR^d$ be fixed. Let $D^{(\ell)} \defeq D^{(\ell)}(x_0)$ for $0 \leq \ell \leq L-1$ with $D^{(\ell)}(x_0)$ as in \Cref{subsec:gradient}. We let $k \defeq \min  \{N,d\}, \  u \geq 0, C> 0$ and $t \geq 0$ and define the event $\mathcal{A} = \mathcal{A}(u,t,C)$ via the properties 
\begin{enumerate}
\item{\begin{align*}
\left(\left(1 - \frac{C\cdot (\sqrt{d} + u)}{\sqrt{N}}\right)_+\right)^{L} \cdot (\sqrt{k} - t) &\leq\Vert W^{(L)} D^{(L-1)} W^{(L-1)} \cdot \cdot \cdot D^{(0)} W^{(0)} \Vert_2 \\
&\leq \left(1 + \frac{C\cdot(\sqrt{d} + u)}{\sqrt{N}}\right)^{L} \cdot (\sqrt{k} + t),
\end{align*} }
\item{$\Phi$ is differentiable at $x_0$ with 
\begin{equation*}
\left(\Phi(x_0)\right)^T =W^{(L)} \cdot D^{(L-1)} \cdot W^{(L-1)}\cdots D^{(0)} \cdot W^{(0)}.
\end{equation*}}
\end{enumerate}
Then there exist absolute constants $C, c_1 > 0$ such that 
\begin{equation*}
\PP\left(\mathcal{A}\right) = \PP\left(\mathcal{A}(u,t,C)\right) \geq \left(\left(1 - \frac{1}{2^N}-\exp(-u^2)\right)_+\right)^L \cdot (1- 2 \exp (-c_1 t^2))_+
\end{equation*}
for every $u,t \geq 0$.
\end{proposition}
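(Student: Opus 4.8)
The plan is to condition on the weight matrices $W^{(0)},\dots,W^{(L-1)}$ and the bias vectors $b^{(0)},\dots,b^{(L-1)}$, keeping only $W^{(L)}$ random, and to combine \Cref{prop:gell} with the concentration of the Euclidean norm of a Gaussian vector. Write $\arrow{W} \defeq (W^{(0)},\dots,W^{(L-1)})$ and $\arrow{b} \defeq (b^{(0)},\dots,b^{(L-1)})$, and let $C>0$ be the absolute constant provided by \Cref{prop:gell}. One uses the event $\mathcal{G}^{(L)}=\mathcal{G}^{(L)}(u,C)$ from that proposition (the case $\ell=L$), which depends only on $(\arrow{W},\arrow{b})$ and satisfies $\PP\big((\arrow{W},\arrow{b})\in\mathcal{G}^{(L)}\big)\ge\big((1-2^{-N}-\exp(-u^2))_+\big)^{L}$. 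On $\mathcal{G}^{(L)}$ two facts hold. First, by property~(1) of $\mathcal{G}^{(L)}$ all pre-activations $(W^{(\ell')}x^{(\ell')}+b^{(\ell')})_i$ with $0\le\ell'\le L-1$ are non-zero; since each $\relu$ is then affine in a neighbourhood of the corresponding pre-activation, $\Phi$ agrees with an affine map on a neighbourhood of $x_0$, so $\Phi$ is differentiable at $x_0$ and the chain rule yields $(\nabla\Phi(x_0))^T = W^{(L)}D^{(L-1)}W^{(L-1)}\cdots D^{(0)}W^{(0)}$, which is property~(2) of $\mathcal{A}$ (cf.\ \Cref{prop:grad_relu} and \Cref{app:diff}). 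Second, setting $m\defeq(1-C(\sqrt{d}+u)/\sqrt{N})_+$ and $M\defeq 1+C(\sqrt{d}+u)/\sqrt{N}$, property~(3) of $\mathcal{G}^{(L)}$ states exactly that $V\defeq D^{(L-1)}W^{(L-1)}\cdots D^{(0)}W^{(0)}\in\RR^{N\times d}$ satisfies $m^{L}\|y\|_2\le\|Vy\|_2\le M^{L}\|y\|_2$ for all $y\in\RR^d$; consequently every singular value $\sigma_i$ of $V$ obeys $\sigma_i\le M^{L}$, and also $\sigma_i\ge m^{L}$ whenever $m>0$ (which forces $N\ge d$).

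Next I would bring in the randomness of $W^{(L)}$. Fix $(\arrow{W},\arrow{b})\in\mathcal{G}^{(L)}$ and take a singular value decomposition $V=U\Sigma Q^T$ with $U\in\RR^{N\times N}$ and $Q\in\RR^{d\times d}$ orthogonal and $\Sigma\in\RR^{N\times d}$ "diagonal". Since $W^{(L)}$ has i.i.d.\ $\mathcal{N}(0,1)$ entries and is independent of $(\arrow{W},\arrow{b})$, rotation invariance of the standard Gaussian (\cite[Proposition~3.3.2]{vershynin_high-dimensional_2018}) gives $g\defeq U^T(W^{(L)})^T\sim\mathcal{N}(0,I_N)$, whence
\[
\big\|W^{(L)}V\big\|_2^2 \;=\; \big\|\Sigma^T U^T(W^{(L)})^T\big\|_2^2 \;=\; \sum_{i=1}^{k}\sigma_i^2\,g_i^2, \qquad k\defeq\min\{N,d\},
\]
and the bounds on the $\sigma_i$ show this lies between $m^{2L}\sum_{i=1}^{k}g_i^2$ and $M^{2L}\sum_{i=1}^{k}g_i^2$. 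Since $(g_1,\dots,g_k)\sim\mathcal{N}(0,I_k)$, the concentration of the norm of a Gaussian vector (\cite[Theorem~3.1.1~\&~Equation~(2.14)]{vershynin_high-dimensional_2018}) furnishes an absolute constant $c_1>0$ such that, conditionally on $(\arrow{W},\arrow{b})$,
\[
(\sqrt{k}-t)_+ \;\le\; \Big(\sum_{i=1}^{k}g_i^2\Big)^{1/2} \;\le\; \sqrt{k}+t
\]
with probability at least $(1-2\exp(-c_1 t^2))_+$. Taking square roots above and using $m^{L}(\sqrt{k}-t)\le m^{L}(\sqrt{k}-t)_+$, this event forces property~(1) of $\mathcal{A}(u,t,C)$.

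Finally, I would assemble the pieces through the conditioning lemma \Cref{prop:highprob}: the event $\mathcal{G}^{(L)}$ depends only on $(\arrow{W},\arrow{b})$, and for each fixed $(\arrow{W},\arrow{b})\in\mathcal{G}^{(L)}$ the above event on the randomness of $W^{(L)}$ has conditional probability at least $(1-2\exp(-c_1 t^2))_+$ and, together with $\mathcal{G}^{(L)}$, implies $\mathcal{A}(u,t,C)$ (property~(2) being already guaranteed by $\mathcal{G}^{(L)}$, property~(1) by the Gaussian estimate). Hence
\[
\PP(\mathcal{A}) \;\ge\; \PP(\mathcal{G}^{(L)})\cdot(1-2\exp(-c_1 t^2))_+ \;\ge\; \big((1-2^{-N}-\exp(-u^2))_+\big)^{L}\,(1-2\exp(-c_1 t^2))_+,
\]
which is the assertion.

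I expect the main obstacle to be the linear-algebra bookkeeping in the second paragraph: one must be sure that exactly $k=\min\{N,d\}$ independent Gaussian coordinates survive in $\Sigma^T U^T(W^{(L)})^T$, that the two-sided singular-value bounds from property~(3) of $\mathcal{G}^{(L)}$ transfer to all of them (with the degenerate case $m=0$, where the lower bound is vacuous and $N$ need not exceed $d$, handled separately), and that the various $(\,\cdot\,)_+$-truncations are aligned so that the non-truncated lower bound in property~(1) of $\mathcal{A}$ falls out. The probabilistic ingredients --- rotation invariance of the Gaussian and Gaussian norm concentration --- are entirely standard.
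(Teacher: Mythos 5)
Your proposal is correct and follows essentially the same route as the paper's proof: condition on $\mathcal{G}^{(L)}$ from \Cref{prop:gell} (which yields differentiability and the two-sided singular-value bounds), take an SVD of $V'=D^{(L-1)}W^{(L-1)}\cdots D^{(0)}W^{(0)}$, use rotation invariance to reduce $\|W^{(L)}V'\|_2$ to the norm of a $k$-dimensional standard Gaussian, apply Gaussian norm concentration, and assemble via \Cref{prop:highprob}. The only cosmetic difference is that you sandwich $\sum_i\sigma_i^2 g_i^2$ directly between $m^{2L}\sum_i g_i^2$ and $M^{2L}\sum_i g_i^2$, whereas the paper bounds it by $\sigma_k\|W'\|_2$ and $\sigma_1\|W'\|_2$ before invoking the almost-isometry; these are the same estimate.
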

\begin{proof}
We again denote $V' \defeq D^{(L-1)}W^{(L-1)}\cdot \cdot \cdot D^{(0)}W^{(0)}$. We decompose $V' = U \Sigma Q^T$ with orthogonal matrices $U \in \RR^{N \times N}$ and $Q \in \RR^{d \times d}$ and a matrix $\Sigma \in \RR^{N \times d}$ of the form 
\begin{equation*}
\Sigma = \left(\begin{matrix} \sigma_1 & & \\
				& \ddots & \\
				& & \sigma_d \\
				\hline 
				0 & \hdots & 0 \\
				\vdots & \ddots & \vdots \\
				0 & \hdots & 0  \end{matrix} \right)  \quad \text{if $N \geq d$} \qquad \text{or} \qquad \Sigma = \left(\begin{array}{ccc|ccc} \sigma_1 & &  & 0 & \hdots & 0\\
				& \ddots &  & \vdots & \ddots & \vdots \\
				& & \sigma_N & 0 & \hdots & 0
				 \end{array} \right) \quad \text{if $N \leq d$}
\end{equation*}
with $\sigma_1 \geq ... \geq \sigma_k \geq 0$ (singular value decomposition). Hence, recalling that $W^{(L)} \in \RR^{1 \times N}$, we get
\begin{align}
	\Vert W^{(L)} V'\Vert_2  &= \Vert  W^{(L)} U \Sigma Q^T \Vert_2 = \Vert W^{(L)} U \Sigma \Vert _2 = \left\Vert \left( W^{(L)}U \Sigma\right)^T \right\Vert_2 \nonumber\\ 
	&= \sqrt{ \sum_{i= 1}^k \left(\sigma_i \cdot \left(W^{(L)}U\right)_i\right)^2}
	\label{eq: firstbound} \ 
	 \begin{cases} \leq \sigma_1 \cdot \Vert W' \Vert_2, \\  \geq \sigma_k \cdot \Vert W' \Vert_2 \end{cases}
\end{align}
with $W' \in \RR^{1 \times k}$ denoting the vector of the first $k$ entries of $W^{(L)}U$. 

Let $C>0$ be the absolute constant from \Cref{prop:gell}. We denote $\overset{\rightarrow}{W} \defeq (W^{(0)}, ..., W^{(L-1)})$ and $\overset{\rightarrow}{b} \defeq (b ^{(0)}, ..., b^{(L-1)})$ and fix $u,t \geq0$. Furthermore, let
\begin{align*}
\mathcal{B}(\overset{\rightarrow}{W}, \overset{\rightarrow}{b})\defeq \left\{ W^{(L)}: \ \sqrt{d} + t \geq \Vert W'(\arrow{W}, \arrow{b}, W^{(L)}) \Vert_2 \geq \sqrt{d} - t\right\},
\end{align*}
where we wrote $W'(\arrow{W}, \arrow{b}, W^{(L)})$ to emphasize the dependence of $W'$ on $\arrow{W}, \arrow{b}$ and $W^{(L)}$.

Note that for $(\arrow{W}, \arrow{b}) \in \mathcal{G}^{(L)}$ (where $\mathcal{G}^{(L)}$ is as in \Cref{prop:gell}), property $(2)$ follows directly and independent of the choice of $W^{(L)}$ and $b^{(L)}$. This is due to defining property (1) in \Cref{prop:gell} and the fact that the $\relu$ is differentiable on $\RR \setminus \{0\}$. Hence, from \eqref{eq: firstbound} we infer
\begin{equation} \label{eq: secondbound}
(\arrow{W}, \arrow{b}) \in \mathcal{G}^{(L)}, \ W^{(L)} \in \mathcal{B}(\arrow{W}, \arrow{b}) \quad \Longrightarrow \quad (\arrow{W}, \arrow{b}, W^{(L)}) \in \mathcal{A},
\end{equation}
where we also applied \cite[Equation (4.5)]{vershynin_high-dimensional_2018}.
 From \Cref{prop:gell} we deduce 
\begin{equation} \label{eq: probbound}
\PP^{(\arrow{W}, \arrow{b})} (\mathcal{G}^{(L)}) \geq \left(\left(1- \frac{1}{2^N}-\exp(-u^2)\right)_+\right)^L.
\end{equation}
Furthermore, for fixed $\arrow{W}$ and $\arrow{b}$, the rotation invariance of the Gaussian distribution \cite[Proposition 3.3.2]{vershynin_high-dimensional_2018} implies
\begin{equation*}
W^{(L)} U \sim W^{(L)}.
\end{equation*}
Therefore, $W'$ is a $k$-dimensional random vector with $(W')^T \sim \mathcal{N}(0,I_k)$. Thus, \cite[Theorem 3.1.1]{vershynin_high-dimensional_2018} yields
\begin{equation*}
\left\Vert \Vert W'\Vert_2 - \sqrt{k}\right\Vert_{\psi_2} \leq C_2 \quad \text{(conditioned on $\arrow{W},\arrow{b}$)}
\end{equation*}
with an absolute constant $C_2 > 0$. 
From \cite[Equation (2.14)]{vershynin_high-dimensional_2018} we get
\begin{equation*}
\PP^{W^{(L)}} \left(\left\vert \Vert W' \Vert_2 - \sqrt{k}\right\vert \geq t\right) \leq 2 \exp (-c_1t^2) \quad \text{for fixed } \arrow{W}, \arrow{b}
\end{equation*}
with an absolute constant $c_1 > 0$,
and hence
\begin{equation*} 
\PP^{W^{(L)}} \left(\sqrt{k} + t \geq \Vert W'\Vert_2 \geq (\sqrt{k} - t)_+ \right) \geq 1 - 2 \exp(-c_1 t^2) \quad \text{for fixed } \arrow{W}, \arrow{b}.
\end{equation*}

From \Cref{prop:highprob} and \eqref{eq: secondbound}, we see 
\begin{align*}
\PP^{(\arrow{W}, \arrow{b}, W^{(L)})}(\mathcal{A}) &\geq (1 - 2 \exp(-c_1 t^2))_+ \cdot \PP^{(\arrow{W}, \arrow{b})} (\mathcal{G}^{(L)}) \\
\overset{\eqref{eq: probbound}}&{\geq} (1 - 2 \exp(-c_1 t^2))_+ \cdot \left(\left(1 -  \frac{1}{2^N}-\exp(-u^2)\right)_+\right)^L,
\end{align*}
as was to be shown.
\end{proof}
We remark that the previous result, in addition to providing a lower bound on the Lipschitz constant of random neural networks, is of independent interest on its own,
since it provides a lower and \emph{upper} bound on the gradient of a random ReLU network at a fixed point $x_0\neq 0$. 
It is an interesting question whether this \emph{pointwise} estimate can be used to get a uniform estimate that holds for every point $x^{(0)}$,
thus yielding an \emph{upper} bound on the Lipschitz constant as well.

Moreover, we note that one can even show that a random $\relu$ network $\Phi$ is \emph{almost surely} differentiable with 
\begin{equation*}
\nabla \Phi (x_0)^T = W^{(L)} \cdot D^{(L-1)} \cdots D^{(0)} \cdot W^{(0)}
\end{equation*} 
at any fixed point $x_0 \neq 0$ (and not only with high probability as stated in \Cref{prop:grad_lower}). 
A proof of this fact (which we expect to be of independent interest) is contained in \Cref{app:diff}.

\Cref{prop:grad_lower} and \Cref{eq:upbound} directly give us the following lower bound on the Lipschitz constant of random ReLU networks.
\begin{theorem}\label{thm:low_bound_ut}
There exist absolute constants $C, c_1 > 0$ with the following property: If $\Phi:\RR^d \to \RR$ is a random ReLU network of width $N$ and with $L$ hidden layers according to the random initialization as described in \Cref{assum:2}, then for any $u,t \geq 0$, writing $k \defeq \min\{d,N\}$, it holds
\begin{equation*}
\lip(\Phi) \geq \left(\left(1 - \frac{C\cdot(\sqrt{d} + u)}{\sqrt{N}}\right)_+\right)^{L} \cdot (\sqrt{k} - t) 
\end{equation*}
with probability at least $ \left(\left(1 - \frac{1}{2^N}-\exp(-u^2)\right)_+\right)^L \cdot (1- 2 \exp (-c_1 t^2))_+$.
\end{theorem}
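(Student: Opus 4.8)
The plan is to derive the statement directly from \Cref{prop:grad_lower} and the pointwise lower bound \eqref{eq:upbound}, so there is very little left to do. First I would fix an arbitrary nonzero point $x_0 \defeq x^{(0)} \in \RR^d \setminus \{0\}$ --- any choice works, e.g.\ $x_0 \defeq e_1$ --- and let $D^{(0)}, \dots, D^{(L-1)}$ denote the associated matrices from \Cref{subsec:gradient}. Let $C, c_1 > 0$ be the absolute constants provided by \Cref{prop:grad_lower}, and for given $u, t \geq 0$ consider the event $\mathcal{A} = \mathcal{A}(u,t,C)$ defined there, which by that proposition has probability at least $\left(\left(1 - \frac{1}{2^N} - \exp(-u^2)\right)_+\right)^L \cdot (1 - 2\exp(-c_1 t^2))_+$.

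On the event $\mathcal{A}$, property~(2) gives that $\Phi$ is differentiable at $x_0$ with $(\nabla \Phi(x_0))^T = W^{(L)} D^{(L-1)} W^{(L-1)} \cdots D^{(0)} W^{(0)}$; since the right-hand side is a row vector, its operator norm coincides with its Euclidean norm, so $\Vert \nabla \Phi(x_0) \Vert_2 = \Vert W^{(L)} D^{(L-1)} W^{(L-1)} \cdots D^{(0)} W^{(0)} \Vert_2$. Combining \eqref{eq:upbound} (equivalently, \Cref{prop:grad_relu} together with \Cref{prop:lipgrad}) with property~(1) of $\mathcal{A}$ then yields, with $k \defeq \min\{d,N\}$,
\begin{equation*}
\lip(\Phi) \geq \Vert \nabla \Phi(x_0) \Vert_2 \geq \left(\left(1 - \frac{C \cdot (\sqrt{d} + u)}{\sqrt{N}}\right)_+\right)^{L} \cdot (\sqrt{k} - t).
\end{equation*}
Since this holds on $\mathcal{A}$ and $\PP(\mathcal{A})$ is bounded below as above, the theorem follows for every $u, t \geq 0$.

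I do not expect any real obstacle at this point: all of the substantive work has already been done in \Cref{prop:gell} and \Cref{prop:grad_lower} --- namely that $\sqrt{N}\, D^{(\ell)} W^{(\ell)}$ has independent, isotropic, sub-gaussian rows (\Cref{thm: isotropic_rows} and \Cref{thm: dev_conditions}), the layerwise induction controlling the partial products $D^{(\ell-1)} W^{(\ell-1)} \cdots D^{(0)} W^{(0)}$ via the matrix deviation inequality, and the singular-value-decomposition argument used to reintroduce the randomness in $W^{(L)}$. The only point that deserves a word of caution is that the bound concerns a \emph{single} fixed point $x_0$: we are free to pick any nonzero $x_0$ since the probability bound is independent of this choice, and we make no uniformity claim over $x_0$ (such uniformity would be needed for a matching \emph{upper} bound, but is not required for this lower bound).
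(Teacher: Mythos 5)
Your proposal is correct and matches the paper's argument exactly: the paper likewise obtains \Cref{thm:low_bound_ut} by applying \Cref{prop:grad_lower} at a fixed nonzero point and invoking \eqref{eq:upbound}, with all the substantive work already contained in \Cref{prop:gell,prop:grad_lower}. Your remarks about choosing $x_0 \neq 0$ and about the absence of any uniformity claim over $x_0$ are also consistent with the paper's treatment.
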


 By plugging in special values for $t$ and $u$ and assuming $N \gtrsim dL^2$, we can now prove the main result.

\renewcommand*{\proofname}{Proof of \Cref{thm:main_3}}
\begin{proof}
Let $\widetilde{C}$ and $\widetilde{c_1}$ be the relabeled constants from \Cref{thm:low_bound_ut}. We can clearly assume $\widetilde{C} \geq 1$. We define the new constants $C \defeq (4\widetilde{C} )^2$ and $c_1 \defeq \widetilde{c_1}/4$. We assume $N \geq C \cdot d \cdot L^2$ and let $u = \frac{\sqrt{N}}{4\widetilde{C}L}$ and $t = \sqrt{d}/2.$ Note that $N \geq CdL^2$ is equivalent to
\begin{equation*}
\sqrt{d} \leq \frac{\sqrt{N}}{4\widetilde{C}L}.
\end{equation*}
We get
\begin{equation*}
\widetilde{C} \cdot \left(\sqrt{d} + u\right) =\widetilde{C} \cdot \left(\sqrt{d} + \frac{\sqrt{N}}{4\widetilde{C}L}\right) \leq \widetilde{C} \cdot \left(\frac{\sqrt{N}}{4\widetilde{C}L} + \frac{\sqrt{N}}{4\widetilde{C}L}\right) = \frac{\sqrt{N}}{2L} \leq \sqrt{N}
\end{equation*}
and hence
\begin{equation*}
1 - \frac{\widetilde{C}\cdot (\sqrt{d} + u)}{\sqrt{N}} \geq 0.
\end{equation*}
Moreover, $N \geq C \cdot d \cdot L^2 \geq d$ and thus $k = \min \{N,d\} = d$. 
Therefore, \Cref{thm:low_bound_ut} yields
\begin{equation*}
\lip(\Phi)\geq \left(1 - \frac{\widetilde{C}(\sqrt{d}+u) }{\sqrt{N}}\right)^{L} \cdot \frac{1}{2} \cdot \sqrt{d}
\end{equation*}
with probability at least 
\begin{align*}
&\norel\left(\left(1 -  \frac{1}{2^N}-\exp(-u^2)\right)_+\right)^L \cdot (1- 2 \exp (-\widetilde{c_1} d / 4))_+\\
&= \left(1 - \frac{1}{2^N}-\exp(-N/(CL^2))\right)^L \cdot (1- 2 \exp (-c_1 d ))_+.
\end{align*} 
Here, we implicitly used that 
\begin{equation*}
1 - \frac{1}{2^N}-\exp(-N/(CL^2)) \geq 1 - \frac{1}{2^N} - \exp(-d) \geq 0,
\end{equation*}
where we employed $N \geq CdL^2$.
Moreover, note that 
\begin{equation*}
\frac{\widetilde{C}\cdot (\sqrt{d} + u)}{\sqrt{N}} = \widetilde{C} \cdot \left(\frac{\sqrt{d}}{\sqrt{N}} + \frac{1}{4\widetilde{C}L}\right) \leq \widetilde{C} \cdot \left(\frac{1}{4\widetilde{C}L}+\frac{1}{4\widetilde{C}L}\right) = \frac{1}{2L},
\end{equation*}
which yields
\begin{equation*}
 \left(1 - \frac{\widetilde{C}\cdot (\sqrt{d}+u) }{\sqrt{N}}\right)^{L} \geq \left( 1 - \frac{1}{2L}\right)^L \geq 1/2 
\end{equation*}
for every $L \in \NN$, which follows from Bernoulli's inequality. This gives us (1).

For the expectation bound, note that by Markov's inequality we have
\begin{equation*}
\EE[\lip(\Phi)] \geq \PP(\lip(\Phi) \geq \sqrt{d}/4) \cdot \frac{\sqrt{d}}{4} \geq \left(\left(1-\frac{1}{2^N}-\exp(-u^2)\right)^L\cdot (1-2\exp(-c_1d))\right)\cdot \frac{\sqrt{d}}{4}.
\end{equation*}
First note that there is an absolute constant $c_3>0$ with 
\begin{equation*}
1-2\exp(-c_1d) \geq c_3
\end{equation*}
for every $d \geq \left\lfloor \frac{\ln(2)}{c_1}\right\rfloor + 1$. Therefore, it remains to find a uniform bound for 
\begin{equation*}
\left(1-\frac{1}{2^N}-\exp(-u)\right)^L.
\end{equation*}
We apply Bernoulli's inequality and $L \leq N$ to obtain
\begin{align*}
\left(1-\frac{1}{2^N}-\exp(-u^2)\right)^L &\geq 1 - \frac{L}{2^N} - L\exp(-u^2)\geq 1 - \frac{N}{2^N} - L\exp \left(-N/(CL^2)\right) \\
& \geq\frac{1}{2} - L\exp(-N/(CL^2)).
\end{align*}
If we now assume $N \geq CL^2 \ln(4L)$, we get
\begin{equation*}
\exp(-N/(CL^2)) \leq \exp(-CL^2\ln(4L)/(CL^2)) = \exp(- \ln(4L)) = \frac{1}{4L}.
\end{equation*}
Hence, the claim follows letting $c_2 \defeq \frac{c_3}{16}$.
\end{proof}
\renewcommand*{\proofname}{Proof}

\appendix
\section{Postponed proofs for the preliminary results} \label{sec:prelim_proofs}
In this appendix, we provide the postponed proofs for the preliminary results (see \Cref{sec:preliminaries}). Specifically, we prove \Cref{prop:lipgrad,prop:covering_ball,prop:vc_half_spaces_2,prop:gauss_width,prop:Cbound,prop:not_working_deep}. 

We start with the proof of \Cref{prop:lipgrad}. This proposition is similar to other statements in the literature and is probably folklore. However, for the sake of completeness, we include a proof, as we could not locate a convenient reference.
\renewcommand*{\proofname}{Proof of \Cref{prop:lipgrad}}
\begin{proof}
By \cite[Section~4.2.3]{evans_measure_1992} it holds $f \in W^{1, \infty}_{\text{loc}}(\RR^d)$. Here, $f \in W^{1, \infty}_{\text{loc}}(\RR^d)$ denotes the set of functions that are locally (on every open bounded set $U \subseteq \RR^d$) in the Sobolev space $W^{1, \infty}$. Then, \cite[Theorem~1~in~Section~6.2]{evans_measure_1992} implies that there exists a nullset $N \subseteq \RR^d$ such that for every $x \in \RR^d \setminus N$, $f$ is differentiable at $x$, and the partial derivatives agree with the weak partial derivatives on $\RR^d \setminus N$. For $i \in \{1,...,d\}$ let $\widetilde{\partial}_i f$ be an explicit weak $i$-th partial derivative of $f$ satisfying 
\begin{equation*}
\widetilde{\partial}_i f = \partial_i f \quad \text{on }\RR^d \setminus N
\end{equation*}
where $\partial_i$ denotes the usual $i$-th partial derivative. We further write
\begin{equation*}
\widetilde{\nabla} f \defeq \left(\widetilde{\partial}_1 f, ..., \widetilde{\partial}_d f\right)^T.
\end{equation*}

Now, let $\varphi \in C_c^\infty (\RR^d)$ be a smooth function with compact support satisfying $\varphi \geq 0$ and furthermore $\int_{\RR^d} \varphi(x) \ \dd x = 1$. See, e.g., \cite[Section~4.2.1]{evans_measure_1992} for an explicit example of such a function. For $\eps > 0$ let $\varphi_\eps (x) \defeq \eps^{-d} \varphi(x / \eps)$. Since $f$ is continuous, the convolution $f_\eps \defeq f \ast \varphi_\eps$ converges pointwise (even locally uniformly) to $f$ as $\eps \to 0$; see \cite[Theorem~1~in~Section~4.2]{evans_measure_1992}. The same theorem also shows that $f_\eps \in C^\infty (\RR^d)$ with 
\begin{equation*}
\partial_i f_\eps = (\widetilde{\partial}_i f) \ast \varphi_\eps.
\end{equation*}
Hence, since $M \subseteq \RR^d$ is a set of full measure, and $N$ is a null-set,
\begin{align*}
\Vert \nabla f_\eps (x) \Vert_2 &= \left\Vert (\widetilde{\nabla}f \ast \varphi_\eps)(x)\right\Vert_2 = \left\Vert \int_{\RR^d} \widetilde{\nabla} f (y) \varphi_\eps (x-y)\ \dd y\right\Vert_2 = \left\Vert \int_{M \setminus N} \widetilde{\nabla} f (y) \varphi_\eps (x-y)\ \dd y\right\Vert_2 \\
&\leq \int_{M \setminus N} \Vert \nabla f (y)\Vert_2 \vert \varphi_\eps (x-y)\vert\ \dd y \leq \underset{y \in M }{\sup} \Vert \nabla f(y)\Vert_2 =: L
\end{align*}
for every $x \in \RR^d$. This implies 
\begin{equation*}
\left\vert f_\eps(x) - f_\eps(y)\right\vert \leq L \cdot \Vert x-y \Vert_2
\end{equation*}
for all $x,y \in \RR^d$ and $\eps > 0$. But then it also follows
\begin{equation*}
\vert f(x) - f(y) \vert = \lim_{\eps \to 0} \ \vert f_\eps(x) -f_\eps(y)\vert \leq L \cdot \Vert x-y \Vert_2
\end{equation*}
and thus $\lip(f) \leq L$.

To prove the inequality in the other direction, let $x \in M$. Assume without loss of generality that $\nabla f (x) \neq 0$ and let $\nu \defeq \frac{\nabla f (x)}{\Vert \nabla f (x) \Vert_2}$. Then it holds
\begin{align*}
\Vert \nabla f (x) \Vert_2 &= \langle \nabla f (x), \nu\rangle= \lim_{t \to 0} \frac{ f(x+t\nu) - f(x)}{ t}= \lim_{t \to 0} \frac{\vert f(x+t\nu) - f(x)\vert}{\vert t \vert} \\
&=  \lim_{t \to 0} \frac{\vert f(x+t\nu) - f(x)\vert}{\Vert (x + t\nu) - x \Vert_2} \leq \lip(f),
\end{align*}
as was to be shown. Note that we applied the absolute value at the third equality, since the limit is positive. 
\end{proof}

We now move to the proofs of \Cref{prop:covering_ball,prop:vc_half_spaces_2,prop:gauss_width}. To this end, we first formulate a well-known lemma, which is needed for the proofs of these propositions. For the sake of completeness, we decided to include a proof of this lemma in this appendix.
\begin{lemma}\label{lem:Uhelp}
Let $V \subseteq \RR^k$ be a linear subspace and let $\ell = \dim(V)>0$. Then there exists a matrix $U \in \RR^{k \times \ell}$ with $U^TU = I_{\ell \times \ell}$, $\IM(U) = V$ and $\langle Ux_1, Ux_2 \rangle = \langle x_1,x_2 \rangle$ for each $x_1,x_2 \in \RR^\ell$ as well as $\langle U^Ty_1, U^Ty_2 \rangle = \langle y_1,y_2 \rangle$ for every $y_1,y_2 \in V$. Here, $I_{\ell \times \ell}$ denotes the $\ell$-dimensional identity matrix.
\end{lemma}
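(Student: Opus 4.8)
The plan is to construct $U$ explicitly from an orthonormal basis of $V$ and then verify the listed properties by direct computation. First I would invoke the Gram--Schmidt process (or simply the fact that every finite-dimensional inner product space admits an orthonormal basis) to obtain vectors $u_1, \dots, u_\ell \in V$ that are pairwise orthogonal and of unit Euclidean norm and that span $V$; such a basis exists because $\dim(V) = \ell > 0$. I would then define $U \in \RR^{k \times \ell}$ to be the matrix whose $j$-th column is $u_j$, i.e.\ $U_{-,j} = u_j$.

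Next I would check the four claimed properties in turn. For $U^T U = I_{\ell \times \ell}$, note that $(U^T U)_{i,j} = \langle u_i, u_j \rangle = \delta_{i,j}$ by orthonormality, which is exactly the identity matrix. For $\IM(U) = V$: the image of $U$ is the span of its columns $u_1, \dots, u_\ell$, which equals $V$ since those vectors form a basis of $V$. For the inner-product preservation $\langle U x_1, U x_2 \rangle = \langle x_1, x_2 \rangle$ for $x_1, x_2 \in \RR^\ell$, I would compute $\langle U x_1, U x_2 \rangle = (U x_1)^T (U x_2) = x_1^T (U^T U) x_2 = x_1^T x_2 = \langle x_1, x_2 \rangle$, using the already-established identity $U^T U = I_{\ell \times \ell}$.

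The last property, $\langle U^T y_1, U^T y_2 \rangle = \langle y_1, y_2 \rangle$ for $y_1, y_2 \in V$, is the only one requiring a small extra observation: in general $U^T$ does not preserve inner products on all of $\RR^k$, but it does on $V = \IM(U)$. The key point is that $U U^T$ acts as the orthogonal projection onto $V$, hence $U U^T y = y$ for every $y \in V$. To see this, write $y = U z$ for some $z \in \RR^\ell$ (possible since $y \in \IM(U)$); then $U U^T y = U U^T U z = U (U^T U) z = U z = y$. Given this, for $y_1, y_2 \in V$ I would compute $\langle U^T y_1, U^T y_2 \rangle = y_1^T U U^T y_2 = y_1^T y_2 = \langle y_1, y_2 \rangle$. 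I expect this last step---recognizing that $U U^T$ restricts to the identity on $V$---to be the main (though still routine) obstacle; everything else is immediate from orthonormality of the chosen basis.
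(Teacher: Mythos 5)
Your proposal is correct and follows essentially the same route as the paper: build $U$ column-wise from an orthonormal basis of $V$, derive $U^TU = I_{\ell\times\ell}$ and $\IM(U)=V$ immediately, and handle the last property by writing $y_i = Uz_i$ and reducing to $U^TU = I_{\ell\times\ell}$ (the paper substitutes directly rather than phrasing it via $UU^T$ being the identity on $V$, but that is the same computation). No gaps.
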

\renewcommand*{\proofname}{Proof}
\begin{proof}
We pick an orthonormal basis $u_1,...,u_\ell$ of $V$, arrange it columnwise in a matrix and denote the resulting matrix by $U = (u_1 \hspace{0.05cm}\vline \hspace{0.05cm}\cdots \hspace{0.05cm}\vline\hspace{0.05cm} u_\ell)$. From the definition of an orthonormal basis, it follows $U^T U = I_{\ell \times \ell}$ and $\IM(U)=V$. Moreover, for $x_1,x_2 \in \RR^\ell$ we get by definition
\begin{equation*}
\langle Ux_1, Ux_2 \rangle = \langle U^T U x_1, x_2 \rangle \overset{U^T U = I_{\ell \times \ell}}{=} \langle x_1, x_2 \rangle.
\end{equation*}
For $y_1,y_2 \in V$ we can pick $a_1,a_2 \in \RR^\ell$ with $Ua_1 = y_1$ and $Ua_2 = y_2$. This then gives us
\begin{equation*}
\langle U^Ty_1, U^Ty_2 \rangle = \langle U^T U a_1, U^T U a_2 \rangle = \langle a_1, a_2 \rangle = \langle Ua_1, Ua_2 \rangle = \langle y_1, y_2 \rangle,
\end{equation*}
as was to be shown.
\end{proof}
Note that \Cref{lem:Uhelp} in particular implies that both $U$ and $U^T$ preserve the Euclidean norm, i.e., it holds $\Vert Ux \Vert_2 = \Vert x \Vert_2$ for every $x \in \RR^\ell$ and $\Vert U^T y \Vert_2 = \Vert y \Vert_2$ for every $y \in V$ (but not in general for $y \in \RR^k$).

We can now show \Cref{prop:covering_ball,prop:vc_half_spaces_2,prop:gauss_width}. The proof of \Cref{prop:covering_ball} relies on the well-known fact that the $\eps$-covering-number of the Euclidean unit ball in $\RR^k$ can be bounded from above by $(1 + \frac{2}{\eps})^k$, combined with \Cref{lem:Uhelp}.
\renewcommand*{\proofname}{Proof of \Cref{prop:covering_ball}}
\begin{proof}
Let $\ell \defeq \dim(V)>0$.
In case of $\ell = 0$, the claim is trivial since then $\overline{B}_k(0,1)\cap V = \{0\}$.
Hence, we can assume $\ell>0$. 
Choose $U \in \RR^{k \times \ell}$ according to \Cref{lem:Uhelp}. 
Pick $x_1, ..., x_M \in \overline{B}_\ell(0,1)$ with $M \leq \left(\frac{2}{\eps} + 1\right)^{\ell}$ and
\begin{equation*}
\overline{B}_\ell(0,1) \subseteq \bigcup_{i=1}^M \overline{B}_\ell (x_i, \eps),
\end{equation*}
which is possible due to \cite[Corollary 4.2.13]{vershynin_high-dimensional_2018}. 
We then define $y_i \defeq Ux_i \in V$ for $i=1,...,M$. 
Then it holds $\Vert y _i \Vert_2 = \Vert x_i \Vert_2 \leq 1$
and hence, $y_i \in V \cap \overline{B}_k(0,1)$ for every $i = 1,...,M$. 
Let $y \in V \cap \overline{B}_k(0,1)$ be arbitrary and note that $U^Ty \in \overline{B}_\ell(0,1)$.
Hence, there exists $i \in \{1,...,M\}$ with $\Vert x_i - U^Ty \Vert_2 \leq \eps$. But then we get
\begin{equation*}
\Vert y_i - y \Vert_2 = \Vert Ux_i - y \Vert_2 = \Vert U^T (Ux_i - y)\Vert_2 = \Vert x_i - U^T y \Vert_2 \leq \eps,
\end{equation*}
where the second equality follows since $U^T$ is norm-preserving on $V$ and $Ux_i - y \in V$. Since $y$ was arbitrary, we see
\begin{equation}\label{eq:U_final_cov}
\overline{B}_k(0,1) \cap V \subseteq \bigcup_{i=1}^M \overline{B}_k(y_i, \eps) \cap V.
\end{equation}
This yields the claim. 
\end{proof}
For the proof of \Cref{prop:vc_half_spaces_2}, 
we use the well-known fact that the VC-dimension of the set of homogeneous 
half spaces in $\RR^k$ equals $k$, combined with \Cref{lem:Uhelp}.
\renewcommand*{\proofname}{Proof of \Cref{prop:vc_half_spaces_2}}
\begin{proof}
Let $\ell \defeq \dim(V)$. 
In case of $\ell = 0$, we have $\mathcal{F} = \{0\}$ and hence $\vc(\mathcal{F}) = 0$, so that the claim is trivial.
Hence, we can assume $\ell > 0$.
Pick $U \in \RR^{k \times \ell}$ according to \Cref{lem:Uhelp}.
For $\alpha \in V$ we define
\begin{equation*}
\widetilde{f_\alpha}: \quad \RR^\ell \to \RR, \quad x \mapsto \mathbbm{1}_{(U^T\alpha)^T x > 0}\quad \text{and}\quad \widetilde{\mathcal{F}} \defeq \{ \widetilde{f_\alpha}: \ \alpha \in V\}.
\end{equation*}
Since $U^T$ induces a surjective mapping from $V$ to $\RR^\ell$ (because $U^T$ is isometric on $V$ and $\dim(V) = \ell$) we infer
\begin{equation*}
\widetilde{\mathcal{F}} = \left\{f_\alpha : \ \alpha \in \RR^\ell\right\}.
\end{equation*}
From \cite[Theorem 9.2]{shalev2014understanding} we infer $\vc(\widetilde{\mathcal{F}}) = \ell$. 
Therefore, it suffices to show that
\begin{equation*}
\vc(\mathcal{F}) = \vc(\mathcal{\widetilde{F}}).
\end{equation*}
To this end, let $t \leq \vc(\mathcal{F})$ and let $\{v_1, ..., v_t\} \subseteq V$ be a subset of $V$ with $t$ elements that is shattered by $\mathcal{F}$.
Let $\eta \in \{0,1\}^t$ be arbitrary and $\alpha \in V$ such that
\begin{equation*}
(f_\alpha(v_1),..., f_\alpha(v_t)) = \eta.
\end{equation*}
We then consider $\{U^Tv_1, ..., U^T v_t\} \subseteq \RR^\ell$ and note that this set also consists of $t$ elements since $U^T$ induces an injective mapping $V \to \RR^\ell$. 
Fix $j \in \{1,...,t\}$.
Then it holds
\begin{equation*}
\widetilde{f_\alpha}(U^Tv_j) = \mathbbm{1}_{\langle U^T\alpha, U^T v_j\rangle>0} = \mathbbm{1}_{\langle \alpha, v_j \rangle > 0} = f_\alpha(v_j)= \eta_j.
\end{equation*}
But, since $j$ was picked arbitrarily, this shows
\begin{equation*}
(\widetilde{f_\alpha}(U^T v_1), ...,\widetilde{f_\alpha}( U^T v_t)) = \eta
\end{equation*}
and thus, the set $\{U^Tv_1, ..., U^T v_t\}$ is shattered by $\widetilde{\mathcal{F}}$.
Therefore, we conclude $\vc (\mathcal{F}) \leq \vc(\widetilde{\mathcal{F}})$.
Similarly, we can derive the bound in the other direction: Let $t \leq \vc(\widetilde{\mathcal{F}})$ and $\{w_1 ,..., w_t\} \subseteq \RR^\ell$ a subset of $\RR^\ell$ with $t$ elements that is shattered by $\mathcal{F}$. Let $\eta \in \{0,1\}^t$ be arbitrary and $\alpha \in V$ such that
\begin{equation*}
(\widetilde{f_\alpha}(w_1), ...,\widetilde{f_\alpha}(w_t)) = \eta.
\end{equation*}
We then consider $\{Uw_1, ..., U w_t\} \subseteq V$ and note that this set also consists of $t$ elements since $U$ induces an injective mapping $\RR^\ell \to V$. Moreover, for every $j \in \{1,...,t\}$ we have
\begin{equation*}
f_\alpha(Uw_j) = \mathbbm{1}_{\langle \alpha, Uw_j \rangle >0} = \mathbbm{1}_{\langle U^T \alpha, U^T U w_j \rangle > 0} = \widetilde{f_\alpha}(w_j) = \eta_j,
\end{equation*}
which gives us
\begin{equation*}
(f_\alpha(Uw_1),..., f_\alpha(Uw_t)) = \eta
\end{equation*}
and thus, the set $\{Uw_1,..., Uw_t\}$ is shattered by $\mathcal{F}$.
This yields $\vc (\mathcal{F}) \geq \vc(\widetilde{\mathcal{F}})$ and overall, we get the claim.
\end{proof}
The proof of \Cref{prop:gauss_width} makes heavy use of the fact that the Gaussian width of the Euclidean unit ball in $\RR^k$ can be bounded from above by $\sqrt{k}$. Again, we apply \Cref{lem:Uhelp} to transfer the problem to some lower-dimensional space $\RR^\ell$ and use the rotation invariance of the Gaussian distribution.
\renewcommand*{\proofname}{Proof of \Cref{prop:gauss_width}}
\begin{proof}
Let $\ell \defeq \dim(V)$. If $\ell = 0$, then $\overline{B}_k(0,1) \cap V = \{0\}$, and we trivially have $w(\{0\})= 0$. Hence, we can assume $\ell > 0$. Choose $U \in \RR^{k \times \ell}$ according to \Cref{lem:Uhelp}. From the fact that $U$ and $U^T$ preserve the Euclidean norm (on $\RR^\ell$ and $V$, respectively), we get 
\begin{equation*}
\overline{B}_k(0,1) \cap V = U\overline{B}_\ell(0,1).
\end{equation*}
This yields
\begin{align*}
w(\overline{B}_k(0,1) \cap V) &= \underset{g \sim \mathcal{N}(0, I_k)}{\EE}   \left[\underset{x \in \overline{B}_k(0,1) \cap V}{\sup}\  \langle g, x \rangle \right]= \underset{g \sim \mathcal{N}(0, I_k)}{\EE} \left[\underset{v \in U\overline{B}_\ell(0,1)}{\sup}\  \langle g, v \rangle \right]\\
&= \underset{g \sim \mathcal{N}(0, I_k)}{\EE}  \left[\underset{x \in \overline{B}_\ell(0,1)}{\sup}\  \langle g, Ux \rangle \right]= \underset{g \sim \mathcal{N}(0, I_k)}{\EE}  \left[ \underset{x \in \overline{B}_\ell(0,1)}{\sup}\  \langle U^Tg, x \rangle \right].
\end{align*}
By basis completion, pick a matrix $U' \in \RR^{k \times (k - \ell)}$ such that 
\begin{equation*}
U_0 \defeq \begin{pmatrix} U & \vline & U'\end{pmatrix} \in \RR^{k \times k}
\end{equation*}
is orthogonal. 
Then $(U_0)^T$ is also orthogonal and from the rotation invariance of $\mathcal{N}(0, I_k)$ (see, e.g., \cite[Proposition~3.3.2]{vershynin_high-dimensional_2018}) we see that, if we take $g \sim \mathcal{N}(0,I_k)$, it holds
\begin{equation*}
(U_0)^T g = \left(\begin{array}{c} U^T g \\  \hline \vspace{-0.3cm}\\ (U')^T g\end{array}\right) \sim \mathcal{N}(0, I_k),
\end{equation*}
which implies $U^Tg \sim \mathcal{N}(0, I_\ell)$. Therefore, we get
\begin{equation*}
w(\overline{B}_k(0,1) \cap V) = \underset{g \sim \mathcal{N}(0, I_k)}{\EE}  \left[ \underset{x \in \overline{B}_\ell(0,1)}{\sup}\  \langle U^Tg, x \rangle \right]= \underset{g \sim \mathcal{N}(0, I_\ell)}{\EE}  \left[ \underset{x \in \overline{B}_\ell(0,1)}{\sup}\  \langle g, x \rangle \right]= w(\overline{B}_\ell(0,1)).
\end{equation*}
The claim now follows using \cite[Proposition~7.5.2]{vershynin_high-dimensional_2018}, noting that $\diam(\overline{B}_\ell(0,1)) = 2$.
\end{proof}
We now move the proofs of \Cref{prop:Cbound,prop:not_working_deep}.
The idea of the proof of \Cref{prop:Cbound} is to find one specific choice of weight matrices for which the inequality
\begin{equation*}
\lip(\Phi) =  0 < \lip(\widetilde{\Phi})
\end{equation*}
holds for arbitrary biases. Then, using a continuity argument, we infer that the desired inequality even holds with positive probability.
\renewcommand*{\proofname}{Proof of \Cref{prop:Cbound}}
\begin{proof}
Let $b^{(0)} \in \RR^3$ and $b^{(1)} \in \RR$ be arbitrary but from now on fixed bias vectors. 
We consider the shallow network $\Phi_0$ given via the biases $b^{(0)}$ and $b^{(1)}$ and the weights $W^{(0)}_\ast \defeq \begin{pmatrix} 1 & 1\\ 1 & 0 \\ 0 & 1\end{pmatrix}$ and $W^{(1)}_\ast \defeq \begin{pmatrix} 1 & -1 & - 1\end{pmatrix}$ . 
This implies
\begin{equation*}
\Phi_0(x) = \relu(x_1 + x_2+b^{(0)}_1) - \relu(x_1 + b^{(0)}_2) - \relu(x_2 + b^{(0)}_3) + b^{(1)}
\end{equation*}
for all $x \in \RR^2$. 
Moreover, $\widetilde{\Phi_0}(x) = b^{(0)}_1 - b^{(0)}_2  - b^{(0)}_3+ b^{(1)}$ for all $x \in \RR^2$ and hence, $\lip(\widetilde{\Phi_0}) = 0$. 
On the other hand, we have
\begin{equation*}
\Phi_0(x) = x_1 + x_2 + b^{(0)}_1 - x_2 - b^{(0)}_3 +b^{(1)}= x_1 + b^{(0)}_1 - b^{(0)}_3 + b^{(1)},
\end{equation*}
whenever $x_1 \leq - b^{(0)}_2$, and $x_2 \geq \max\{- b^{(0)}_3, - x_1 - b^{(0)}_1\}$,  
whence $\lip(\Phi_0) \geq 1$. Therefore, it holds
\begin{equation*}
\lip(\Phi_0) > C \cdot \lip(\widetilde{\Phi_0}).
\end{equation*}
It remains to show that this even holds with positive probability. 
To see this, we still let $b^{(0)}$ and $b^{(1)}$ be fixed and note that $\lip(\widetilde{\Phi}) = \Vert W^{(1)} W^{(0)} \Vert_2$ depends continuously on the network weights $W^{(0)}$ and $W^{(1)}$. 
Moreover, we fix the point $x_2 \defeq \max\{- b^{(0)}_3, -b^{(0)}_1 + b^{(0)}_2 + 1\}$ and note that the expression $\vert \Phi((-b^{(0)}_2, x_2)) - \Phi(-b^{(0)}_2 - 1, x_2) \vert$ also depends continuously on the network weights. 
Therefore, for $(W^{(0)}, W^{(1)})$ on a sufficiently small open neighborhood of $(W_\ast^{(0)}, W_\ast^{(1)})$, we have
\begin{equation*}
\frac{\lip(\Phi)}{C} \geq \frac{\vert \Phi((-b^{(0)}_2, x_2)) - \Phi(-b^{(0)}_2 - 1, x_2) \vert}{C} > 1/(2C) > \lip(\widetilde{\Phi}).
\end{equation*}
Hence, for fixed biases $b^{(0)}$ and $b^{(1)}$ we note
\begin{equation*}
\PP^{(W^{(0)}, W^{(1)})} \left(\lip(\Phi) > C \cdot \lip(\widetilde{\Phi})\right)> 0. 
\end{equation*}
In total, allowing randomness in $b^{(0)}$ and $b^{(1)}$ as well gives us
\begin{equation*}
\PP \left(\lip(\Phi) > C \cdot \lip(\widetilde{\Phi})\right) = \underset{b^{(0)}, b^{(1)}}{\EE} \ \left[ \underbrace{\PP^{(W^{(0)}, W^{(1)})}\left(\lip(\Phi)> C \cdot \lip(\widetilde{\Phi})\right)}_{> 0}\right] > 0. \qedhere
\end{equation*}
\end{proof}
The proof of \Cref{prop:not_working_deep} is based on identity
\begin{equation*}
\relu(-\relu(x))= 0 \quad \text{for all }x \in \RR,
\end{equation*}
which follows directly from the definition of the $\relu$.
\renewcommand*{\proofname}{Proof of \Cref{prop:not_working_deep}}
\begin{proof}
Whenever $b^{(1)}\leq 0, W^{(1)} < 0$ and $W^{(0)},W^{(2)} \neq 0$ (which happens with positive probability), we get
\begin{equation*}
\relu(\underbrace{W^{(1)}\cdot \underbrace{\relu(W^{(0)}x + b^{(0)} )}_{\geq 0} + b^{(1)}}_{\leq 0}) = 0 \quad \text{for all}\quad x \in \RR,
\end{equation*}	
which implies $\Phi \equiv 0$ and in particular $\lip(\Phi)=0$ in that case. On the other hand, we get
\begin{equation*}
\lip(\widetilde{\Phi}) = \vert W^{(2)}\cdot W^{(1)} \cdot W^{(0)}\vert > 0. \qedhere
\end{equation*} 
\end{proof}

\renewcommand*{\proofname}{Proof}

\section{A note on measurability} \label{app:measurable}
In this appendix we elaborate on the measurability of the function
\begin{equation} \label{eq:map}
(W^{(0)}, ..., W^{(L)}, b^{(0)}, ..., b^{(L)}) \mapsto \underset{x \in \RR^d}{\sup} \Vert W^{(L)}D^{(L-1)}(x)W^{(L-1)} \cdots D^{(0)}(x) W^{(0)}\Vert_2.
\end{equation}
Note that we computed (upper bounds for) the expectation of this function with respect to the weights $W^{(0)}, ..., W^{(L)}$ and the biases $b^{(0)}, ..., b^{(L)}$ in \Cref{sec:upper} in order to establish upper bounds for the Lipschitz constant of ReLU networks. In order for the expectation to make sense, the function needs to be measurable, which we verify (to a sufficient extent) in this appendix.

As a preparation, we recall from \cite[Section~1.5]{cohn2013measure} the notion of the \emph{completion} of a measure. 
Given a measure $\mu$ on a measurable space $(\Omega, \mathscr{A})$, we define
\begin{equation*}
\mathscr{A}_\mu \defeq \left\{ A \subseteq \Omega : \ \exists \ E,F \in \mathscr{A}: \ E \subseteq A \subseteq F \text{ and } \mu(F \setminus E) = 0\right\}.
\end{equation*}
Then, $\mathscr{A}_\mu$ is a $\sigma$-algebra with $\mathscr{A} \subseteq \mathscr{A}_\mu$, and we can extend $\mu$ to a well-defined measure $\overline{\mu}$ on $\mathscr{A}_\mu$ by setting
\begin{equation*}
\overline{\mu}(A) \defeq \mu(E) \quad \text{for} \quad E,F \in \mathscr{A} \quad \text{with} \quad E \subseteq A \subseteq F \quad \text{and} \quad \mu(F \setminus E) = 0.
\end{equation*} 
 Moreover, the measure space $(X, \mathscr{A}_\mu, \overline{\mu})$ is then \emph{complete}, meaning that $A \in \mathscr{A}_\mu$, $\overline{\mu}(A)= 0$ and $B \subseteq A$ together imply $B \in \mathscr{A}_\mu$ for all subsets $A,B \subseteq \Omega$.
 
 Next, given a measurable space $(\Omega, \mathscr{A})$, we say that $A \subseteq \Omega$ is \emph{universally measurable} if $A \in \mathscr{A}_\mu$ for \emph{every} finite measure $\mu: \mathscr{A} \to [0, \infty]$. It is easy to see that the universally measurable sets over $(\Omega, \mathscr{A})$ form a $\sigma$-algebra, and that every $A \in \mathscr{A}$ is universally measurable. 
 
 We show that the map considered in \eqref{eq:map} is universally measurable, i.e., that the preimage of every Borel set is universally measurable. 
\begin{proposition}
Let $\arrow{W} \defeq (W^{(0)}, ..., W^{(L)}) \in \RR^{N \times d} \times \RR^{N \times N} \times \cdots \times \RR^{N \times N} \times \RR^{1 \times N} \cong \RR^{D_1}$ and $\arrow{b} \defeq (b^{(0)}, ..., b^{(L)}) \in \RR^N \times \cdots \times \RR^N \times \RR \cong \RR^{D_2}$. Then the map
\begin{equation*}
\xi: \quad \RR^{D_1} \times \RR^{D_2} \to \RR , \quad (\arrow{W}, \arrow{b}) \mapsto \underset{x \in \RR^d}{\sup} \Vert W^{(L)}D^{(L-1)}(x)W^{(L-1)} \cdots D^{(0)}(x) W^{(0)}\Vert_2
\end{equation*}
is universally measurable. That is, $\xi^{-1}(U)$ is universally measurable over $(\RR^{D_1} \times \RR^{D_2}, \mathscr{B})$ (with $\mathscr{B}$ denoting the Borel $\sigma$-algebra over $\RR^{D_1} \times \RR^{D_2}$) for any Borel measurable set $U \subseteq \RR$. Here, the matrices $D^{(L-1)}(x), ..., D^{(0)}(x)$ are as defined in \Cref{subsec:gradient}.
\end{proposition}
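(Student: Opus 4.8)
The plan is to exhibit $\xi$ as the pointwise supremum over $x \in \RR^d$ of a function that is \emph{Borel} measurable jointly in all three arguments, and then to invoke the measurable projection theorem together with the fact that analytic sets are universally measurable (see, e.g., \cite{cohn2013measure}). Concretely, for $(\arrow{W}, \arrow{b}, x) \in \RR^{D_1} \times \RR^{D_2} \times \RR^d$ set
\begin{equation*}
g(\arrow{W}, \arrow{b}, x) \defeq \left\Vert W^{(L)} D^{(L-1)}(x) W^{(L-1)} \cdots D^{(0)}(x) W^{(0)} \right\Vert_2 ,
\end{equation*}
so that $\xi(\arrow{W}, \arrow{b}) = \sup_{x \in \RR^d} g(\arrow{W}, \arrow{b}, x)$. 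Note first that for fixed $(\arrow{W}, \arrow{b})$ the tuple $(D^{(L-1)}(x), \dots, D^{(0)}(x))$ takes at most $2^{LN}$ values as $x$ ranges over $\RR^d$ (these are diagonal $0$--$1$ matrices), so this supremum is actually a maximum over a finite set; in particular $\xi$ is well defined with values in $\RR$.

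The first step is to check that $g$ is Borel. With $x^{(0)} \defeq x$ and $x^{(\ell + 1)} \defeq \relu(W^{(\ell)} x^{(\ell)} + b^{(\ell)})$ as in \Cref{subsec:gradient}, an induction on $\ell$ shows that $(\arrow{W}, \arrow{b}, x) \mapsto x^{(\ell)}$ is continuous on $\RR^{D_1} \times \RR^{D_2} \times \RR^d$, since matrix--vector multiplication, addition, and the componentwise $\relu$ are continuous; hence $(\arrow{W}, \arrow{b}, x) \mapsto W^{(\ell)} x^{(\ell)} + b^{(\ell)}$ is continuous for every $0 \le \ell < L$. Since $t \mapsto \mathbbm{1}_{t > 0}$ is Borel measurable on $\RR$, the map $v \mapsto \diag(v)$ (whose $(i,i)$-entry is $\mathbbm{1}_{v_i > 0}$) is Borel measurable $\RR^N \to \RR^{N \times N}$, and composing it with the continuous map above shows that $(\arrow{W}, \arrow{b}, x) \mapsto D^{(\ell)}(x)$ is Borel for each $0 \le \ell < L$. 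Finally, $g$ is built from the continuous maps $(\arrow{W}, \arrow{b}, x) \mapsto W^{(\ell)}$, the Borel maps $(\arrow{W}, \arrow{b}, x) \mapsto D^{(\ell)}(x)$, the continuous matrix product, and the continuous operator norm $A \mapsto \Vert A \Vert_2$; hence $g$ is Borel measurable.

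The second step is the projection argument. Fix $a \in \RR$. The set $B_a \defeq \{(\arrow{W}, \arrow{b}, x) : g(\arrow{W}, \arrow{b}, x) > a\}$ is Borel in $\RR^{D_1} \times \RR^{D_2} \times \RR^d$, and if $\pi$ denotes the coordinate projection onto $\RR^{D_1} \times \RR^{D_2}$, then $\{(\arrow{W}, \arrow{b}) : \xi(\arrow{W}, \arrow{b}) > a\} = \pi(B_a)$. As $\RR^{D_1} \times \RR^{D_2}$ and $\RR^d$ are Polish, $\pi(B_a)$ is an analytic subset of $\RR^{D_1} \times \RR^{D_2}$, hence universally measurable. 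Since the universally measurable subsets of $\RR^{D_1} \times \RR^{D_2}$ form a $\sigma$-algebra, the family $\{U \subseteq \RR \text{ Borel} : \xi^{-1}(U) \text{ is universally measurable}\}$ is a $\sigma$-algebra; by the above it contains every half-line $(a, \infty)$, hence all Borel subsets of $\RR$, which is the claim.

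The main obstacle is conceptual rather than computational: because $x \mapsto D^{(\ell)}(x)$ is discontinuous (and $g$ fails to be continuous in $(\arrow{W}, \arrow{b})$ for fixed $x$, so $\xi$ is not even lower semicontinuous), the supremum defining $\xi$ cannot be reduced to a supremum over a countable dense subset of $\RR^d$ --- as the $\sqrt{5}$-versus-$\sqrt{2}$ example in \Cref{subsec:lip} illustrates, the supremum may be attained only on a lower-dimensional set whose location depends on the weights in an uncontrollable way. This is why a naive ``countable supremum of measurable functions'' argument fails and the descriptive-set-theoretic projection machinery is needed; the role of Steps 1--2 is precisely to reorganize the network map so that its only discontinuities are of the harmless Borel type $t \mapsto \mathbbm{1}_{t > 0}$.
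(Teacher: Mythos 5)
Your proof is correct and follows essentially the same route as the paper's: write $\xi^{-1}((a,\infty))$ as the projection of a Borel set, conclude it is analytic and hence universally measurable, and reduce from half-lines to all Borel sets. The only difference is that you spell out the Borel measurability of the inner map $(\arrow{W},\arrow{b},x)\mapsto \Vert W^{(L)}D^{(L-1)}(x)\cdots D^{(0)}(x)W^{(0)}\Vert_2$, which the paper dismisses as "easy to see," and you add a (correct) remark on why a countable-dense-subset argument would fail.
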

\begin{proof}
According to \cite[Corollary 4.25]{aliprantis_infinite_2006} it suffices to show that $\xi^{-1}((a, \infty))$ is universally measurable for every $a \in \RR$ in order to show that $\xi$ is universally measurable. 

Note that
\begin{align*}
\xi^{-1}((a,\infty)) &= \left\{(\arrow{W}, \arrow{b}) \in \RR^{D_1 + D_2}: \ \exists x \in \RR^d \text{ with } \Vert W^{(L)}D^{(L-1)}(x) \cdots D^{(0)}(x) W^{(0)}\Vert_2 > a\right\} \\
&= \pi \left(\left\{ (\arrow{W}, \arrow{b}, x) \in \RR^{D_1} \times \RR^{D_2} \times \RR^d: \ \Vert W^{(L)}D^{(L-1)}(x) \cdots D^{(0)}(x) W^{(0)}\Vert_2 > a\right\}\right).
\end{align*}
Here, 
\begin{equation*}
\pi: \quad \RR^{D_1 + D_2 + d} \to \RR^{D_1 + D_2}, \quad (\arrow{W}, \arrow{b}, x) \mapsto (\arrow{W}, \arrow{b})
\end{equation*}
denotes the projection from $\RR^{D_1 + D_2 + d}$ onto $\RR^{D_1 + D_2}$. It is easy to see that the map
\begin{equation*}
\RR^{D_1 + D_2 + d} \to \RR, \quad (\arrow{W}, \arrow{b}, x) \mapsto \Vert W^{(L)}D^{(L-1)}(x)W^{(L-1)} \cdots D^{(0)}(x) W^{(0)}\Vert_2
\end{equation*}
is Borel measurable, whence the set 
\begin{equation*}
\left\{ (\arrow{W}, \arrow{b}, x) \in \RR^{D_1} \times \RR^{D_2} \times \RR^d: \ \Vert W^{(L)}D^{(L-1)}(x) \cdots D^{(0)}(x) W^{(0)}\Vert_2 > a\right\}
\end{equation*}
is Borel measurable. Since $\pi$ is trivially Borel measurable, it follows from \cite[Theorem 12.24]{aliprantis_infinite_2006} that $\xi^{-1}((a, \infty))$ is an analytic set and hence, $\xi^{-1}((a, \infty))$ is universally measurable by \cite[Theorem 12.41]{aliprantis_infinite_2006}.
\end{proof}
It remains unclear, whether $\xi$ is not merely universally measurable, but in fact Borel measurable. 

In order for the expectation in \Cref{sec:upper} to be well-defined we have to require that the map
\begin{equation*}
\mathscr{W}: \quad \Omega \to \RR^{D_1 + D_2}, \quad \omega \mapsto (\arrow{W}(\omega), \arrow{b}(\omega)),
\end{equation*}
is measurable with respect to $\mathscr{A}$ and the $\sigma$-algebra of universally measurable sets on $\RR^{D_1 + D_2}$. Here, $(\Omega, \mathscr{A}, \PP)$ denotes the underlying ``master probability space''. This can be achieved by defining the probability space to be $\Omega \defeq \RR^{D_1 + D_2}$ together with the $\sigma$-algebra of universally measurable subsets, $\mathscr{W} = \id$ and $\PP$ as the (completion of the) probability measure representing the distribution of $(\arrow{W}, \arrow{b})$ according to \Cref{assum:1}.

\section{A proposition for deriving high-probability results}
This appendix contains a folklore proposition which is needed for proving the high-probability results derived in the present paper. For the sake of completeness, we decided to include a proof of this proposition in the present paper. It states the following:
\begin{proposition} \label{prop:highprob}
Let $(\Omega, \mathscr{A}, \PP)$ be a probability space, $(\Omega_1, \mathscr{A}_1)$ and $(\Omega_2, \mathscr{A}_2)$ two measurable spaces and $X: \Omega \to \Omega_1$, $Y: \Omega \to \Omega_2$ two stochastically independent random variables. Moreover, let $A \subseteq \Omega_1 \times \Omega_2$ be measurable (with respect to the product-sigma-algebra $\mathscr{A}_1 \otimes \mathscr{A}_2$), for every $x \in \Omega_1$ let some $A_2(x) \in \mathscr{A}_2$ be given and let $A_1 \in \mathscr{A}_1$. Finally, assume that 
\begin{equation} \label{eq:implic}
\forall \ (x,y) \in \Omega_1 \times \Omega_2: \quad x \in A_1, \ y \in A_2(x) \quad \Longrightarrow \quad (x,y) \in A.
\end{equation}
Then it holds
\begin{equation*}
\PP((X,Y) \in A) \geq \PP^{X} \left(A_1\right)\cdot \underset{x \in A_1}{\inf} \PP^{Y} \left(A_2(x)\right).
\end{equation*}
\end{proposition}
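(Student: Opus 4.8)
The plan is to reduce the statement to Tonelli's theorem applied to the product measure $\PP^X \otimes \PP^Y$, which coincides with the joint law $\PP^{(X,Y)}$ precisely because $X$ and $Y$ are independent. First I would note that $(X,Y)\colon \Omega \to \Omega_1 \times \Omega_2$ is measurable with respect to $\mathscr{A}$ and the product $\sigma$-algebra $\mathscr{A}_1 \otimes \mathscr{A}_2$, so that $\{(X,Y)\in A\} \in \mathscr{A}$ and the left-hand side is well-defined; then I would use the fact that independence of $X$ and $Y$ is equivalent to $\PP^{(X,Y)} = \PP^X \otimes \PP^Y$, so that $\PP((X,Y)\in A) = (\PP^X \otimes \PP^Y)(A)$.

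Next I would introduce the $x$-section $A_x \defeq \{y \in \Omega_2 : (x,y) \in A\}$. The standard construction of product measures gives that each $A_x$ lies in $\mathscr{A}_2$, that the map $x \mapsto \PP^Y(A_x)$ is $\mathscr{A}_1$-measurable, and, by Tonelli's theorem (all integrands being nonnegative), that
\begin{equation*}
(\PP^X \otimes \PP^Y)(A) = \int_{\Omega_1} \PP^Y(A_x)\, d\PP^X(x).
\end{equation*}
The remaining step is purely set-theoretic: hypothesis \eqref{eq:implic} says exactly that $A_2(x) \subseteq A_x$ for every $x \in A_1$, hence by monotonicity of $\PP^Y$ we get $\PP^Y(A_x) \geq \PP^Y(A_2(x)) \geq \inf_{x' \in A_1} \PP^Y(A_2(x'))$ for all $x \in A_1$. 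Restricting the integral to $A_1$ and bounding the integrand from below yields
\begin{equation*}
\PP((X,Y)\in A) \geq \int_{A_1} \PP^Y(A_x)\, d\PP^X(x) \geq \Big(\underset{x' \in A_1}{\inf}\, \PP^Y(A_2(x'))\Big)\cdot \PP^X(A_1),
\end{equation*}
which is the claim. (If $A_1 = \emptyset$ the asserted bound is vacuous under the convention $0\cdot\infty = 0$, or one simply excludes this trivial case.)

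I do not expect any genuine obstacle; the proposition is folklore and its entire content is the identification of the joint law with the product measure together with an application of Tonelli's theorem. The only points deserving a brief remark are the measurability of $x \mapsto \PP^Y(A_x)$ — which is provided by the product-measure construction exactly because $A$ was assumed to lie in $\mathscr{A}_1 \otimes \mathscr{A}_2$ — and the observation that one never needs measurability of $x \mapsto \PP^Y(A_2(x))$, since this quantity enters only through its (constant) infimum over $A_1$.
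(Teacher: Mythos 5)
Your proposal is correct and follows essentially the same route as the paper: both identify $\PP^{(X,Y)}$ with $\PP^X\otimes\PP^Y$ via independence, apply Tonelli, and bound the inner integral from below using the hypothesis \eqref{eq:implic} (the paper phrases this as the pointwise inequality $\mathbbm{1}_A(x,y)\geq \mathbbm{1}_{A_1}(x)\cdot\mathbbm{1}_{A_2(x)}(y)$, while you phrase it as the section inclusion $A_2(x)\subseteq A_x$, which is the same observation). Your closing remarks on measurability are accurate and match what the paper implicitly relies on.
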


\begin{proof}
The condition \eqref{eq:implic} can be translated to
\begin{equation*}
\mathbbm{1}_{A}(x,y) \geq \mathbbm{1}_{A_1}(x) \cdot \mathbbm{1}_{A_2(x)}(y)
\end{equation*}
for every $(x,y) \in \Omega_1 \times \Omega_2$. Note that due to the stochastical independence of $X$ and $Y$ it holds $\PP^{(X,Y)} = \PP^X \otimes \PP^Y$. Applying Tonelli's theorem, we get
\begin{align} \label{eq:fubini_appl}
\PP((X,Y) \in A) &= \int_{\Omega_1 \times \Omega_2} \mathbbm{1}_A (x,y) \ \dd \PP^{(X,Y)}(x,y) = \int_{\Omega_1} \int_{\Omega_2} \mathbbm{1}_A(x,y) \ \dd \PP^Y(y) \ \dd \PP^X(x).
\end{align}
For fixed $x \in \Omega_1$ we get
\begin{align*}
\int_{\Omega_2} \mathbbm{1}_A(x,y) \ \dd \PP^Y(y) &\geq \mathbbm{1}_{A_1}(x)\int_{\Omega_2} \mathbbm{1}_{A_2(x)}(y) \ \dd \PP^Y(y) = \begin{cases} \PP^Y (A_2(x)),& x \in A_1, \\0,& x \notin A_1,\end{cases} \\
&\geq \mathbbm{1}_{A_1}(x) \cdot \underset{x \in A_1}{\inf} \PP^Y (A_2(x)).
\end{align*}
Inserting this into \eqref{eq:fubini_appl} yields
\begin{align*}
\PP((X,Y) \in A) \geq \int_{\Omega_1}\mathbbm{1}_{A_1}(x) \cdot \underset{x \in A_1}{\inf} \PP^Y (A_2(x)) \ \dd \PP^X(x) = \PP^{X} \left(A_1\right) \cdot \underset{x \in A_1}{\inf} \PP^{Y} \left(A_2(x)\right),
\end{align*}
as was to be shown.
\end{proof}

\section{On Dudley's inequality} \label{app:dudley}

In this section we add important notes on Dudley's inequality,
which is crucial for deriving the upper bounds in \Cref{sec:upper}. 

We first define the notion of separability of a random process. 

\begin{definition}[{cf. \cite[Definition 5.22]{van2014probability}}]
A random process $(X_t)_{t \in T}$ on a probability space $(\Omega, \mathscr{A}, \PP)$ indexed by a metric space $(T, \varrho)$
is called \emph{separable} if there exists a countable set $T_0 \subseteq T$ and a $\PP$-null set $N \subseteq \Omega$
such that for all $\omega \in \Omega \setminus N$
and $t \in T$ there exists a sequence $(s_n)_{n \in \NN} \in T_0^\NN$
with $s_n \to t$ and 
\begin{equation*}
\lim_{n  \to \infty} X_{s_n}(\omega) = X_t(\omega).
\end{equation*} 
\end{definition}

\begin{proposition} \label{prop:equal}
Let $(\Omega, \mathscr{A}, \PP)$ be a probability space, $(T, \varrho)$ a separable metric space and $(X_t)_{t \in T}$ a random process with $X_t: \Omega \to \RR$ for every $t \in T$. Moreover, we assume that for every fixed $\omega \in \Omega$ the map
\begin{equation*}
T \to \RR, \quad t \mapsto X_t(\omega)
\end{equation*}
is continuous. Then $(X_t)_{t \in T}$ is a separable random process. 
\end{proposition}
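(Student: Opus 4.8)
The plan is to exploit separability of the index space $(T,\varrho)$ directly: let $T_0 \subseteq T$ be a countable dense subset, which exists by assumption, and let $N \defeq \emptyset$ (so the required $\PP$-null set is trivial). I claim that this choice of $T_0$ and $N$ witnesses separability of the process $(X_t)_{t \in T}$ in the sense of the definition given above.

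First I would fix an arbitrary $\omega \in \Omega = \Omega \setminus N$ and an arbitrary $t \in T$. Since $T_0$ is dense in $(T,\varrho)$, I can choose a sequence $(s_n)_{n \in \NN} \in T_0^{\NN}$ with $\varrho(s_n, t) \to 0$, i.e., $s_n \to t$ in $T$. This handles the first requirement on the sequence. For the second requirement, I invoke the hypothesis that for this fixed $\omega$ the map $t \mapsto X_t(\omega)$ is continuous on $T$. Continuity at the point $t$ combined with $s_n \to t$ immediately yields
\begin{equation*}
\lim_{n \to \infty} X_{s_n}(\omega) = X_t(\omega).
\end{equation*}
Since $\omega \in \Omega \setminus N$ and $t \in T$ were arbitrary, this establishes exactly the condition in the definition of separability, so $(X_t)_{t \in T}$ is a separable random process.

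There is essentially no obstacle here: the statement is a direct unwinding of the definitions, and the only two ingredients are (i) density of the countable set $T_0$ guaranteed by separability of $(T,\varrho)$, and (ii) pathwise continuity of $t \mapsto X_t(\omega)$, which is precisely the hypothesis. The one mild subtlety worth mentioning explicitly in the write-up is that the $\PP$-null set $N$ in the definition may be taken to be empty, since continuity is assumed for \emph{every} $\omega \in \Omega$ rather than merely almost every $\omega$; if one only had almost-sure continuity, one would instead take $N$ to be the null set off which the sample paths are continuous, and the argument would be otherwise identical. No measurability considerations are needed for this particular proposition, since separability as defined above is a purely pathwise property.
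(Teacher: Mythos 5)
Your argument is correct and is essentially the same as the paper's proof: pick a countable dense subset $T_0 \subseteq T$, use density to approximate any $t$ by a sequence from $T_0$, and invoke pathwise continuity to conclude convergence of $X_{s_n}(\omega)$ to $X_t(\omega)$. Your explicit remark that one may take the null set $N = \emptyset$ is a harmless clarification of what the paper leaves implicit.
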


\newcommand{\T}{\widetilde{T}}
\begin{proof}
Let $T_0 \subseteq T$ be a countable dense subset of $T$ and $\omega \in \Omega, \ t \in T$.
Since $T_0$ is dense there is a sequence $(s_n)_{n \in \NN} \in T_0^\NN$ with $s_n \to t$.
Then, using the continuity assumption, it holds
\begin{equation*}
\lim_{n \to \infty} \ X_{s_n}(\omega) = X_t (\omega),
\end{equation*} 
which yields the claim.
\end{proof}

We can now formulate and prove a slight extension of Dudley's inequality from \cite{van2014probability}.

\begin{theorem}[{cf.~\cite[Theorems~5.25~and~5.29]{van2014probability}}] \label{thm:dudleyyy}
There is a constant $C>0$ with the following property: If $(X_t)_{t \in T}$ is a mean-zero separable random process indexed by a metric space $(T,\varrho)$ satisfying
\begin{equation*}
\Vert X_t - X_s \Vert_{\psi_2} \leq K \, \varrho(t,s)
\end{equation*}
for every $t,s \in T$ with a constant $K>0$ which does not depend on $s$ and $t$, the following hold:
\begin{enumerate}
\item{For all $t_0 \in T$ and $u \geq 0$ it holds
\begin{equation*}
\underset{t \in T}{\sup} \ (X_t - X_{t_0}) \leq CK \left(\int_0^\infty \sqrt{\ln \left(\mathcal{N}(T, \varrho, \eps)\right)} \ \dd\eps + u\diam(T)\right)
\end{equation*}
with probability at least $1- 2\exp(-u^2)$.}
\item{
$ \displaystyle
\EE \left[ \underset{t \in T}{\sup}\ X_t\right] \leq CK \int_0^\infty \sqrt{\ln \left(\mathcal{N}(T, \varrho, \eps)\right)} \ \dd\eps.
$}
\end{enumerate}
\end{theorem}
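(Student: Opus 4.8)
The plan is to reduce both parts to the two forms of Dudley's inequality already available in \cite[Theorems~5.25~and~5.29]{van2014probability}; the only gaps relative to the statement above are (i) the scaling constant $K$, (ii) that the process is not assumed to vanish at any prescribed point, and (iii) matching the precise probability $1-2\exp(-u^2)$ with a single absolute constant in both parts. None of this is substantial. First I would dispose of $K$: the rescaled process $Y_t\defeq X_t/K$ is still separable and mean-zero, satisfies $\Vert Y_t-Y_s\Vert_{\psi_2}\le\varrho(t,s)$, and has $\sup_t(X_t-X_{t_0})=K\sup_t(Y_t-Y_{t_0})$ and $\EE[\sup_tX_t]=K\,\EE[\sup_tY_t]$, while $\mathcal{N}(T,\varrho,\eps)$ and $\diam(T)$ depend only on the metric space and are unchanged; so it suffices to treat $K=1$. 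Next I would discard trivial cases: if $\int_0^\infty\sqrt{\ln\mathcal{N}(T,\varrho,\eps)}\ \dd\eps=+\infty$ (which in particular occurs whenever $\diam(T)=\infty$, since then $\mathcal{N}(T,\varrho,\eps)\ge2$ for every $\eps$), both asserted bounds are vacuous, so I may assume this integral is finite; hence $\diam(T)<\infty$ and $T$ is totally bounded. I would also record the elementary inequality
\begin{equation*}
\diam(T)\ \le\ \frac{2}{\sqrt{\ln 2}}\int_0^\infty\sqrt{\ln\mathcal{N}(T,\varrho,\eps)}\ \dd\eps,
\end{equation*}
valid because a single closed $\eps$-ball has diameter at most $2\eps$, so $\mathcal{N}(T,\varrho,\eps)\ge2$ for every $\eps<\diam(T)/2$. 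This is what will absorb stray multiples of $\diam(T)$ into the entropy integral during the constant chase.

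For Part~(2) I would fix any $t_0\in T$ and note that, since the process is mean-zero, $\EE[\sup_{t\in T}X_t]=\EE[\sup_{t\in T}(X_t-X_{t_0})]$. The centered process $\widetilde X_t\defeq X_t-X_{t_0}$ is again separable (the countable dense index set and $\PP$-null set witnessing separability of $(X_t)$ work verbatim for $(\widetilde X_t)$), satisfies $\widetilde X_{t_0}=0$ and $\Vert\widetilde X_t-\widetilde X_s\Vert_{\psi_2}\le\varrho(t,s)$. Applying \cite[Theorem~5.29]{van2014probability} to $(\widetilde X_t)_{t\in T}$ yields $\EE[\sup_t\widetilde X_t]\le C\int_0^\infty\sqrt{\ln\mathcal{N}(T,\varrho,\eps)}\ \dd\eps$ with an absolute constant $C$, which after restoring the factor $K$ is exactly the claim.

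For Part~(1) I would again pass to $\widetilde X_t=X_t-X_{t_0}$ and invoke the chaining tail bound \cite[Theorem~5.25]{van2014probability}: there are absolute constants $C_1,c_1>0$ such that for every $u\ge0$,
\begin{equation*}
\PP\!\left[\ \sup_{t\in T}\widetilde X_t\ \ge\ C_1\Big(\int_0^\infty\sqrt{\ln\mathcal{N}(T,\varrho,\eps)}\ \dd\eps\ +\ u\,\diam(T)\Big)\right]\ \le\ C_1\exp(-c_1u^2)
\end{equation*}
(if that statement is phrased with a tail parameter $x$ in place of $u\,\diam(T)$ and exponent $x^2/\diam(T)^2$, the substitution $x=u\,\diam(T)$ brings it to this form). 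It then remains only to rewrite the right-hand probability as $1-2\exp(-u^2)$, and this is the part I expect to be tedious rather than difficult: I would first rescale $u\mapsto u/\sqrt{c_1}$ to normalize the exponent, and then, if the resulting prefactor $C'$ exceeds $2$, replace $u$ by $\sqrt{u^2+\ln(C'/2)}\le u+\sqrt{\ln(C'/2)}$, making the prefactor equal to $2$ at the cost of an additive term proportional to $\diam(T)$ on the right-hand side, which is absorbed using the recorded inequality $\diam(T)\le\frac{2}{\sqrt{\ln 2}}\int_0^\infty\sqrt{\ln\mathcal{N}(T,\varrho,\eps)}\ \dd\eps$. Since $\sup_t(X_t-X_{t_0})=\sup_t\widetilde X_t$ and restoring $K$ merely multiplies both sides, this gives, with a single absolute constant $C$,
\begin{equation*}
\sup_{t\in T}(X_t-X_{t_0})\ \le\ CK\Big(\int_0^\infty\sqrt{\ln\mathcal{N}(T,\varrho,\eps)}\ \dd\eps\ +\ u\,\diam(T)\Big)
\end{equation*}
with probability at least $1-2\exp(-u^2)$, as desired. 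I anticipate no genuine obstacle beyond this constant bookkeeping; all conceptual content sits in the two cited theorems.
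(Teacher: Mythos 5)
Your proposal is correct and follows the same overall strategy as the paper: rescale the process and reduce both parts to \cite[Theorems~5.25~and~5.29]{van2014probability}. The one place where you genuinely diverge is in extracting the exact tail probability $1-2\exp(-u^2)$. The paper gets it by opening up van Handel's proof: after rescaling by $K\sqrt{2C_1}$ so that the increments satisfy the moment-generating-function bound $\EE[\exp(\lambda(Y_t-Y_s))]\le\exp(\lambda^2\varrho(t,s)^2/2)$ (i.e.\ the process is sub-gaussian in van Handel's sense), it observes that the prefactor in the chaining tail bound is the geometric series $\sum_{k\ge1}\ee^{-k/2}\le 2$ from p.~137 of \cite{van2014probability}, so the constant $2$ and the exponent $-u^2$ come out directly. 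You instead treat the cited tail bound as a black box with unspecified constants $C_1,c_1$ and repair it by the reparametrization $u\mapsto\sqrt{u^2+\ln(C'/2)}$, absorbing the resulting additive multiple of $\diam(T)$ into the entropy integral via $\diam(T)\le\frac{2}{\sqrt{\ln 2}}\int_0^\infty\sqrt{\ln\mathcal{N}(T,\varrho,\eps)}\,\dd\eps$; that inequality and the substitution are both correct, and this route is more robust (it needs nothing about the internals of the cited proof) at the price of a worse absolute constant $C$. One small bookkeeping point you gloss over: van Handel's theorems are stated for sub-gaussian processes defined via the MGF condition, not via $\psi_2$-norms of increments, so after dividing by $K$ you still need the standard conversion $\Vert Z\Vert_{\psi_2}\le a \Rightarrow \EE[\exp(\lambda Z)]\le\exp(C_1\lambda^2 a^2)$ (this is exactly why the paper rescales by the extra factor $\sqrt{2C_1}$); it costs only another absolute constant, but it should be said.
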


\begin{proof}
Firstly, note that the separability assumption implies that $\underset{t \in T}{\sup} \ X_t = \underset{t \in T_0}{\sup} \ X_t$ almost surely for some countable set $T_0 \subseteq T$ so that no measurability issues arise (at least if we are willing to pass to the completion of the underlying probability space). 

Let $C_1 > 0$ be the absolute constant appearing in \cite[Equation (2.16)]{vershynin_high-dimensional_2018} and define $Y_t \defeq \frac{X_t}{K \sqrt{2C_1}}$ for $t \in T$. Then it holds
\begin{equation*}
\Vert Y_t - Y_s \Vert_{\psi_2} = \frac{\Vert X_t - X_s \Vert_{\psi_2}}{K \sqrt{2C_1}} \leq \frac{\varrho(t,s)}{\sqrt{2C_1}}
\end{equation*}
for every $t,s \in T$ and hence, using \cite[Equation (2.16)]{vershynin_high-dimensional_2018}, we get
\begin{align*}
\EE \left[ \exp(\lambda(Y_t - Y_s))\right] \leq \exp(C_1 \lambda^2 \Vert Y_t - Y_s\Vert_{\psi_2}^2) \leq \exp(\lambda^2 \varrho(t,s)^2/2)
\end{align*}
for all $\lambda \in \RR$. Hence, $(Y_t)_{t \in T}$ is a \emph{sub-gaussian} process (see \cite[Definition 5.20]{van2014probability}) and trivially inherits the property of separability from $(X_t)_{t \in T}$. Hence, \cite[Theorems~5.25~and~5.29]{van2014probability} yield the existence of a constant $C_2>0$ such that the following hold:
\begin{enumerate}
\item{For all $u \geq 0$ and $t_0 \in T$ it holds 
\begin{equation*}
\underset{t \in T}{\sup} \ (Y_t - Y_{t_0}) \leq C_2 \left(\int_0^\infty \sqrt{\ln(\mathcal{N}(T, \varrho, \eps))} \ \dd \eps + u \diam(T) \right)
\end{equation*}
with probability at least $1- C_2\exp(-u^2)$.}
\item{$ \displaystyle \EE \left[\underset{t \in T}{\sup} \  Y_t \right] \leq C_2 \int_0^\infty \sqrt{\ln(\mathcal{N}(T, \varrho, \eps))} \ \dd \eps $}.
\end{enumerate}
Note that in (1) we can bound the probability even by $1-2\exp(-u^2)$ which is due to the first inequality on p.137 in \cite{van2014probability}. Specifically, we have
\begin{equation*}
\sum_{k=1}^\infty \ee ^{-k /2} \leq \sum_{k=1}^\infty\left( \frac{9}{4} \right)^{-k/2} = \sum_{k=1}^\infty \left(\frac{3}{2}\right)^{-k} = \frac{2/3}{1- 2/3} = 2.
\end{equation*}
Using this fact and the exact definition of $(Y_t)_{t \in T}$ yields the claim by letting $C \defeq \sqrt{2C_1}C_2$.
\end{proof}

Now we apply Dudley's inequality to certain finite-dimensional Gaussian processes.

\begin{proposition}[{special version of Dudley's inequality}]\label{prop:dudley}
There exists an absolute constant $C>0$ with the following property: For any $T \subseteq \RR^N$ with $0 \in T$ and any random vector $X \in \RR^N$ with 
\begin{equation*}
X_i \iid \mathcal{N}(0,1) \quad \text{for} \quad 1 \leq i \leq N,
\end{equation*}
 the following hold:
 \begin{enumerate}
 \item{ For any $u \geq 0$, we have
 \begin{equation*}
 \underset{t \in T}{\sup} \ \langle X, t \rangle \leq  C \left(\int_0^\infty \sqrt{\ln \left(\mathcal{N}(T, \Vert \cdot \Vert_2, \eps)\right)} \ \dd \eps + u \cdot \diam(T)\right)
 \end{equation*}
 with probability at least $1 - 2 \exp(-u^2)$.
 }
 \item{
 $ \displaystyle
 \EE \left[ \underset{t \in T}{\sup} \ \langle X, t \rangle\right] \leq C\int_0^\infty \sqrt{\ln \left(\mathcal{N}(T, \Vert \cdot \Vert_2, \eps)\right)} \ \dd \eps.
 $
 }
 \end{enumerate}
\end{proposition}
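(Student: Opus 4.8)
The plan is to derive Proposition~\ref{prop:dudley} as a direct consequence of the general Dudley inequality in \Cref{thm:dudleyyy}, applied to the concrete Gaussian process $X_t \defeq \langle X, t\rangle$ indexed by $t \in T \subseteq \RR^N$, equipped with the metric $\varrho(t,s) \defeq \Vert t - s\Vert_2$. First I would check the hypotheses of \Cref{thm:dudleyyy}. The process is mean-zero since $\EE[\langle X, t\rangle] = \langle \EE[X], t\rangle = 0$ for every $t$. For the sub-gaussian increment bound, note that $X_t - X_s = \langle X, t - s\rangle \sim \mathcal{N}(0, \Vert t - s\Vert_2^2)$ because $X \sim \mathcal{N}(0, I_N)$; hence by \cite[Example~2.5.8]{vershynin_high-dimensional_2018} there is an absolute constant $K$ with $\Vert X_t - X_s\Vert_{\psi_2} \leq K \Vert t - s\Vert_2 = K\,\varrho(t,s)$, so the increment condition holds with a constant independent of $s,t$.

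Next I would address separability. The map $t \mapsto X_t(\omega) = \langle X(\omega), t\rangle$ is (for each fixed $\omega$) a continuous linear functional of $t$, so it is continuous on $T$; moreover $(T, \Vert\cdot\Vert_2)$ is separable as a subset of $\RR^N$. Therefore \Cref{prop:equal} applies and shows that $(X_t)_{t\in T}$ is a separable random process, which is exactly what \Cref{thm:dudleyyy} requires.

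With these verified, I would simply invoke \Cref{thm:dudleyyy} with $t_0 \defeq 0 \in T$. Since $X_{t_0} = \langle X, 0\rangle = 0$, part~(1) of \Cref{thm:dudleyyy} gives, for every $u \geq 0$,
\begin{equation*}
\underset{t\in T}{\sup}\ \langle X, t\rangle
= \underset{t\in T}{\sup}\ (X_t - X_{t_0})
\leq CK\left(\int_0^\infty \sqrt{\ln\left(\mathcal{N}(T, \Vert\cdot\Vert_2, \eps)\right)}\ \dd\eps + u\,\diam(T)\right)
\end{equation*}
with probability at least $1 - 2\exp(-u^2)$, and part~(2) gives the corresponding bound for $\EE[\sup_{t\in T}\langle X, t\rangle]$. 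Absorbing the absolute constant $K$ into the constant yields the claimed statement with a new absolute constant $C$. The only mild subtlety — and the step I would be most careful about — is making sure the reduction to $t_0 = 0$ is legitimate, i.e.\ that $0 \in T$ is genuinely used (it is, both to identify $X_{t_0}$ with $0$ and so that the supremum of $X_t - X_{t_0}$ equals the supremum of $X_t$); everything else is a routine verification of hypotheses.
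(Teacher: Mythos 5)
Your proposal is correct and follows essentially the same route as the paper: verify the mean-zero and sub-gaussian increment conditions via the Gaussianity of $\langle X, t-s\rangle$, establish separability through \Cref{prop:equal} using the continuity of $t \mapsto \langle X(\omega), t\rangle$, and then apply \Cref{thm:dudleyyy} with $t_0 = 0 \in T$. No gaps.
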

\begin{proof}
For $t,s \in T$ we have
\begin{equation*}
\Vert \langle X,t \rangle - \langle X,s \rangle \Vert_{\psi_2} = \Vert \langle X, t-s \rangle \Vert_{\psi_2} \leq C_1 \Vert t -s \Vert_2
\end{equation*}
with an absolute constant $C_1>0$. Here we used that$\langle X, t-s \rangle \sim \mathcal{N}(0, \Vert t -s \Vert_2^2)$ (see for instance \cite[Exercise 3.3.3]{vershynin_high-dimensional_2018}) and \cite[Example 2.5.8 (i)]{vershynin_high-dimensional_2018}. Furthermore, it holds
\begin{equation*}
\EE \left[ \langle X,t \rangle\right] = 0
\end{equation*}
for fixed $t \in T$ since $\langle X,t \rangle \sim \mathcal{N}(0, \Vert  t \Vert_2^2)$. For a fixed realization of $X$ it is trivial that $t \mapsto \langle X,t \rangle$ is continuous and hence, the process $(\langle X,t \rangle)_{t \in T}$ is separable by \Cref{prop:equal}. Dudley's inequality and its high-probability version (cf. \Cref{thm:dudleyyy}) then show that:
\begin{enumerate}
\item{ For any $u \geq 0$, we have
\begin{equation*}
\underset{t \in T}{\sup} \ \langle X, t \rangle = \underset{t \in T}{\sup}  \big(\langle X,t \rangle - \langle X, 0 \rangle \big)\overset{0 \in T}{\leq}  C_1C_2 \left(\int_0^\infty \sqrt{\ln \left(\mathcal{N}(T, \Vert \cdot \Vert_2, \eps)\right)} \ \dd \eps + u \cdot \diam(T)\right)
\end{equation*}
with probability at least $1 - 2\exp(-u^2)$. 
}
\item{
$ \displaystyle
\EE \left[ \underset{t \in T}{\sup} \ \langle X, t \rangle\right] \leq C_1C_2\int_0^\infty \sqrt{\ln \left(\mathcal{N}(T, \Vert \cdot \Vert_2, \eps)\right)} \ \dd \eps
$}
\end{enumerate}
with an absolute constant $C_2 > 0$. Hence, the claim follows letting $C \defeq C_1C_2$. 
\end{proof}

\section{Differentiability of a random ReLU network at a fixed point} \label{app:diff}

In this appendix, we prove that, if we fix a point $x_0 \in \RR^d \setminus \{0\}$, a random $\relu$ network that follows \Cref{assum:1} is differentiable at $x_0$ with
\begin{equation*}
\left(\nabla \Phi (x_0)\right)^T = W^{(L)} \cdot D^{(L-1)}(x_0) \cdot W^{(L-1)}\cdots D^{(0)}(x_0) \cdot W^{(0)}.
\end{equation*}
Since the vector $x_0$ is fixed, we often omit the argument and write $D^{(\ell)}$ instead of $D^{(\ell)}(x_0)$ for each $\ell \in \{0,...,L-1\}$. 
Let us now formulate and prove the statement.
\begin{theorem}
Let $x_0 \in \RR^d \setminus \{0\}$ be fixed and consider a random $\relu$ network $\Phi: \RR^d \to \RR$ of width $N$ and depth $L$, satisfying the random initialization as described in \Cref{assum:1}. 
Then $\Phi$ is almost surely differentiable at $x_0$ with
\begin{equation} \label{eq:des}
\left(\nabla \Phi(x_0)\right)^T = W^{(L)} \cdot D^{(L-1)}\cdot W^{(L-1)} \cdots D^{(0)}\cdot W^{(0)}.
\end{equation}
\end{theorem}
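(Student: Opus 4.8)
The plan is to exhibit an almost-sure event on which $\Phi$ coincides, on a whole neighbourhood of $x_0$, with an affine map whose linear part is the matrix in the statement. Throughout, write $x^{(0)} \defeq x_0$ and let $x^{(\ell)}(x)$, $D^{(\ell)}(x)$ be as in \Cref{subsec:gradient}; the analysis splits according to whether the hidden activations $x^{(\ell)}(x_0)$ ever vanish.

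The first step is a genericity statement: \emph{almost surely, for every $\ell \in \{0,\dots,L-1\}$ one has $x^{(\ell)}(x_0) \neq 0 \;\Longrightarrow\; \big(W^{(\ell)} x^{(\ell)}(x_0) + b^{(\ell)}\big)_i \neq 0$ for all $i\in\{1,\dots,N\}$.} To see this, fix $\ell$ and condition on $\mathcal{F}_\ell \defeq \sigma\big(W^{(0)},\dots,W^{(\ell-1)}, b^{(0)},\dots,b^{(\ell-1)}\big)$. The vector $x^{(\ell)}(x_0)$ is $\mathcal{F}_\ell$-measurable, and on the $\mathcal{F}_\ell$-measurable event $\{x^{(\ell)}(x_0)\neq 0\}$ the conditional law of $W^{(\ell)} x^{(\ell)}(x_0) + b^{(\ell)}$ is the product measure $\bigotimes_{i=1}^N \big[\mathcal{N}\big(0,\tfrac{2}{N}\Vert x^{(\ell)}(x_0)\Vert_2^2\big) \ast \mathcal{D}_i^{(\ell)}\big]$, which is absolutely continuous with respect to Lebesgue measure on $\RR^N$ and hence gives probability zero to the null set $\{v\in\RR^N : v_i = 0 \text{ for some } i\}$. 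Therefore the displayed implication fails with conditional probability zero, hence with probability zero; a union bound over the finitely many $\ell$ produces an almost-sure event $\Omega_0$ on which it holds for all $\ell$.

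Now I argue pointwise on $\Omega_0$. Since $x^{(0)}(x_0)=x_0\neq 0$, either (i) $x^{(\ell)}(x_0)\neq 0$ for all $\ell\in\{0,\dots,L-1\}$, or (ii) there is a minimal $\ell_0\in\{1,\dots,L\}$ with $x^{(\ell_0)}(x_0)=0$. In case (i), on $\Omega_0$ every pre-activation $\big(W^{(\ell)}x^{(\ell)}(x_0)+b^{(\ell)}\big)_i$ is non-zero, so by continuity of $x\mapsto x^{(\ell)}(x)$ one constructs inductively an open neighbourhood $U$ of $x_0$ on which each $D^{(\ell)}(\cdot)$ equals the constant matrix $D^{(\ell)}(x_0)$ and each $x\mapsto x^{(\ell)}(x)$ is affine; consequently $\Phi|_U$ is the affine map $x\mapsto W^{(L)}D^{(L-1)}(x_0)W^{(L-1)}\cdots D^{(0)}(x_0)W^{(0)}x + \mathrm{const}$, which gives differentiability at $x_0$ with the asserted gradient. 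In case (ii), the same inductive construction yields a neighbourhood $U$ on which $x\mapsto x^{(\ell)}(x)$ is affine for $\ell\le\ell_0-1$; moreover, since $x^{(\ell_0-1)}(x_0)\neq 0$, on $\Omega_0$ the vector $W^{(\ell_0-1)}x^{(\ell_0-1)}(x_0)+b^{(\ell_0-1)}$ has all entries non-zero, and since its componentwise $\relu$ equals $x^{(\ell_0)}(x_0)=0$, all of these entries are in fact strictly negative. By continuity they stay strictly negative on a possibly smaller $U$, so $x^{(\ell_0)}(\cdot)\equiv 0$ on $U$, hence $x^{(\ell)}(\cdot)$ is constant on $U$ for every $\ell\ge\ell_0$ and $\Phi$ is constant on $U$; thus $\Phi$ is differentiable at $x_0$ with $\nabla\Phi(x_0)=0$. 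The claimed formula also evaluates to the zero vector here, because the pre-activation $W^{(\ell_0-1)}x^{(\ell_0-1)}(x_0)+b^{(\ell_0-1)}$ is coordinatewise negative, so $D^{(\ell_0-1)}(x_0)=0$ and the whole product vanishes.

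I expect case (ii) to be the main obstacle: the point is to realise that a vanishing hidden activation does not create non-differentiability at $x_0$ but rather forces $\Phi$ to be locally \emph{constant} — the essential feature being that the relevant pre-activations are \emph{strictly} (not merely weakly) negative and hence remain negative on a neighbourhood — and that this is exactly consistent with the product formula collapsing to zero through the factor $D^{(\ell_0-1)}(x_0)=0$. A secondary point that needs care is the conditioning argument in the genericity step, where one must use that $x^{(\ell)}(x_0)$ is $\mathcal{F}_\ell$-measurable and pass to the event $\{x^{(\ell)}(x_0)\neq 0\}$ before invoking absolute continuity of the Gaussian convolution.
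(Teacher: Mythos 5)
Your proposal is correct and follows essentially the same route as the paper's proof in \Cref{app:diff}: the conditioning argument (absolute continuity of the Gaussian convolution given the earlier layers, on the event that the layer input is non-zero) to rule out vanishing pre-activations, and the observation that a vanishing hidden activation forces the preceding pre-activations to be strictly negative, so that $\Phi$ is locally constant and the product formula collapses to zero via $D^{(\ell_0-1)}=0$. The only difference is organizational — the paper decomposes over the first layer with a zero pre-activation, while you work on a single almost-sure event and split on the first zero activation — which does not change the substance.
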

\begin{proof}
We denote by $\mathcal{A}$ the event that $\Phi$ is differentiable at $x_0$ and \eqref{eq:des} holds. 
Let $x^{(\ell)}$, $\ell \in \{0,...,L-1\}$ be defined as in \eqref{eq:d-matrices}.
We denote
\begin{align*}
\eta &= \eta (W^{(0)},..., W^{(L-1)}, b^{(0)},..., b^{(L-1)}) \\
&\defeq \min \left\{\ell \in \{0,...,L-1\}: \ \exists i \in \{1,...,N\} \text{ with } \left(W^{(\ell)}x^{(\ell)} + b^{(\ell)}\right)_i = 0\right\}
\end{align*}
and $\eta \defeq L$ if the minimum above is not well-defined. 
Since the events $\{\eta = \ell\}$ for $\ell \in \{0,...,L\}$ are pairwise disjoint and their union is the entire probability space, we get
\begin{equation*}
\PP(\mathcal{A}) = \sum_{\ell = 0}^L \PP(\mathcal{A} \cap \{\eta = \ell\}).
\end{equation*}
The goal is now to show that
\begin{equation}\label{eq:Ptoshow}
\PP (\mathcal{A} \cap \{\eta = \ell\}) = \PP(\{\eta = \ell\}) \quad \text{for every } \ell \in \{0,...,L\}, 
\end{equation}
which will directly give us $\PP(\mathcal{A}) = 1$.

To this end, we start with the case $\ell = L$. Note that 
\begin{equation*}
\eta = L \quad \Longleftrightarrow \quad \left(W^{(\ell)}x^{(\ell)} + b^{(\ell)}\right)_i \neq 0 \quad \text{ for all } \ell \in \{0,...,L-1\}, \ i \in \{1,...,N\}.
\end{equation*}
But since the $\relu$ is differentiable on $\RR \setminus \{0\}$, this directly implies the differentiability of $\Phi$ at $x_0$ and that \eqref{eq:des} holds. 
Hence, it even holds
\begin{equation*}
\mathcal{A} \cap \{\eta = L\} = \{\eta = L\}.
\end{equation*}
We proceed with the case $\ell \in \{1,...,L-1\}$. 
Note that it holds
\begin{equation*}
\PP (\mathcal{A} \cap \{\eta = \ell\}) = \PP (\mathcal{A} \cap \{\eta = \ell\}\cap \{x^{(\ell)} \neq 0\}) + \PP (\mathcal{A} \cap \{\eta = \ell\}\cap \{x^{(\ell)} = 0\}).
\end{equation*}
We get
\begin{align}
\PP (\mathcal{A} \cap \{\eta = \ell\}\cap \{x^{(\ell)} \neq 0\}) &\leq \PP (\{\eta = \ell\}\cap \{x^{(\ell)} \neq 0\}) \nonumber\\
&= \underset{b^{(0)},...,b^{(\ell-1)}}{\underset{W^{(0)},..., W^{(\ell - 1)}}{\EE}} \left[\mathbbm{1}_{x^{(\ell)} \neq 0} \cdot \underset{W^{(\ell)}, b^{(\ell)}}{\EE} \mathbbm{1}_{\eta = \ell} \right]\nonumber\\
\label{eq:zeroset} 
&= 0.
\end{align}
Here, the last equality stems from the fact that, if $x^{(\ell)} \neq 0$, it follows that, conditioning on the weights $W^{(0)}, ..., W^{(\ell -1)}$ and the biases $b^{(0)},..., b^{(\ell - 1)}$, it holds
\begin{equation*}
\left(W^{(\ell)} x^{(\ell)} + b^{(\ell)} \right)_i \sim \mathcal{N}\left(0, \frac{2}{N} \Vert x^{(\ell)} \Vert_2^2\right) \ast \mathcal{D}^{(\ell)}_i \quad \text{for each }i \in \{1,...,N\},
\end{equation*}
the fact that the distribution on the right-hand side is an absolutely continuous distributions (as follows from \cite[Proposition~9.16]{dudley2002real}) and the fact that all the random variables $\left(W^{(\ell)} x^{(\ell)} + b^{(\ell)} \right)_i$, $i \in \{1,...,N\}$, are jointly independent. Here, $\ast$ denotes the convolution of two (probability) measures.

We now show that it holds
\begin{equation} \label{eq:capequal}
\mathcal{A} \cap \{\eta = \ell\}\cap \{x^{(\ell)} = 0\} = \{\eta = \ell\}\cap \{x^{(\ell)} = 0\}.
\end{equation}
To this end, assume that $\eta = \ell$ and $x^{(\ell) } = 0$ hold. This implies 
\begin{equation*}
\left(W^{(\ell - 1)}x^{(\ell - 1)} + b^{(\ell - 1)}\right) _i \neq 0 \quad \text{and} \quad \left(W^{(\ell - 1)}x^{(\ell - 1)} + b^{(\ell - 1)}\right) _i \leq 0 \quad \text{for all } i \in \{1,...,N\},
\end{equation*}
or equivalently
\begin{equation*}
\left(W^{(\ell - 1)}x^{(\ell - 1)} + b^{(\ell - 1)}\right) _i < 0 \quad \text{for all }i \in \{1,...,N\}.
\end{equation*}
But by continuity, these inequalities also hold in an open neighborhood of $x_0$. 
This implies that $\Phi$ is constant on a neighborhood of $x_0$ and hence, $\Phi$ is differentiable at $x_0$ with  
\begin{equation*}
\left(\nabla \Phi (x_0)\right)^T = 0 = W^{(L)} \cdot D^{(L-1)} \cdot W^{(L-1)}\cdots D^{(0)} \cdot W^{(0)}.
\end{equation*}
Here, the last equality stems from $D^{(\ell - 1)} = \Delta (W^{(\ell - 1)} x^{(\ell - 1)} + b^{(\ell - 1)})= 0$. This proves \eqref{eq:capequal}. Overall, we get
\begin{align*}
\PP (\mathcal{A} \cap \{\eta = \ell\}) \overset{\eqref{eq:zeroset}}&{=} \PP (\mathcal{A} \cap \{\eta = \ell\}\cap \{x^{(\ell)} = 0\}) \overset{\eqref{eq:capequal}}{=} \PP(\{\eta = \ell\}\cap \{x^{(\ell)} = 0\}) \\
\overset{\eqref{eq:zeroset}}&{=}\PP(\{\eta = \ell\}\cap \{x^{(\ell)} = 0\}) + \PP (\{\eta = \ell\}\cap \{x^{(\ell)} \neq 0\})\\
&= \PP(\{\eta = \ell\}) .
\end{align*}
It remains to deal with the case $\ell = 0$. Since $x^{(0)} = x_0 \neq 0$, we get with the same reasoning as above (\Cref{eq:zeroset}) that 
\begin{equation*}
0\leq \PP(\mathcal{A} \cap \{\eta = 0\}) \leq \PP (\eta = 0) = 0,
\end{equation*}
such that equality has to hold everywhere. 

We thus have shown \eqref{eq:Ptoshow}, which implies the claim. 
\end{proof}

\textbf{Acknowledgments.} 
The authors thank Felix Krahmer, Afonso Bandeira, and Sam Buchanan for fruitful discussions and helpful comments.
FV and PG acknowledge support by
the German Science Foundation (DFG) in the context of the Emmy Noether junior research
group VO 2594/1-1.
FV acknowledges support by the Hightech Agenda Bavaria. 

\medskip

\printbibliography
\end{document}